
\documentclass[nohyperref]{article}

\usepackage{microtype}
\usepackage{graphicx}
\usepackage{subfigure}
\usepackage{booktabs} 

\usepackage{hyperref}



 \usepackage[accepted]{icml2023}

\usepackage{amsmath}
\usepackage{amssymb}
\usepackage{mathtools}
\usepackage{amsthm}

\usepackage[capitalize,noabbrev]{cleveref}

\theoremstyle{plain}
\newtheorem{theorem}{Theorem}[section]

\newtheorem{lemma}[theorem]{Lemma}

\theoremstyle{definition}

\newtheorem{assumption}[theorem]{Assumption}
\theoremstyle{remark}

\usepackage[textsize=tiny]{todonotes}

\usepackage{ifthen}
\usepackage{thm-restate}
\usepackage{tikz}
\usetikzlibrary{positioning,chains,fit,shapes,calc}

\usepackage{amsmath}
\usepackage{amssymb}
\usepackage{amsfonts}
\usepackage{enumerate}
\usepackage{flushend}
\usepackage{mathrsfs}
\usepackage{subfigure}
\usepackage{booktabs}
\usepackage{makecell}
\usepackage{setspace}
\usepackage{xcolor}
\usepackage{braket}
\usepackage{comment}
\usepackage{bbm}
\usepackage{bm}
\usepackage{titletoc}
\usepackage[page, header]{appendix}
\hypersetup{
	colorlinks = true,
	urlcolor   = black,
	linkcolor  = black,
	citecolor  = black
}

\newcommand{\R}{\mathbb{R}}

\newcommand{\cA}{\mathcal{A}}

\newcommand{\cD}{\mathcal{D}}
\newcommand{\cE}{\mathcal{E}}
\newcommand{\cF}{\mathcal{F}}
\newcommand{\cG}{\mathcal{G}}
\newcommand{\cH}{\mathcal{H}}

\newcommand{\cJ}{\mathcal{J}}
\newcommand{\cK}{\mathcal{K}}
\newcommand{\cL}{\mathcal{L}}

\newcommand{\cP}{\mathcal{P}}
\newcommand{\cQ}{\mathcal{Q}}

\newcommand{\cS}{\mathcal{S}}

\newcommand{\cX}{\mathcal{X}}
\newcommand{\cY}{\mathcal{Y}}
\newcommand{\cZ}{\mathcal{Z}}

\mathchardef\hyphen="2D
\newcommand{\argmax}{\operatornamewithlimits{argmax}}
\newcommand{\argmin}{\operatornamewithlimits{argmin}}
\newcommand{\ex}{\mathbb{E}}
\newcommand{\bs}{\boldsymbol}

\newcommand{\trace}{\textup{Trace}}
\newcommand{\batch}{\textup{batch}}

\newcommand{\sbr}[1]{\left( #1 \right)}
\newcommand{\mbr}[1]{\left[ #1 \right]}
\newcommand{\lbr}[1]{\left\{ #1 \right\}}
\newcommand{\abr}[1]{\left| #1 \right|}
\newcommand{\nbr}[1]{\left\| #1 \right\|}
\newcommand{\indicator}[1]{\mathbbm{1}\left\{ #1 \right\}}

\newcommand{\algrepbailb}{\mathtt{DouExpDes}}
\newcommand{\algrepbpiclb}{\mathtt{C \hyphen DouExpDes}}
\newcommand{\round}{\mathtt{ROUND}}
\newcommand{\probbai}{\textup{RepBAI-LB}}
\newcommand{\probbpi}{\textup{RepBPI-CLB}}
\newcommand{\polylog}{\textup{polylog}}
\newcommand{\algfeatrecover}{\mathtt{FeatRecover}}
\newcommand{\algelimlowrep}{\mathtt{EliLowRep}}
\newcommand{\algconfeatrecover}{\mathtt{C \hyphen FeatRecover}}
\newcommand{\algestlowrep}{\mathtt{EstLowRep}}

\newcommand{\compilehidecomments}{true}
\ifthenelse{ \equal{\compilehidecomments}{true} }{%
	\newcommand{\wen}[1]{}
	\newcommand{\longbo}[1]{}
	\newcommand{\yihan}[1]{}
	\newcommand{\cameraReady}[1]{#1}
}{
	\newcommand{\wen}[1]{{\color{red}  [\text{Wen:} #1]}}
	\newcommand{\longbo}[1]{{\color{purple} [\text{Longbo:} #1]}}
	\newcommand{\yihan}[1]{{\color{teal} [\text{Yihan:} #1]}}
	\newcommand{\cameraReady}[1]{{\color{blue} #1}}
}

\newcommand{\compilefullversion}{true} 
\ifthenelse{\equal{\compilefullversion}{false}}{%
	\newcommand{\OnlyInFull}[1]{}
	\newcommand{\OnlyInShort}[1]{#1}
}{%
	\newcommand{\OnlyInFull}[1]{#1}%
	\newcommand{\OnlyInShort}[1]{}%
}%

\allowdisplaybreaks
\abovedisplayskip=5pt
\belowdisplayskip=5pt
\setlength{\textfloatsep}{10pt}

\icmltitlerunning{Multi-task Representation Learning for Pure Exploration in Linear Bandits}


\begin{document}
	
	
	\twocolumn[
	\icmltitle{Multi-task Representation Learning for Pure Exploration in Linear Bandits}
	
	
	
	\icmlsetsymbol{equal}{*}
	
	\begin{icmlauthorlist}
		\icmlauthor{Yihan Du}{thu}
		\icmlauthor{Longbo Huang}{thu}
		\icmlauthor{Wen Sun}{cornell}
	\end{icmlauthorlist}
	
	\icmlaffiliation{thu}{IIIS, Tsinghua University}
	\icmlaffiliation{cornell}{Cornell University}
	
	\icmlcorrespondingauthor{Yihan Du}{duyh18@mails.tsinghua.edu.cn}
	\icmlcorrespondingauthor{Longbo Huang}{longbohuang@tsinghua.edu.cn}
	\icmlcorrespondingauthor{Wen Sun}{ws455@cornell.edu}
	
	\icmlkeywords{Multi-task Representation Learning, Linear Bandits, Pure Exploration, Experimental Design}
	
	\vskip 0.3in
	]
	
	
	
	\printAffiliationsAndNotice{}  
	
	\begin{abstract}
		Despite the recent success of representation learning in sequential decision making, the study of the pure exploration scenario (i.e., identify the best option and minimize the sample complexity) is still limited. In this paper, we study multi-task representation learning for best arm identification in linear bandits ($\probbai$) and best policy identification in contextual linear bandits ($\probbpi$), two popular pure exploration settings with wide applications, e.g., clinical trials and web content optimization.
		In these two problems, all tasks share a common low-dimensional linear representation, and our goal is to leverage this feature to accelerate the best arm (policy) identification process for all tasks.
		For these problems, we design computationally and sample efficient algorithms $\algrepbailb$ and $\algrepbpiclb$, which perform double experimental designs to plan optimal sample allocations for learning the global representation.
		We show that by learning the common representation among tasks, our sample complexity is significantly better than that of the native approach which solves tasks independently. 
		To the best of our knowledge, this is the first work to demonstrate the benefits of representation learning for multi-task pure exploration. 
	\end{abstract}

	\section{Introduction}
	
	Multi-task representation learning~\cite{caruana1997multitask} is an important problem which aims to learn a common low-dimensional representation from multiple related tasks. Representation learning has received extensive attention in both empirical applications~\cite{ando2005framework,bengio2013representation,li2014joint} and theoretical study~\cite{maurer2016benefit,du2020few,tripuraneni2021provable}.
	
	Recently, an emerging number of works~\cite{yang2020impact,yang2022nearly,hu2021near,cella2022multi} investigate representation learning for sequential decision making, and show that if all tasks share a joint low-rank representation, then by leveraging such a joint representation, 
	it is possible to learn faster than treating each task independently. 
	Despite the accomplishments of these works, they mainly focus on the regret minimization setting, where the performance is measured by the cumulative reward gap between the optimal option and the actually chosen options. 
	
	
	However, in real-world applications where obtaining a sample is expensive and time-consuming, e.g., clinical trails~\cite{zhang2012multi},
	it is often desirable to identify the optimal option using as few samples as possible, i.e., we face the \emph{pure exploration} scenario rather than regret minimization.
	Moreover, in many decision-making applications, we often need to tackle multiple related tasks, e.g., treatment planning for different diseases~\cite{bragman2018uncertainty} and content optimization for multiple websites~\cite{agarwal2009explore}, and there
	usually exists a common representation among these tasks, e.g., the features of drugs and the representations of website items. Thus,  we desire to exploit the shared representation among tasks to expedite learning.
	For example, in clinical treatment planning, we want to identify the optimal treatment for multiple diseases, and there exists a joint representation of treatments. In this case, since conducting a clinical trial and collecting a sample is time-consuming, we desire to make use of the shared representation and reduce the number of samples required.  \looseness=-1
	
	Motivated by the above fact, in this paper, we study representation learning for multi-task pure exploration in sequential decision making. Following prior works~\cite{yang2020impact,yang2022nearly,hu2021near}, we consider the linear bandit setting, which is one of the most popular settings in sequential decision making and has various applications such as clinical trials and recommendation systems. 
	Specifically, we investigate two pure exploration problems, i.e., representation learning for best arm identification in linear bandits ($\probbai$) and best policy identification in contextual linear bandits ($\probbpi$). 
	
	In $\probbai$, an agent is given a confidence parameter $\delta$, an arm set $\cX:=\{\bs{x}_1,\dots,\bs{x}_n\} \subseteq \R^{d}$ and $M$ tasks. For each task $m \in [M]$, the expected reward of each arm $\bs{x} \in \cX$ is generated by $\bs{x}^\top \bs{\theta}_m$, where $\bs{\theta}_m \in \R^{d}$ is an underlying reward parameter. There exists an unknown global feature extractor $\bs{B} \in \R^{d \times k}$ and an underlying prediction parameter $\bs{w}_m$  such that $\bs{\theta}_m=\bs{B} \bs{w}_m$  
	for any $m \in [M]$, where $M \gg d \gg k$. We can understand the problem as that all tasks share a joint representation $\bs{f}(\bs{x}):= \bs{B}^\top \bs{x}$ for arms, where the dimension of $\bs{f}(\bs{x})$ is much smaller than that of $\bs{x}$.  The agent sequentially selects arms and tasks to sample, and observes noisy rewards. The goal of the agent is to identify the best arm with the maximum expected reward for each task with confidence $1-\delta$, using as few samples as possible. 
	
	The $\probbpi$ problem is an extension of $\probbai$ to environments with random and varying contexts. 
	In $\probbpi$, there are a context space $\cS$, an action space $\cA$, a known feature mapping $\bs{\phi}:\cS \times \cA \mapsto \R^d$ and an \emph{unknown} context distribution $\cD$. 
	For each task $m \in [M]$, the expected reward of each context-action pair $(s,a) \in \cS \times \cA$ is generated by $\bs{\phi}(s,a)^\top \bs{\theta}_m$, where $\bs{\theta}_m=\bs{B} \bs{w}_m$. 
	We can similarly interpret the problem as that all tasks share a low-dimensional context-action representation $\bs{B}^\top \bs{\phi}(s,a) \in \mathbb{R}^k$.  At each timestep, the agent first observes a context drawn from $\cD$, and chooses an action and a task to sample, and then observes a random reward.
	Given a confidence parameter $\delta$ and an accuracy parameter $\varepsilon$, the agent aims to identify an $\varepsilon$-optimal policy (i.e., a mapping $\cS \mapsto \cA$ that gives suboptimality within $\varepsilon$)
	for each task with confidence $1-\delta$, while minimizing the number of samples used.
	

	In contrast to existing representation learning works~\cite{yang2020impact,yang2022nearly,hu2021near,cella2022multi}, we focus on the pure exploration scenario and face several unique challenges: 
	(i) The sample complexity minimization objective  requires us to plan an optimal sample allocation for recovering the low-rank representation, in order to save samples to the highest degree. 
	(ii) Unlike prior works which either assume that the arm set is an ellipsoid/sphere~\cite{yang2020impact,yang2022nearly} or are computationally inefficient~\cite{hu2021near}, we allow an arbitrary arm set that spans $\R^d$, which poses challenges on how to efficiently schedule samples according to the shapes of arms.
	(iii) Different from prior works~\cite{huang2015efficient,li2022instance}, we do not assume prior knowledge of the context distribution. This imposes additional difficulties in sample allocation planning and estimator construction.
	To handle these challenges, we design computationally and sample efficient algorithms, which effectively estimate the context distribution and employ the experimental design approaches to plan samples.
	
	We summarize our contributions in this paper as follows.
	\begin{itemize}
		\item We formulate the problems of multi-task representation learning for best arm identification in linear bandits ($\probbai$) and best policy identification in contextual linear bandits ($\probbpi$). To the best of our knowledge, this is the first work to study representation learning in the multi-task pure exploration scenario.
		\item For $\probbai$, we propose an efficient algorithm $\algrepbailb$ equipped with \emph{double experimental designs}. The first design optimally schedules samples to learn the joint representation according to arm shapes, and the second design minimizes the estimation error for rewards using low-dimensional representations.
		Furthermore, we establish a sample complexity guarantee $\tilde{O}(\frac{Mk}{\Delta_{\min}^2})$, which shows superiority over the baseline result $\tilde{O}(\frac{Md}{\Delta_{\min}^2})$ (i.e., solving each task independently). Here $\Delta_{\min}$ denotes the minimum reward gap. 
		\item For $\probbpi$, we develop $\algrepbpiclb$, an algorithm which efficiently estimates the context distribution and conducts double experimental designs under the estimated context distribution to learn the global representation. A sample complexity result $\tilde{O}(\frac{Mk^2}{\varepsilon^2})$ is also provided for $\algrepbpiclb$, which significantly outperforms the baseline result $\tilde{O}(\frac{Md^2}{\varepsilon^2})$, and demonstrates the power of representation learning.
	\end{itemize}

	\section{Related Work}
	
	In this section, we introduce two lines of related works, and defer a more complete literature review to Appendix~\ref{apx:related_work}. \looseness=-1 
	
	\textbf{Representation Learning.}
	The study of representation learning has been initiated and developed in the supervised learning setting, e.g., \cite{baxter2000model,ando2005framework,maurer2016benefit,du2020few,tripuraneni2021provable}. 
	
	\cameraReady{
	Recently, representation learning for sequential decision making has attracted extensive attention.
	\citet{lale2019stochastic,jun2019bilinear,lu2021low,huang2021optimal} study linear bandits with a hidden low-rank structure (e.g., bilinear bandits), which is very related to the problem of representation learning.
	\citet{yang2020impact,yang2022nearly,hu2021near,cella2022multi} consider multi-task representation learning for linear bandits with the regret minimization objective. \citet{yang2020impact,yang2022nearly} assume that the arm set is an ellipsoid or sphere. \citet{hu2021near} relax this assumption and allow arbitrary arm sets, but their algorithms that build upon a multi-task joint least-square estimator are computationally inefficient. \citet{cella2022multi} design algorithms that do not need to know the dimension of the underlying representation. 
	%
	There are also other works~\cite{lu2021power,lu2022provable,pacchiano2022joint,zhang2021provably,chengprovable,agarwal2022provable} which investigate representation learning for reinforcement learning.
	}

	Different from the above works which consider regret minimization, we study representation learning for (contextual) linear bandits with the pure exploration objective, which brings unique challenges on how to optimally allocate samples to learn the feature extractor, and motivates us to design algorithms based on double experimental designs.
	
	\textbf{Pure Exploration in (Contextual) Linear Bandits.}
	Most existing linear bandit works focus on regret minimization, e.g.,~\cite{dani2008stochastic,chu2011contextual,abbasi2011improved}. Recently, there has been a surge of interests in the pure exploration objective for (contextual) linear bandits.
	For linear bandits, \citet{soare2014best} firstly apply the experimental design approach to distinguish the optimal arm, and establish sample complexity that heavily depends on the minimum reward gap.
	\citet{tao2018best} design a novel randomized estimator for the underlying reward parameter, and achieve tighter sample complexity which depends on the reward gaps of the best $d$ arms.
	\citet{fiez2019sequential} provide the first near-optimal sample complexity upper and lower bounds for best arm identification in linear bandits. 
	For contextual linear bandits, \citet{zanette2021design} develop a non-adaptive policy to collect data, from which a near-optimal policy can be computed. \citet{li2022instance} build instance-optimal sample complexity for best policy identification in contextual linear bandits, with prior knowledge of the context distribution. By contrast, our work studies a multi-task setting where tasks share a common representation, 
	and does not assume any prior knowledge of the context distribution.

	\section{Problem Formulation}
	
	In this section, we present the formal problem formulations of $\probbai$ and $\probbpi$. Before describing the formulations, we first introduce some useful notations.
	
	\textbf{Notations.}
	We use bold lower-case letters to denote vectors and bold upper-case letters to denote matrices.
	For any matrix $\bs{A}$,  $\|\bs{A}\|$ denotes the spectral norm of $\bs{A}$, and  $\sigma_{\min}(\bs{A})$ denotes the minimum singular value of $\bs{A}$. For any positive semi-definite matrix $\bs{A} \in \R^{d' \times d'}$ and vector $\bs{x} \in \R^{d'}$,  $\|\bs{x}\|_{\bs{A}}:=\sqrt{\bs{x}^\top \bs{A} \bs{x}}$. We use $\polylog(\cdot)$ to denote a polylogarithmic factor in given parameters, and $\tilde{O}(\cdot)$ to denote an expression that hides polylogarithmic factors in all problem parameters except $\delta$ and $\varepsilon$.

	\textbf{Representation Learning for Best Arm Identification in Linear Bandits ($\probbai$).}
	An agent is given a set of arms $\cX:=\{\bs{x}_1,\dots,\bs{x}_n\} \subseteq \R^d$ and $M$ best arm identification tasks. Without loss of generality, we assume that $\cX$ spans $\R^d$, as done in many prior works~\cite{fiez2019sequential,katz2020empirical,degenne2020gamification}. For any $\bs{x} \in \cX$, $\|\bs{x}\|\leq L_{x}$ for some constant $L_{x}$. For each task $m \in [M]$, the expected reward of each arm $\bs{x} \in \cX$ is $\bs{x}^\top \bs{\theta}_m$, where $\bs{\theta}_m  \in \R^d$ is an unknown reward parameter. Among all tasks, there exists a common underlying feature extractor $\bs{B} \in \R^{d \times k}$, which satisfies that for each task $m \in [M]$, $\bs{\theta}_m=\bs{B} \bs{w}_m$. Here $\bs{B}$ has orthonormal columns, $\bs{w}_m \in \R^k$ is an unknown prediction parameter, and $M \gg d \gg k$. For any $m \in [M]$, $\|\bs{w}_m\|\leq L_{w}$ for some constant $L_{w}$. 
	
	At each timestep $t$, the agent chooses an arm $\bs{x} \in \cX$ and a task $m \in [M]$, to sample arm $\bs{x}$ in task $m$. Then, she observes a random reward $r_t=\bs{x}^\top \bs{\theta}_m+\eta_t=\bs{x}^\top \bs{B} \bs{w}_m+ \eta_t$, where $\eta_t$ is an independent, zero-mean and sub-Gaussian noise. For simplicity of analysis, we assume that $\ex[\eta_t^2]=1$, which can be easily relaxed by using a more carefully-designed estimator in our algorithm.
	Given a confidence parameter $\delta \in (0,1)$, the agent aims to identify the best arms $\bs{x}^{*}_m:=\argmax_{\bs{x} \in \cX} \bs{x}^\top \bs{\theta}_m$ for all tasks $m \in [M]$ with probability at least $1-\delta$, using as few samples as possible. We define sample complexity as the total number of samples used over all tasks, which is the performance metric considered in our paper.
	
	To efficiently learn the underlying low-dimensional representation, 
	we make the following standard assumptions.
	
	\begin{assumption}[Diverse Tasks] \label{assumption:diverse_task}
		We assume that $\sigma_{\min}(\frac{1}{M} \sum_{m=1}^{M} \bs{w}_m \bs{w}_m^\top) = \Omega(\frac{1}{k})$.
	\end{assumption}
	
	This assumption indicates that the prediction parameters $\bs{w}_1,\dots,\bs{w}_{M}$  are uniformly spread out in all directions of $\R^k$, which was also assumed in~\cite{du2020few,tripuraneni2021provable,yang2020impact}, and is necessary for recovering the feature extractor $\bs{B}$.
	
	For any distribution $\bs{\lambda} \in \triangle_{\cX}$ and $\bs{B} \in \R^{d \times k}$, let $\bs{A}(\bs{\lambda}, \bs{B}):=\sum_{i=1}^{n} \lambda(\bs{x}_i) \bs{B}^\top \bs{x}_i \bs{x}_i^\top \bs{B}$.
	For any task $m \in [M]$, let
	\begin{align*}
		\bs{\lambda}^*_m := & \argmin_{\bs{\lambda} \in \triangle_{\cX}} \max_{\bs{x} \in \cX \setminus \{\bs{x}^{*}_{m}\}} \frac{\| \bs{B}^\top (\bs{x}^{*}_{m} - \bs{x}) \|^2_{\bs{A}(\bs{\lambda}, \bs{B})^{-1}} }{ ((\bs{x}^{*}_{m} - \bs{x})^\top \bs{\theta}_m)^2 } .
	\end{align*}
	Here $\bs{\lambda}^*_m$ denotes the optimal sample allocation that minimizes prediction error of arms (i.e., the solution of G-optimal design~\cite{pukelsheim2006optimal}) under the underlying low-dimensional representation.
	
	\begin{assumption}[Eigenvalue of G-optimal Design Matrix] \label{assumption:lambda^*_B_x_x_B_invertible}
		For any task $m \in [M]$, $\sigma_{\min}(\bs{A}(\bs{\lambda}^*_m, \bs{B})) \geq \omega$ for some constant $\omega>0$.
	\end{assumption}
	
	This assumption implies that the covariance matrix $\bs{A}(\bs{\lambda}^*_m, \bs{B})$ under the optimal sample allocation is invertible, which is necessary for estimating $\bs{w}_m$.  
	\cameraReady{Note that the quantities introduced in Assumptions~\ref{assumption:diverse_task} and \ref{assumption:lambda^*_B_x_x_B_invertible}, i.e., $\sigma_{\min}(\frac{1}{M} \sum_{m=1}^{M} \boldsymbol{w}_m \boldsymbol{w}_m^\top)$ and $\sigma_{\min}(\bs{A}(\bs{\lambda}^*_m, \bs{B}))$, are both defined on the low-dimensional subspace, which scale as $k$ instead of $d$.}
	
	\textbf{Representation Learning for Best Policy Identification in Contextual Linear Bandits ($\probbpi$).}
	In this problem, there are  a context space $\cS$, an action space $\cA$, a feature mapping $\bs{\phi}(\cdot,\cdot):\cS\times\cA \mapsto \R^d$ and an \emph{unknown} context distribution $\cD \in \triangle_{\cS}$. For any $(s,a) \in \cS \times \cA$, $\|\bs{\phi}(s,a)\|\leq L_{\phi}$ for some constant $L_{\phi}$. 
	An agent needs to solve $M$ best policy identification tasks. For each task $m \in [M]$, the expected reward of each context-action pair $(s,a) \in \cS \times \cA$ is $\bs{\phi}(s,a)^\top \bs{\theta}_m$, where $\bs{\theta}_m \in \R^d$ is an unknown reward parameter. Similar to $\probbai$, there exists a global feature extractor $\bs{B} \in \R^{d \times k}$ with orthonormal columns, such that for each task $m \in [M]$, $\bs{\theta}_m=\bs{B} \bs{w}_m$. Here $\bs{w}_m \in \R^k$ is an unknown prediction parameter, $\|\bs{w}_m\|\leq L_{w}$ for any $m \in [M]$, and $M \gg d \gg k$.
	
	At each timestep $t$, the agent first observes a random context $s_t$, which is i.i.d. drawn from $\cD$. Then, she selects an action $a_t \in \cA$ and a task $m \in [M]$, to sample action $a_t$ in context $s_t$ under task $m$. After sampling, she observes a random reward $r_t=\bs{\phi}(s_t,a_t)^\top \bs{\theta}_m+\eta_t=\bs{\phi}(s_t,a_t)^\top \bs{B} \bs{w}_m+ \eta_t$, where $\eta_t$ is an independent, zero-mean and $1$-sub-Gaussian noise. 
	
	We define a policy $\pi$ as a mapping from $\cS$ to $\cA$. For each task $m \in [M]$, we say a policy $\hat{\pi}_m$ is $\varepsilon$-optimal if\looseness=-1
	\begin{align*}
		\ex_{s \sim \cD} \mbr{ \max_{a \in \cA} \sbr{\bs{\phi}(s,a) - \bs{\phi}(s,\hat{\pi}_m(s) }^\top \bs{\theta}_m } \leq \varepsilon .
	\end{align*}
	Given a confidence parameter $\delta \in (0,1)$ and an accuracy parameter $\varepsilon>0$, the goal of the agent is to identify an $\varepsilon$-optimal policy $\hat{\pi}_m$ for each task $m \in [M]$ with probability at least $1-\delta$, and minimize the number of samples used, i.e., sample complexity.
	
	We also make two standard assumptions for $\probbpi$:  Assumption~\ref{assumption:diverse_task} and the following assumption on the context distribution and context-action features.
	
	\begin{assumption}\label{assumption:bpi_rho_E_D_is_finite}
		There exists some $\bs{\lambda} \in \triangle_{\cA}$ such that  
		$$
		\sigma_{\min}\sbr{\sum_{a \in \cA} \lambda(a) \ex_{s \sim \cD}\mbr{ \bs{\phi}(s,a) \bs{\phi}(s,a)^\top }} \geq \nu 
		$$ 
	\end{assumption}
	for some constant $\nu>0$.
	
	Assumption~\ref{assumption:bpi_rho_E_D_is_finite} manifests that there exists at least one sample allocation, under which the expected covariance matrix with respect to random contexts is invertible. This assumption enables one to reveal the feature extractor $\bs{B}$, despite stochastic and varying contexts. 
	Note that Assumption~\ref{assumption:bpi_rho_E_D_is_finite} only assumes the existence of a feasible sample allocation, rather than the knowledge of this sample allocation. 

	It is worth mentioning that in this work, we do not assume that we can sample arbitrary vectors in an ellipsoid/sphere as in \cite{yang2020impact,yang2022nearly}, or assume that each arm (action) has zero mean and identity covariance as in \cite{tripuraneni2021provable}. In contrast, we allow arbitrary shapes of arms (actions), and efficiently allocate samples according to their different shapes. Moreover, we do not assume prior knowledge of the context distribution as in \cite{huang2015efficient,li2022instance}. 
	Instead, we design an effective scheme to estimate the context distribution, and carefully bound the estimation error in our analysis. 
	
	Below we will introduce our algorithms and results. We defer all our proofs to Appendix due to space limit.

	\section{Representation Learning for Best Arm Identification in Linear Bandits}

	In this section, we design a computationally efficient algorithm $\algrepbailb$ for $\probbai$, which performs double delicate experimental designs to recover the feature extractor and distinguish the best arms using low-rank representations. Furthermore, we provide sample complexity guarantees that mainly depend on the underlying low dimension.
	
	To better describe our algorithm, we first introduce the notion of \emph{experimental design}.
	Experimental design is an important problem in statistics~\cite{pukelsheim2006optimal}. Consider a set of feature vectors and an unknown linear regression parameter. Sampling each feature vector will produce a noisy feedback of the inner-product of this feature vector and the unknown parameter.
	Experimental design investigates how to schedule samples to maximize the statistical power of estimating the unknown parameter. 
	In our algorithm, we mainly use two popular types of experimental design, i.e., \emph{E-optimal design}, which minimizes the spectral norm of the inverse of sample covariance matrix, and \emph{G-optimal design}, which minimizes the maximum prediction error for feature vectors.  \looseness=-1

	\subsection{Algorithm $\algrepbailb$}

	
	Now we present our algorithm $\algrepbailb$, whose pseudo-code is provided in Algorithm~\ref{alg:repbailb}. $\algrepbailb$ is a phased elimination algorithm, which first conducts the E-optimal design to optimally schedule samples for learning the feature extractor $\bs{B}$, and then performs the G-optimal design with low-dimensional representations to eliminate suboptimal arms.
	
	$\algrepbailb$ uses a \emph{rounding procedure} $\round$~\cite{allen2017near,fiez2019sequential}, which transforms a given continuous sample allocation (design) into a discrete sample sequence and maintains important properties (e.g., E-optimality and G-optimality) of the design. $\round(\{(\bs{q}_i,\bs{Q}_i)\}_{i=1}^{n'}, \bs{\lambda}, \zeta, N)$ takes $n'$ arm-matrix pairs $(\bs{q}_1,\bs{Q}_1),\dots,(\bs{q}_{n'},\bs{Q}_{n'}) \in \cX \times \R^{{d'} \times {d'}}$, a distribution $\bs{\lambda} \in \triangle_{\{\bs{q}_1,\dots,\bs{q}_{n'}\}}$, a rounding approximation parameter $\zeta>0$, and the number of samples $N$ such that $N \geq \frac{180d'}{\zeta^2}$ as inputs. It will return a sample sequence $\bs{s}_1, \dots, \bs{s}_N \in \cX$, which correspond to feature matrices $\bs{S}_1, \dots, \bs{S}_N \in \{\cQ_1,\dots,\cQ_{n'}\}$, and $\sum_{j=1}^N \bs{S}_j$ has similar properties as the covariance matrix of the inputted design $N \sum_{i=1}^{n'} \lambda(\bs{q}_i) \bs{Q}_i$ (see Appendix~\ref{apx:rounding_procedure} for more details).

	The procedure of $\algrepbailb$ is as follows. At the beginning, $\algrepbailb$ performs the E-optimal design with raw representations, to plan an optimal sample allocation $\bs{\lambda}^E$  for the purpose of recovering the feature extractor $\bs{B}$ (Line~\ref{line:bai_E_optimal_design}). 
	Then, $\algrepbailb$ calls $\round$ to convert the E-optimal sample allocation $\bs{\lambda}^E$ into a discrete sample batch $\bar{\bs{x}}_1,\dots,\bar{\bs{x}}_p$, which satisfies that
	$$
	\bigg\| \Big(\sum_{j=1}^p \bar{\bs{x}}_j \bar{\bs{x}}_j^\top \Big)^{-1} \bigg\| \leq (1+\zeta) \bigg\| \Big(p \sum_{i=1}^n \lambda^E(\bs{x}_i) \bs{x}_i \bs{x}_i^\top \Big)^{-1} \bigg\| .
	$$
	Next, $\algrepbailb$ enters multiple phases, and maintains a candidate arm set $\hat{\cX}_{t,m}$ for each task. 
	The specific value of $T_t$ in Line~\ref{line:bai_T_t} is presented in Eq.~\eqref{eq:value_T_t} of Appendix~\ref{apx:bai_feature_recover}.

	\begin{algorithm}[t]
		\caption{$\algrepbailb$ (Double Experimental Design)} \label{alg:repbailb}
		\begin{algorithmic}[1]
			\STATE {\bfseries Input:} $\cX$, $\delta$, rounding procedure $\round$, rounding approximation parameter $\zeta:=\frac{1}{10}$, and the size of sample batch $p:= \frac{180d}{\zeta^2}$.
			\STATE Let $\bs{\lambda}^{E}$ and $\rho^{E}$ be the optimal solution and the optimal value of the E-optimal design optimization:
			$$
			\min_{\bs{\lambda} \in \triangle_{\cX}} \Big\| \big( \sum_{i=1}^n \lambda(\bs{x}_i) \bs{x}_i \bs{x}_i^\top \big)^{-1} \Big\|
			$$\label{line:bai_E_optimal_design}\\
			\STATE $\bar{\bs{x}}_1,\dots,\bar{\bs{x}}_p \leftarrow \round(\{(\bs{x}_i, \bs{x}_i \bs{x}_i^\top)\}_{i=1}^{n}, \bs{\lambda}^{E}, \zeta, p)$ \label{line:bai_E_optimal_round}
			\STATE $\hat{\cX}_{1,m} \leftarrow \cX$ for any $m \in [M]$. $\delta_t \leftarrow \frac{\delta}{2 t^2}$ for any $t \geq 1$\; 
			\FOR{phase $t=1,2,\dots$}
			\STATE $T_t \leftarrow \lceil \frac{c_1 \sbr{1+\zeta}^3 (\rho^E)^2 k^4 L_{x}^4 L_{w}^4}{M} \max\{2^{2t},\ \frac{L_x^4}{\omega^2}\}\cdot$\\$ \polylog(\zeta,\rho^E,p,k,L_{x},L_{w},\frac{1}{\delta_t}, \frac{1}{\omega}) \rceil$, where $c_1$ is an absolute constant \label{line:bai_T_t}
			\STATE $\hat{\bs{B}}_t \leftarrow \algfeatrecover(T_t, \{\bar{\bs{x}}_i\}_{i \in [p]})$\;
			\STATE $\{\hat{\cX}_{t+1,m}\}_{m \in [M]} \leftarrow$\\$ \algelimlowrep (t, \cX, \{\hat{\cX}_{t,m}\}_{m \in [M]}, \delta_t, \round, \zeta, \hat{\bs{B}}_t)$
			\IF{$|\hat{\cX}_{t+1,m}|=1$, $\forall m \in [M]$}
			\STATE {\bfseries return}  $\hat{\cX}_{t+1,m}$ for all tasks $m \in [M]$\;
			\ENDIF
			\ENDFOR
		\end{algorithmic}
	\end{algorithm}
	
	\begin{algorithm}[t] 
		\caption{$\algfeatrecover(T,  \{\bar{\bs{x}}_i\}_{i \in [p]})$} \label{alg:feat_recover}
		\begin{algorithmic}[1]
			\FOR{task $m \in [M]$} \label{line:bai_stage2_sample_start}
			\FOR{round $j \in [T]$ }
			\FOR{arm $i \in [p]$}
			\STATE Sample $\bar{\bs{x}}_i$, and observe random reward $\alpha_{m,j,i}$\; \label{line:bai_stage2_sample} 
			\ENDFOR
			\STATE $\tilde{\bs{\theta}}_{m,j} \leftarrow (\sum_{i=1}^{p} \bar{\bs{x}}_i \bar{\bs{x}}_i^\top)^{-1} \sum_{i=1}^{p} \bar{\bs{x}}_i \alpha_{m,j,i}$
			\ENDFOR
			\ENDFOR \label{line:bai_stage2_sample_end}
			\STATE $\bs{Z} \leftarrow \frac{1}{M T} \sum_{m=1}^{M} \sum_{j=1}^{T} \tilde{\bs{\theta}}_{m,j} (\tilde{\bs{\theta}}_{m,j})^\top - (\sum_{i=1}^{p} \bar{\bs{x}}_i \bar{\bs{x}}_i^\top)^{-1} $ \label{line:bai_Z_t}\;
			\STATE Perform SVD decomposition on $\bs{Z}$, and let $\hat{\bs{B}}$ be the top-$k$ left singular vectors of $\bs{Z}$ \label{line:svd}\; 
			\STATE {\bfseries return} $\hat{\bs{B}}$\;
		\end{algorithmic}
	\end{algorithm}

	\begin{algorithm}[t!] 
		\caption{$\algelimlowrep(t, \cX\!,\! \{\hat{\cX}_{m}\}_{\! m \in [M]}, \delta'\!, \round,\! \zeta,\! \hat{\bs{B}})$} \label{alg:elim_low_rep}
		\begin{algorithmic}[1]
			\FOR{task $m \in [M]$}
			\STATE Let $\bs{\lambda}^{G}_{m}$ and $\rho^{G}_{m}$ be the optimal solution and the optimal value of the G-optimal design optimization:
			$$
			\argmin_{\bs{\lambda} \in \triangle_{\cX}} \max_{\bs{x},\bs{x}' \in \hat{\cX}_{m}} \nbr{ \hat{\bs{B}}^\top (\bs{x}-\bs{x}') }^2_{\bs{A}(\bs{\lambda},\hat{\bs{B}})^{-1}}
			$$ \label{line:bai_G_optimal_design}\\
			\STATE $N_{m} \leftarrow \lceil \max \{ 32 (1+\zeta) 2^{2t}  \rho^{G}_{m} \log (\frac{4n^2 M}{\delta'}),$ $\frac{180 k}{\zeta^2} \} \rceil$\;
			\STATE $\bs{z}_{m,1},\dots,\bs{z}_{m,N_{m}} \leftarrow$\\$ \round(\{(\bs{x}_i, \hat{\bs{B}}^\top \bs{x}_i \bs{x}_i^\top \hat{\bs{B}})\}_{i=1}^{n}, \bs{\lambda}^{G}_{m}, \zeta, N_{m})$ \label{line:bai_stage3_round}\;
			\STATE Sample the arms $\bs{z}_{m,1},\dots,\bs{z}_{m,N_{m}} \in \cX$, and observe random rewards $r_{m,1}, \dots, r_{m,N_{m}}$\; \label{line:bai_stage3_sample}
			\STATE Let $\tilde{\bs{z}}_{m,j} := \hat{\bs{B}}^\top \bs{z}_{m,j}$ for any $j \in [N_{m}]$
			\STATE $\hat{\bs{w}}_{m} \leftarrow ( \sum_{j=1}^{N_{m}}  \tilde{\bs{z}}_{m,j} \tilde{\bs{z}}_{m,j}^\top )^{-1}  \sum_{j=1}^{N_{m}}  \tilde{\bs{z}}_{m,j} r_{m,j}$ \label{line:bai_est_theta}\;
			\STATE $\hat{\bs{\theta}}_{m} \leftarrow \hat{\bs{B}} \hat{\bs{w}}_{m}$ \label{line:bai_est_w}\;
			\STATE $\hat{\cX}'_{m} \leftarrow \hat{\cX}_{m} \setminus \{ \bs{x} \in \hat{\cX}_{m} \ | \ \exists \bs{x}' \in \hat{\cX}_{m}: (\bs{x}'-\bs{x})^\top \hat{\bs{\theta}}_{m} > 2^{-t} \}$ \label{line:bai_elimination}\;
			\ENDFOR
			\STATE {\bfseries return}  $\{\hat{\cX}'_{m}\}_{m \in [M]}$
		\end{algorithmic}
	\end{algorithm}
	
	In each phase $t$, 
	$\algrepbailb$ first calls subroutine $\algfeatrecover$ to recover the feature extractor $\bs{B}$. In $\algfeatrecover$ (Algorithm~\ref{alg:feat_recover}), we repeatedly sample $\bar{\bs{x}}_1,\dots,\bar{\bs{x}}_p$ in all tasks, and construct an estimator $\bs{Z}$ for  $\frac{1}{M} \sum_{i=1}^{M} \bs{\theta}_m \bs{\theta}_m^\top$, which contains the information of underlying reward parameters (Line~\ref{line:bai_Z_t}). Then, we perform SVD  on $\bs{Z}$ and obtain the estimated feature extractor $\hat{\bs{B}}$ (Line~\ref{line:svd}). 
	
	Then, $\algrepbailb$ calls subroutine $\algelimlowrep$ to eliminate suboptimal arms using low-dimensional representations.
	In $\algelimlowrep$ (Algorithm~\ref{alg:elim_low_rep}), we conduct the G-optimal design with the reduced-dimensional representations $\hat{\bs{B}}^\top \bs{x}$, and obtain sample allocation $\bs{\lambda}^{G}_{m}$ for each task (Line~\ref{line:bai_G_optimal_design}). We further use $\round$ to transform $\bs{\lambda}^{G}_{m}$ into a sample sequence $\bs{z}_{m,1},\dots,\bs{z}_{m,N_{m}}$, which satisfies that
	\begin{align*}
		& \max_{\bs{x}, \bs{x}' \in \hat{\cX}_{m}} \nbr{\bs{x}-\bs{x}'}^2_{\sbr{\sum_{j=1}^{N_{m}} \hat{\bs{B}}^\top \bs{z}_{m,j} \bs{z}_{m,j}^\top \hat{\bs{B}}}^{-1}} 
		\\
		\leq & (1+\zeta) \max_{\bs{x}, \bs{x}' \in \hat{\cX}_{m}} \nbr{\bs{x}-\bs{x}'}^2_{\sbr{N_{m} \sum_{i=1}^{n} \lambda^{G}_{m}(\bs{x}_i) \hat{\bs{B}}^\top \bs{x}_i \bs{x}_i^\top \hat{\bs{B}}}^{-1}} .
	\end{align*}
	After sampling this sequence, we build estimators $\hat{\bs{w}}_{t,m}$ and $\hat{\bs{\theta}}_{t,m}$ for the underlying prediction parameter $\bs{w}_m$ and reward parameter $\bs{\theta}_m$, respectively (Lines~\ref{line:bai_est_theta}-\ref{line:bai_est_w}). Then, we discard the arms that show large gaps to the estimated optimal arm for each task (Line~\ref{line:bai_elimination}).

	\subsection{Theoretical Performance of $\algrepbailb$}
	
	In this subsection, we provide sample complexity guarantees for $\algrepbailb$.
	To formally present our sample complexity, 
	we first revisit existing results for conventional single-task best arm identification in linear bandits (BAI-LB). 
	
	For a single-task BAI-LB instance with arm set $\cX \in \R^d$ and underlying reward parameter $\bs{\theta} \in \R^{d}$, the instance-dependent hardness is defined as~\cite{fiez2019sequential}
	\begin{align*}
		\rho^{S}(\cX, \bs{\theta}) \! := \!\! \min_{\bs{\lambda} \in \triangle_{\cX}} \max_{\bs{x} \in \cX \setminus \{\bs{x}^{*}\}} \!\!\!\! \frac{\| \bs{x}^{*} - \bs{x} \|^2_{\sbr{\sum_{i=1}^{n} \lambda(\bs{x}_i) \bs{x}_i \bs{x}_i^\top }^{-1}} }{ ((\bs{x}^{*} - \bs{x})^\top \bs{\theta})^2 } ,
	\end{align*}
	and the best known sample complexity result is $\tilde{O}(\rho^{S}(\cX, \bs{\theta}) \log(\frac{1}{\delta}))=\tilde{O}(\frac{d}{(\Delta^{S}_{\min})^2} \log(\frac{1}{\delta}))$~\cite{fiez2019sequential}.
	Here $\bs{x}^{*}:=\argmax_{\bs{x} \in \cX} \bs{x}^\top \bs{\theta}$ denotes the best arm, and $\Delta^{S}_{\min}:=\min_{\bs{x} \in \cX \setminus \{\bs{x}^{*}\}}(\bs{x}^{*} - \bs{x})^\top \bs{\theta}$ refers to the minimum reward gap.
	
	It can be seen that a naive algorithm for RepBAI-LB is to run an existing single-task BAI-LB algorithm~\cite{fiez2019sequential,katz2020empirical} to solve $M$ tasks independently. Then, the sample complexity of such naive algorithm is
	\begin{align}
		\!\!\!\! \tilde{O} \! \sbr{ \sum_{m=1}^{M} \rho^{S}(\cX, \bs{\theta}_m) \log\sbr{ \frac{1}{\delta} } \!} \!\!=\! \tilde{O}\sbr{\! \frac{Md}{\Delta_{\min}^2} \log\sbr{ \frac{1}{\delta} } \!} \!, \label{eq:bai_naive_sample_complexity}
	\end{align}
	where $\Delta_{\min}:=\min_{m \in [M], \bs{x} \in \cX \setminus \{\bs{x}^{*}_m\}}(\bs{x}^{*}_m - \bs{x})^\top \bs{\theta}_m$ denotes the minimum reward gap among all tasks. In the following, we take Eq.~\eqref{eq:bai_naive_sample_complexity} as the baseline to demonstrate the power of representation learning.
	
	Now we state the sample complexity for $\algrepbailb$.
	\begin{theorem} \label{thm:bai_ub}
		With probability at least $1-\delta$, algorithm $\algrepbailb$   returns the best arms $\bs{x}^{*}_{m}$ for all tasks $m \in [M]$, and the number of samples used is bounded by
		\begin{align}
			\tilde{O}  \bigg( & \sum_{m=1}^{M} \! \min_{\bs{\lambda} \in \triangle_{\cX}} \! \max_{\bs{x} \in \cX \setminus \{\bs{x}^{*}_{m}\}} \!\!\!\!\! \frac{\| \bs{B}^\top (\bs{x}^{*}_{m} - \bs{x}) \|^2_{\bs{A}(\bs{\lambda},\bs{B})^{-1}} }{ ((\bs{x}^{*}_{m} - \bs{x})^\top \bs{\theta}_m)^2 } \! \log\Big(\frac{1}{\delta}\Big)
			\label{eq:bai_ub}
			\nonumber\\& 
			+  (\rho^E)^2 d k^4 L_{x}^2 L_{w}^2 D \log^4 \Big(\frac{1}{\delta}\Big) \bigg)
			\\
			= & \ \tilde{O} \bigg(  \frac{M k}{\Delta_{\min}^2}  \log\Big(\frac{1}{\delta}\Big) + (\rho^E)^2 d k^4  L_{x}^2 L_{w}^2  D \log^4 \Big(\frac{1}{\delta}\Big)  \bigg) , \nonumber
		\end{align}
		where 
		$D:=\max\{ \frac{1}{\Delta_{\min}^2} ,\ \frac{L_{x}^4}{\omega^2} \}$.
	\end{theorem}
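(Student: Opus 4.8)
The plan is to condition everything on a single high-probability ``good event'' that I union-bound over all phases, then split into a correctness part and a sample-complexity part. The good event asserts two things in every phase $t$: first, that the recovered feature extractor $\hat{\bs{B}}_t$ is close to the true $\bs{B}$ in the sense that $\nbr{(\bs{I}-\hat{\bs{B}}_t\hat{\bs{B}}_t^\top)\bs{B}}$ is small (decaying like $2^{-t}$ up to a floor set by $L_x^4/\omega^2$); second, that the reward estimates $\hat{\bs{\theta}}_{t,m}$ produced by $\algelimlowrep$ satisfy the uniform prediction bound $\abr{(\bs{x}'-\bs{x})^\top(\hat{\bs{\theta}}_{t,m}-\bs{\theta}_m)}\le 2^{-t}/2$ for all surviving pairs. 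Taking $\delta_t=\delta/(2t^2)$ makes $\sum_t \delta_t\le\delta$, so the good event holds with probability at least $1-\delta$. I isolate these two facts as a feature-recovery lemma and a low-dimensional-estimation lemma, and the bulk of the work lies in proving them.

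For feature recovery, the starting observation is that each single-round least-squares estimate $\tilde{\bs{\theta}}_{m,j}$ in $\algfeatrecover$ is unbiased for $\bs{\theta}_m$ with noise covariance $(\sum_i \bar{\bs{x}}_i\bar{\bs{x}}_i^\top)^{-1}$, so the matrix $\bs{Z}$ of Line~\ref{line:bai_Z_t}, after subtracting exactly this covariance, is unbiased for $\frac{1}{M}\sum_m \bs{\theta}_m\bs{\theta}_m^\top=\bs{B}\big(\frac{1}{M}\sum_m \bs{w}_m\bs{w}_m^\top\big)\bs{B}^\top$. By Assumption~\ref{assumption:diverse_task} this target has its top-$k$ eigenspace exactly equal to $\mathrm{col}(\bs{B})$ with spectral gap $\Omega(1/k)$, so a Davis--Kahan argument converts a bound on $\nbr{\bs{Z}-\E[\bs{Z}]}$ into a subspace-distance bound on $\hat{\bs{B}}_t$. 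The spectral fluctuation is controlled by a matrix Bernstein inequality for the sub-exponential terms $\tilde{\bs{\theta}}_{m,j}\tilde{\bs{\theta}}_{m,j}^\top$, whose per-term scale is governed by $\nbr{(\sum_i \bar{\bs{x}}_i\bar{\bs{x}}_i^\top)^{-1}}\le (1+\zeta)\rho^E/p$ --- this is precisely where the E-optimal design enters --- and averaging over $MT_t$ samples drives it down. Matching the resulting error to the schedule $T_t\propto\frac{(\rho^E)^2 k^4 L_x^4 L_w^4}{M}\max\{2^{2t},L_x^4/\omega^2\}$ of Line~\ref{line:bai_T_t} yields the claimed $2^{-t}$-type decay.

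For the elimination step I would show that, on the good event, the rule in Line~\ref{line:bai_elimination} never discards $\bs{x}^*_m$ and removes every arm whose gap exceeds $2\cdot 2^{-t}$. The estimator $\hat{\bs{\theta}}_{t,m}=\hat{\bs{B}}_t\hat{\bs{w}}_{t,m}$ is analyzed by splitting the prediction error into a variance term, bounded via the low-dimensional G-optimal design of Line~\ref{line:bai_G_optimal_design} (the choice $N_m\propto 2^{2t}\rho^G_m\log(\cdot)$ together with the $\round$ guarantee gives a confidence width of order $2^{-t}$), and a bias term coming from $\hat{\bs{B}}_t\neq\bs{B}$. The bias is where the feature-recovery lemma is consumed: since $\bs{\theta}_m\in\mathrm{col}(\bs{B})$, the component of $\bs{\theta}_m$ orthogonal to $\mathrm{col}(\hat{\bs{B}}_t)$ has norm $O(2^{-t})$, so restricting the regression to $\hat{\bs{B}}_t$ loses only a controlled amount. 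Consequently after $t^\star=O(\log_2(1/\Delta_{\min}))$ phases every candidate set is the singleton $\{\bs{x}^*_m\}$ and the algorithm returns correctly.

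It remains to sum samples. The feature-recovery calls use $\sum_{t\le t^\star} M p\,T_t$ samples; since $p=\Theta(d)$ and the $2^{2t}$ factor is geometric, this telescopes to the second term $\tilde{O}((\rho^E)^2 d k^4 L_x^2 L_w^2 D\log^4(1/\delta))$ with $D=\max\{1/\Delta_{\min}^2,L_x^4/\omega^2\}$. The elimination calls use $\sum_{t\le t^\star}\sum_m N_m\approx\sum_m 2^{2t^\star}\rho^G_m\log(\cdot)$; relating the design value $\rho^G_m$ and the threshold $2^{-t^\star}\asymp\Delta_{\min}$ to the per-task hardness $\min_{\bs{\lambda}}\max_{\bs{x}}\|\bs{B}^\top(\bs{x}^*_m-\bs{x})\|^2_{\bs{A}(\bs{\lambda},\bs{B})^{-1}}/((\bs{x}^*_m-\bs{x})^\top\bs{\theta}_m)^2$ --- which lives in dimension $k$ and is therefore $O(k/\Delta_{\min}^2)$ --- gives the leading $\tilde{O}(Mk/\Delta_{\min}^2\log(1/\delta))$ term. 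The hardest part is the feature-recovery lemma: one must obtain the matrix concentration of $\bs{Z}$ with the correct dependence on $\rho^E$, and then propagate the subspace error through a design and a regression both built from the \emph{estimated} $\hat{\bs{B}}_t$ rather than $\bs{B}$, balancing the induced bias against the $2^{-t}$ threshold so that the required accuracy, and hence $T_t$, does not blow up.
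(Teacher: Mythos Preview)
Your proposal is correct and follows essentially the same approach as the paper: the same good-event structure with $\delta_t=\delta/(2t^2)$, the same unbiasedness-plus-matrix-Bernstein-plus-Davis--Kahan chain for $\hat{\bs{B}}_t$, the same three-way (variance/bias/subspace-error) decomposition of $(\bs{x}'-\bs{x})^\top(\hat{\bs{\theta}}_{t,m}-\bs{\theta}_m)$, and the same geometric summation over phases. The only point worth flagging is that what you call ``propagating the subspace error through a design built from $\hat{\bs{B}}_t$'' is isolated in the paper as a separate lemma showing $\|\hat{\bs{B}}_t^\top\bs{y}\|^2_{(\sum_i\lambda^*_m(\bs{x}_i)\hat{\bs{B}}_t^\top\bs{x}_i\bs{x}_i^\top\hat{\bs{B}}_t)^{-1}}\le\|\bs{B}^\top\bs{y}\|^2_{(\sum_i\lambda^*_m(\bs{x}_i)\bs{B}^\top\bs{x}_i\bs{x}_i^\top\bs{B})^{-1}}+O(L_x^4/(k\omega^2 2^t))$; this (rather than the concentration of $\bs{Z}$) is what actually converts $2^{2t}\rho^G_{t,m}$ into the instance-dependent $\rho^*_m$, and it is where Assumption~\ref{assumption:lambda^*_B_x_x_B_invertible} and the $L_x^4/\omega^2$ floor in $T_t$ are consumed.
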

	
	\textbf{Remark 1.} 
	\cameraReady{
		In Theorem~\ref{thm:bai_ub}, the  factors that have implicit dimensional dependency include $\min_{\bs{\lambda} \in \triangle_{\cX}} \max_{\bs{x} \in \cX \setminus \{\bs{x}^{*}_{m}\}} \frac{\| \bs{B}^\top (\bs{x}^{*}_{m} - \bs{x}) \|^2_{\bs{A}(\bs{\lambda},\bs{B})^{-1}} }{ ((\bs{x}^{*}_{m} - \bs{x})^\top \bs{\theta}_m)^2 }$, $\omega$ and $\rho^E$, which scale as $k$, $\frac{1}{k}$ and $d$, respectively.}
	
	In our sample complexity bound (Eq.~\eqref{eq:bai_ub}), the first term, $\sum_{m=1}^{M}  \min_{\bs{\lambda} \in \triangle_{\cX}} \max_{\bs{x} \in \cX \setminus \{\bs{x}^{*}_{m}\}} \frac{\| \bs{B}^\top (\bs{x}^{*}_{m} - \bs{x}) \|^2_{\bs{A}(\bs{\lambda},\bs{B})^{-1}} }{ ((\bs{x}^{*}_{m} - \bs{x})^\top \bs{\theta}_m)^2 }=O(\frac{M k}{\Delta_{\min}^2})$, represents the hardness of $M$ $k$-dimensional linear bandit instances with arm set $\{\bs{B}^\top \bs{x}: \bs{x} \in \cX\}$ and underlying reward parameters $\bs{w}_1,\dots,\bs{w}_{M}$. This term only depends on the reduced dimension $k$, instead of $d$. 
	In other words, it is an essential price that is needed for solving $M$ low-dimensional tasks, even if one knows the feature extractor $\bs{B}$. 
	The second term $(\rho^E)^2 d k^4 L_{x}^2 L_{w}^2 D$, which depends on the raw dimension $d$, is a cost paid for learning the feature extractor. Note that since this term does not contain $M$, the cost for learning the underlying features is paid only once, rather than for all tasks. 
	
	\cameraReady{When $M \gg d \gg k$, the first term dominates the bound, which only depends on the low dimension $k$.}
	This indicates that algorithm $\algrepbailb$ effectively learns the low-dimensional representation, and exploits the intrinsic problem structure to
	reduce the sample complexity from $\tilde{O}(\frac{M d}{\Delta_{\min}^2} \log (\frac{1}{\delta}))$ (i.e., learning each task independently)
	to only $\tilde{O}(\frac{M k}{\Delta_{\min}^2} \log (\frac{1}{\delta}))$. Our result corroborates the benefits of representation learning for multi-task pure exploration. 
	
	\textbf{Technical Novelty.} 
	We highlight the novelty in the analysis of Theorem~\ref{thm:bai_ub} as follows. 
	(i) 
	Prior low-rank bandit works~\cite{jun2019bilinear,lu2021low} use \emph{arbitrary} sample  distributions to recover the low-dimensional subspace, and their results depend on the eigenvalue of an arbitrary sample distribution $\|\boldsymbol{X}^{-1}\|$, where $\boldsymbol{X}=[\boldsymbol{x}^{(1)},\dots,\boldsymbol{x}^{(d_1)}]$ is a collection of arbitrary $d_1$ arms from the arm set.
	By contrast, we utilize the \emph{E-optimality} of the sample batch $\bar{\bs{x}}_1,\dots,\bar{\bs{x}}_p$ to obtain an optimized dependency $\rho^E \approx \min_{\boldsymbol{x}^{(1)},\dots,\boldsymbol{x}^{(d_1)} \in \mathcal{X}} \|\boldsymbol{X}^{-1}\|$, which is the best one can achieve at the subspace recovery stage.
	(ii) 
	If one naively applies existing single-task BAI-LB analysis~\cite{fiez2019sequential,katz2020empirical} in the estimated subspace $\hat{\boldsymbol{B}}_t$, one can only obtain a sample complexity $\|\hat{\boldsymbol{B}}_t^\top (\boldsymbol{x}-\boldsymbol{x}')\|^2_{(\sum_{i=1}^{n}\lambda_m^{*}(\boldsymbol{x}_i) \hat{\boldsymbol{B}}_t^\top \boldsymbol{x}_i \boldsymbol{x}_i^\top \hat{\boldsymbol{B}}_t )^{-1}}$ dependent on $\hat{\boldsymbol{B}}_t$, but this is not a valid upper bound.
	To tackle this challenge, we connect the low-dimensional sample complexity under the estimated subspace $\|\hat{\boldsymbol{B}}_t^\top (\boldsymbol{x}-\boldsymbol{x}')\|^2_{(\sum_{i=1}^{n}\lambda_m^{*}(\boldsymbol{x}_i) \hat{\boldsymbol{B}}_t^\top \boldsymbol{x}_i \boldsymbol{x}_i^\top \hat{\boldsymbol{B}}_t )^{-1}}$ with that under the true subspace $\|\boldsymbol{B}^\top (\boldsymbol{x}-\boldsymbol{x}')\|^2_{(\sum_{i=1}^{n}\lambda_m^{*}(\boldsymbol{x}_i) \boldsymbol{B}^\top \boldsymbol{x}_i \boldsymbol{x}_i^\top \boldsymbol{B} )^{-1}}$, and drive a tight sample complexity.
	

	\cameraReady{
	\textbf{Lower Bound Conjecture.}
	We conjecture that the lower bound for RepBAI-LB is $\Omega( \sum_{m=1}^{M} \rho^{S}(\cX, \bs{\theta}_m)  \log(\frac{1}{\delta}) )$. We describe the preliminary idea below.
	
	First, the lower bound for single-task BAI-LB with arm set $\cX$ and underlying reward parameter $\bs{\theta}_m$ is $\Omega(\rho^{S}(\cX, \bs{\theta}_m)  \log (\frac{1}{\delta}) )$~\cite{fiez2019sequential}. 
	If the global feature extractor $\boldsymbol{B}$ is known, then the RepBAI-LB problem will reduce to $M$ $k$-dimensional BAI-LB instances with arm set $\{\boldsymbol{B}^\top \boldsymbol{x}: \bs{x} \in \cX \}$ and underlying reward parameters $\boldsymbol{w}_1, \dots, \boldsymbol{w}_M$. Therefore, we conjecture that the lower bound for RepBAI-LB is $\Omega( \sum_{m=1}^{M} \rho^{S}(\cX, \bs{\theta}_m) \log(\frac{1}{\delta}) )$, which is the cost of solving $M$ $k$-dimensional BAI-LB instances. However, it is challenging to rigorously analyze the independence of these $M$ $k$-dimensional instances and drive the summation in our conjectured lower bound. We leave the formal lower bound proof for future work.
	
	When $M \gg d \gg k$, Theorem~\ref{thm:bai_ub} matches our conjectured lower bound, which implies that algorithm $\algrepbailb$ performs as well as an oracle that knows the low-rank representation $\boldsymbol{B}$ in advance.
	}

	\section{Representation Learning for Best Policy Identification in Contextual Linear Bandits}

		\begin{algorithm}[t]
		\caption{$\algrepbpiclb$ (Contextual Double Experimental Design)} \label{alg:repbpiclb}
		\begin{algorithmic}[1]
			\STATE {\bfseries Input:} $\delta$, $\varepsilon$, $\bs{\phi}(\cdot,\cdot)$, regularization parameter $\gamma \geq 1$, rounding procedure $\round$, rounding approximation parameter $\zeta:=\frac{1}{10}$, and the size of sample batch $p:= \lceil \frac{c_2 (1+\zeta)^2 L_{\phi}^4}{\nu^2} \polylog(\zeta,M,d,k,L_{\phi},L_{w},\gamma,\frac{1}{\nu},\frac{1}{\delta},\frac{1}{\varepsilon}) \rceil$, where $c_2$ is an absolute constant. \label{line:bpi_value_p}
			\STATE $T_0 \leftarrow \lceil \frac{32^2 (1+\zeta)^2 L_{\phi}^4}{\nu^2} \log^2 (\frac{20d |\cA|}{\delta}) \rceil$. $\hat{\cD} \leftarrow \emptyset$
			\FOR{$\tau \in [T_0]$}  \label{line:bpi_estimate_context_dis_start}
			\STATE \hspace*{-0.4em} Observe context $s_{\tau}$, and randomly sample an action\;
			\STATE $\hat{\cD} \leftarrow \hat{\cD} \cup \{s_{\tau}\}$\;
			\ENDFOR \label{line:bpi_estimate_context_dis_end}
			\STATE Let $\bs{\lambda}^{E}_{\hat{\cD}}$ and $\rho^{E}_{\hat{\cD}}$ be the optimal solution and the optimal value of the E-optimal design optimization: 
			$$
			\min_{\bs{\lambda} \in \triangle_{\cA}} \Big\| \big( \sum_{a \in \cA} \lambda(a) \ex_{s \sim \hat{\cD}}\mbr{\bs{\phi}(s,a) \bs{\phi}(s,a)^\top} \big)^{-1} \Big\|
			$$ \label{line:bpi_E_optimal_design}\\
			\STATE $\{\bar{a}_i\}_{i \in [p]} \!\! \leftarrow \!\! \round(\{(a, \ex_{\! s \sim \hat{\cD} \!\!}\mbr{\bs{\phi}(s,a) \bs{\phi}(s,a)^\top})\}_{a \in \cA},$\\$ \bs{\lambda}^{E}_{\hat{\cD}}, \zeta, p)$ \label{line:bpi_E_design_round}\; 
			\STATE $T \leftarrow \lceil \frac{c_3 (1+\zeta)^2 k^4 L_{\phi}^4 L_{w}^4 }{ M \nu^2 \varepsilon^2 } \polylog(\zeta,d,k,L_{\phi},L_{w},\gamma,\frac{1}{\nu},$\\$\frac{1}{\delta},\frac{1}{\varepsilon}) \rceil$, where $c_3$ is an absolute constant \label{line:bpi_value_T}
			\STATE $\hat{\bs{B}} \leftarrow \algconfeatrecover(T, \{\bar{a}_i\}_{i \in [p]})$
			\STATE $N \leftarrow \lceil \frac{ (k^2 + \gamma k L_{w}^2) }{\varepsilon^2} \log^4 ({\frac{  \gamma k  L_{w} }{\varepsilon \delta}} ) \rceil$\;
			\STATE $\{\hat{\bs{\theta}}_{m,N}\}_{m \in [M]} \leftarrow \algestlowrep(N, \gamma, \hat{\bs{B}})$
			\STATE {\bfseries return} $\hat{\pi}_m(\cdot):=\argmax_{a \in \cA} \bs{\phi}(\cdot,a)^\top \hat{\bs{\theta}}_{m,N}$ for all tasks $m \in [M]$ \label{line:bpi_return}
		\end{algorithmic}
	\end{algorithm}

	\begin{algorithm}[t]
		\caption{$\algconfeatrecover(T, \{\bar{a}_i\}_{i \in [p]})$} \label{alg:con_feat_recover}
		\begin{algorithmic}[1]
			\FOR{task $m \in [M]$} \label{line:bpi_stage2_sample_start}
			\FOR{round $j \in [T]$}
			\FOR{arm $i \in [p]$}
			\STATE Observe context $s_{m,j,i}^{(1)}$, sample action $\bar{a}_i$ in task $m$, and observe reward $\alpha^{(1)}_{m,j,i}$\; \label{line:bpi_stage2_sample1}
			\STATE Observe context $s_{m,j,i}^{(2)}$, sample action $\bar{a}_i$ in task $m$, and observe reward $\alpha^{(2)}_{m,j,i}$\; \label{line:bpi_stage2_sample2}
			\ENDFOR
			\STATE Let $\bs{\phi}^{(\ell)}_{m,j,i}\!:=\!\bs{\phi}(s_{m,j,i}^{(\ell)},\bar{a}_i)$, $\forall i \!\in\! [p]$, $\forall \ell \!\in\! \{1,2\}$\;
			\STATE $\tilde{\bs{\theta}}^{(\ell)}_{m,j} \! \leftarrow \!\! ( \sum_{i=1}^{p} \! \bs{\phi}^{(\ell)}_{m,j,i} {\bs{\phi}^{(\ell)}_{m,j,i}}^{\!\!\!\!\top} \!)^{-1} \! \sum_{i=1}^{p} \! \bs{\phi}^{(\ell)}_{m,j,i} \alpha^{(\ell)}_{m,j,i}$, $\forall \ell \in \{1,2\}$\;
			\ENDFOR
			\ENDFOR  \label{line:bpi_stage2_sample_end}
			\STATE $\bs{Z} \leftarrow \frac{1}{M T} \sum_{m=1}^{M} \sum_{j=1}^{T} \tilde{\bs{\theta}}^{(1)}_{m,j} (\tilde{\bs{\theta}}^{(2)}_{m,j})^\top$ \label{line:bpi_Z}\;
			\STATE Perform SVD decomposition on $\bs{Z}$, and let $\hat{\bs{B}}$ be the top-$k$ left singular vectors \label{line:bpi_svd}\;
			\STATE {\bfseries return} $\hat{\bs{B}}$\; 
		\end{algorithmic}
	\end{algorithm}

	\begin{algorithm}[t]
		\caption{$\algestlowrep(N, \gamma, \hat{\bs{B}})$} \label{alg:est_low_rep}
		\begin{algorithmic}[1]
			\STATE $\bs{\Sigma}_{m,0} \leftarrow \gamma I$ for any $m \in [M]$\;
			\FOR{task $m \in [M]$}
			\FOR{timestep $t \in [N]$}
			\STATE Observe context $s_{m,t}$\; \label{line:bpi_stage3_context}
			\STATE $a_{m,t} \leftarrow \argmax_{a \in \cA} \| \hat{\bs{B}}^\top \bs{\phi}(s_{m,t},a) \|_{\bs{\Sigma}_{m,t-1}^{-1}}$\;
			\STATE Sample action $a_{m,t}$, and observe reward $r_{m,t}$\; \label{line:bpi_stage3_sample}
			\STATE $\bs{\Sigma}_{m,t} \leftarrow \bs{\Sigma}_{m,t-1} +$\\$ \hat{\bs{B}}^\top \bs{\phi}(s_{m,t},a_{m,t}) \bs{\phi}(s_{m,t},a_{m,t})^\top \hat{\bs{B}}$\;
			\STATE $\hat{\bs{w}}_{m,t} \leftarrow \bs{\Sigma}_{m,t}^{-1} \sum_{\tau=1}^{t} \hat{\bs{B}}^\top \bs{\phi}(s_{m,\tau},a_{m,\tau}) r_{m,\tau}$ \label{line:bpi_est_w}\;
			\STATE $\hat{\bs{\theta}}_{m,t} \leftarrow \hat{\bs{B}} \hat{\bs{w}}_{m,t}$ \label{line:bpi_est_theta}\;
			\ENDFOR
			\ENDFOR
			\STATE {\bfseries return} $\{\hat{\bs{\theta}}_{m,N}\}_{m \in [M]}$\;
		\end{algorithmic}
	\end{algorithm}

	In this section, we turn to contextual linear bandits. 
	Different from prior contextual linear bandit works, e.g.,~\cite{huang2015efficient,li2022instance}, here we do not assume any knowledge of context distribution. 
	As a result, our $\probbpi$ problem faces several unique challenges: (i) how to plan an efficient sample allocation for recovering the feature extractor in advance under an \emph{unknown} context distribution, and (ii) how to construct an estimator for the feature extractor with a partially observed context space.
	
	
	We propose algorithm $\algrepbpiclb$, which first (i) efficiently estimates the context distribution and conducts experimental designs under the estimated context distribution, and then (ii) builds a delicate estimator for the feature extractor using instantaneous contexts. 
	Moreover, we also establish a sample complexity guarantee for $\algrepbpiclb$, which mainly depends on the low dimension of the common representation among tasks.

	\subsection{Algorithm $\algrepbpiclb$}

	Algorithm~\ref{alg:repbpiclb} presents the pseudo-code of $\algrepbpiclb$. 
	At the beginning, $\algrepbpiclb$ uses $T_0$ samples to estimate the context distribution $\cD$ (Lines~\ref{line:bpi_estimate_context_dis_start}-\ref{line:bpi_estimate_context_dis_end}). Then, it performs the E-optimal design under the estimated context distribution $\hat{\cD}$, and obtains an efficient sample allocation $\bs{\lambda}^{E}_{\hat{\cD}}$ for the purpose of recovering the feature extractor $\bs{B}$ (Line~\ref{line:bpi_E_optimal_design}). Further, $\algrepbpiclb$ calls the rounding procedure $\round$ to transform $\bs{\lambda}^{E}_{\hat{\cD}}$ into a sample batch $\bar{a}_1,\dots,\bar{a}_p$, such that
	\begin{align*}
		& \Big\| \big(\sum_{j=1}^p \ex_{s \sim \hat{\cD}}\mbr{\bs{\phi}(s,\bar{a}_j) \bs{\phi}(s,\bar{a}_j)^\top} \big)^{-1} \Big\| 
		\\ 
		\leq & (1+\zeta) \Big\| \big(p \sum_{a \in \cA} \lambda^{E}_{\hat{\cD}}(a) \ex_{s \sim \hat{\cD}}\mbr{\bs{\phi}(s,a) \bs{\phi}(s,a)^\top} \big)^{-1} \Big\| .
	\end{align*}
	The specific values of $p$ and $T$ in Lines~\ref{line:bpi_value_p},~\ref{line:bpi_value_T} are provided in Eq.~\eqref{eq:value_p} of Appendix~\ref{apx:bpi_sample_batch_planning} and Eq.~\eqref{eq:value_T} of Appendix~\ref{apx:bpi_feat_recover}, respectively.\looseness=-1
	
	Next, $\algrepbpiclb$ runs subroutine $\algconfeatrecover$ to estimate the feature extractor $\bs{B}$ using the sample batch $\bar{a}_1,\dots,\bar{a}_p$.
	In $\algconfeatrecover$ (Algorithm~\ref{alg:con_feat_recover}), we repeatedly sample $\bar{a}_1,\dots,\bar{a}_p$ in all tasks with random contexts. 
	In Lines~\ref{line:bpi_stage2_sample1}-\ref{line:bpi_stage2_sample2}, we sample this batch twice, and the superscripts $(1)$ and $(2)$ denotes the first and second samples, respectively.
	After sampling, we carefully establish an estimator $\bs{Z}$ for the reward parameter related matrix $\frac{1}{M} \sum_{m=1}^{M} \bs{\theta}_m \bs{\theta}_m^\top$, using instantaneous context-action features $\bs{\phi}(s_{m,j,i}^{(\ell)},\bar{a}_{i})^\top$.
	We then perform SVD decomposition on $\bs{Z}$ to obtain the estimated feature extractor $\hat{\bs{B}}$ (Lines~\ref{line:bpi_Z}-\ref{line:bpi_svd}). 
	
	Then, $\algrepbpiclb$ calls subroutine $\algestlowrep$, which adapts existing reward-free-exploration algorithm in \cite{zanette2021design} with low-rank representations to estimate $\bs{\theta}_m$.
	In $\algestlowrep$ (Algorithm~\ref{alg:est_low_rep}), we employ the estimated representation $\hat{\bs{B}}^\top \bs{\phi}(s,a)$  to sample the actions with the maximum uncertainty under the observed contexts.
	After that, we construct estimators $\hat{\bs{w}}_{m,t}$ and $\hat{\bs{\theta}}_{m,t}$ for the prediction parameter $\hat{\bs{w}}_m$ and reward parameter $\hat{\bs{\theta}}_m$ (Lines~\ref{line:bpi_est_w}-\ref{line:bpi_est_theta}). 
	At last, $\algrepbpiclb$ returns the greedy policy with respect to the estimated reward parameter $\hat{\bs{\theta}}_{m,N}$ for each task.

	\subsection{Theoretical Performance of $\algrepbpiclb$}
	
	Next, we establish sample complexity guarantees for algorithm $\algrepbpiclb$. In order to illustrate the advantages of representation learning, we first review existing results for traditional single-task best policy identification in contextual linear bandits (BPI-CLB).
	For a single  BPI-CLB instance with context-action features $\bs{\phi}(s,a) \in \R^d$ and reward parameter $\bs{\theta} \in \R^{d}$, the  best known sample complexity is $\tilde{O}(\frac{d^2}{\varepsilon^2}\log(\frac{1}{\delta}))$~\cite{zanette2021design,li2022instance}.
	
	Apparently, if one naively solves the $\probbpi$ problem by running single-task BPI-CLB algorithms to tackle $M$ tasks independently, one will have a  sample complexity 
	\begin{align*}
		\tilde{O}\bigg( \frac{Md^2}{\varepsilon^2} \log \Big(\frac{1}{\delta}\Big) \bigg) ,
	\end{align*}
	which heavily depends on the raw dimension $d$ of context-action features.
	The goal of representation learning is to leverage the common representation among tasks to alleviate the dependency of dimension and save samples.
	
	Now we present the sample complexity for $\algrepbpiclb$. \looseness=-1
	
	\begin{theorem} \label{thm:bpi_ub}
		With probability at least $1-\delta$,  $\algrepbpiclb$  returns an $\varepsilon$-optimal policy $\hat{\pi}_m$ such that
		$
		\ex_{s \sim \cD}  [ \max_{a \in \cA} (\bs{\phi}(s,a) - \bs{\phi}(s,\hat{\pi}_m(s) )^\top \bs{\theta}_m ] \leq \varepsilon
		$ 
		for each task $m \in [M]$, and the number of samples used is 
		\begin{align*}
			\tilde{O} \sbr{ \frac{ M \sbr{k^2 + \gamma k L_{w}^2 } }{\varepsilon^2} + \frac{ k^4 L_{\phi}^8 L_{w}^4 }{ \nu^4 \varepsilon^2 } } .
		\end{align*}
	\end{theorem}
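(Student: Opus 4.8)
The plan is to separate the argument into a correctness part, which chains three estimation guarantees matching the three phases of $\algrepbpiclb$, and a sample-complexity part, which adds up the draws of each phase. I would work throughout on one high-probability event obtained by a union bound with the failure probability $\delta$ split across the phases. The first phase estimates the context distribution (Lines~\ref{line:bpi_estimate_context_dis_start}--\ref{line:bpi_estimate_context_dis_end}): applying a matrix concentration inequality to the i.i.d.\ draws $s_1,\dots,s_{T_0}\sim\cD$, I would show that for every action $a$ the empirical moment $\ex_{s\sim\hat\cD}[\bs\phi(s,a)\bs\phi(s,a)^\top]$ is within a small spectral distance of $\ex_{s\sim\cD}[\bs\phi(s,a)\bs\phi(s,a)^\top]$. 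With $T_0=\tilde O(L_\phi^4/\nu^2)$ this distance is a small constant multiple of $\nu$, so Assumption~\ref{assumption:bpi_rho_E_D_is_finite} transfers from $\cD$ to $\hat\cD$; consequently the E-optimal design on Line~\ref{line:bpi_E_optimal_design} and the rounded batch $\bar a_1,\dots,\bar a_p$ keep the associated covariance well-conditioned (inverse norm $O(1/(p\nu))$) under the \emph{true} $\cD$ as well, which is what the next phase actually samples from.

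Next I would prove the feature-recovery guarantee for $\algconfeatrecover$. The key structural observation is that the two reward samples on Lines~\ref{line:bpi_stage2_sample1}--\ref{line:bpi_stage2_sample2} use fresh, independent contexts, so the least-squares vectors $\tilde{\bs\theta}^{(1)}_{m,j}$ and $\tilde{\bs\theta}^{(2)}_{m,j}$ are conditionally unbiased for $\bs\theta_m$ and mutually independent. Hence $\ex[\tilde{\bs\theta}^{(1)}_{m,j}(\tilde{\bs\theta}^{(2)}_{m,j})^\top]=\bs\theta_m\bs\theta_m^\top$ with no noise-covariance bias term (this is exactly why the batch is sampled twice), giving $\ex[\bs Z]=\frac1M\sum_m\bs\theta_m\bs\theta_m^\top=\bs B(\frac1M\sum_m\bs w_m\bs w_m^\top)\bs B^\top$. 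By Assumption~\ref{assumption:diverse_task} the inner matrix has minimum eigenvalue $\Omega(1/k)$, so $\ex[\bs Z]$ has rank $k$ with a spectral gap of order $1/k$ between $\operatorname{range}(\bs B)$ and its orthogonal complement. I would then bound $\|\bs Z-\ex[\bs Z]\|$: conditioning on the well-conditioning event from the previous phase together with $p=\tilde O(L_\phi^4/\nu^2)$, each empirical design matrix $\sum_i\bs\phi^{(\ell)}_{m,j,i}(\bs\phi^{(\ell)}_{m,j,i})^\top$ is invertible with controlled inverse, so the noise parts of $\tilde{\bs\theta}^{(\ell)}_{m,j}$ have bounded moments, and a matrix Bernstein bound over the $MT$ independent $(m,j)$ summands gives $\|\bs Z-\ex[\bs Z]\|=\tilde O(kL_\phi^2L_w/(\nu\sqrt{MT}))$. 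Combining this with a Davis--Kahan / Wedin subspace-perturbation bound and the $\Omega(1/k)$ gap yields $\|(I-\hat{\bs B}\hat{\bs B}^\top)\bs B\|=\tilde O(k\,\|\bs Z-\ex[\bs Z]\|)$, and the choice $T=\tilde O(k^4L_\phi^4L_w^4/(M\nu^2\varepsilon^2))$ drives this below the tolerance the policy phase needs.

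Finally I would convert the accurate $\hat{\bs B}$ into an $\varepsilon$-optimal policy through $\algestlowrep$. Decomposing $\bs\theta_m=\hat{\bs B}\hat{\bs B}^\top\bs\theta_m+(I-\hat{\bs B}\hat{\bs B}^\top)\bs\theta_m$, the suboptimality of the greedy policy w.r.t.\ $\hat{\bs\theta}_{m,N}=\hat{\bs B}\hat{\bs w}_{m,N}$ splits into a low-dimensional estimation error and a misspecification term bounded by $\|(I-\hat{\bs B}\hat{\bs B}^\top)\bs B\|\,L_w$, which is controlled by the previous phase. For the estimation error I would invoke the reward-free-exploration analysis of \citet{zanette2021design} in the $k$-dimensional representation $\hat{\bs B}^\top\bs\phi(s,a)$: the maximum-uncertainty exploration rule together with the ridge estimator on Line~\ref{line:bpi_est_w} makes $\ex_{s\sim\cD}[\max_{a}\|\hat{\bs B}^\top\bs\phi(s,a)\|_{\bs\Sigma_{m,N}^{-1}}]$ small, so $N=\tilde O((k^2+\gamma kL_w^2)/\varepsilon^2)$ samples per task force $\ex_{s\sim\cD}[\max_a|\bs\phi(s,a)^\top(\hat{\bs B}\hat{\bs B}^\top\bs\theta_m-\hat{\bs\theta}_{m,N})|]\le\varepsilon/2$, leaving total suboptimality $\le\varepsilon$.

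The sample complexity then follows by adding the three phase costs: $T_0=\tilde O(L_\phi^4/\nu^2)$ (lower order), the $2MTp$ draws of $\algconfeatrecover$, which equal $\tilde O(k^4L_\phi^8L_w^4/(\nu^4\varepsilon^2))$ after substituting $T$ and $p$, and the $MN$ draws of $\algestlowrep$, which equal $\tilde O(M(k^2+\gamma kL_w^2)/\varepsilon^2)$; these give exactly the two stated terms. The hardest step is the feature-recovery concentration: unlike the non-contextual $\algfeatrecover$, here the design matrices inside $\tilde{\bs\theta}^{(\ell)}_{m,j}$ are themselves random through the contexts, so controlling the tail of the inverse empirical covariance, propagating the $\hat\cD$-versus-$\cD$ mismatch through the E-optimal design, and turning the resulting product-form fluctuations into a clean spectral bound on $\|\bs Z-\ex[\bs Z]\|$ is the technically delicate part.
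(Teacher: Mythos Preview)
Your proposal is correct and mirrors the paper's proof almost exactly: the same three-phase decomposition (context-distribution concentration via matrix Bernstein giving events $\cK,\cL$; unbiasedness of $\bs Z$ from the two independent context draws plus truncated matrix Bernstein and Davis--Kahan for $\hat{\bs B}$; and the \citet{zanette2021design} reward-free analysis with self-normalized martingale and reverse-Bernstein bounds for $\algestlowrep$), followed by summing $T_0+2MTp+MN$. Your identification of the feature-recovery concentration as the crux---handling the random per-round design matrices $\sum_i\bs\phi^{(\ell)}_{m,j,i}(\bs\phi^{(\ell)}_{m,j,i})^\top$ and the $\hat\cD$-versus-$\cD$ slack---is exactly where the paper spends most of its effort (Lemmas~\ref{lemma:norm_Phi_plus} and \ref{lemma:Z_est_error}).
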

	
	\textbf{Remark 2.} \cameraReady{In this result, only factor $\nu$ has implicit dimensional dependency, which scales as $\frac{1}{d}$.} The first term $\frac{ M (k^2 + \gamma k L_{w}^2) }{\varepsilon^2}$ is a cost of identifying optimal policies for $M$ tasks with $k$-dimensional features $\bs{B}^\top \bs{\phi}(s,a)$. The second term $\frac{ k^4 L_{\phi}^8 L_{w}^4 }{ \nu^4 \varepsilon^2 }$ is a price paid for learning global feature extractor $\bs{B}$ and does not depend on $M$. This indicates that we only need to pay this price once, and then enjoy the benefits of dimension reduction for all $M$ tasks.
	
	\cameraReady{When $M \gg \frac{1}{\nu} \gg k$, this result becomes
	$\tilde{O}(\frac{M k^2}{\varepsilon^2})$ and only depends on the low dimension $k$, which implies that $\algrepbpiclb$ performs as well as an oracle that knows the underlying low-rank subspace $\bs{B}$.} This sample complexity significantly outperforms the baseline result $\tilde{O}(\frac{M d^2}{\varepsilon^2})$ (i.e., solving $M$ tasks independently), and demonstrates the power of representation learning. 
	
	\cameraReady{
	\textbf{Analytical Novelty.}
	Below we elaborate the novelty in the proof of Theorem~\ref{thm:bpi_ub}.  
	(i) We carefully bound the deviation between the context-action features under the estimated context distribution $\mathbb{E}_{s \sim \hat{\mathcal{D}}} [\boldsymbol{\phi}(s,\bar{a}_i) \boldsymbol{\phi}(s,\bar{a}_i)^\top]$ and those under the true context distribution $\mathbb{E}_{s \sim \mathcal{D}} [\boldsymbol{\phi}(s,\bar{a}_i) \boldsymbol{\phi}(s,\bar{a}_i)^\top]$. We further bound the distance between $\mathbb{E}_{s \sim \mathcal{D}} [\boldsymbol{\phi}(s,\bar{a}_i) \boldsymbol{\phi}(s,\bar{a}_i)^\top]$ and the context-action features under actual instantaneous contexts $\boldsymbol{\phi}(s_{m,j,i}^{(\ell)},\bar{a}_i) \boldsymbol{\phi}(s_{m,j,i}^{(\ell)},\bar{a}_i)^\top$.
	(ii) We leverage the E-optimality of the sample batch $\bar{a}_1,\dots,\bar{a}_p$
	to bound $\|(\sum_{i=1}^{p} \bs{\phi}^{(\ell)}_{m,j,i} {\bs{\phi}^{(\ell)}_{m,j,i}}^\top)^{-1}\|$. Then, we establish a concentration inequality for $\|\bs{Z}-\frac{1}{M} \sum_{m=1}^{M} \bs{\theta}_m \bs{\theta}_m^\top\|$ using the bounded $\|(\sum_{i=1}^{p} \bs{\phi}^{(\ell)}_{m,j,i} {\bs{\phi}^{(\ell)}_{m,j,i}}^\top)^{-1}\|$ and matrix Bernstern inequality with truncated noises. 
	(iii) Furthermore, we decompose the prediction error $\bs{\phi}(s,a)^\top (\hat{\bs{\theta}}_{m,t} - \bs{\theta}_m)$ into three components, including the sample variance and bias of $\hat{\bs{w}}_{m,t}$, and the estimation error of $\hat{\bs{B}}$. This prediction error is bounded via self-normalized concentration inequalities with the reduced dimension $k$.
	}

	\section{Experiments}

	In this section, we present experiments to evaluate the empirical performance of our algorithms.
	
	In our experiments, we set $\delta=0.005$, $d=5$, $k=2$ and $M \in [50,230]$, where $k$ divides $M$. In $\probbai$, $\cX$ is the canonical basis of $\R^{d}$. 
	In $\probbpi$, we set $\varepsilon=0.1$, $|\cS|=5$ and $|\cA|=5$.
	$\cD$ is the uniform distribution on $\cS$. For any $s \in \cS$, $\{\bs{\phi}(s,a)\}_{a \in \cA}$ is the canonical basis of $\R^d$.  
	In both problems, $\bs{B}=[I_k;\bs{0}]$, where $I_k$ denotes the $k \times k$ identity matrix. 
	$\bs{w}_1,\dots,\bs{w}_M$ are divided into $k$ groups, with $\frac{M}{k}$ same members in each group. 
	The members in the $i$-th group ($i \in [k]$), i.e., $\bs{w}_{(M/k)\times(i-1)+1},\dots,\bs{w}_{(M/k)\times i}$, have $1$ in the $i$-th coordinate and $0$ in all other coordinates. 
	For any $m \in [M]$, $\bs{\theta}_m=\bs{B} \bs{w}_m$. We vary $M$ and perform $50$ independent runs  to report the average sample complexity across runs.

	\begin{figure}[t]
		\centering     
		\subfigure[$\probbai$] { 
			\label{fig:bai}        
			\includegraphics[width=0.45\columnwidth]{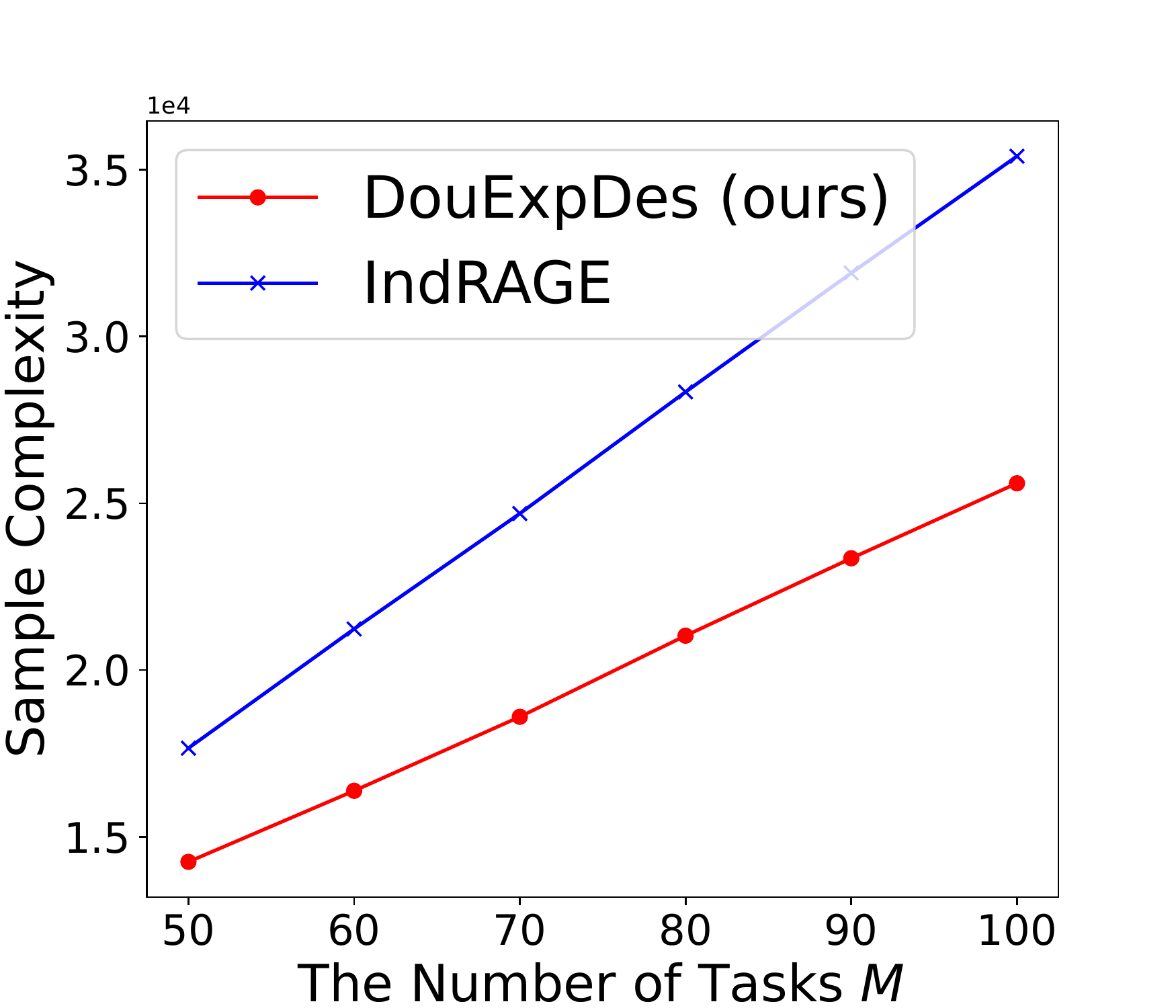} 
		}    
		\subfigure[$\probbpi$] { 
			\label{fig:bpi}       
			\includegraphics[width=0.45\columnwidth]{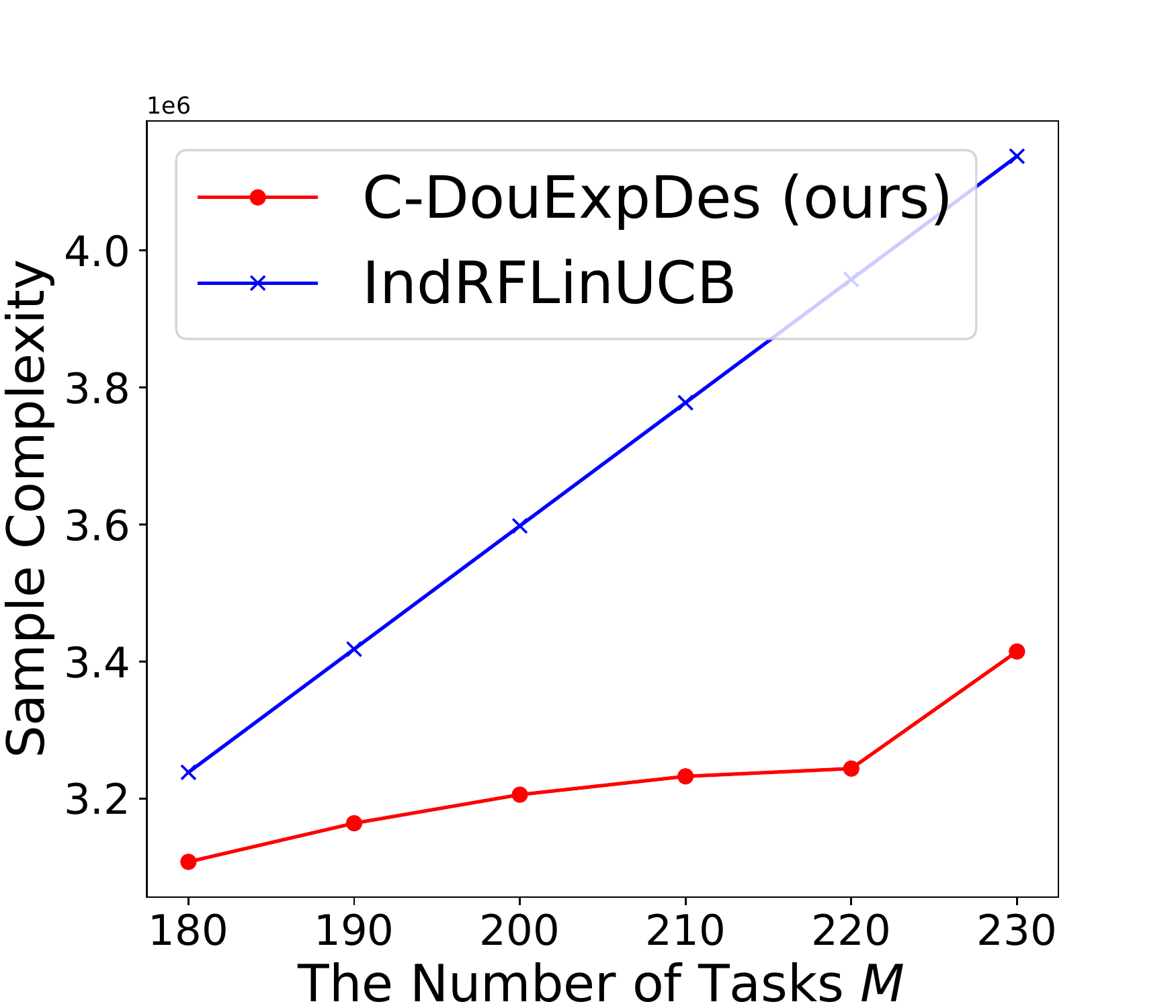}
		}
		\caption{Experimental results for $\probbai$ and $\probbpi$. The two figures compare the sample complexities of our algorithms with the naive algorithms which treat each task independently.
		}
		\label{fig:experiments}
	\end{figure} 
	
	For $\probbai$, we compare algorithm $\algrepbailb$ with the baseline $\mathtt{IndRAGE}$ which runs the state-of-the-art single-task BAI-LB algorithm $\mathtt{RAGE}$~\cite{fiez2019sequential} to solve $M$ tasks independently.
	Figure~\ref{fig:bai} shows the empirical results for $\probbai$. From Figure~\ref{fig:bai}, we can see that $\algrepbailb$ has a better sample complexity than $\mathtt{IndRAGE}$, and as the number of tasks $M$ increases, the sample complexity of $\algrepbailb$ increases at a lower rate than that of $\mathtt{IndRAGE}$. This demonstrates that $\algrepbailb$ effectively utilize the shared representation among tasks to reduce the number of samples needed for multi-task learning.
	
	For $\probbpi$, our algorithm $\algrepbpiclb$ is compared with the baseline $\mathtt{IndRFLinUCB}$, which tackles $M$ tasks independently by calling the state-of-the-art single-task BPI-CLB algorithm $\mathtt{Reward \hyphen free\ LinUCB}$~\cite{zanette2021design}. As presented in Figure~\ref{fig:bpi}, $\algrepbpiclb$ achieves a significantly lower sample complexity than $\mathtt{IndRFLinUCB}$. In addition, the slope of the sample complexity curve of $\algrepbpiclb$ with respect to $M$ is much smaller than that of $\mathtt{IndRFLinUCB}$, which validates that $\algrepbpiclb$ enjoys a lighter dependency on dimension in multi-task learning. These empirical results match our theoretical bounds, and corroborate the power of representation learning. 
	
	\section{Conclusion and Future Work}
	
	In this paper, we investigate representation learning for pure exploration in multi-task (contextual) linear bandits. We propose two efficient algorithms which conduct double experimental designs to optimally allocate samples for learning the low-rank representation. The sample complexities of our algorithms mainly depend on the low dimension of the underlying joint representation among tasks, instead of the raw high dimension. Our theoretical and experimental results demonstrate the benefit of representation learning for pure exploration in multi-task bandits. 
	There are many interesting directions for further exploration. One direction is to establish lower bounds to validate the optimality of our algorithms. Another direction is to extend this work to more complex (nonlinear) representation settings.
	
	\section*{Acknowledgements}
	
	The work of Yihan Du and Longbo Huang is supported by the Technology and Innovation Major Project of the Ministry of Science and Technology of China under Grant 2020AAA0108400 and 2020AAA0108403 and the Tsinghua Precision Medicine Foundation 10001020109.
	Wen Sun acknowledges funding support from NSF IIS-2154711.

	\bibliography{icml23_RepLinearBandit_ref}
	\bibliographystyle{icml2023}
	
	\clearpage
	
	\OnlyInFull{
		\appendix
\onecolumn

\renewcommand{\appendixpagename}{\centering \Large Appendix}
\appendixpage

%

%
%

\section{Related Work} \label{apx:related_work}

In this section, we present a full literature review for two lines of related works, i.e., representation learning and pure exploration in (contextual) linear bandits.

\cameraReady{
\textbf{Representation Learning.}
The study of representation learning has been initiated and developed in the supervised learning setting, e.g., \cite{baxter2000model,ben2003exploiting,ando2005framework,maurer2006bounds,cavallanti2010linear,maurer2016benefit,du2020few,tripuraneni2021provable}. A most related work is \cite{tripuraneni2021provable}, which proposes a method-of-moments estimator for recovering the feature extractor, and establishes error guarantees for transferring the learned representation from past tasks to a new task.

Recently, representation learning for sequential decision making (bandits and reinforcement learning) has attracted extensive attention.
%
We first introduce several works on low-rank bandits, which is a very similar topic to representation learning for bandits.
\citet{lale2019stochastic} study linear bandits with a hidden low-rank structure, and provide a regret bound dependent on the eigenvalue of the action distribution covariance.
\citet{jun2019bilinear,lu2021low} also investigate  low-rank linear bandits (bilinear bandits), and design algorithms which run traditional linear bandit algorithm LinUCB~\cite{abbasi2011improved} in the estimated low-dimensional subspace.
\citet{lattimore2021bandit} consider an instantiation of  low-rank bandits, called bandit phase retrieval.
\citet{huang2021optimal} study a large family of bandit problems with non-concave reward functions, including low-rank linear bandits. They design a stochastic gradient-based algorithm that achieves an improved regret bound over those in \cite{jun2019bilinear,lu2021low}.

Now we introduce related works on representation learning for bandits.
\citet{yang2020impact,yang2022nearly} study multi-task representation learning for linear bandits with the regret minimization objective, and assume that the action set at each timestep is an ellipsoid or sphere. \citet{hu2021near} further relax this assumption and allow arbitrary action sets, but their algorithms equipped with a multi-task joint least-square estimator are computationally inefficient. \citet{cella2022meta,cella2022multi} also investigate the problem in \cite{yang2020impact} and propose algorithms which do not need to know the dimension of the underlying representation. 
\citet{qin2022non} study multi-task representation learning for linear bandits in a non-stationary environment, and develop algorithms that learn and transfer non-stationary representations adaptively.

There are also other works studying multi-task representation learning for reinforcement learning (RL).
\citet{lu2021power,lu2022provable} consider multi-task representation learning for linear MDPs, where the agent learns a shared representation function from a given function class. 
\citet{pacchiano2022joint} investigate multi-task RL with a joint  low-dimensional linear representation, and design a computationally efficient algorithm using a bilinear optimization oracle. 
\citet{zhang2021provably} consider multi-task (multi-player) RL in tabular MDPs, where the relatedness of MDPs are measured by the similarity of reward functions and transition distributions.
\citet{chengprovable,agarwal2022provable} study multi-task representation learning and representational transfer for low-rank MDPs, where multiple low-rank MDPs share a common state-action feature mapping. 

Different from the above works which consider regret minimization, we study representation learning for (contextual) linear bandits with the pure exploration objective, which imposes unique challenges on how to optimally allocate samples to learn the feature extractor, and motivates us to design algorithms based on double experimental designs.
}

\textbf{Pure Exploration in (Contextual) Linear Bandits.}
Most linear bandit studies consider regret minimization, e.g.,~\cite{dani2008stochastic,rusmevichientong2010linearly,chu2011contextual,abbasi2011improved}. Recently, there is a surge of interests in pure exploration for (contextual) linear bandits, e.g., \cite{soare2014best,tao2018best,xu2018fully,fiez2019sequential,katz2020empirical,degenne2020gamification,jedra2020optimal,du2021combinatorial,zanette2021design,li2022instance}. 
For linear bandits, \citet{soare2014best} firstly apply the G-optimal design to identify the best arm, and  provide a sample complexity result that heavily depends on the minimum reward gap.
\citet{tao2018best} design a novel randomized estimator for the underlying reward parameter, and achieve tighter sample complexity which depends on the reward gaps of the best $d$ arms. \citet{du2021combinatorial} further extend the algorithm in \citep{tao2018best} to develop a polynomial-time algorithm for combinatorially large arm sets. \citet{xu2018fully} propose a fully-adaptive algorithm which changes the arm selection strategy at each timestep. \citet{fiez2019sequential} establish the first near-optimal sample complexity upper and lower bounds for best arm identification in linear bandits. \citet{katz2020empirical} further extend the algorithm in \cite{fiez2019sequential} and use empirical processes to avoid an explicit union bound over the number of arms. \citet{degenne2020gamification,jedra2020optimal} develop asymptotically optimal algorithms using the track-and-stop approaches. For contextual linear bandits, \citet{zanette2021design} design a single non-adaptive policy to collect a dataset, from which a near-optimal policy can be computed. \citet{li2022instance} build the first instance-dependent upper and lower bounds for best policy identification in contextual linear bandits, with the prior knowledge of the context distribution. By contrast, our work studies multi-task best arm/policy identification in (contextual) linear bandits with a shared representation among tasks, and does not assume any prior knowledge of the context distribution.

\section{Rounding Procedure} \label{apx:rounding_procedure}

In this section, we introduce the rounding procedure $\round$ in detail.

Let $\cX^{+}:=\cX \cup \cA$ denote the union space of arm set $\cX$ and action space $\cA$. There are $n$ arms or actions $p_1,\dots,p_n \in \cX^{+}$ and $n$ positive semi-definite matrices $\bs{Q}_1,\dots,\bs{Q}_n \in \mathbb{S}^{d}_{+}$, where $\bs{Q}_i$ represents the feature of arm or action $p_i$ for any $i \in [n]$. Denote $\cP:=\{p_1,\dots,p_n\}$ and $\cQ:=\{\bs{Q}_1,\dots,\bs{Q}_n\}$.

The rounding procedure $\round(\{(p_i,\bs{Q}_i)\}_{i=1}^{n}, \bs{\lambda}, \zeta, N)$~\cite{allen2017near,fiez2019sequential} takes $n$ arm-matrix or action-matrix pairs $(p_1,\bs{Q}_1),\dots,(p_n,\bs{Q}_n) \in \cX^{+} \times \mathbb{S}^{d}_{+}$, a distribution $\bs{\lambda} \in \triangle_{\cP}$ (or equivalently, $\bs{\lambda} \in \triangle_{\cQ}$), an approximation parameter $\zeta>0$, and the number of samples $N$ which satisfies that $N \geq \frac{180d}{\zeta^2}$ as inputs. 
Roughly speaking, it will find a $N$-length discrete arm or action sequence whose associated feature matrices maintain the similar property (e.g., G-optimality and E-optimality) as the continuous sample allocation $\bs{\lambda}$. 

Formally, $\round(\{(p_i,\bs{Q}_i)\}_{i=1}^{n}, \lambda, \zeta, N)$ returns a discrete sample sequence $s_1, \dots, s_N \in \cP^{N}$ associated with feature matrices $\bs{S}_1, \dots, \bs{S}_N \in \cQ^{N}$, which satisfy the following properties: 

(i) If $\bs{\lambda}$ is an E-optimal design, i.e., $\bs{\lambda}$ is the optimal solution of the optimization
$$
\min_{\bs{\lambda} \in \triangle_{\cQ}} \nbr{\sbr{\sum_{i=1}^n \lambda(\bs{Q}_i) \bs{Q}_i}^{-1}} ,
$$
then $\bs{S}_1, \dots, \bs{S}_N$ satisfy that
\begin{align*}
	\nbr{\sbr{\sum_{j=1}^N \bs{S}_j }^{-1}} \leq (1+\zeta) \nbr{\sbr{N \sum_{i=1}^n \lambda(\bs{Q}_i) \bs{Q}_i }^{-1}} .
\end{align*}

(ii) If $\bs{\lambda}$ is a G-optimal design, i.e., for a given prediction set $\cY \subseteq \R^d$, $\lambda$ is the optimal solution of the optimization
$$
\min_{\bs{\lambda} \in \triangle_{\cQ}} \max_{\bs{y} \in \cY} \nbr{\bs{y}}^2_{\sbr{\sum_{i=1}^n \lambda(\bs{Q}_i) \bs{Q}_i}^{-1}} ,
$$
then $\bs{S}_1, \dots, \bs{S}_N$ satisfy that
\begin{align*}
	\max_{\bs{y} \in \cY} \nbr{\bs{y}}^2_{\sbr{\sum_{j=1}^{N} \bs{S}_j}^{-1}} \leq (1+\zeta) \max_{\bs{y} \in \cY} \nbr{\bs{y}}^2_{\sbr{N \sum_{i=1}^{n} \lambda(\bs{Q}_i) \bs{Q}_i}^{-1}} .
\end{align*}


We implement $\round$ by setting $\bs{\pi}^*=N\bs{\lambda}$, $k=r=N$ and $\bs{x}_i \bs{x}_i^\top = (\sum_{i=1}^{n} \pi^*(\bs{Q}_i) \bs{Q}_i)^{-\frac{1}{2}} \bs{Q}_i (\sum_{i=1}^{n} \pi^*(\bs{Q}_i) \bs{Q}_i)^{-\frac{1}{2}}$ for any $i \in [n]$ in Algorithm 1 of \cite{allen2017near}. 
Note that Algorithm 1 in \cite{allen2017near} only needs to access the feature matrix $\bs{x}_i \bs{x}_i^\top$ rather than the separate feature vector $\bs{x}_i$, which allows us to apply it to our problem. We refer interested readers to \cite{allen2017near} and Appendix B in \cite{fiez2019sequential} for more implementation details of this rounding procedure.

\section{Proofs for Algorithm $\algrepbailb$}

In this section, we provide the proofs for Algorithm~$\algrepbailb$. 

Throughout our proofs, we use $L_{\theta}$ to denote the upper bound of $\|\bs{\theta}_m\|$ for any $m \in [M]$. 
Since $\bs{\theta}_m=\bs{B}\bs{w}_m$ for any $m \in [M]$, we have that $\|\bs{\theta}_m\| \leq \|\bs{B}\| \|\bs{w}_m\| \leq \|\bs{w}_m\| \leq L_{w}$, and thus $L_{\theta} \leq L_{w}$.

\subsection{Sample Batch Planning}

Recall that
\begin{align*}
	\bs{\lambda}^{E} := & \argmin_{\bs{\lambda} \in \triangle_{\cX}} \nbr{\sbr{\sum_{i=1}^n \lambda(\bs{x}_i) \bs{x}_i \bs{x}_i^\top}^{-1}} 
\end{align*}
and
\begin{align*}
	\rho^{E} := & \min_{\bs{\lambda} \in \triangle_{\cX}} \nbr{\sbr{\sum_{i=1}^n \lambda(\bs{x}_i) \bs{x}_i \bs{x}_i^\top}^{-1}} 
\end{align*}
are the optimal solution and the optimal value of the E-optimal design optimization, respectively (Line~\ref{line:bai_E_optimal_design} in Algorithm~\ref{alg:repbailb}). $\bar{\bs{x}}_1,\dots,\bar{\bs{x}}_p$ is an arm sequence generated according to sample allocation $\bs{\lambda}^{E}$ via rounding procedure $\round$ (Line~\ref{line:bai_E_optimal_round} in Algorithm~\ref{alg:repbailb}).

Let 
$$
\bs{X}_{\batch}:=
\begin{bmatrix}
	\bar{\bs{x}}_1^\top
	\\
	\dots
	\\
	\bar{\bs{x}}_p^\top
\end{bmatrix} ,
$$
and 
$$
\bs{X}_{\batch}^{+} := (\bs{X}_{\batch}^\top \bs{X}_{\batch})^{-1} \bs{X}_{\batch}^{\top} .
$$
According to the fact that $\cX$ spans $\R^d$, the definition of E-optimal design and the guarantee of $\round$, we have that $\bs{X}_{\batch}^\top \bs{X}_{\batch}$ is invertible.

Now, we first give an upper bound of $\|\bs{X}_{\batch}^{+}\|$.

\begin{lemma} \label{lemma:X_batch_ub}
	It holds that
	\begin{align*}
		\|\bs{X}_{\batch}^{+}\| \leq \sqrt{\frac{(1+\zeta) \rho^{E}}{p}} .
	\end{align*}
\end{lemma}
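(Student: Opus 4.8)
The plan is to reduce the spectral norm of the pseudoinverse $\bs{X}_{\batch}^{+}$ to the spectral norm of the inverse of the sample covariance matrix, and then invoke the E-optimality guarantee of the rounding procedure $\round$. First I would use the elementary identity $\|\bs{B}\|^2=\|\bs{B}\bs{B}^\top\|$, which holds for any matrix $\bs{B}$ because $\bs{B}\bs{B}^\top$ is positive semi-definite with largest eigenvalue $\sigma_{\max}(\bs{B})^2$. Applying this with $\bs{B}=\bs{X}_{\batch}^{+}=(\bs{X}_{\batch}^\top\bs{X}_{\batch})^{-1}\bs{X}_{\batch}^\top$ and expanding, the middle factors telescope: $\bs{X}_{\batch}^{+}(\bs{X}_{\batch}^{+})^\top=(\bs{X}_{\batch}^\top\bs{X}_{\batch})^{-1}\bs{X}_{\batch}^\top\bs{X}_{\batch}(\bs{X}_{\batch}^\top\bs{X}_{\batch})^{-1}=(\bs{X}_{\batch}^\top\bs{X}_{\batch})^{-1}$, so that $\|\bs{X}_{\batch}^{+}\|^2=\|(\bs{X}_{\batch}^\top\bs{X}_{\batch})^{-1}\|$.

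Next I would rewrite the Gram matrix as the sample covariance $\bs{X}_{\batch}^\top\bs{X}_{\batch}=\sum_{j=1}^{p}\bar{\bs{x}}_j\bar{\bs{x}}_j^\top$, which is exactly the covariance matrix produced by $\round$ when it is fed the E-optimal design $\bs{\lambda}^{E}$. Since $p=\frac{180d}{\zeta^2}$ meets the precondition $p\ge\frac{180d}{\zeta^2}$ required by $\round$, property (i) of the rounding guarantee (Appendix~\ref{apx:rounding_procedure}) gives $\nbr{(\sum_{j=1}^p\bar{\bs{x}}_j\bar{\bs{x}}_j^\top)^{-1}}\le(1+\zeta)\nbr{(p\sum_{i=1}^n\lambda^{E}(\bs{x}_i)\bs{x}_i\bs{x}_i^\top)^{-1}}$. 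Pulling out the scalar $p$ and using that $\rho^{E}$ is by definition the optimal value $\nbr{(\sum_{i}\lambda^{E}(\bs{x}_i)\bs{x}_i\bs{x}_i^\top)^{-1}}$ attained at $\bs{\lambda}^{E}$, the right-hand side equals $\frac{(1+\zeta)\rho^{E}}{p}$.

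Combining the two steps yields $\|\bs{X}_{\batch}^{+}\|^2\le\frac{(1+\zeta)\rho^{E}}{p}$, and taking square roots gives the claim. This argument is largely bookkeeping; the only point requiring genuine care is the pseudoinverse identity in the first step, where one must verify that $\bs{X}_{\batch}^\top\bs{X}_{\batch}$ is invertible so that $\bs{X}_{\batch}^{+}$ is a true left inverse and the telescoping is valid. That invertibility has already been noted just before the lemma statement: it follows from $\cX$ spanning $\R^d$ together with the E-optimality of $\bs{\lambda}^{E}$ and the rounding guarantee. The remaining subtlety is simply tracking the correct normalization by $p$ when invoking property (i) of $\round$, since the guarantee compares the rounded covariance against $p$ times the continuous design matrix.
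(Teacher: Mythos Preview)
Your proposal is correct and follows essentially the same route as the paper: both use the identity $\|\bs{X}_{\batch}^{+}\|^2=\|\bs{X}_{\batch}^{+}(\bs{X}_{\batch}^{+})^\top\|=\|(\bs{X}_{\batch}^\top\bs{X}_{\batch})^{-1}\|$, rewrite the Gram matrix as $\sum_{j=1}^p\bar{\bs{x}}_j\bar{\bs{x}}_j^\top$, and then apply the E-optimal rounding guarantee to obtain $\frac{(1+\zeta)\rho^E}{p}$. Your additional remarks about invertibility and the normalization by $p$ are accurate and match the paper's implicit reasoning.
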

\begin{proof}[Proof of Lemma~\ref{lemma:X_batch_ub}]
	We have
	\begin{align*}
		\|\bs{X}_{\batch}^{+}\| = & \nbr{\sbr{\bs{X}_{\batch}^{\top} \bs{X}_{\batch}}^{-1} \bs{X}_{\batch}^{\top}}
		\\
		= & \sqrt{ \nbr{\sbr{\bs{X}_{\batch}^{\top} \bs{X}_{\batch}}^{-1} \bs{X}_{\batch}^{\top} \bs{X}_{\batch} \sbr{\bs{X}_{\batch}^{\top} \bs{X}_{\batch}}^{-1}} }
		\\
		= & \sqrt{ \nbr{\sbr{\bs{X}_{\batch}^{\top} \bs{X}_{\batch}}^{-1}} }
		\\
		= & \sqrt{ \nbr{\sbr{\sum_{i=1}^{p} \bar{\bs{x}}_i \bar{\bs{x}}_i^{\top}}^{-1}} }
		\\
		\leq & \sqrt{(1+\zeta) \nbr{\sbr{p \sum_{i=1}^{n} \lambda^{E}(\bs{x}_i) \bs{x}_i \bs{x}_i^{\top}}^{-1}} }
		\\
		= & \sqrt{\frac{(1+\zeta) \rho^{E}}{p}} .
	\end{align*}
\end{proof}

\subsection{Global Feature Extractor Recovery} \label{apx:bai_feature_recover}

For clarity of notation, we add subscript $t$ to the notations in subroutine $\algfeatrecover$ to denote the quantities generated in phase $t$. Specifically, we use $\alpha_{t,m,j,i}$, $\tilde{\bs{\theta}}_{t,m,j}$, $\bs{Z}_t$ and $\hat{\bs{B}}_t$ to denote the random reward, estimator of reward parameter, estimator of $\frac{1}{M} \sum_{i=1}^{M} \bs{\theta}_m \bs{\theta}_m^\top$ and estimator of feature extractor in phase $t$, respectively.

For any phase $t>0$, task $m \in [M]$, round $j \in [T_t]$ and arm $i \in [p]$, let $\eta_{t,m,j,i}$ denote the noise of the sample on arm $\bar{\bs{x}}_i$ in the $j$-th round for task $m$, during the execution of $\algfeatrecover$ in phase $t$ (Line~\ref{line:bai_stage2_sample} in Algorithm~\ref{alg:feat_recover}). The noise $\eta_{t,m,j,i}$ is zero-mean and sub-Gaussian, and has variance $1$. $\eta_{t,m,j,i}$ is independent for different $t,m,j,i$.

For any phase $t>0$, task $m \in [M]$, round $j \in [T_t]$, let $\bs{\alpha}_{t,m,j}:=[\alpha_{t,m,j,1},\dots,\alpha_{t,m,j,p}]^\top$.
Then, we have that
$$
\tilde{\bs{\theta}}_{t,m,j} = \bs{X}_{\batch}^{+} \bs{\alpha}_{t,m,j} ,
$$
and
$$
\bs{Z}_t = \frac{1}{M T_t} \sum_{m=1}^{M} \sum_{j=1}^{T_t} \tilde{\bs{\theta}}_{t,m,j} (\tilde{\bs{\theta}}_{t,m,j})^\top - \bs{X}_{\batch}^{+} (\bs{X}_{\batch}^{+})^\top .
$$

\begin{lemma}[Expectation of $\bs{Z}_t$] \label{lemma:bai_expectation_Z_t}
	It holds that
	\begin{align*}
		\ex \mbr{ \bs{Z}_t } = \frac{1}{M} \sum_{m=1}^{M} \bs{\theta}_m \bs{\theta}_m^\top .
	\end{align*}
\end{lemma}
\begin{proof}[Proof of Lemma~\ref{lemma:bai_expectation_Z_t}]
	$\bs{Z}_t$ can be written as
	\begin{align}
		\bs{Z}_t = & \frac{1}{M T_t} \sum_{m=1}^{M} \sum_{j=1}^{T_t} \tilde{\bs{\theta}}_{t,m,j} (\tilde{\bs{\theta}}_{t,m,j})^\top - \bs{X}_{\batch}^{+} (\bs{X}_{\batch}^{+})^\top 
		\nonumber\\
		= & \frac{1}{M T_t} \sum_{m=1}^{M} \sum_{j=1}^{T_t} \bs{X}_{\batch}^{+} \begin{bmatrix}
			\alpha_{t,m,j,1}
			\\
			\vdots
			\\
			\alpha_{t,m,j,p} 
		\end{bmatrix}  
		[\alpha_{t,m,j,1}, \dots, \alpha_{t,m,j,p}]^\top (\bs{X}_{\batch}^{+})^\top
		- \bs{X}_{\batch}^{+} (\bs{X}_{\batch}^{+})^\top 
		\nonumber\\
		= & \frac{1}{M T_t} \sum_{m=1}^{M} \sum_{j=1}^{T_t} \bs{X}_{\batch}^{+} \begin{bmatrix}
			\bar{\bs{x}}_1^\top \bs{\theta}_m + \eta_{t,m,j,1}
			\\
			\vdots
			\\
			\bar{\bs{x}}_p^\top \bs{\theta}_m + \eta_{t,m,j,p} 
		\end{bmatrix}  
		[\bar{\bs{x}}_1^\top \bs{\theta}_m + \eta_{t,m,j,1}, \dots,
		\bar{\bs{x}}_p^\top \bs{\theta}_m + \eta_{t,m,j,p} ]^\top (\bs{X}_{\batch}^{+})^\top
		- \bs{X}_{\batch}^{+} (\bs{X}_{\batch}^{+})^\top 
		\nonumber\\
		= & \frac{1}{M T_t} \sum_{m=1}^{M} \sum_{j=1}^{T_t} \bs{X}_{\batch}^{+} 
		\begin{bmatrix}
			(\bar{\bs{x}}_1^\top \bs{\theta}_m + \eta_{t,m,j,1})^2 & \!\!\!\!\!\cdots\!\!\!\!\! & (\bar{\bs{x}}_1^\top \bs{\theta}_m + \eta_{t,m,j,1}) (\bar{\bs{x}}_p^\top \bs{\theta}_m + \eta_{t,m,j,p}) \\
			\cdots	& \!\!\!\!\!\cdots\!\!\!\!\! & \cdots \\
			(\bar{\bs{x}}_p^\top \bs{\theta}_m + \eta_{t,m,j,p})(\bar{\bs{x}}_1^\top \bs{\theta}_m + \eta_{t,m,j,1}) & \!\!\!\!\!\cdots\!\!\!\!\! & (\bar{\bs{x}}_p^\top \bs{\theta}_m + \eta_{t,m,j,p})^2
		\end{bmatrix}
		(\bs{X}_{\batch}^{+})^\top
		\nonumber\\& - \bs{X}_{\batch}^{+} (\bs{X}_{\batch}^{+})^\top 
		\nonumber\\
		= & \frac{1}{M T_t} \sum_{m=1}^{M} \sum_{j=1}^{T_t} \bs{X}_{\batch}^{+} 
		\Bigg(
		\begin{bmatrix}
			(\bar{\bs{x}}_1^\top \bs{\theta}_m)^2 & \cdots & \bar{\bs{x}}_1^\top \bs{\theta}_m \bar{\bs{x}}_p^\top \bs{\theta}_m \\
			\cdots	& \cdots & \cdots \\
			\bar{\bs{x}}_1^\top \bs{\theta}_m \bar{\bs{x}}_p^\top \bs{\theta}_m & \cdots & (\bar{\bs{x}}_p^\top \bs{\theta}_m)^2 
		\end{bmatrix}
		\nonumber\\& +
		\begin{bmatrix}
			2 \bar{\bs{x}}_1^\top \bs{\theta}_m \eta_{t,m,j,1} & \cdots & \bar{\bs{x}}_1^\top \bs{\theta}_m \eta_{t,m,j,p} + \bar{\bs{x}}_p^\top \bs{\theta}_m \eta_{t,m,j,1} \\
			\cdots	& \cdots & \cdots \\
			\bar{\bs{x}}_1^\top \bs{\theta}_m \eta_{t,m,j,p} + \bar{\bs{x}}_p^\top \bs{\theta}_m \eta_{t,m,j,1} & \cdots & 2 \bar{\bs{x}}_p^\top \bs{\theta}_m \eta_{t,m,j,p}
		\end{bmatrix}
		\nonumber\\& +
		\begin{bmatrix}
			(\eta_{t,m,j,1})^2 & \cdots & \eta_{t,m,j,1} \eta_{t,m,j,p} \\
			\cdots	& \cdots & \cdots \\
			\eta_{t,m,j,1} \eta_{t,m,j,p} & \cdots & (\eta_{t,m,j,p})^2
		\end{bmatrix}
		\Bigg)
		(\bs{X}_{\batch}^{+})^\top
		- \bs{X}_{\batch}^{+} (\bs{X}_{\batch}^{+})^\top . \label{eq:Z_t_decompose}
	\end{align}
	
	Then, taking the expectation on $\bs{Z}_t$, we have
	\begin{align*}
		\ex [\bs{Z}_t] = & \frac{1}{M T_t} \sum_{m=1}^{M} \sum_{j=1}^{T_t} \bs{X}_{\batch}^{+} 
		\Bigg( \begin{bmatrix}
			(\bar{\bs{x}}_1^\top \bs{\theta}_m)^2 & \cdots & \bar{\bs{x}}_1^\top \bs{\theta}_m \bar{\bs{x}}_p^\top \bs{\theta}_m \\
			\cdots	& \cdots & \cdots \\
			\bar{\bs{x}}_p^\top \bs{\theta}_m \bar{\bs{x}}_1^\top \bs{\theta}_m  & \cdots & (\bar{\bs{x}}_p^\top \bs{\theta}_m)^2
		\end{bmatrix} + \bs{I}_d \Bigg)
		(\bs{X}_{\batch}^{+})^\top
		- \bs{X}_{\batch}^{+} (\bs{X}_{\batch}^{+})^\top
		\\
		= & \frac{1}{M T_t} \sum_{m=1}^{M} \sum_{j=1}^{T_t} \bs{X}_{\batch}^{+} 
		\begin{bmatrix}
			(\bar{\bs{x}}_1^\top \bs{\theta}_m)^2 & \cdots & \bar{\bs{x}}_1^\top \bs{\theta}_m \bar{\bs{x}}_p^\top \bs{\theta}_m \\
			\cdots	& \cdots & \cdots \\
			\bar{\bs{x}}_p^\top \bs{\theta}_m \bar{\bs{x}}_1^\top \bs{\theta}_m  & \cdots & (\bar{\bs{x}}_p^\top \bs{\theta}_m)^2
		\end{bmatrix} 
		(\bs{X}_{\batch}^{+})^\top
		\\
		= & \frac{1}{M T_t} \sum_{m=1}^{M} \sum_{j=1}^{T_t} \bs{X}_{\batch}^{+} 
		\begin{bmatrix}
			\bar{\bs{x}}_1^\top \bs{\theta}_m
			\\
			\vdots
			\\
			\bar{\bs{x}}_p^\top \bs{\theta}_m
		\end{bmatrix} [\bar{\bs{x}}_1^\top \bs{\theta}_m, \dots, \bar{\bs{x}}_p^\top \bs{\theta}_m]^\top
		(\bs{X}_{\batch}^{+})^\top
		\\
		= & \frac{1}{M T_t} \sum_{m=1}^{M} \sum_{j=1}^{T_t} \bs{X}_{\batch}^{+} 
		\bs{X}_{\batch} \bs{\theta}_m \bs{\theta}_m^\top \bs{X}_{\batch}^\top
		(\bs{X}_{\batch}^{+})^\top
		\\
		= & \frac{1}{M T_t} \sum_{m=1}^{M} \sum_{j=1}^{T_t} \bs{\theta}_m \bs{\theta}_m^\top 
		\\
		= & \frac{1}{M} \sum_{m=1}^{M}  \bs{\theta}_m \bs{\theta}_m^\top .
	\end{align*}
\end{proof}

Recall that for any $t>0$, $\delta_t:=\frac{\delta}{2t^2}$.

For any phase $t>0$, define events 
\begin{align*}
	\cE_t:= \lbr{ \nbr{\bs{Z}_t - \ex [\bs{Z}_t]} 
		\leq \frac{ 96 \nbr{\bs{X}_{\batch}^{+}}^2 p L_{x} L_{\theta} \log\sbr{\frac{16p}{\delta_t}} }{\sqrt{MT_t}} \log \sbr{\frac{16pMT_t}{\delta_t}} } ,
\end{align*}
and
\begin{align*}
	\cE:=\cap_{t=1}^{\infty} \cE_t .
\end{align*}

\begin{lemma}[Concentration of $\bs{Z}_t$] \label{lemma:Z_t_est_error}
	It holds that
	\begin{align*}
		\Pr \mbr{\cE} \geq \frac{\delta}{2} .  
	\end{align*}
\end{lemma}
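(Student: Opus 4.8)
The plan is to write $\bs{Z}_t-\ex[\bs{Z}_t]$ as a normalized sum of $MT_t$ independent, mean-zero random matrices indexed by $(m,j)$, and then apply a matrix Bernstein inequality to the sum after truncating the sub-Gaussian noise so that each summand is bounded. First I would exploit $\bs{X}_{\batch}^{+}\bs{X}_{\batch}=\bs{I}$ to write $\tilde{\bs{\theta}}_{t,m,j}=\bs{X}_{\batch}^{+}\bs{\alpha}_{t,m,j}=\bs{\theta}_m+\bs{g}_{t,m,j}$, where $\bs{g}_{t,m,j}:=\bs{X}_{\batch}^{+}\bs{\eta}_{t,m,j}$ and $\bs{\eta}_{t,m,j}:=[\eta_{t,m,j,1},\dots,\eta_{t,m,j,p}]^\top$. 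Expanding $\tilde{\bs{\theta}}_{t,m,j}\tilde{\bs{\theta}}_{t,m,j}^\top$ and using $\ex[\bs{\eta}_{t,m,j}\bs{\eta}_{t,m,j}^\top]=\bs{I}$ (so that $\ex[\bs{g}_{t,m,j}\bs{g}_{t,m,j}^\top]=\bs{X}_{\batch}^{+}(\bs{X}_{\batch}^{+})^\top$), the deterministic part cancels $\ex[\bs{Z}_t]=\frac1M\sum_m\bs{\theta}_m\bs{\theta}_m^\top$ (Lemma~\ref{lemma:bai_expectation_Z_t}) and the pure-noise expectation cancels the correction $\bs{X}_{\batch}^{+}(\bs{X}_{\batch}^{+})^\top$ subtracted in Line~\ref{line:bai_Z_t}. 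This leaves
\[ \bs{Z}_t-\ex[\bs{Z}_t]=\frac{1}{MT_t}\sum_{m=1}^{M}\sum_{j=1}^{T_t}\bs{W}_{t,m,j}, \]
with $\bs{W}_{t,m,j}:=\bs{\theta}_m\bs{g}_{t,m,j}^\top+\bs{g}_{t,m,j}\bs{\theta}_m^\top+\bigl(\bs{g}_{t,m,j}\bs{g}_{t,m,j}^\top-\bs{X}_{\batch}^{+}(\bs{X}_{\batch}^{+})^\top\bigr)$, which are independent across $(m,j)$ with $\ex[\bs{W}_{t,m,j}]=\bs{0}$.

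Since each $\eta_{t,m,j,i}$ is only $1$-sub-Gaussian (hence unbounded), I would next truncate: set $b\asymp\sqrt{\log(pMT_t/\delta_t)}$ and let $\cG_t$ be the event that $|\eta_{t,m,j,i}|\le b$ for all $i\in[p],m\in[M],j\in[T_t]$. A sub-Gaussian tail bound with a union bound over the $pMT_t$ noise variables shows $\Pr[\cG_t^c]$ is at most a small fraction of $\delta_t$. On $\cG_t$ one has $\nbr{\bs{g}_{t,m,j}}\le\nbr{\bs{X}_{\batch}^{+}}\sqrt{p}\,b$, controlled via Lemma~\ref{lemma:X_batch_ub}, together with $\nbr{\bs{\theta}_m}\le L_\theta$ and $\nbr{\bar{\bs{x}}_i}\le L_x$; this yields a uniform per-term bound $\nbr{\bs{W}_{t,m,j}}\le R$ with $R$ of order $\nbr{\bs{X}_{\batch}^{+}}^2 p$ times polylogarithmic (truncation) factors, and a variance proxy $\nbr{\sum_{m,j}\ex[\bs{W}_{t,m,j}^2]}\le\sigma^2$ whose dominant contribution comes from the noise–mean cross terms and therefore carries the product $L_xL_\theta$ (the reward means $\bar{\bs{x}}_i^\top\bs{\theta}_m$ are bounded by $L_xL_\theta$).

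I would then invoke matrix Bernstein for the truncated (bounded, mean-zero) summands. Because $MT_t$ is large, the variance term dominates the deviation, producing a bound of order $\sigma/\sqrt{MT_t}$ times the logarithmic factor $\log(16pMT_t/\delta_t)$ from the matrix-Bernstein tail; collecting the truncation level and all constants reproduces the stated rate $\tfrac{96\nbr{\bs{X}_{\batch}^{+}}^2 p L_x L_\theta\log(16p/\delta_t)}{\sqrt{MT_t}}\log(16pMT_t/\delta_t)$, so that $\Pr[\cE_t^c\mid\cG_t]$ is small; combined with $\Pr[\cG_t^c]$ this gives $\Pr[\cE_t^c]\le\delta_t$. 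Finally, a union bound over all phases with $\sum_{t\ge1}\delta_t=\tfrac{\delta}{2}\sum_{t\ge1}t^{-2}<\delta$ yields $\Pr[\cE]=\Pr[\cap_{t\ge1}\cE_t]\ge1-\sum_{t\ge1}\delta_t$, which comfortably exceeds the claimed $\tfrac{\delta}{2}$ (indeed, tightening the per-phase budget by the constant $\pi^2/6$ gives the high-probability form $\Pr[\cE]\ge1-\tfrac{\delta}{2}$ that is used downstream).

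The main obstacle is precisely the unboundedness of the sub-Gaussian noise: the quadratic summands $\bs{g}_{t,m,j}\bs{g}_{t,m,j}^\top$ are sub-exponential, so vanilla matrix Bernstein does not apply directly. The delicate points are (i) choosing the truncation level $b$ large enough that the induced bias $\tfrac1{MT_t}\sum_{m,j}\ex[\bs{W}_{t,m,j}\indicator{\cG_t^c}]$ is negligible relative to the target deviation, yet only logarithmically large so it inflates the bound by just the stated $\log$ factors; and (ii) verifying that the variance proxy $\sigma^2$ scales strictly better than $R^2$, so that the favorable $1/\sqrt{MT_t}$ rate (rather than $1/MT_t$) governs the final bound.
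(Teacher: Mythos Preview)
Your proposal is correct and follows essentially the same approach as the paper. The paper also writes $\bs{Z}_t-\ex[\bs{Z}_t]$ as $\bs{X}_{\batch}^{+}(\bs{A}_t-\ex[\bs{A}_t]+\bs{C}_t-\ex[\bs{C}_t])(\bs{X}_{\batch}^{+})^\top$, where $\bs{A}_t$ collects the noise--mean cross terms and $\bs{C}_t$ the pure noise terms (your $\bs{\theta}_m\bs{g}^\top+\bs{g}\bs{\theta}_m^\top$ and $\bs{g}\bs{g}^\top-\bs{X}_{\batch}^{+}(\bs{X}_{\batch}^{+})^\top$ respectively), truncates the sub-Gaussian noise at level $R=\sqrt{2\log(2pMT_t/\delta')}$, bounds the truncation bias $\|\ex[\bs{A}_{t,m,j}]-\ex[\tilde{\bs{A}}_{t,m,j}]\|$ explicitly, applies the truncated matrix Bernstein inequality (their Lemma~\ref{lemma:matrix_bernstein_tau}) to $\bs{A}_t$ and $\bs{C}_t$ separately, and finishes with the same union bound over phases to obtain $\Pr[\cE]\ge 1-\delta/2$ (the stated $\ge\delta/2$ is indeed a typo). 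The only cosmetic difference is that the paper pulls $\bs{X}_{\batch}^{+}$ outside and applies Bernstein to the $p\times p$ inner matrices before multiplying by $\|\bs{X}_{\batch}^{+}\|^2$, whereas you apply Bernstein directly to the $d\times d$ summands $\bs{W}_{t,m,j}$; either route yields the same rate.
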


\begin{proof}[Proof of Lemma~\ref{lemma:Z_t_est_error}]
	According to Eq.~\eqref{eq:Z_t_decompose}, we have
	\begin{align*}
		\bs{Z}_t - \ex [\bs{Z}_t] = & \frac{1}{M T_t} \sum_{m=1}^{M} \sum_{j=1}^{T_t} \bs{X}_{\batch}^{+} 
		\Bigg(
		\begin{bmatrix}
			2 \bar{\bs{x}}_1^\top \bs{\theta}_m \eta_{t,m,j,1} & \cdots & \bar{\bs{x}}_1^\top \bs{\theta}_m \eta_{t,m,j,p} + \bar{\bs{x}}_p^\top \bs{\theta}_m \eta_{t,m,j,1} \\
			\cdots	& \cdots & \cdots \\
			\bar{\bs{x}}_1^\top \bs{\theta}_m \eta_{t,m,j,p} + \bar{\bs{x}}_p^\top \bs{\theta}_m \eta_{t,m,j,1} & \cdots & 2 \bar{\bs{x}}_p^\top \bs{\theta}_m \eta_{t,m,j,p}
		\end{bmatrix}
		\\& -
		\ex
		\begin{bmatrix}
			2 \bar{\bs{x}}_1^\top \bs{\theta}_m \eta_{t,m,j,1} & \cdots & \bar{\bs{x}}_1^\top \bs{\theta}_m \eta_{t,m,j,p} + \bar{\bs{x}}_p^\top \bs{\theta}_m \eta_{t,m,j,1} \\
			\cdots	& \cdots & \cdots \\
			\bar{\bs{x}}_1^\top \bs{\theta}_m \eta_{t,m,j,p} + \bar{\bs{x}}_p^\top \bs{\theta}_m \eta_{t,m,j,1} & \cdots & 2 \bar{\bs{x}}_p^\top \bs{\theta}_m \eta_{t,m,j,p}
		\end{bmatrix}
		\nonumber\\& +
		\begin{bmatrix}
			(\eta_{t,m,j,1})^2 & \cdots & \eta_{t,m,j,1} \eta_{t,m,j,p} \\
			\cdots	& \cdots & \cdots \\
			\eta_{t,m,j,1} \eta_{t,m,j,p} & \cdots & (\eta_{t,m,j,p})^2
		\end{bmatrix}
		-
		\ex
		\begin{bmatrix}
			(\eta_{t,m,j,1})^2 & \cdots & \eta_{t,m,j,1} \eta_{t,m,j,p} \\
			\cdots	& \cdots & \cdots \\
			\eta_{t,m,j,1} \eta_{t,m,j,p} & \cdots & (\eta_{t,m,j,p})^2
		\end{bmatrix}
		\Bigg)
		(\bs{X}_{\batch}^{+})^\top .
	\end{align*}
	
	Define the following matrices:
	\begin{align*}
		\bs{A}_{t,m,j} &:= \frac{1}{M T_t} \begin{bmatrix}
			2 \bar{\bs{x}}_1^\top \bs{\theta}_m \eta_{t,m,j,1} & \cdots & \bar{\bs{x}}_1^\top \bs{\theta}_m \eta_{t,m,j,p} + \bar{\bs{x}}_p^\top \bs{\theta}_m \eta_{t,m,j,1} \\
			\cdots	& \cdots & \cdots \\
			\bar{\bs{x}}_1^\top \bs{\theta}_m \eta_{t,m,j,p} + \bar{\bs{x}}_p^\top \bs{\theta}_m \eta_{t,m,j,1} & \cdots & 2 \bar{\bs{x}}_p^\top \bs{\theta}_m \eta_{t,m,j,p} 
		\end{bmatrix} ,
		\\
		\bs{A}_t &:= \sum_{m=1}^{M} \sum_{j=1}^{T_t} \bs{A}_{t,m,j} ,
		\\ 
		\bs{C}_{t,m,j} &:= \frac{1}{M T_t} \begin{bmatrix}
			(\eta_{t,m,j,1})^2 & \cdots & \eta_{t,m,j,1} \eta_{t,m,j,p} \\
			\cdots	& \cdots & \cdots \\
			\eta_{t,m,j,1} \eta_{t,m,j,p} & \cdots & (\eta_{t,m,j,p})^2 
		\end{bmatrix} ,
		\\
		\bs{C}_t &:= \sum_{m=1}^{M} \sum_{j=1}^{T_t} \bs{C}_{t,m,j} .
	\end{align*}
	
	Then, we can write $\bs{Z}_t - \ex [\bs{Z}_t]$ as
	\begin{align*}
		\bs{Z}_t - \ex [\bs{Z}_t] = \bs{X}_{\batch}^{+} \sbr{ \bs{A}_t - \ex[\bs{A}_t] + \bs{C}_t - \ex[\bs{C}_t] } (\bs{X}_{\batch}^{+})^\top ,
	\end{align*}
	and thus,
	\begin{align}
		\nbr{\bs{Z}_t - \ex [\bs{Z}_t]} \leq \nbr{\bs{X}_{\batch}^{+}}^2 \sbr{ \nbr{\bs{A}_t - \ex[\bs{A}_t]} + \nbr{\bs{C}_t - \ex[\bs{C}_t]} } . \label{eq:Z_t_expressed_by_A}
	\end{align}
	
	Next, we analyze $\|\bs{A}_t - \ex[\bs{A}_t]\|$ and $\|\bs{C}_t - \ex[\bs{C}_t]\|$. In order to use the truncated matrix Bernstein inequality (Lemma~\ref{lemma:matrix_bernstein_tau}), we define the truncated noise and truncated matrices as follows.
	
	Let $R>0$ be a truncation level of noises, which will be chosen later.  For any $t>0$, $m \in [M]$, $j \in [T_t]$ and $i \in [p]$, let $\tilde{\eta}_{t,m,j,i}=\eta_{t,m,j,i} \mathbbm{1}\{|\eta_{t,m,j,i}| \leq R\}$ denote the truncated noise. Then, we define the following truncated matrices:
	\begin{align}
		\tilde{\bs{A}}_{t,m,j} &:= \frac{1}{M T_t} \begin{bmatrix}
			2 \bar{\bs{x}}_1^\top \bs{\theta}_m \tilde{\eta}_{t,m,j,1} & \cdots & \bar{\bs{x}}_1^\top \bs{\theta}_m \tilde{\eta}_{t,m,j,p} + \bar{\bs{x}}_p^\top \bs{\theta}_m \tilde{\eta}_{t,m,j,1} \\
			\cdots	& \cdots & \cdots \\
			\bar{\bs{x}}_1^\top \bs{\theta}_m \tilde{\eta}_{t,m,j,p} + \bar{\bs{x}}_p^\top \bs{\theta}_m \tilde{\eta}_{t,m,j,1} & \cdots & 2 \bar{\bs{x}}_p^\top \bs{\theta}_m \tilde{\eta}_{t,m,j,p}
		\end{bmatrix} 
		\nonumber\\
		\tilde{\bs{A}}_t &:= \sum_{m=1}^{M} \sum_{j=1}^{T_t} \tilde{\bs{A}}_{t,m,j} ,
		\nonumber\\
		\tilde{\bs{C}}_{t,m,j} &:= \frac{1}{M T_t} \begin{bmatrix}
			(\tilde{\eta}_{t,m,j,1})^2 & \cdots & \tilde{\eta}_{t,m,j,1} \tilde{\eta}_{t,m,j,p} \\
			\cdots	& \cdots & \cdots \\
			\tilde{\eta}_{t,m,j,1} \tilde{\eta}_{t,m,j,p} & \cdots & (\tilde{\eta}_{t,m,j,p})^2 
		\end{bmatrix}  \label{eq:tilde_C_t_m_j}
		\\
		\tilde{\bs{C}}_t &:= \sum_{m=1}^{M} \sum_{j=1}^{T_t} \tilde{\bs{C}}_{t,m,j}  \nonumber
	\end{align}
	
	First, we bound $\|\bs{A}_t-\ex[\bs{A}_t]\|$. 
	Since for any $t>0$, $m \in [M]$, $j \in [T_t]$ and $i \in [p]$, $|\tilde{\eta}_{t,m,j,i}| \leq R$ and $|\bar{\bs{x}}_i^\top \bs{\theta}_m| \leq L_{x} L_{\theta}$, we have $\|\tilde{\bs{A}}_{t,m,j}\| \leq \frac{1}{M T_t} \cdot 2p L_{x} L_{\theta} R$.

	Recall that for any $t>0$, $m \in [M]$, $j \in [T_t]$ and $i \in [p]$, $\eta_{t,m,j,i}$ is 1-sub-Gaussian. Using a union bound over $i \in [p]$, we have that for any $t>0$, $m \in [M]$, $j \in [T_t]$, with probability at least $1-2p\exp(-\frac{R^2}{2})$, $|\eta_{t,m,j,i}| \leq R$ for all $i \in [p]$. Thus, with probability at least $1-2p\exp(-\frac{R^2}{2})$, $\|\bs{A}_{t,m,j}\| \leq \frac{1}{M T_t} \cdot 2p L_{x} L_{\theta} R$. 
	
	Then, we have
	\begin{align*}
		\nbr{ \ex[\bs{A}_{t,m,j}]-\ex[\tilde{\bs{A}}_{t,m,j}] } \leq & \nbr{ \ex \mbr{\bs{A}_{t,m,j} \cdot \indicator{ \nbr{\bs{A}_{t,m,j}} \geq \frac{2p L_{x} L_{\theta} R}{M T_t} }} }
		\\
		\leq & \ex \mbr{ \nbr{\bs{A}_{t,m,j}} \cdot \indicator{ \nbr{\bs{A}_{t,m,j}} \geq \frac{2p L_{x} L_{\theta} R}{M T_t} } } 
		\\
		= & \ex\mbr{ \frac{2p L_{x} L_{\theta} R}{M T_t} \cdot \indicator{ \nbr{\bs{A}_{t,m,j}} \geq \frac{2p L_{x} L_{\theta} R}{M T_t} }} \\& + \mbr{\sbr{ \nbr{\bs{A}_{t,m,j}} - \frac{2p L_{x} L_{\theta} R}{M T_t}} \cdot \indicator{ \nbr{\bs{A}_{t,m,j}} \geq \frac{2p L_{x} L_{\theta} R}{M T_t} } } 
		\\
		= & \frac{2p L_{x} L_{\theta} R}{M T_t} \cdot \Pr\mbr{ \nbr{\bs{A}_{t,m,j}} \geq \frac{2p L_{x} L_{\theta} R}{M T_t} } + \int_0^{\infty}  \Pr \mbr{ \nbr{\bs{A}_{t,m,j}} - \frac{2p L_{x} L_{\theta} R}{M T_t} > x}  dx  
		\\
		\leq & \frac{2p L_{x} L_{\theta} R}{M T_t} \cdot 2p \cdot \exp\sbr{-\frac{R^2}{2}} + \frac{2p L_{x} L_{\theta}}{M T_t}  \int_R^{\infty} \Pr \mbr{ \nbr{\bs{A}_{t,m,j}} > \frac{2p L_{x} L_{\theta} y}{M T_t} }  dy  
		\\
		\leq & \frac{2p L_{x} L_{\theta} R}{M T_t} \cdot 2p \cdot \exp\sbr{-\frac{R^2}{2}} + \frac{2p L_{x} L_{\theta}}{M T_t}  \int_R^{\infty} 2p \exp\sbr{-\frac{y^2}{2}}  dy
		\\
		\leq & \frac{2p L_{x} L_{\theta} R}{M T_t} \cdot 2p \cdot \exp\sbr{-\frac{R^2}{2}} + \frac{2p L_{x} L_{\theta}}{M T_t} \cdot 2p \cdot \frac{1}{R} \cdot \exp\sbr{-\frac{R^2}{2}} 
		\\
		= & \frac{2p L_{x} L_{\theta}}{M T_t} \cdot 2p \cdot \sbr{R+\frac{1}{R}} \exp\sbr{-\frac{R^2}{2}} .
	\end{align*}
	
	Let $\delta' \in (0,1)$ be a confidence parameter which will be chosen later.
	Using the truncated matrix Bernstein inequality (Lemma~\ref{lemma:matrix_bernstein_tau}) with $n=MT_t$, $R=\sqrt{2\log \sbr{\frac{2pMT_t}{\delta'}}}$, $n \Pr[\|\bs{A}_{t,m,j}\| \geq \frac{1}{M T_t} \cdot 2p L_{x} L_{\theta} R] \leq \delta'$, $U=\frac{2p L_{x} L_{\theta} \sqrt{2\log \sbr{\frac{2pMT_t}{\delta'}}}}{MT_t}$, $\sigma^2=M T_t U^2$, $\tau=\frac{4\cdot 2p L_{x} L_{\theta} \sqrt{2\log \sbr{\frac{2pMT_t}{\delta'}}} \log\sbr{\frac{2p}{\delta'}}}{\sqrt{M T_t}} + \frac{4\cdot 2p L_{x} L_{\theta} \sqrt{2\log \sbr{\frac{2pMT_t}{\delta'}}} \log\sbr{\frac{2p}{\delta'}}}{M T_t}$ and $\Delta=\frac{2p L_{x} L_{\theta} \cdot 2 \sqrt{2\log \sbr{\frac{2pMT_t}{\delta'}}}}{M T_t} \cdot \frac{\delta'}{M T_t}$, we have that with probability at least $1-2\delta'$, 
	\begin{align}
		\nbr{\bs{A}_t - \ex[\bs{A}_t]} \leq & \frac{4\cdot 2p L_{x} L_{\theta} \sqrt{2\log \sbr{\frac{2pMT_t}{\delta'}}} \log\sbr{\frac{2p}{\delta'}}}{\sqrt{M T_t}} + \frac{4\cdot 2p L_{x} L_{\theta} \sqrt{2\log \sbr{\frac{2pMT_t}{\delta'}}} \log\sbr{\frac{2p}{\delta'}}}{M T_t} 
		\nonumber\\ 
		\leq & \frac{8\cdot 2p L_{x} L_{\theta} \sqrt{2\log \sbr{\frac{2pMT_t}{\delta'}}} \log\sbr{\frac{2p}{\delta'}}}{\sqrt{M T_t}} .
		\label{eq:A_t_concentration}
	\end{align}
	
	Now we investigate $\|\bs{C}_t-\ex[\bs{C}_t]\|$. Recall that in Eq.~\eqref{eq:tilde_C_t_m_j}, for any $t>0$, $m \in [M]$, $j \in [T_t]$ and $i \in [p]$, $|\tilde{\eta}_{t,m,j,i}| \leq R$. Then, we have $\|\tilde{\bs{C}}_{t,m,j}\| \leq \frac{1}{M T_t} \cdot pR^2$.
	%
	
	Recall that for any $t>0$, $m \in [M]$ and $j \in [T_t]$, with probability at least $1-2p\exp(-\frac{R^2}{2})$, $|\eta_{t,m,j,i}| \leq R$ for all $i \in [p]$. Thus, with probability at least $1-2p\exp(-\frac{R^2}{2})$, $\|\bs{C}_{t,m,j}\| \leq \frac{1}{M T_t} \cdot pR^2$. Then, we have
	\begin{align*}
		\nbr{ \ex[\bs{C}_{t,m,j}]-\ex[\tilde{\bs{C}}_{t,m,j}] } \leq & \nbr{ \ex \mbr{\bs{C}_{t,m,j} \cdot \indicator{ \nbr{\bs{C}_{t,m,j}} \geq \frac{pR^2}{M T_t} }} }
		\\
		\leq & \ex \mbr{ \nbr{\bs{C}_{t,m,j}} \cdot \indicator{ \nbr{\bs{C}_{t,m,j}} \geq \frac{pR^2}{M T_t} } } 
		\\
		= & \ex\mbr{ \frac{pR^2}{M T_t} \cdot \indicator{ \nbr{\bs{C}_{t,m,j}} \geq \frac{pR^2}{M T_t} }} + \mbr{\sbr{ \nbr{\bs{C}_{t,m,j}} - \frac{pR^2}{M T_t}} \cdot \indicator{ \nbr{\bs{C}_{t,m,j}} \geq \frac{pR^2}{M T_t} } } 
		\\
		= & \frac{pR^2}{M T_t} \cdot \Pr\mbr{ \nbr{\bs{C}_{t,m,j}} \geq \frac{pR^2}{M T_t} } + \int_0^{\infty}  \Pr \mbr{ \nbr{\bs{C}_{t,m,j}} - \frac{pR^2}{M T_t} > x}  dx  
		\\
		\leq & \frac{pR^2}{M T_t} \cdot 2p \cdot \exp\sbr{-\frac{R^2}{2}} + \frac{2p}{M T_t}  \int_R^{\infty} \bs{y} \cdot \Pr \mbr{ \nbr{\bs{C}_{t,m,j}} > \frac{dy^2}{M T_t} } dy  
		\\
		\leq & \frac{pR^2}{M T_t} \cdot 2p \cdot \exp\sbr{-\frac{R^2}{2}} + \frac{2p}{M T_t}  \int_R^{\infty}  \bs{y} \cdot 2p \exp\sbr{-\frac{y^2}{2}} dy
		\\
		\leq & \frac{pR^2}{M T_t} \cdot 2p \cdot \exp\sbr{-\frac{R^2}{2}} + \frac{2p}{M T_t} \cdot 2p \cdot \exp\sbr{-\frac{R^2}{2}} 
		\\
		= & \frac{p}{M T_t} \cdot 2p \cdot \sbr{R^2+2} \exp\sbr{-\frac{R^2}{2}} .
	\end{align*}
	
	Using the truncated matrix Bernstein inequality (Lemma~\ref{lemma:matrix_bernstein_tau}) with $n=MT_t$, $R=\sqrt{2\log \sbr{\frac{2pMT_t}{\delta'}}}$, $n \Pr[\|\bs{C}_{t,m,j}\| \geq \frac{1}{M T_t} \cdot pR^2] \leq \delta'$, $U=\frac{p \cdot 2\log \sbr{\frac{2pMT_t}{\delta'}} }{MT_t}$, $\sigma^2=\frac{32p}{M T_t}$, $\tau=\frac{4\cdot p \cdot 2\log \sbr{\frac{2pMT_t}{\delta'}} \log\sbr{\frac{2p}{\delta'}}}{\sqrt{M T_t}} + \frac{4\cdot p \cdot 2\log \sbr{\frac{2pMT_t}{\delta'}} \log\sbr{\frac{2p}{\delta'}}}{M T_t}$ and $\Delta=\frac{p \cdot 2 \cdot 2\log \sbr{\frac{2pMT_t}{\delta'}} }{M T_t} \cdot \frac{\delta'}{M T_t}$, we have that with probability at least $1-2\delta'$, 
	\begin{align}
		\nbr{\bs{C}_t - \ex\mbr{\bs{C}_t}} \leq & \frac{4\cdot 2p \log \sbr{\frac{2pMT_t}{\delta'}} \log\sbr{\frac{2p}{\delta'}}}{\sqrt{M T_t}} +  \frac{4 \cdot 2p \log \sbr{\frac{2pMT_t}{\delta'}} \log \sbr{\frac{2p}{\delta'}}}{M T_t} 
		\nonumber\\
		\leq & \frac{8\cdot 2p \log \sbr{\frac{2pMT_t}{\delta'}} \log\sbr{\frac{2p}{\delta'}}}{\sqrt{M T_t}}
		\label{eq:C_t_concentration}
	\end{align}
	
	Plugging Eqs.~\eqref{eq:A_t_concentration} and \eqref{eq:C_t_concentration} into Eq.~\eqref{eq:Z_t_expressed_by_A}, we have that with probability at least $1-4\delta'$,
	\begin{align*}
		\nbr{\bs{Z}_t - \ex [\bs{Z}_t]} \leq & \nbr{\bs{X}_{\batch}^{+}}^2 \sbr{\nbr{\bs{A}_t - \ex \mbr{\bs{A}_t}} + \nbr{\bs{C}_t  - \ex \mbr{\bs{C}_t}} }
		\\
		\leq & \nbr{\bs{X}_{\batch}^{+}}^2 \sbr{ \frac{8\cdot 2p L_{x} L_{\theta} \sqrt{2\log \sbr{\frac{2pMT_t}{\delta'}}} \log\sbr{\frac{2p}{\delta'}}}{\sqrt{M T_t}} + \frac{8\cdot 2p \log \sbr{\frac{2pMT_t}{\delta'}} \log\sbr{\frac{2p}{\delta'}}}{\sqrt{M T_t}} }
		\\
		\leq & \frac{ 96 \nbr{\bs{X}_{\batch}^{+}}^2 p L_{x} L_{\theta} \log\sbr{\frac{2p}{\delta'}} }{\sqrt{MT_t}} \log \sbr{\frac{2pMT_t}{\delta'}} .
	\end{align*}
	Let $\delta'=\frac{\delta_t}{8}$. Then, we obtain that with probability at least $1-\frac{\delta_t}{2}$,
	\begin{align*}
		\nbr{\bs{Z}_t - \ex [\bs{Z}_t]} \leq & \frac{ 96 \nbr{\bs{X}_{\batch}^{+}}^2 p L_{x} L_{\theta} \log\sbr{\frac{16p}{\delta_t}} }{\sqrt{MT_t}} \log \sbr{\frac{16pMT_t}{\delta_t}} , 
	\end{align*}
	which implies that
	$\Pr \mbr{\cE_t} \geq 1-\frac{\delta_t}{2}$.
	
	Taking a union bound over all phases $t\geq 1$ and recalling $\delta_t:=\frac{\delta}{2t^2}$, we obtain
	\begin{align*}
		\Pr \mbr{\cE} 
		\geq & 1- \sum_{t=1}^{\infty} \Pr \mbr{\bar{\cE_t}}
		\\
		\geq & 1- \sum_{t=1}^{\infty} \frac{\delta_t}{2}
		\\
		= & 1- \sum_{t=1}^{\infty} \frac{\delta}{4t^2}
		\\
		\geq & 1-\frac{\delta}{2} .
	\end{align*}
\end{proof}

For any matrix $\bs{A} \in \R^{m \times n}$ with $m \geq n$, let $\sigma_{\max}(\bs{A})$ and $\sigma_{\min}(\bs{A})$ denote the maximum and minimum singular values of $\bs{A}$, respectively. For any $i \in [m]$, let $\sigma_i(\bs{A})$ denote the $i$-th singular value of $\bs{A}$.

For any matrix $\bs{A} \in \R^{m \times n}$ with $m \geq n$, let $\bs{A}_{\bot}$ denote the orthogonal complement matrix of $\bs{A}$, where the columns of $\bs{A}_{\bot}$ are the orthogonal complement of those of $\bs{A}$. Then, it holds that $\bs{A} \bs{A}^\top + \bs{A}_{\bot} \bs{A}_{\bot}^\top = \bs{I}_m$, where $\bs{I}_m$ is the $m \times m$ identity matrix.

According to Assumption~\ref{assumption:diverse_task}, there exists an absolute constant $c_0$ which satisfies that $\sigma_{\min}(\frac{1}{M} \sum_{m=1}^{M} \bs{w}_m \bs{w}_m^\top) = \sigma_{\min}(\frac{1}{M} \sum_{m=1}^{M} \bs{\theta}_m \bs{\theta}_m^\top) \geq \frac{c_0}{k}$.

\begin{lemma}[Concentration of $\hat{\bs{B}}_t$] \label{lemma:concentration_B_hat_t}
	Suppose that event $\cE$ holds. Then, for any phase $t>0$,
	\begin{align*}
		\nbr{\hat{\bs{B}}_{t,\bot}^\top \bs{B}} \leq \frac{ 192 \nbr{\bs{X}_{\batch}^{+}}^2 k p L_{x} L_{\theta} \log\sbr{\frac{16p}{\delta_t}} }{\sqrt{MT_t}} \log \sbr{\frac{16pMT_t}{\delta_t}} .
	\end{align*}
	Furthermore, for any phase $t>0$, if 
	\begin{align}
		%
		T_t = \Bigg \lceil & \frac{68 \cdot 192^2 \cdot 8^2 \sbr{1+\zeta}^3 (\rho^E)^2 k^4 L_{x}^4 L_{\theta}^2 L_{w}^2}{c_0^2 M} \cdot \max\lbr{2^{2t},\ \frac{L_x^4}{\omega^2}} \cdot \log^2 \sbr{\frac{16p}{\delta_t}} \nonumber\\& \log^2 \sbr{ \frac{192 \cdot 16 \cdot 8  \sbr{1+\zeta}^{\frac{3}{2}} \rho^E k^2 p L_{x}^2 L_{\theta} L_{w}}{c_0} \cdot \max\lbr{2^t,\ \frac{L_x^2}{\omega}} \cdot \frac{1}{\delta_t} \cdot \log\sbr{\frac{16p}{\delta_t}} } \Bigg \rceil , \label{eq:value_T_t}
	\end{align}
	then
	\begin{align*}
		\nbr{\hat{\bs{B}}_{t,\bot}^\top \bs{B}} \leq \min\lbr{\frac{1}{8 k L_{x} L_{w} \cdot 2^t \sqrt{1+\zeta}},\ \frac{\omega}{6 L_x^2}} .
	\end{align*}
\end{lemma}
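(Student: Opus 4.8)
The plan is to read off $\hat{\bs{B}}_t$ as the top-$k$ singular subspace of the symmetric matrix $\bs{Z}_t$ and to compare it against the top-$k$ eigenspace of $\ex[\bs{Z}_t]$ through a $\sin\Theta$ (Davis--Kahan) perturbation bound. The two ingredients are already available: Lemma~\ref{lemma:bai_expectation_Z_t} gives $\ex[\bs{Z}_t]=\frac{1}{M}\sum_{m=1}^{M}\bs{\theta}_m\bs{\theta}_m^\top=\bs{B}\sbr{\frac{1}{M}\sum_{m=1}^{M}\bs{w}_m\bs{w}_m^\top}\bs{B}^\top$, which is PSD of rank exactly $k$, has column space $\mathrm{col}(\bs{B})$, and has $k$-th eigenvalue $\ge c_0/k$ by Assumption~\ref{assumption:diverse_task}; and event $\cE$ controls the perturbation $\nbr{\bs{Z}_t-\ex[\bs{Z}_t]}$. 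A preliminary point to dispatch is that $\hat{\bs{B}}_t$ is defined through the SVD of $\bs{Z}_t$: since $\ex[\bs{Z}_t]$ is PSD with $k$-th eigenvalue $\ge c_0/k$ and its $(k{+}1)$-th eigenvalue $0$, and $\bs{Z}_t$ is symmetric, the top-$k$ left singular vectors of $\bs{Z}_t$ coincide with its top-$k$ eigenvectors, which lets me apply the symmetric perturbation bound directly.

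For the first (general) estimate I would invoke the $\sin\Theta$ theorem in the variant that measures the eigengap on the \emph{unperturbed} matrix $\ex[\bs{Z}_t]$, so that no separation condition on the perturbed spectrum is needed. Since the $(k{+}1)$-th eigenvalue of $\ex[\bs{Z}_t]$ vanishes, the relevant gap is its $k$-th eigenvalue, which is $\ge c_0/k$, and this gives
\[
\nbr{\hat{\bs{B}}_{t,\bot}^\top\bs{B}}\;\le\;\frac{2\,\nbr{\bs{Z}_t-\ex[\bs{Z}_t]}}{c_0/k}\;=\;\frac{2k}{c_0}\,\nbr{\bs{Z}_t-\ex[\bs{Z}_t]}.
\]
Substituting the bound on $\nbr{\bs{Z}_t-\ex[\bs{Z}_t]}$ guaranteed by $\cE$ produces the stated estimate with constant $192=2\cdot 96$, where the diversity constant $c_0$ is taken in its normalized form in the displayed statement but tracked explicitly (as $c_0^2$) in the choice of $T_t$.

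For the second claim I would specialize the first bound. Using Lemma~\ref{lemma:X_batch_ub} to replace $\nbr{\bs{X}_{\batch}^{+}}^2\le\frac{(1+\zeta)\rho^E}{p}$ makes the factor $p$ cancel, and using $L_\theta\le L_w$ gives, under $\cE$,
\[
\nbr{\hat{\bs{B}}_{t,\bot}^\top\bs{B}}\;\le\;\frac{192\,k\,(1+\zeta)\rho^E L_x L_\theta\,\log\sbr{\tfrac{16p}{\delta_t}}\log\sbr{\tfrac{16pMT_t}{\delta_t}}}{c_0\sqrt{MT_t}}.
\]
It then suffices to check that the prescribed $T_t$ drives this below each of the two targets. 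I would treat them separately: forcing the bound to be $\le \frac{1}{8kL_xL_w2^t\sqrt{1+\zeta}}$ amounts, after squaring, to $MT_t\gtrsim 192^2\,8^2\,c_0^{-2}\,k^4(1+\zeta)^3(\rho^E)^2 L_x^4 L_\theta^2 L_w^2\,2^{2t}\,(\log\text{-factors})^2$, while forcing it to be $\le \frac{\omega}{6L_x^2}$ amounts to $MT_t\gtrsim 192^2\,36\,c_0^{-2}\,k^2(1+\zeta)^2(\rho^E)^2 L_x^6 L_\theta^2 \omega^{-2}(\log\text{-factors})^2$. Both are subsumed by \eqref{eq:value_T_t}, whose unified factor $\max\{2^{2t},L_x^4/\omega^2\}$ together with the inflated absolute constant ($68\cdot192^2\cdot8^2$, the $1/c_0^2$, and the extra powers of $k$, $L_x$ and $(1+\zeta)$) leaves enough slack to cover the two cases at once.

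The main obstacle is the bookkeeping of this last step rather than any conceptual difficulty: the bound is self-referential because $T_t$ appears inside $\log\sbr{\tfrac{16pMT_t}{\delta_t}}$. I would resolve this in the standard way, upper-bounding $\log\sbr{\tfrac{16pMT_t}{\delta_t}}$ by the explicit polylogarithmic argument built into \eqref{eq:value_T_t} (which is engineered precisely so that $16pMT_t/\delta_t$ is dominated by that argument once $T_t$ is substituted), reducing an inequality of the shape $\tfrac{\log x}{\sqrt{x}}\le\varepsilon_{\mathrm{target}}$ to the stated lower bound on $T_t$. Verifying that a single coefficient and a single $\max\{2^{2t},L_x^4/\omega^2\}$ simultaneously dominate the two separate requirements --- in particular reconciling the $L_\theta^2L_w^2$ versus $L_x^6\omega^{-2}$ scalings --- is where the constants must be tracked carefully, and is the only delicate part of the argument.
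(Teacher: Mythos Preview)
Your proposal is correct and follows essentially the same route as the paper: Davis--Kahan applied to the symmetric pair $(\bs{Z}_t,\ex[\bs{Z}_t])$ with gap $\sigma_k(\ex[\bs{Z}_t])-\sigma_{k+1}(\ex[\bs{Z}_t])\ge c_0/k$, then substitution of the event-$\cE$ bound, then Lemma~\ref{lemma:X_batch_ub} and algebra to verify the prescribed $T_t$ suffices.

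Two small points of contact with the paper's version. First, the paper invokes the $\sin\Theta$ form with the perturbation subtracted in the denominator and therefore explicitly assumes $\nbr{\bs{Z}_t-\ex[\bs{Z}_t]}\le c_0/(2k)$; you instead cite the variant with gap measured on the unperturbed matrix. Your justification that ``top-$k$ left singular vectors of $\bs{Z}_t$ coincide with its top-$k$ eigenvectors because $\bs{Z}_t$ is symmetric'' is not quite enough on its own (a symmetric matrix can have large negative eigenvalues), but the missing case is harmless: when $\nbr{\bs{Z}_t-\ex[\bs{Z}_t]}\ge c_0/(2k)$ the asserted bound exceeds $1$ and holds trivially, and when $\nbr{\bs{Z}_t-\ex[\bs{Z}_t]}<c_0/(2k)$ Weyl's inequality forces the top-$k$ eigenvalues of $\bs{Z}_t$ to be positive and to dominate the rest in absolute value, so singular and eigen subspaces agree. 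Second, for the self-referential log you describe the standard $\log x/\sqrt{x}\le\varepsilon$ reduction; the paper packages exactly this as Lemma~\ref{lemma:technical_tool_bai_stage2} and instantiates it with $A=\tfrac{192\nbr{\bs{X}_{\batch}^{+}}^2 kpL_xL_\theta}{c_0}\log(16p/\delta_t)$, $B=16p/\delta_t$, and $\kappa=\min\{\tfrac{1}{8kL_xL_w2^t\sqrt{1+\zeta}},\tfrac{\omega}{6L_x^2}\}$, which is the concrete form of the bookkeeping you outline. You also correctly caught that the displayed first inequality in the lemma suppresses the $1/c_0$ that reappears in the $T_t$ formula.
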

\begin{proof}[Proof of Lemma~\ref{lemma:concentration_B_hat_t}]
	From Assumption~\ref{assumption:diverse_task}, $\sigma_{k}(\ex[\bs{Z}_t]) - \sigma_{k+1}(\ex[\bs{Z}_t])=\sigma_{\min}( \frac{1}{M} \sum_{m=1}^{M} \bs{\theta}_m \bs{\theta}_m^\top ) \geq \frac{c_0}{k}$. Using the Davis-Kahan sin $\theta$ Theorem~\cite{bhatia2013matrix} and letting $T_t$ be large enough to satisfy $\nbr{\bs{Z}_t - \ex[\bs{Z}_t]} \leq \frac{c_0}{2k}$, we have
	\begin{align*}
		\nbr{\hat{\bs{B}}_{t,\bot}^\top \bs{B}} \leq & \frac{ \nbr{\bs{Z}_t - \ex[\bs{Z}_t]} }{ \sigma_{k}(\ex[\bs{Z}_t]) - \sigma_{k+1}(\ex[\bs{Z}_t]) - \nbr{\bs{Z}_t - \ex[\bs{Z}_t]} }
		\\
		\leq & \frac{2k}{c_0} \nbr{\bs{Z}_t - \ex[\bs{Z}_t]} 
		\\
		\overset{\textup{(a)}}{\leq} & \frac{ 192 \nbr{\bs{X}_{\batch}^{+}}^2 k p L_{x} L_{\theta} \log\sbr{\frac{16p}{\delta_t}} }{c_0 \sqrt{MT_t}} \log \sbr{\frac{16pMT_t}{\delta_t}} ,.
	\end{align*}
	where inequality (a) uses the definition of event $\cE$.
	
	Using Lemma~\ref{lemma:technical_tool_bai_stage2} with $A= \frac{192 \nbr{\bs{X}_{\batch}^{+}}^2 k p L_{x} L_{\theta}}{c_0} \log\sbr{\frac{16p}{\delta_t}} $, $B=\frac{16p}{\delta_t}$ and $\kappa=\min\{\frac{1}{8 k L_{x} L_{w} \cdot 2^t \sqrt{1+\zeta}}, \frac{\omega}{6 L_x^2}\}$, we have that if 
	\begin{align*}
		M T_t \geq & 68 \sbr{ \frac{192 \nbr{\bs{X}_{\batch}^{+}}^2 k p L_{x} L_{\theta}}{c_0} \log\sbr{\frac{16p}{\delta_t}}}^2 \cdot \max\lbr{ \sbr{8 k L_{x} L_{w} \cdot 2^t \sqrt{1+\zeta} }^2 ,\ \frac{6^2 L_x^4}{\omega^2} } \cdot \\& \log^2 \sbr{ \frac{192 \nbr{\bs{X}_{\batch}^{+}}^2 k p L_{x} L_{\theta}}{c_0} \log\sbr{\frac{16p}{\delta_t}} \cdot \frac{16p}{\delta_t} \cdot \max\lbr{ 8 k L_{x} L_{w} \cdot 2^t \sqrt{1+\zeta} ,\ \frac{6 L_x^2}{\omega} } } ,
	\end{align*}
	then $\nbr{\hat{\bs{B}}_{t,\bot}^\top \bs{B}} \leq \min\lbr{\frac{1}{8 k L_{x} L_{w} \cdot 2^t \sqrt{1+\zeta}},\ \frac{\omega}{6 L_x^2}}$.
	
	According to Lemma~\ref{lemma:X_batch_ub}, we have $\|\bs{X}_{\batch}^{+}\| \leq \sqrt{\frac{(1+\zeta) \rho^{E}}{p}}$.
	
	Then, further enlarging $M T_t$, we have that if
	\begin{align*}
		M T_t \geq & \frac{68 \cdot 192^2 \cdot 8^2 \sbr{1+\zeta}^3 (\rho^E)^2 k^4 L_{x}^4 L_{\theta}^2 L_{w}^2}{c_0^2} \cdot \max\lbr{2^{2t},\ \frac{L_x^4}{\omega^2}} \cdot \log^2 \sbr{\frac{16p}{\delta_t}} \\& \log^2 \sbr{ \frac{192 \cdot 16 \cdot 8  \sbr{1+\zeta}^{\frac{3}{2}} \rho^E k^2 p L_{x}^2 L_{\theta} L_{w}}{c_0} \cdot \max\lbr{2^t,\ \frac{L_x^2}{\omega}} \cdot \frac{1}{\delta_t} \cdot \log\sbr{\frac{16p}{\delta_t}} } ,
	\end{align*}
	then
	\begin{align*}
		\nbr{\hat{\bs{B}}_{t,\bot}^\top \bs{B}} \leq \min\lbr{\frac{1}{8 k L_{x} L_{w} \cdot 2^t \sqrt{1+\zeta}},\ \frac{\omega}{6 L_x^2}} .
	\end{align*}
\end{proof}


\subsection{Elimination with Low-dimensional Representations}

For clarity of notation, we also add subscript $t$ to the notations in subroutine $\algelimlowrep$ to denote the quantities generated in phase $t$. Specifically, we use the notations $\hat{\bs{B}}_t$, $\hat{\cX}_{t,m}$, $\bs{\lambda}^{G}_{t,m}$, $\rho^{G}_{t,m}$, $N_{t,m}$, $\{\bs{z}_{t,m,i}\}_{i \in [N_{t,m}]}$, $\{r_{t,m,i}\}_{i \in [N_{t,m}]}$, $\hat{\bs{w}}_{t,m}$ and $\hat{\bs{\theta}}_{t,m}$ to denote the corresponding quantities used in $\algelimlowrep$ in phase $t$.

Before analyzing the sample complexity of $\algelimlowrep$, we first prove that there exists a sample allocation $\bs{\lambda} \in \triangle_{\cX}$ such that $\sum_{i=1}^{n} \lambda(\bs{x}_i) \hat{\bs{B}}_t^\top \bs{x}_i \bs{x}_i^\top \hat{\bs{B}}_t$ is invertible, i.e., the G-optimal design optimization with $\hat{\bs{B}}_t$ is non-vacuous (Line~\ref{line:bai_G_optimal_design} in Algorithm~\ref{alg:elim_low_rep}).


For any task $m \in [M]$, let
\begin{align*}
	\bs{\lambda}^*_m := & \argmin_{\bs{\lambda} \in \triangle_{\cX}} \max_{\bs{x} \in \cX \setminus \{\bs{x}^{*}_{m}\}} \frac{\| \bs{B}^\top \bs{x}^{*}_{m} - \bs{B}^\top \bs{x} \|^2_{\sbr{ \sum_{i=1}^{n} \lambda(\bs{x}_i) \bs{B}^\top \bs{x}_i \bs{x}_i^\top \bs{B} }^{-1} } }{\sbr{ ({\bs{x}^{*}_{m}} - \bs{x})^\top \bs{\theta}_m }^2} .
\end{align*}
$\bs{\lambda}^*_m$ is the optimal solution of the G-optimal design optimization with true feature extractor $\bs{B}$.

\begin{lemma} \label{lemma:invertible_under_hat_B}
	For any phase $t>0$ and task $m \in [M]$, if $\|\hat{\bs{B}}_t^\top \bs{B}_{\bot}\| \leq \frac{\omega}{6 L_x^2}$, we have
	\begin{align*}
		\sigma_{\min}\sbr{\sum_{i=1}^{n} \lambda^{*}_m(\bs{x}_i) \hat{\bs{B}}_t^\top \bs{x}_i \bs{x}_i^\top \hat{\bs{B}}_t} > 0 .
	\end{align*}
\end{lemma}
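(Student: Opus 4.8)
The plan is to reduce the claim to a lower bound on a single quadratic form and then exploit the near-alignment of $\hat{\bs{B}}_t$ with the true subspace spanned by $\bs{B}$. First I would abbreviate $\bs{M}:=\sum_{i=1}^{n} \lambda^{*}_m(\bs{x}_i)\bs{x}_i\bs{x}_i^\top \succeq 0$, so that the target matrix is exactly $\hat{\bs{B}}_t^\top \bs{M}\hat{\bs{B}}_t$ while $\bs{A}(\bs{\lambda}^*_m,\bs{B})=\bs{B}^\top\bs{M}\bs{B}$. By Assumption~\ref{assumption:lambda^*_B_x_x_B_invertible} we have $\sigma_{\min}(\bs{B}^\top\bs{M}\bs{B})\geq\omega$, and since $\|\bs{x}_i\|\leq L_x$ we also have $\nbr{\bs{M}}\leq L_x^2$. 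Because $\hat{\bs{B}}_t$ has orthonormal columns, $\bs{v}:=\hat{\bs{B}}_t\bs{u}$ is a unit vector whenever $\|\bs{u}\|=1$, and $\bs{u}^\top\hat{\bs{B}}_t^\top\bs{M}\hat{\bs{B}}_t\bs{u}=\bs{v}^\top\bs{M}\bs{v}$; hence it suffices to show $\bs{v}^\top\bs{M}\bs{v}>0$ for every such $\bs{v}$.

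Next I would decompose $\bs{v}$ in the orthonormal basis furnished by $\bs{B}$ and $\bs{B}_\bot$: set $\bs{p}:=\bs{B}^\top\bs{v}$ and $\bs{q}:=\bs{B}_\bot^\top\bs{v}$, so that $\bs{v}=\bs{B}\bs{p}+\bs{B}_\bot\bs{q}$ and $\|\bs{p}\|^2+\|\bs{q}\|^2=1$. The crucial observation is that $\bs{q}=\bs{B}_\bot^\top\hat{\bs{B}}_t\bs{u}$, whence $\|\bs{q}\|\leq\nbr{\hat{\bs{B}}_t^\top\bs{B}_\bot}\leq\frac{\omega}{6L_x^2}$ by the hypothesis of the lemma. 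Since $\omega\leq\nbr{\bs{B}^\top\bs{M}\bs{B}}\leq\nbr{\bs{M}}\leq L_x^2$, this forces $\|\bs{q}\|\leq\frac{1}{6}$ and therefore $\|\bs{p}\|^2\geq 1-\frac{1}{36}=\frac{35}{36}$; in words, $\bs{v}$ lies almost entirely in the range of $\bs{B}$.

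Finally I would expand the quadratic form as
\[
\bs{v}^\top\bs{M}\bs{v}=\bs{p}^\top\bs{B}^\top\bs{M}\bs{B}\bs{p}+2\bs{p}^\top\bs{B}^\top\bs{M}\bs{B}_\bot\bs{q}+\bs{q}^\top\bs{B}_\bot^\top\bs{M}\bs{B}_\bot\bs{q}.
\]
The first term is at least $\sigma_{\min}(\bs{B}^\top\bs{M}\bs{B})\|\bs{p}\|^2\geq\omega\cdot\frac{35}{36}$; the third term is nonnegative because $\bs{M}\succeq 0$; and the cross term is bounded in absolute value by $2\nbr{\bs{M}}\,\|\bs{p}\|\,\|\bs{q}\|\leq 2L_x^2\cdot 1\cdot\frac{\omega}{6L_x^2}=\frac{\omega}{3}$. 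Combining these estimates gives $\bs{v}^\top\bs{M}\bs{v}\geq\omega\sbr{\frac{35}{36}-\frac{1}{3}}=\frac{23}{36}\omega>0$, so that $\sigma_{\min}(\hat{\bs{B}}_t^\top\bs{M}\hat{\bs{B}}_t)\geq\frac{23}{36}\omega>0$, which is the desired conclusion.

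The only delicate point is keeping the indefinite cross term from swamping the main term: it is controlled neither by positive semidefiniteness nor by the spectral gap alone, but only by the combination of the operator-norm bound $\nbr{\bs{M}}\leq L_x^2$ with the subspace-misalignment bound $\|\bs{q}\|\leq\frac{\omega}{6L_x^2}$. The constant $6$ appearing in the hypothesis is exactly what guarantees $\frac{35}{36}-\frac{1}{3}>0$; everything else is a routine orthonormal-basis expansion.
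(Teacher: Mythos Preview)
Your proof is correct and follows essentially the same route as the paper: both expand the target matrix via $I=\bs{B}\bs{B}^\top+\bs{B}_\bot\bs{B}_\bot^\top$, lower-bound the main $\bs{B}^\top\bs{M}\bs{B}$ contribution by $\omega$ times $1-\|\hat{\bs{B}}_t^\top\bs{B}_\bot\|^2$, and control the cross terms by $\|\bs{M}\|\leq L_x^2$ together with the hypothesis $\|\hat{\bs{B}}_t^\top\bs{B}_\bot\|\leq\omega/(6L_x^2)$. The only cosmetic difference is that you work pointwise with the quadratic form (and exploit that the $\bs{q}^\top\bs{B}_\bot^\top\bs{M}\bs{B}_\bot\bs{q}$ term is nonnegative, yielding the slightly sharper bound $\tfrac{23}{36}\omega$), whereas the paper applies Weyl-type matrix inequalities and subtracts all three perturbation terms to obtain $\tfrac{\omega}{2}-\tfrac{\omega^3}{36L_x^4}$.
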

\begin{proof}[Proof of Lemma~\ref{lemma:invertible_under_hat_B}]
	For any task $m \in [M]$, let $\bs{A}_m:=\sum_{i=1}^{n} \lambda^{*}_m(\bs{x}_i) \bs{x}_i \bs{x}_i^\top$.
	Then, for any phase $t>0$ and task $m \in [M]$, we have
	\begin{align*}
		\sum_{i=1}^{n} \lambda^{*}_m(\bs{x}_i) \hat{\bs{B}}_t^\top \bs{x}_i \bs{x}_i^\top \hat{\bs{B}}_t = & \hat{\bs{B}}_t^\top \bs{A}_m \hat{\bs{B}}_t
		\\
		= & \hat{\bs{B}}_t^\top \sbr{ \bs{B} \bs{B}^\top + \bs{B}_{\bot} \bs{B}_{\bot}^\top } \bs{A}_m \sbr{ \bs{B} \bs{B}^\top + \bs{B}_{\bot} \bs{B}_{\bot}^\top } \hat{\bs{B}}_t
		\\
		= & \hat{\bs{B}}_t^\top \bs{B} \bs{B}^\top \bs{A}_m \bs{B} \bs{B}^\top \hat{\bs{B}}_t + \hat{\bs{B}}_t^\top \bs{B} \bs{B}^\top \bs{A}_m \bs{B}_{\bot} \bs{B}_{\bot}^\top \hat{\bs{B}}_t 
		\\& + \hat{\bs{B}}_t^\top \bs{B}_{\bot} \bs{B}_{\bot}^\top \bs{A}_m \bs{B} \bs{B}^\top \hat{\bs{B}}_t + \hat{\bs{B}}_t^\top \bs{B}_{\bot} \bs{B}_{\bot}^\top \bs{A}_m \bs{B}_{\bot} \bs{B}_{\bot}^\top \hat{\bs{B}}_t .
	\end{align*}
	Hence, we have
	\begin{align*}
		\sigma_{\min}\sbr{\sum_{i=1}^{n} \lambda^{*}_m(\bs{x}_i) \hat{\bs{B}}_t^\top \bs{x}_i \bs{x}_i^\top \hat{\bs{B}}_t} \geq & \sigma_{\min}\sbr{ \hat{\bs{B}}_t^\top \bs{B} \bs{B}^\top \bs{A}_m \bs{B} \bs{B}^\top \hat{\bs{B}}_t } - \sigma_{\max}\sbr{ \hat{\bs{B}}_t^\top \bs{B} \bs{B}^\top \bs{A}_m \bs{B}_{\bot} \bs{B}_{\bot}^\top \hat{\bs{B}}_t }
		\\& - \sigma_{\max}\sbr{ \hat{\bs{B}}_t^\top \bs{B}_{\bot} \bs{B}_{\bot}^\top \bs{A}_m \bs{B} \bs{B}^\top \hat{\bs{B}}_t } - \sigma_{\max}\sbr{ \hat{\bs{B}}_t^\top \bs{B}_{\bot} \bs{B}_{\bot}^\top \bs{A}_m \bs{B}_{\bot} \bs{B}_{\bot}^\top \hat{\bs{B}}_t }
		\\
		\geq & \sigma_{\min}\sbr{ \hat{\bs{B}}_t^\top \bs{B} } \sigma_{\min}\sbr{ \bs{B}^\top \bs{A}_m \bs{B} } \sigma_{\min}\sbr{ \bs{B}^\top \hat{\bs{B}}_t } -  \nbr{ \bs{B}_{\bot}^\top \hat{\bs{B}}_t } \nbr{\bs{A}_m}
		\\& - \nbr{ \hat{\bs{B}}_t^\top \bs{B}_{\bot} } \nbr{\bs{A}_m} - \nbr{ \hat{\bs{B}}_t^\top \bs{B}_{\bot} } \nbr{\bs{A}_m}
		\\
		\geq & \sigma^2_{\min}\sbr{ \hat{\bs{B}}_t^\top \bs{B} } \sigma_{\min}\sbr{ \bs{B}^\top \bs{A}_m \bs{B} } - 3\nbr{ \hat{\bs{B}}_t^\top \bs{B}_{\bot} } L_x^2
		\\
		\overset{\textup{(a)}}{\geq} & \sbr{ 1 - \nbr{ \hat{\bs{B}}_t^\top \bs{B}_{\bot} }^2 } \omega - 3\nbr{ \hat{\bs{B}}_t^\top \bs{B}_{\bot} } L_x^2 ,
	\end{align*}
	where inequality (a) uses the fact that $\hat{\bs{B}}_t^\top \bs{B} \bs{B}^\top \hat{\bs{B}}_t + \hat{\bs{B}}_t^\top \bs{B}_{\bot} \bs{B}_{\bot}^\top \hat{\bs{B}}_t = \hat{\bs{B}}_t^\top (B \bs{B}^\top + \bs{B}_{\bot} \bs{B}_{\bot}^\top) \hat{\bs{B}}_t =\hat{\bs{B}}_t^\top \hat{\bs{B}}_t=\bs{I}_k$, and thus, $\sigma^2_{\min}( \hat{\bs{B}}_t^\top \bs{B} )=1 - \| \hat{\bs{B}}_t^\top \bs{B}_{\bot} \|^2$.

	Let $\|\hat{\bs{B}}_t^\top \bs{B}_{\bot}\| \leq \frac{\omega}{6 L_x^2}$. Then, we have
	\begin{align*}
		\sigma_{\min}\sbr{\sum_{i=1}^{n} \lambda^{*}_m(\bs{x}_i) \hat{\bs{B}}_t^\top \bs{x}_i \bs{x}_i^\top \hat{\bs{B}}_t} \geq & \sbr{ 1 - \frac{\omega^2}{36 L_x^4} } \omega - \frac{\omega}{2} 
		\\
		= & \frac{\omega}{2} - \frac{\omega^3}{36 L_x^4} 
		\\
		> & 0 ,
	\end{align*}
	where the last inequality is due to $\omega \leq L_x^2 < \sqrt{18} L_x^2$.
\end{proof}

Next, we bound the optimal value $\rho^{G}_{t,m}$ of the G-optimal design optimization with the estimated feature extractor $\hat{\bs{B}}_t$.

For any $\cZ \subseteq \cX$, let $\cY(\cZ):=\{ \bs{x}-\bs{x}':\ \forall \bs{x},\bs{x}' \in \cZ,\ \bs{x} \neq \bs{x}' \}$.
Recall that in Line~\ref{line:bai_G_optimal_design} of Algorithm~\ref{alg:elim_low_rep}, for any phase $t>0$ and task $m \in [M]$, 
$$
\rho^{G}_{t,m}:=\min_{\bs{\lambda} \in \triangle_{\cX}} \max_{\bs{y} \in \cY(\hat{\cX}_{t,m})} \| \hat{\bs{B}}_t^\top \bs{y} \|^2_{\sbr{\sum_{i=1}^n \lambda(\bs{x}_i) \hat{\bs{B}}_t^\top \bs{x}_i \bs{x}_i^\top \hat{\bs{B}}_t}^{-1}} .
$$

\begin{lemma} \label{lemma:rho_t_leq_4k}
	For any phase $t>0$ and task $m \in [M]$, 
	\begin{align*}
		\rho^{G}_{t,m} \leq 4k .
	\end{align*}
\end{lemma}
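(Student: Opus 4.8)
The plan is to reduce this to the classical Kiefer--Wolfowitz equivalence theorem for G-optimal design, applied in the $k$-dimensional space of the transformed arms $\{\hat{\bs{B}}_t^\top \bs{x}_i\}_{i \in [n]}$, and then absorb the fact that the prediction targets in $\cY(\hat{\cX}_{t,m})$ are \emph{differences} $\bs{x}-\bs{x}'$ rather than single arms through a routine factor-of-two bound.

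First I would check that the problem defining $\rho^{G}_{t,m}$ is non-degenerate, i.e., that the transformed arms span $\R^k$. Under event $\cE$ and the choice of $T_t$ in Eq.~\eqref{eq:value_T_t}, Lemma~\ref{lemma:concentration_B_hat_t} gives $\nbr{\hat{\bs{B}}_{t,\bot}^\top \bs{B}} \leq \frac{\omega}{6 L_x^2}$. Since $\bs{B}$ and $\hat{\bs{B}}_t$ both have $k$ orthonormal columns, the singular values of $\hat{\bs{B}}_{t,\bot}^\top \bs{B}$ and of $\hat{\bs{B}}_t^\top \bs{B}_{\bot}$ coincide (they are the sines of the principal angles between the two $k$-dimensional subspaces), so $\nbr{\hat{\bs{B}}_t^\top \bs{B}_{\bot}} = \nbr{\hat{\bs{B}}_{t,\bot}^\top \bs{B}} \leq \frac{\omega}{6 L_x^2}$. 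Hence the hypothesis of Lemma~\ref{lemma:invertible_under_hat_B} holds and $\bs{A}(\bs{\lambda}^*_m, \hat{\bs{B}}_t) \succ 0$; in particular $\{\hat{\bs{B}}_t^\top \bs{x}_i\}_{i \in [n]}$ spans $\R^k$, so the G-optimal design over these vectors is well defined.

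Next, let $\bar{\bs{\lambda}} \in \triangle_{\cX}$ be the G-optimal design for the point set $\{\hat{\bs{B}}_t^\top \bs{x}_i\}_{i \in [n]} \subseteq \R^k$. By the Kiefer--Wolfowitz theorem, its value equals the dimension,
\begin{align*}
	\max_{i \in [n]} \nbr{\hat{\bs{B}}_t^\top \bs{x}_i}^2_{\bs{A}(\bar{\bs{\lambda}}, \hat{\bs{B}}_t)^{-1}} = k .
\end{align*}
Since $\bar{\bs{\lambda}}$ is a feasible point of the minimization defining $\rho^{G}_{t,m}$, and every $\bs{y} \in \cY(\hat{\cX}_{t,m})$ has the form $\bs{y} = \bs{x} - \bs{x}'$ with $\bs{x},\bs{x}' \in \hat{\cX}_{t,m} \subseteq \cX$, the inequality $\nbr{\bs{a}-\bs{b}}^2_{\bs{M}^{-1}} \leq 2\nbr{\bs{a}}^2_{\bs{M}^{-1}} + 2\nbr{\bs{b}}^2_{\bs{M}^{-1}}$ yields
\begin{align*}
	\rho^{G}_{t,m} \leq \max_{\bs{x},\bs{x}' \in \hat{\cX}_{t,m}} \nbr{\hat{\bs{B}}_t^\top (\bs{x}-\bs{x}')}^2_{\bs{A}(\bar{\bs{\lambda}}, \hat{\bs{B}}_t)^{-1}} \leq 2k + 2k = 4k .
\end{align*}

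I do not expect a serious obstacle here: the argument is essentially a packaging of Kiefer--Wolfowitz plus the difference-to-singleton factor of $2$. The only point requiring care is the first step, namely confirming that the design matrix is invertible so that the G-optimal value is finite and equal to $k$; this is exactly where Lemma~\ref{lemma:invertible_under_hat_B} (and the principal-angle identity $\nbr{\hat{\bs{B}}_t^\top \bs{B}_{\bot}} = \nbr{\hat{\bs{B}}_{t,\bot}^\top \bs{B}}$) is invoked, and it is what ties the bound to the high-probability event $\cE$ established earlier.
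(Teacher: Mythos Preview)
Your core argument---Kiefer--Wolfowitz on the $k$-dimensional vectors $\{\hat{\bs{B}}_t^\top \bs{x}_i\}$ plus the $\|\bs{a}-\bs{b}\|^2 \leq 2\|\bs{a}\|^2 + 2\|\bs{b}\|^2$ trick---is exactly what the paper does.

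The one unnecessary detour is your first step. You invoke event $\cE$, Lemma~\ref{lemma:concentration_B_hat_t}, and Lemma~\ref{lemma:invertible_under_hat_B} to certify that the transformed arms span $\R^k$, and you conclude that this ``ties the bound to the high-probability event $\cE$.'' But the lemma is stated (and proved in the paper) \emph{unconditionally}, and indeed the spanning property requires no concentration at all: $\hat{\bs{B}}_t$ always has $k$ orthonormal columns (it comes from an SVD), so $\hat{\bs{B}}_t^\top : \R^d \to \R^k$ is surjective; since $\cX$ spans $\R^d$ by the standing assumption of the paper, $\{\hat{\bs{B}}_t^\top \bs{x}_i\}_{i\in[n]}$ automatically spans $\R^k$. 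Hence Kiefer--Wolfowitz applies deterministically, and your principal-angle argument and the appeal to Lemma~\ref{lemma:invertible_under_hat_B} can simply be dropped.
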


\begin{proof}[Proof of Lemma~\ref{lemma:rho_t_leq_4k}]
	For any phase $t>0$ and task $m \in [M]$, we have that $\hat{\cX}_{t,m} \subseteq \cX$ and $ \cY(\hat{\cX}_{t,m}) \subseteq \cY(\cX)$.
	
	For any fixed $\bs{\lambda} \in \triangle_{\cX}$, 
	\begin{align*}
		\max_{\bs{y} \in \cY(\hat{\cX}_{t,m})} \| \hat{\bs{B}}_t^\top \bs{y} \|^2_{\sbr{\sum_{i=1}^n \lambda(\bs{x}_i) \hat{\bs{B}}_t^\top \bs{x}_i \bs{x}_i^\top \hat{\bs{B}}_t}^{-1}} \leq & \max_{\bs{y} \in \cY(\cX)} \| \hat{\bs{B}}_t^\top \bs{y} \|^2_{\sbr{\sum_{i=1}^n \lambda(\bs{x}_i) \hat{\bs{B}}_t^\top \bs{x}_i \bs{x}_i^\top \hat{\bs{B}}_t}^{-1}}
		\\
		= & \| \hat{\bs{B}}_t^\top (\bs{x}'_1-\bs{x}'_2) \|^2_{\sbr{\sum_{i=1}^n \lambda(\bs{x}_i) \hat{\bs{B}}_t^\top \bs{x}_i \bs{x}_i^\top \hat{\bs{B}}_t}^{-1}}
		\\
		\leq & \sbr{\| \hat{\bs{B}}_t^\top \bs{x}'_1 \|_{\sbr{\sum_{i=1}^n \lambda(\bs{x}_i) \hat{\bs{B}}_t^\top \bs{x}_i \bs{x}_i^\top \hat{\bs{B}}_t}^{-1}}+\| \hat{\bs{B}}_t^\top \bs{x}'_2 \|_{\sbr{\sum_{i=1}^n \lambda(\bs{x}_i) \hat{\bs{B}}_t^\top \bs{x}_i \bs{x}_i^\top \hat{\bs{B}}_t}^{-1}}}^2
		\\
		\leq & 2\| \hat{\bs{B}}_t^\top \bs{x}'_1 \|^2_{\sbr{\sum_{i=1}^n \lambda(\bs{x}_i) \hat{\bs{B}}_t^\top \bs{x}_i \bs{x}_i^\top \hat{\bs{B}}_t}^{-1}} + 2\| \hat{\bs{B}}_t^\top \bs{x}'_2 \|^2_{\sbr{\sum_{i=1}^n \lambda(\bs{x}_i) \hat{\bs{B}}_t^\top \bs{x}_i \bs{x}_i^\top \hat{\bs{B}}_t}^{-1}}
		\\
		\leq & 4 \max_{\bs{x} \in \cX} \| \hat{\bs{B}}_t^\top \bs{x} \|^2_{\sbr{\sum_{i=1}^n \lambda(\bs{x}_i) \hat{\bs{B}}_t^\top \bs{x}_i \bs{x}_i^\top \hat{\bs{B}}_t}^{-1}} , 
	\end{align*}
	where $\bs{x}'_1$ and $\bs{x}'_2$ are the arms which satisfy that $\bs{y}=\bs{x}'_1-\bs{x}'_2$ achieves the maximum value $\max_{\bs{y} \in \cY(\cX)} \| \hat{\bs{B}}_t^\top \bs{y} \|^2_{\sbr{\sum_{i=1}^n \lambda(\bs{x}_i) \hat{\bs{B}}_t^\top \bs{x}_i \bs{x}_i^\top \hat{\bs{B}}_t}^{-1}}$.
	
	Since $\hat{\bs{B}}_t^\top \bs{x} \in \R^k$, according to the Equivalence Theorem in \cite{kiefer1960equivalence}, we have 
	\begin{align*}
		\min_{\bs{\lambda} \in \triangle_{\cX}} \max_{\bs{x} \in \cX} \| \hat{\bs{B}}_t^\top \bs{x} \|^2_{\sbr{\sum_{i=1}^n \lambda(\bs{x}_i) \hat{\bs{B}}_t^\top \bs{x}_i \bs{x}_i^\top \hat{\bs{B}}_t}^{-1}} = k .
	\end{align*}
	
	Therefore, we have
	\begin{align*}
		4k = & 4 \min_{\bs{\lambda} \in \triangle_{\cX}} \max_{\bs{x} \in \cX} \| \hat{\bs{B}}_t^\top \bs{x} \|^2_{\sbr{\sum_{i=1}^n \lambda(\bs{x}_i) \hat{\bs{B}}_t^\top \bs{x}_i \bs{x}_i^\top \hat{\bs{B}}_t}^{-1}} \\
		= & 4 \max_{\bs{x} \in \cX} \| \hat{\bs{B}}_t^\top \bs{x} \|^2_{\sbr{\sum_{i=1}^n \lambda'(\bs{x}_i) \hat{\bs{B}}_t^\top \bs{x}_i \bs{x}_i^\top \hat{\bs{B}}_t}^{-1}}
		\\
		\geq & \max_{\bs{y} \in \cY(\hat{\cX}_{t,m})} \| \hat{\bs{B}}_t^\top \bs{y} \|^2_{\sbr{\sum_{i=1}^n \lambda'(\bs{x}_i) \hat{\bs{B}}_t^\top \bs{x}_i \bs{x}_i^\top \hat{\bs{B}}_t}^{-1}}
		\\
		\geq & \min_{\bs{\lambda} \in \triangle_{\cX}} \max_{\bs{y} \in \cY(\hat{\cX}_{t,m})} \| \hat{\bs{B}}_t^\top \bs{y} \|^2_{\sbr{\sum_{i=1}^n \lambda(\bs{x}_i) \hat{\bs{B}}_t^\top \bs{x}_i \bs{x}_i^\top \hat{\bs{B}}_t}^{-1}}
		\\
		= & \rho^{G}_{t,m} ,
	\end{align*}
	where $\bs{\lambda}':=\argmin_{\bs{\lambda} \in \triangle_{\cX}} \max_{\bs{x} \in \cX} \| \hat{\bs{B}}_t^\top \bs{x} \|^2_{\sbr{\sum_{i=1}^n \lambda(\bs{x}_i) \hat{\bs{B}}_t^\top \bs{x}_i \bs{x}_i^\top \hat{\bs{B}}_t}^{-1}}$.
\end{proof}

Now we analyze the estimation error of the estimated reward parameter $\hat{\bs{\theta}}_{t,m}=\hat{\bs{B}}_t \hat{\bs{w}}_{t,m}$ in $\algelimlowrep$.

For any phase $t>0$, task $m \in [M]$ and arm $j \in [N_{t,m}]$, let $\xi_{t,m,j}$ denote the noise of the sample on arm $\bs{z}_{t,m,j}$ for task $m$, during the execution of $\algelimlowrep$ in phase $t$ (Line~\ref{line:bai_stage3_sample} in Algorithm~\ref{alg:elim_low_rep}).

For any phase $t>0$, define events
\begin{align}
	\cF_t := \Bigg \{ &
	\bs{y}^\top \hat{\bs{B}}_t  \sbr{\sum_{j=1}^{N_{t,m}} \hat{\bs{B}}_t^\top \bs{z}_{t,m,j} {\bs{z}_{t,m,j}}^\top \hat{\bs{B}}_t}^{-1} \sum_{j=1}^{N_{t,m}} \hat{\bs{B}}_t^\top \bs{z}_{t,m,j} \cdot \xi_{t,m,j} 
	\nonumber\\
	& \leq \nbr{\hat{\bs{B}}_t^\top \bs{y}}_{\sbr{\sum_{j=1}^{N_{t,m}} \hat{\bs{B}}_t^\top \bs{z}_{t,m,j} {\bs{z}_{t,m,j}}^\top \hat{\bs{B}}_t}^{-1}}  \sqrt{2\log \sbr{\frac{4n^2 M}{\delta_t}}}, \ \forall m \in [M] ,\ \forall \bs{y} \in \cY(\hat{\cX}_{t,m}) \Bigg\}, \label{eq:definition_cF}
\end{align}
and 
\begin{align*}
	\cF := \cap_{t=1}^{\infty} \cF_t .
\end{align*}

\begin{lemma}[Concentration of the Variance Term]\label{lemma:concentration_variance_bai}
	It holds that
	\begin{align*}
		\Pr \mbr{\cF} \geq 1-\frac{\delta}{2} .  
	\end{align*}
\end{lemma}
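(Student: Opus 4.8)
The plan is to prove the per-phase bound $\Pr[\bar{\cF_t}] \le \delta_t/2$ and then union over phases, exactly as in the proof of Lemma~\ref{lemma:Z_t_est_error}, using $\sum_{t=1}^\infty \delta_t/2 = \sum_{t=1}^\infty \frac{\delta}{4t^2} \le \frac{\delta}{2}$. So I would fix a phase $t$, a task $m \in [M]$, and a direction $\bs{y} \in \cY(\cX)$, working over the \emph{fixed} superset $\cY(\cX) \supseteq \cY(\hat{\cX}_{t,m})$ so that the union bound ranges over a deterministic index set. Writing $\tilde{\bs{z}}_{t,m,j} := \hat{\bs{B}}_t^\top \bs{z}_{t,m,j}$ and $\bs{V}_{t,m} := \sum_{j=1}^{N_{t,m}} \tilde{\bs{z}}_{t,m,j} \tilde{\bs{z}}_{t,m,j}^\top$, the left-hand quantity of $\cF_t$ is
\[
Y_{t,m,\bs{y}} := \bs{y}^\top \hat{\bs{B}}_t \bs{V}_{t,m}^{-1} \sum_{j=1}^{N_{t,m}} \tilde{\bs{z}}_{t,m,j}\, \xi_{t,m,j} = \sum_{j=1}^{N_{t,m}} c_j\, \xi_{t,m,j}, \qquad c_j := (\hat{\bs{B}}_t^\top \bs{y})^\top \bs{V}_{t,m}^{-1} \tilde{\bs{z}}_{t,m,j},
\]
i.e. a linear combination of the independent, mean-zero, $1$-sub-Gaussian elimination-stage noises $\xi_{t,m,j}$.

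The key algebraic observation is that the sub-Gaussian parameter of this sum telescopes: $\sum_j c_j^2 = (\hat{\bs{B}}_t^\top \bs{y})^\top \bs{V}_{t,m}^{-1} \sbr{\sum_j \tilde{\bs{z}}_{t,m,j} \tilde{\bs{z}}_{t,m,j}^\top} \bs{V}_{t,m}^{-1} (\hat{\bs{B}}_t^\top \bs{y}) = (\hat{\bs{B}}_t^\top \bs{y})^\top \bs{V}_{t,m}^{-1} (\hat{\bs{B}}_t^\top \bs{y}) = \nbr{\hat{\bs{B}}_t^\top \bs{y}}^2_{\bs{V}_{t,m}^{-1}}$, using $\bs{V}_{t,m}^{-1} \bs{V}_{t,m} \bs{V}_{t,m}^{-1} = \bs{V}_{t,m}^{-1}$. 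Hence $Y_{t,m,\bs{y}}$ is sub-Gaussian with variance proxy exactly $\nbr{\hat{\bs{B}}_t^\top \bs{y}}^2_{\bs{V}_{t,m}^{-1}}$, and the standard one-sided sub-Gaussian tail bound $\Pr[\sum_j c_j \xi_j > \sqrt{2 (\sum_j c_j^2) \log(1/\delta')}] \le \delta'$ with $\delta' = \frac{\delta_t}{4n^2M}$ produces precisely the threshold $\nbr{\hat{\bs{B}}_t^\top \bs{y}}_{\bs{V}_{t,m}^{-1}} \sqrt{2\log(4n^2M/\delta_t)}$ appearing in the definition of $\cF_t$.

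The step requiring the most care, and the main obstacle, is the conditioning: $\hat{\bs{B}}_t$, the design $\bs{\lambda}^{G}_{t,m}$, and therefore the sample sequence $\{\bs{z}_{t,m,j}\}$ together with $\bs{V}_{t,m}$ and all the coefficients $c_j$ are themselves random and depend on earlier noises, so the sum is not literally a fixed linear combination. I would resolve this by conditioning on the $\sigma$-algebra $\cG_{t,m}$ generated by the randomness of the first $t-1$ phases (which determines the input candidate set $\hat{\cX}_{t,m}$) together with the feature-recovery samples of $\algfeatrecover$ in phase $t$ (which determine $\hat{\bs{B}}_t$); given $\cG_{t,m}$, the design $\bs{\lambda}^{G}_{t,m}$, the count $N_{t,m}$, and the rounded sequence $\{\bs{z}_{t,m,j}\}$ obtained from $\round$ are all measurable, so $\hat{\bs{B}}_t$, $\bs{V}_{t,m}$, $\tilde{\bs{z}}_{t,m,j}$ and $\nbr{\hat{\bs{B}}_t^\top \bs{y}}_{\bs{V}_{t,m}^{-1}}$ are deterministic. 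The elimination-stage noises $\xi_{t,m,j}$ (drawn in Line~\ref{line:bai_stage3_sample} of $\algelimlowrep$) are fresh samples, hence independent of $\cG_{t,m}$ and of each other; the sub-Gaussian tail bound therefore applies conditionally, and since the resulting bound $\frac{\delta_t}{4n^2M}$ is uniform over all realizations of $\cG_{t,m}$, it holds unconditionally.

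Finally I would take a union bound over the at most $n(n-1) < n^2$ directions $\bs{y} \in \cY(\cX)$ and the $M$ tasks, giving $\Pr[\bar{\cF_t}] \le n^2 M \cdot \frac{\delta_t}{4n^2M} = \frac{\delta_t}{4} \le \frac{\delta_t}{2}$; because $\cY(\hat{\cX}_{t,m}) \subseteq \cY(\cX)$, controlling the bound over the fixed set $\cY(\cX)$ immediately yields the requirement for all $\bs{y} \in \cY(\hat{\cX}_{t,m})$ in $\cF_t$. Summing over phases then gives $\Pr[\bar{\cF}] \le \sum_{t=1}^\infty \Pr[\bar{\cF_t}] \le \sum_{t=1}^\infty \frac{\delta}{8t^2} \le \frac{\delta}{2}$, which is the claim.
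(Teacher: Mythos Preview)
Your proposal is correct and follows essentially the same approach as the paper: write the variance term as $\sum_j c_j \xi_{t,m,j}$, observe the sub-Gaussian variance proxy telescopes to $\nbr{\hat{\bs{B}}_t^\top \bs{y}}^2_{\bs{V}_{t,m}^{-1}}$, apply a one-sided Hoeffding/sub-Gaussian tail, union over tasks and directions (bounded by $n^2 M$), and then union over phases. Your explicit conditioning on $\cG_{t,m}$ and your choice to union over the deterministic superset $\cY(\cX)$ are slightly more carefully stated than the paper's ``fixed before the sampling'' remark and its union over the random set $\cY(\hat{\cX}_{t,m})$, but the argument is the same in substance.
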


\begin{proof}[Proof of Lemma~\ref{lemma:concentration_variance_bai}]
	Let $\bs{\Sigma}_{t,m}:=\sum_{j=1}^{N_{t,m}} \hat{\bs{B}}_t^\top \bs{z}_{t,m,j} {\bs{z}_{t,m,j}}^\top \hat{\bs{B}}_t$.
	Then, we can write
	\begin{align*}
		\bs{y}^\top \hat{\bs{B}}_t  \sbr{\sum_{j=1}^{N_{t,m}} \hat{\bs{B}}_t^\top \bs{z}_{t,m,j} {\bs{z}_{t,m,j}}^\top \hat{\bs{B}}_t}^{-1} \sum_{j=1}^{N_{t,m}} \hat{\bs{B}}_t^\top \bs{z}_{t,m,j} \cdot \xi_{t,m,j} =\sum_{j=1}^{N_{t,m}} \bs{y}^\top \hat{\bs{B}}_t  \bs{\Sigma}_{t,m}^{-1}  \hat{\bs{B}}_t^\top \bs{z}_{t,m,j} \cdot \xi_{t,m,j} .
	\end{align*}
	
	For any phase $t>0$, task $m \in [M]$ and arm $j \in [N_{t,m}]$, $\hat{\bs{B}}_t$, $\bs{\Sigma}_{t,m}$ and $\{\bs{z}_{t,m,j}\}_{j=1}^{N_{t,m}}$ are fixed before the sampling in $\algelimlowrep$, and the noise $\xi_{t,m,j}$ is 1-sub-Gaussian (Line~\ref{line:bai_stage3_sample} in Algorithm~\ref{alg:elim_low_rep}).
	Thus, we have that for any $t>0$, $m \in [M]$ and $j \in [N_{t,m}]$, $\bs{y}^\top \hat{\bs{B}}_t  \bs{\Sigma}_{t,m}^{-1} \hat{\bs{B}}_t^\top \bs{z}_{t,m,j} \cdot \xi_{t,m,j} $ is $(\bs{y}^\top \hat{\bs{B}}_t  \bs{\Sigma}_{t,m}^{-1} \hat{\bs{B}}_t^\top \bs{z}_{t,m,j})$-sub-Gaussian.
	
	Using Hoeffding's inequality and taking a union bound over all $m \in [M]$ and $\bs{y} \in \cY(\hat{\cX}_{t,m})$, we have that with probability at least $1-\frac{\delta_t}{2}$,
	\begin{align*}
		& \sum_{j=1}^{N_{t,m}} \bs{y}^\top \hat{\bs{B}}_t  \bs{\Sigma}_{t,m}^{-1}  \hat{\bs{B}}_t^\top \bs{z}_{t,m,j} \cdot \xi_{t,m,j}
		\\
		\leq & \sqrt{2 \sum_{j=1}^{N_{t,m}} \sbr{\bs{y}^\top \hat{\bs{B}}_t  \bs{\Sigma}_{t,m}^{-1} \hat{\bs{B}}_t^\top \bs{z}_{t,m,j}}^2 \cdot \log \sbr{\frac{4n^2 M}{\delta_t}}} 
		\\
		= & \sqrt{2 \sum_{j=1}^{N_{t,m}} \bs{y}^\top \hat{\bs{B}}_t  \bs{\Sigma}_{t,m}^{-1} \hat{\bs{B}}_t^\top \bs{z}_{t,m,j} \cdot {\bs{z}_{t,m,j}}^\top \hat{\bs{B}}_t  \bs{\Sigma}_{t,m}^{-1} \hat{\bs{B}}_t^\top \bs{y} \cdot \log \sbr{\frac{4n^2 M}{\delta_t}}} 
		\\
		= & \sqrt{2 \bs{y}^\top \hat{\bs{B}}_t \bs{\Sigma}_{t,m}^{-1} \sbr{\hat{\bs{B}}_t^\top \sum_{j=1}^{N_{t,m}} \bs{z}_{t,m,j} \cdot {\bs{z}_{t,m,j}}^\top \hat{\bs{B}}_t}  \bs{\Sigma}_{t,m}^{-1} \hat{\bs{B}}_t^\top \bs{y} \cdot \log \sbr{\frac{4n^2 M}{\delta_t}}} 
		\\
		= & \sqrt{2 \bs{y}^\top \hat{\bs{B}}_t \bs{\Sigma}_{t,m}^{-1} \hat{\bs{B}}_t^\top \bs{y} \cdot \log \sbr{\frac{4n^2 M}{\delta_t}}} 
		\\
		= & \nbr{\hat{\bs{B}}_t^\top \bs{y}}_{\bs{\Sigma}_{t,m}^{-1}} \sqrt{2 \log \sbr{\frac{4n^2 M}{\delta_t}}} ,
	\end{align*}
	which implies that
	\begin{align*}
		\Pr \mbr{ \cF_t } \geq 1 - \frac{\delta_t}{2} .
	\end{align*}
	
	Taking a union bound over all phases $t \geq 1$ and recalling $\delta_t:=\frac{\delta}{2t^2}$, we obtain
	\begin{align*}
		\Pr \mbr{\cF} 
		\geq & 1- \sum_{t=1}^{\infty} \Pr \mbr{\bar{\cF_t}}
		\\
		\geq & 1- \sum_{t=1}^{\infty} \frac{\delta_t}{2}
		\\
		= & 1- \sum_{t=1}^{\infty} \frac{\delta}{4t^2}
		\\
		\geq & 1-\frac{\delta}{2} .
	\end{align*}
\end{proof}

\begin{lemma}[Concentration of $\hat{\bs{\theta}}_{t,m}$] \label{lemma:bai_concentration_theta_t_m}
	Suppose that event $\cE \cap \cF$ holds. Then, for any phase $t>0$, task $m \in [M]$ and $\bs{y} \in \cY(\hat{\cX}_{t,m})$,
	\begin{align*}
		\abr{\bs{y}^\top \sbr{\hat{\bs{\theta}}_{t,m}-\bs{\theta}_m}} \leq \frac{1}{2^t} .
	\end{align*}
\end{lemma}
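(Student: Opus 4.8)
The plan is to expand $\bs{y}^\top(\hat{\bs{\theta}}_{t,m}-\bs{\theta}_m)$ by plugging in $\hat{\bs{\theta}}_{t,m}=\hat{\bs{B}}_t\hat{\bs{w}}_{t,m}$ with $\hat{\bs{w}}_{t,m}=\bs{\Sigma}_{t,m}^{-1}\sum_{j}\hat{\bs{B}}_t^\top\bs{z}_{t,m,j}\,r_{t,m,j}$, where $\bs{\Sigma}_{t,m}=\sum_{j}\hat{\bs{B}}_t^\top\bs{z}_{t,m,j}\bs{z}_{t,m,j}^\top\hat{\bs{B}}_t$, and using $r_{t,m,j}=\bs{z}_{t,m,j}^\top\bs{\theta}_m+\xi_{t,m,j}$. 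This separates the expression into a stochastic (noise) part carrying the $\xi_{t,m,j}$ and a deterministic (bias) part. First I would dispatch the noise part: it is precisely the quantity controlled by event $\cF_t$ in Eq.~\eqref{eq:definition_cF}, so under $\cF$ it is at most $\nbr{\hat{\bs{B}}_t^\top\bs{y}}_{\bs{\Sigma}_{t,m}^{-1}}\sqrt{2\log(4n^2M/\delta_t)}$. Before anything else, I note that $\bs{\Sigma}_{t,m}$ is invertible: under $\cE$, Lemma~\ref{lemma:concentration_B_hat_t} gives $\nbr{\hat{\bs{B}}_{t,\bot}^\top\bs{B}}\le\omega/(6L_{x}^2)$, and since $\nbr{\hat{\bs{B}}_t^\top\bs{B}_{\bot}}=\nbr{\hat{\bs{B}}_{t,\bot}^\top\bs{B}}$ (both equal the sine of the largest principal angle), Lemma~\ref{lemma:invertible_under_hat_B} applies and the G-optimal design is non-vacuous.

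For the bias part I would insert $\bs{I}_d=\hat{\bs{B}}_t\hat{\bs{B}}_t^\top+\hat{\bs{B}}_{t,\bot}\hat{\bs{B}}_{t,\bot}^\top$ into $\sum_j\hat{\bs{B}}_t^\top\bs{z}_{t,m,j}\bs{z}_{t,m,j}^\top\bs{\theta}_m$. The $\hat{\bs{B}}_t$-component telescopes (via $\bs{\Sigma}_{t,m}^{-1}\bs{\Sigma}_{t,m}=\bs{I}$) to $\bs{y}^\top\hat{\bs{B}}_t\hat{\bs{B}}_t^\top\bs{\theta}_m=\bs{y}^\top\bs{\theta}_m-\bs{y}^\top\hat{\bs{B}}_{t,\bot}\hat{\bs{B}}_{t,\bot}^\top\bs{\theta}_m$, so the bias reduces to two error terms after subtracting the target $\bs{y}^\top\bs{\theta}_m$: a projection term $\bs{y}^\top\hat{\bs{B}}_{t,\bot}\hat{\bs{B}}_{t,\bot}^\top\bs{\theta}_m$ and a cross term $\bs{y}^\top\hat{\bs{B}}_t\bs{\Sigma}_{t,m}^{-1}\sum_j\hat{\bs{B}}_t^\top\bs{z}_{t,m,j}\bs{z}_{t,m,j}^\top\hat{\bs{B}}_{t,\bot}\hat{\bs{B}}_{t,\bot}^\top\bs{\theta}_m$. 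Both are driven by $\hat{\bs{B}}_{t,\bot}^\top\bs{\theta}_m=\hat{\bs{B}}_{t,\bot}^\top\bs{B}\bs{w}_m$, whose norm is at most $\nbr{\hat{\bs{B}}_{t,\bot}^\top\bs{B}}L_{w}$, which is tiny under $\cE$ by Lemma~\ref{lemma:concentration_B_hat_t}.

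The projection term is immediately $\le\nbr{\bs{y}}\nbr{\hat{\bs{B}}_{t,\bot}^\top\bs{B}}L_{w}\le 2L_{x}L_{w}\nbr{\hat{\bs{B}}_{t,\bot}^\top\bs{B}}$ since $\nbr{\bs{y}}\le 2L_{x}$. For the cross term I would apply Cauchy-Schwarz over $j$, peeling $\bs{y}^\top\hat{\bs{B}}_t\bs{\Sigma}_{t,m}^{-1}\tilde{\bs{z}}_{t,m,j}$ from $\bs{z}_{t,m,j}^\top\hat{\bs{B}}_{t,\bot}\hat{\bs{B}}_{t,\bot}^\top\bs{\theta}_m$: the first sum collapses to $\nbr{\hat{\bs{B}}_t^\top\bs{y}}_{\bs{\Sigma}_{t,m}^{-1}}^2$ and the second is bounded by $\nbr{\hat{\bs{B}}_{t,\bot}^\top\bs{B}}^2L_{w}^2\sum_j\nbr{\bs{z}_{t,m,j}}^2\le \nbr{\hat{\bs{B}}_{t,\bot}^\top\bs{B}}^2L_{w}^2 N_{t,m}L_{x}^2$, giving a factor $\sqrt{N_{t,m}}$. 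I would then combine the G-design/rounding guarantee with $\rho^{G}_{t,m}\le 4k$ (Lemma~\ref{lemma:rho_t_leq_4k}) and the choice of $N_{t,m}$ to obtain $\nbr{\hat{\bs{B}}_t^\top\bs{y}}_{\bs{\Sigma}_{t,m}^{-1}}^2\le (1+\zeta)\rho^{G}_{t,m}/N_{t,m}$, so the $\sqrt{N_{t,m}}$ cancels and the cross term is $\le\sqrt{(1+\zeta)\rho^{G}_{t,m}}\,\nbr{\hat{\bs{B}}_{t,\bot}^\top\bs{B}}L_{w}L_{x}\le 2\sqrt{k(1+\zeta)}\,\nbr{\hat{\bs{B}}_{t,\bot}^\top\bs{B}}L_{w}L_{x}$.

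Finally, inserting $\nbr{\hat{\bs{B}}_{t,\bot}^\top\bs{B}}\le \frac{1}{8kL_{x}L_{w}2^t\sqrt{1+\zeta}}$ from Lemma~\ref{lemma:concentration_B_hat_t} makes both the projection and cross terms at most $\frac{1}{4}\cdot 2^{-t}$, while $\nbr{\hat{\bs{B}}_t^\top\bs{y}}_{\bs{\Sigma}_{t,m}^{-1}}^2\le \frac{1}{32\cdot 2^{2t}\log(4n^2M/\delta_t)}$ (using $N_{t,m}\ge 32(1+\zeta)2^{2t}\rho^{G}_{t,m}\log(4n^2M/\delta_t)$) makes the $\cF$-noise bound at most $\frac{1}{4}\cdot 2^{-t}$; summing the three gives $\frac{3}{4}\cdot 2^{-t}\le 2^{-t}$. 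The main obstacle is the cross term: the naive estimate leaves a spurious $\sqrt{N_{t,m}}$, and the whole argument rests on recognizing that the G-optimal design forces $\nbr{\hat{\bs{B}}_t^\top\bs{y}}_{\bs{\Sigma}_{t,m}^{-1}}$ to decay like $1/\sqrt{N_{t,m}}$, exactly cancelling it — i.e., quantifying how the subspace error $\nbr{\hat{\bs{B}}_{t,\bot}^\top\bs{B}}$ interacts with the low-dimensional design rather than naively bounding in the estimated subspace.
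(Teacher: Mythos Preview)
Your proposal is correct and follows essentially the same decomposition and bounding strategy as the paper's proof: split into a noise term (handled by $\cF$), a projection term $\bs{y}^\top\hat{\bs{B}}_{t,\bot}\hat{\bs{B}}_{t,\bot}^\top\bs{B}\bs{w}_m$, and a cross term, then control the latter two via Lemma~\ref{lemma:concentration_B_hat_t} and the G-design/rounding guarantee together with $\rho^{G}_{t,m}\le 4k$. The only minor variation is that for the cross term you apply scalar Cauchy--Schwarz over $j$ and use $\sum_j(\bs{y}^\top\hat{\bs{B}}_t\bs{\Sigma}_{t,m}^{-1}\tilde{\bs{z}}_{t,m,j})^2=\nbr{\hat{\bs{B}}_t^\top\bs{y}}^2_{\bs{\Sigma}_{t,m}^{-1}}$, whereas the paper applies Cauchy--Schwarz in the $\bs{\Sigma}_{t,m}^{-1}$-norm and then invokes Lemma~\ref{lemma:sqrt_n_k} to get a $\sqrt{kN_{t,m}}$ factor; your route saves a harmless extra $\sqrt{k}$ but is otherwise the same argument.
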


\begin{proof}[Proof of Lemma~\ref{lemma:bai_concentration_theta_t_m}]
	
	For any phase $t>0$, task $m \in [M]$ and $\bs{y} \in \cY(\hat{\cX}_{t,m})$,
	\begin{align}
		\bs{y}^\top \sbr{\hat{\bs{\theta}}_{t,m}- \bs{\theta}_m} = & \bs{y}^\top \hat{\bs{B}}_t \hat{\bs{w}}_{t,m}-\bs{y}^\top \sbr{ \hat{\bs{B}}_t \hat{\bs{B}}_t^\top + \hat{\bs{B}}_{t,\bot} \hat{\bs{B}}_{t,\bot}^\top } \bs{\theta}_m
		\nonumber\\
		= & \bs{y}^\top \hat{\bs{B}}_t \sbr{\hat{\bs{w}}_{t,m} - \hat{\bs{B}}_t^\top \bs{\theta}_m } -\bs{y}^\top  \hat{\bs{B}}_{t,\bot} \hat{\bs{B}}_{t,\bot}^\top \bs{\theta}_m . \label{eq:y_theta_est_err_decompose}
	\end{align}

	Here, $\hat{\bs{w}}_{t,m}$ can be written as
	\begin{align}
		\hat{\bs{w}}_{t,m} = & \sbr{\sum_{j=1}^{N_{t,m}} \hat{\bs{B}}_t^\top \bs{z}_{t,m,j} {\bs{z}_{t,m,j}}^\top \hat{\bs{B}}_t}^{-1} \sum_{j=1}^{N_{t,m}} \hat{\bs{B}}_t^\top \bs{z}_{t,m,j} \cdot r_{t,m,j}
		\nonumber\\
		= & \sbr{\sum_{j=1}^{N_{t,m}} \hat{\bs{B}}_t^\top \bs{z}_{t,m,j} {\bs{z}_{t,m,j}}^\top \hat{\bs{B}}_t}^{-1} \sum_{j=1}^{N_{t,m}} \hat{\bs{B}}_t^\top \bs{z}_{t,m,j} \cdot \sbr{{\bs{z}_{t,m,j}}^\top \bs{\theta}_m + \xi_{t,m,j}}
		\nonumber\\
		= & \sbr{\sum_{j=1}^{N_{t,m}} \hat{\bs{B}}_t^\top \bs{z}_{t,m,j} {\bs{z}_{t,m,j}}^\top \hat{\bs{B}}_t}^{-1} \sum_{j=1}^{N_{t,m}} \hat{\bs{B}}_t^\top \bs{z}_{t,m,j} \cdot \sbr{{\bs{z}_{t,m,j}}^\top \sbr{ \hat{\bs{B}}_t \hat{\bs{B}}_t^\top + \hat{\bs{B}}_{t,\bot} \hat{\bs{B}}_{t,\bot}^\top } \bs{\theta}_m + \xi_{t,m,j}}
		\nonumber\\
		= & \hat{\bs{B}}_t^\top \bs{\theta}_m + \sbr{\sum_{j=1}^{N_{t,m}} \hat{\bs{B}}_t^\top \bs{z}_{t,m,j} {\bs{z}_{t,m,j}}^\top \hat{\bs{B}}_t}^{-1} \sum_{j=1}^{N_{t,m}} \hat{\bs{B}}_t^\top \bs{z}_{t,m,j} \cdot {\bs{z}_{t,m,j}}^\top \hat{\bs{B}}_{t,\bot} \hat{\bs{B}}_{t,\bot}^\top \bs{\theta}_m \nonumber\\& + \sbr{\sum_{j=1}^{N_{t,m}} \hat{\bs{B}}_t^\top \bs{z}_{t,m,j} {\bs{z}_{t,m,j}}^\top \hat{\bs{B}}_t}^{-1} \sum_{j=1}^{N_{t,m}} \hat{\bs{B}}_t^\top \bs{z}_{t,m,j} \cdot \xi_{t,m,j} . \label{eq:hat_w_decompose}
	\end{align}
	
	Plugging Eq.~\eqref{eq:hat_w_decompose} into Eq.~\eqref{eq:y_theta_est_err_decompose}, we can decompose the estimation error of $\hat{\bs{\theta}}_{t,m}$ in $\algelimlowrep$ into three parts as
	\begin{align*}
		\bs{y}^\top \sbr{\hat{\bs{\theta}}_{t,m}- \bs{\theta}_m} = & 
		\underbrace{\bs{y}^\top \hat{\bs{B}}_t \sbr{\sum_{j=1}^{N_{t,m}} \hat{\bs{B}}_t^\top \bs{z}_{t,m,j} {\bs{z}_{t,m,j}}^\top \hat{\bs{B}}_t}^{-1} \sum_{j=1}^{N_{t,m}} \hat{\bs{B}}_t^\top \bs{z}_{t,m,j} \cdot {\bs{z}_{t,m,j}}^\top \hat{\bs{B}}_{t,\bot} \hat{\bs{B}}_{t,\bot}^\top \bs{B} \bs{w}_m}_{\textup{Bias}}  \\& + \underbrace{\bs{y}^\top \hat{\bs{B}}_t \sbr{\sum_{j=1}^{N_{t,m}} \hat{\bs{B}}_t^\top \bs{z}_{t,m,j} {\bs{z}_{t,m,j}}^\top \hat{\bs{B}}_t}^{-1} \sum_{j=1}^{N_{t,m}} \hat{\bs{B}}_t^\top \bs{z}_{t,m,j} \cdot \xi_{t,m,j}}_{\textup{Variance}} - \underbrace{\bs{y}^\top \hat{\bs{B}}_{t,\bot} \hat{\bs{B}}_{t,\bot}^\top \bs{B} \bs{w}_m}_{\textup{Estimation error of $\hat{\bs{B}}_t$}} .
	\end{align*}

	Taking the absolute value on both sides, and using the Cauchy–Schwarz inequality and definition of event $\cF$ (Eq.~\eqref{eq:definition_cF}), we have
	\begin{align*}
		& \abr{\bs{y}^\top \hat{\bs{\theta}}_{t,m}-\bs{y}^\top \bs{\theta}_m} 
		\\
		\leq & \nbr{\hat{\bs{B}}_t^\top \bs{y}}_{\sbr{\sum_{j=1}^{N_{t,m}} \hat{\bs{B}}_t^\top \bs{z}_{t,m,j} {\bs{z}_{t,m,j}}^\top \hat{\bs{B}}_t}^{-1}} \cdot  \nbr{\sum_{j=1}^{N_{t,m}} \hat{\bs{B}}_t^\top \bs{z}_{t,m,j} \cdot {\bs{z}_{t,m,j}}^\top \hat{\bs{B}}_{t,\bot} \hat{\bs{B}}_{t,\bot}^\top \bs{B} \bs{w}_m}_{\sbr{\sum_{j=1}^{N_{t,m}} \hat{\bs{B}}_t^\top \bs{z}_{t,m,j} {\bs{z}_{t,m,j}}^\top \hat{\bs{B}}_t}^{-1}}  \\& + \nbr{\hat{\bs{B}}_t^\top \bs{y}}_{\sbr{\sum_{j=1}^{N_{t,m}} \hat{\bs{B}}_t^\top \bs{z}_{t,m,j} {\bs{z}_{t,m,j}}^\top \hat{\bs{B}}_t}^{-1}}  \sqrt{2\log \sbr{\frac{4n^2 M}{\delta_t}}} + \abr{\bs{y}^\top \hat{\bs{B}}_{t,\bot} \hat{\bs{B}}_{t,\bot}^\top \bs{B} \bs{w}_m}
		\\
		\overset{\textup{(a)}}{\leq} & \frac{ \sqrt{1+\zeta}  \nbr{\hat{\bs{B}}_t^\top \bs{y}}_{\sbr{\sum_{i=1}^{n} \lambda^{G}_{t,m}(\bs{x}_i) \cdot \hat{\bs{B}}_t^\top \bs{x}_i \bs{x}_i^\top \hat{\bs{B}}_t}^{-1}}}{\sqrt{N_{t,m}}} \cdot  L_{x} L_{w} \nbr{\hat{\bs{B}}_{t,\bot}^\top \bs{B}}   \cdot \sum_{j=1}^{N_{t,m}}  \nbr{\hat{\bs{B}}_t^\top \bs{z}_{t,m,j}}_{\sbr{\sum_{j=1}^{N_{t,m}} \hat{\bs{B}}_t^\top \bs{z}_{t,m,j} {\bs{z}_{t,m,j}}^\top \hat{\bs{B}}_t}^{-1}}  \\& + \frac{ \sqrt{1+\zeta} \nbr{\hat{\bs{B}}_t^\top \bs{y}}_{\sbr{\sum_{i=1}^{n} \lambda^{G}_{t,m}(\bs{x}_i) \cdot \hat{\bs{B}}_t^\top \bs{x}_i \bs{x}_i^\top \hat{\bs{B}}_t}^{-1}}}{\sqrt{N_{t,m}}} \cdot \sqrt{2\log \sbr{\frac{4n^2 M}{\delta_t}}}  +  2 L_{x} L_{w} \nbr{\hat{\bs{B}}_{t,\bot}^\top \bs{B}} 
		\\
		\overset{\textup{(b)}}{\leq} & \frac{ \sqrt{1+\zeta}  \nbr{\hat{\bs{B}}_t^\top \bs{y}}_{\sbr{\sum_{i=1}^{n} \lambda^{G}_{t,m}(\bs{x}_i) \cdot \hat{\bs{B}}_t^\top \bs{x}_i \bs{x}_i^\top \hat{\bs{B}}_t}^{-1}}}{\sqrt{N_{t,m}}} \cdot  L_{x} L_{w} \nbr{\hat{\bs{B}}_{t,\bot}^\top \bs{B}} \cdot \sqrt{k N_{t,m}}  \\& + \frac{ \sqrt{1+\zeta} \nbr{\hat{\bs{B}}_t^\top \bs{y}}_{\sbr{\sum_{i=1}^{n} \lambda^{G}_{t,m}(\bs{x}_i) \cdot \hat{\bs{B}}_t^\top \bs{x}_i \bs{x}_i^\top \hat{\bs{B}}_t}^{-1}}}{\sqrt{N_{t,m}}} \cdot \sqrt{2\log \sbr{\frac{4n^2 M}{\delta_t}}}  +  2 L_{x} L_{w} \nbr{\hat{\bs{B}}_{t,\bot}^\top \bs{B}} 
		\\
		\leq & \sqrt{1+\zeta} \nbr{\hat{\bs{B}}_t^\top \bs{y}}_{\sbr{\sum_{i=1}^{n} \lambda^{G}_{t,m}(\bs{x}_i) \cdot \hat{\bs{B}}_t^\top \bs{x}_i \bs{x}_i^\top \hat{\bs{B}}_t}^{-1}} \cdot  L_{x} L_{w} \nbr{\hat{\bs{B}}_{t,\bot}^\top \bs{B}} \cdot \sqrt{k}  \\& + \frac{ \sqrt{1+\zeta}  \nbr{\hat{\bs{B}}_t^\top \bs{y}}_{\sbr{\sum_{i=1}^{n} \lambda^{G}_{t,m}(\bs{x}_i) \cdot \hat{\bs{B}}_t^\top \bs{x}_i \bs{x}_i^\top \hat{\bs{B}}_t}^{-1}}}{\sqrt{N_{t,m}}} \cdot \sqrt{2\log \sbr{\frac{4n^2 M}{\delta_t}}} + 2 L_{x} L_{w} \nbr{\hat{\bs{B}}_{t,\bot}^\top \bs{B}} 
		\\
		\leq &  \sqrt{(1+\zeta) \cdot k \cdot \rho^{G}_{t,m}} \cdot  L_{x} L_{w} \nbr{\hat{\bs{B}}_{t,\bot}^\top \bs{B}}  + \frac{ \sqrt{(1+\zeta) \cdot \rho^{G}_{t,m} \cdot 2\log \sbr{\frac{4n^2 M}{\delta_t}} } }{\sqrt{N_{t,m}}} +  2 L_{x} L_{w} \nbr{\hat{\bs{B}}_{t,\bot}^\top \bs{B}}
		\\
		\overset{\textup{(c)}}{\leq} &  \sqrt{(1+\zeta) \cdot 4 k^2} \cdot  L_{x} L_{w} \nbr{\hat{\bs{B}}_{t,\bot}^\top \bs{B}}  + \frac{ \sqrt{(1+\zeta) \cdot \rho^{G}_{t,m} \cdot 2\log \sbr{\frac{4n^2 M}{\delta_t}} } }{\sqrt{N_{t,m}}} +  2 L_{x} L_{w} \nbr{\hat{\bs{B}}_{t,\bot}^\top \bs{B}}
		\\
		\overset{\textup{(d)}}{\leq} & \sqrt{(1+\zeta) \cdot 4 k^2} \cdot L_{x} L_{w} \cdot \frac{1}{8 k L_{x} L_{w} \cdot 2^t \sqrt{1+\zeta}}  + \frac{1}{4 \cdot 2^t} +  2 L_{x} L_{w} \cdot \frac{1}{8 k L_{x} L_{w}  \cdot 2^t \sqrt{1+\zeta}}
		\\
		\leq & \frac{1}{4 \cdot 2^t}  + \frac{1}{4 \cdot 2^t} +  \frac{1}{4 \cdot 2^t}
		\\
		\leq & \frac{1}{2^t} .
	\end{align*}
	Here inequality (a) is due to the guarantee of rounding procedure $\round$ and the triangle inequality. Inequality (b) uses Lemma~\ref{lemma:sqrt_n_k}, and inequality (c) follows from Lemma~\ref{lemma:rho_t_leq_4k}. Inequality (d) comes from Lemma~\ref{lemma:concentration_B_hat_t} and $N_{t,m}:=\max \{ \lceil  32 \cdot 2^{2t} (1+\zeta) \rho^{G}_{t,m} \log (\frac{4n^2 M}{\delta_t}) \rceil,\ \frac{180k}{\zeta^2} \}$.
\end{proof}


For any task $m \in [M]$ and arm $\bs{x} \in \cX$, let $\Delta_m(\bs{x}):=({\bs{x}^{*}_{m}} - \bs{x})^\top \bs{\theta}_m$ denote the reward gap between the optimal arm $\bs{x}^{*}_{m}$ and arm $\bs{x}$ in task $m$.
For any phase $t>0$ and task $m \in [M]$, let $\cZ_{t,m}:=\{ \bs{x} \in \cX : \Delta_{m}(\bs{x}) \leq 4 \cdot 2^{-t} \}$.

\begin{lemma} \label{lemma:cX_subseteq_cS}
	Suppose that event $\cE \cap \cF$ holds. For any phase $t>0$ and task $m \in [M]$,
	\begin{align*}
		\bs{x}^{*}_{m} \in \hat{\cX}_{t,m} ,
	\end{align*}
	and for any phase $t\geq2$ and task $m \in [M]$,
	\begin{align*}
		\hat{\cX}_{t,m} \subseteq \cZ_{t,m} .
	\end{align*}
\end{lemma}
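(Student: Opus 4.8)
The plan is to establish both claims together by induction on the phase index $t$, using the per-phase estimation guarantee of Lemma~\ref{lemma:bai_concentration_theta_t_m}. Throughout I condition on the event $\cE \cap \cF$, so that for every phase $t$, task $m$, and direction $\bs{y} \in \cY(\hat{\cX}_{t,m})$ we have $|\bs{y}^\top(\hat{\bs{\theta}}_{t,m} - \bs{\theta}_m)| \leq 2^{-t}$. I also use two structural facts: the candidate sets are nested, $\hat{\cX}_{t+1,m} \subseteq \hat{\cX}_{t,m}$, which is immediate from the elimination rule in Line~\ref{line:bai_elimination} of $\algelimlowrep$ (it only deletes arms), and the base case $\hat{\cX}_{1,m} = \cX$.

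First I would prove that $\bs{x}^{*}_{m} \in \hat{\cX}_{t,m}$ for all $t$ by induction. The base case $t=1$ is immediate. For the inductive step, assume $\bs{x}^{*}_{m} \in \hat{\cX}_{t,m}$; I must show $\bs{x}^{*}_{m}$ is not eliminated in phase $t$. For any $\bs{x}' \in \hat{\cX}_{t,m}$ with $\bs{x}' \neq \bs{x}^{*}_{m}$, set $\bs{y} = \bs{x}' - \bs{x}^{*}_{m} \in \cY(\hat{\cX}_{t,m})$ and decompose $(\bs{x}'-\bs{x}^{*}_{m})^\top \hat{\bs{\theta}}_{t,m} = \bs{y}^\top \bs{\theta}_m + \bs{y}^\top(\hat{\bs{\theta}}_{t,m}-\bs{\theta}_m)$. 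Since $\bs{x}^{*}_{m}$ is optimal, $\bs{y}^\top \bs{\theta}_m = -\Delta_m(\bs{x}') \leq 0$, and by Lemma~\ref{lemma:bai_concentration_theta_t_m} the second term is at most $2^{-t}$; hence $(\bs{x}'-\bs{x}^{*}_{m})^\top \hat{\bs{\theta}}_{t,m} \leq 2^{-t}$ (the case $\bs{x}'=\bs{x}^{*}_{m}$ giving $0$ trivially). Thus no arm witnesses the elimination condition against $\bs{x}^{*}_{m}$, so $\bs{x}^{*}_{m} \in \hat{\cX}_{t+1,m}$.

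Next I would prove $\hat{\cX}_{t,m} \subseteq \cZ_{t,m}$ for $t \geq 2$ by contrapositive. Suppose $\bs{x} \in \hat{\cX}_{t,m}$ with $\Delta_m(\bs{x}) > 4 \cdot 2^{-t}$ (so $\bs{x}\neq\bs{x}^{*}_{m}$, since $\Delta_m(\bs{x}^{*}_{m})=0$). By nestedness $\bs{x} \in \hat{\cX}_{t-1,m}$, and by Part~1 also $\bs{x}^{*}_{m} \in \hat{\cX}_{t-1,m}$, so $\bs{y}=\bs{x}^{*}_{m} - \bs{x} \in \cY(\hat{\cX}_{t-1,m})$ with $\bs{y}^\top \bs{\theta}_m = \Delta_m(\bs{x})$. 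Using $4 \cdot 2^{-t} = 2 \cdot 2^{-(t-1)}$ and the phase-$(t-1)$ guarantee of Lemma~\ref{lemma:bai_concentration_theta_t_m}, I obtain $(\bs{x}^{*}_{m} - \bs{x})^\top \hat{\bs{\theta}}_{t-1,m} > 2\cdot 2^{-(t-1)} - 2^{-(t-1)} = 2^{-(t-1)}$, so $\bs{x}$ is eliminated in phase $t-1$ with witness $\bs{x}' = \bs{x}^{*}_{m}$, contradicting $\bs{x} \in \hat{\cX}_{t,m}$.

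The argument is mostly bookkeeping, but I expect the one delicate point to be the phase-index shift: Part~2 invokes the estimation guarantee at phase $t-1$, which is precisely why it requires $t \geq 2$, and the factor-of-two margin $4\cdot 2^{-t} = 2\cdot 2^{-(t-1)}$ is exactly what makes the true gap exceed the confidence width $2^{-(t-1)}$ by enough to surpass the $2^{-(t-1)}$ elimination threshold. The minor obstacles are handling the degenerate directions where the difference vector vanishes and confirming that set nestedness lets ``survived to phase $t$'' be read as ``not eliminated in phase $t-1$.''
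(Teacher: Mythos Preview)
Your proposal is correct and follows essentially the same argument as the paper's proof: both parts use Lemma~\ref{lemma:bai_concentration_theta_t_m} to compare true and estimated gaps, showing in Part~1 that no $\bs{x}'$ can witness elimination of $\bs{x}^{*}_{m}$, and in Part~2 that any arm with $\Delta_m(\bs{x}) > 4\cdot 2^{-t}$ would have been eliminated by $\bs{x}^{*}_{m}$ at phase $t-1$. Your framing of Part~1 as induction (rather than the paper's contradiction) is cosmetic, and you are in fact slightly more careful than the paper in explicitly invoking nestedness and $\bs{x}^{*}_{m} \in \hat{\cX}_{t-1,m}$ to justify that $\bs{x}^{*}_{m}-\bs{x} \in \cY(\hat{\cX}_{t-1,m})$ in Part~2.
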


\begin{proof}[Proof of Lemma~\ref{lemma:cX_subseteq_cS}]
	This proof follows a similar analytical procedure as that of Lemma~2 in \cite{fiez2019sequential}.
	
	First, we prove $\bs{x}^{*}_{m} \in \hat{\cX}_{t,m}$ for any phase $t>0$ and task $m \in [M]$ by contradiction.
	
	Suppose that for some $t>0$ and some $m \in [M]$, $\bs{x}^{*}_{m}$ is eliminated from $\hat{\cX}_{t,m}$ in phase $t$.
	Then, we have that there exists some $\bs{x}' \in \hat{\cX}_{t,m}$ such that
	\begin{align*}
		(\bs{x}'-\bs{x}^{*}_{m})^\top \hat{\bs{\theta}}_{t,m} > 2^{-t} .
	\end{align*}
	Then, we have
	\begin{align*}
		(\bs{x}'-\bs{x}^{*}_{m})^\top \bs{\theta}_m = & (\bs{x}'-\bs{x}^{*}_{m})^\top \hat{\bs{\theta}}_{t,m} - (\bs{x}'-\bs{x}^{*}_{m})^\top \sbr{\hat{\bs{\theta}}_{t,m} - \bs{\theta}_m} 
		\\
		\geq & (\bs{x}'-\bs{x}^{*}_{m})^\top \hat{\bs{\theta}}_{t,m} - 2^{-t}
		\\
		> & 2^{-t} - 2^{-t}
		\\
		= & 0 ,
	\end{align*}
	which contradicts the definition of $\bs{x}^{*}_{m}$. Thus, we obtain that $\bs{x}^{*}_{m} \in \hat{\cX}_{t,m}$ for any phase $t>0$ and task $m \in [M]$.
	
	Next, we prove $\hat{\cX}_{t,m} \subseteq \cZ_{t,m}$ for any phase $t\geq2$ and task $m \in [M]$, i.e., each $\bs{x} \in \hat{\cX}_{t,m}$ satisfies that $\Delta_m(\bs{x}) \leq 4 \cdot 2^{-t}$. 
	
	Suppose that there exists some phase $t$, some task $m$ and some $\bs{x} \in \hat{\cX}_{t,m}$ such that $\Delta_m(\bs{x}) > 4 \cdot 2^{-t}$. Then, in phase $t-1 \geq 1$, we have
	\begin{align*}
		(\bs{x}^{*}_{m}-\bs{x})^\top \hat{\bs{\theta}}_{t-1,m} = & (\bs{x}^{*}_{m}-\bs{x})^\top \bs{\theta}_m - (\bs{x}^{*}_{m}-\bs{x})^\top \sbr{\bs{\theta}_m-\hat{\bs{\theta}}_{t-1,m}} 
		\\
		\geq & (\bs{x}^{*}_{m}-\bs{x})^\top \bs{\theta}_m - 2^{-(t-1)}
		\\
		> & 4 \cdot 2^{-t} - 2^{-(t-1)}
		\\
		= & 2^{-(t-1)} , 
	\end{align*}
	which implies that $x$ should have been eliminated from $\hat{\cX}_{t,m}$ in phase $t-1$, and contradicts our supposition. Thus, we complete the proof.
\end{proof}


\subsection{Proof of Theorem~\ref{thm:bai_ub}}

Before proving Theorem~\ref{thm:bai_ub}, we first introduce a useful lemma.

For any task $m \in [M]$, let
\begin{align*}
	\bs{\lambda}^*_m := & \argmin_{\bs{\lambda} \in \triangle_{\cX}} \max_{\bs{x} \in \cX \setminus \{\bs{x}^{*}_{m}\}} \frac{\| \bs{B}^\top \bs{x}^{*}_{m} - \bs{B}^\top \bs{x} \|^2_{\sbr{ \sum_{i=1}^{n} \lambda(\bs{x}_i) \bs{B}^\top \bs{x}_i \bs{x}_i^\top \bs{B} }^{-1} } }{ \sbr{({\bs{x}^{*}_{m}} - \bs{x})^\top \bs{\theta}_m}^2 } ,
\end{align*}
and
\begin{align*}
	\rho^*_m := & \min_{\bs{\lambda} \in \triangle_{\cX}} \max_{\bs{x} \in \cX \setminus \{\bs{x}^{*}_{m}\}} \frac{\| \bs{B}^\top \bs{x}^{*}_{m} - \bs{B}^\top \bs{x} \|^2_{\sbr{ \sum_{i=1}^{n} \lambda(\bs{x}_i) \bs{B}^\top \bs{x}_i \bs{x}_i^\top \bs{B} }^{-1} } }{ \sbr{({\bs{x}^{*}_{m}} - \bs{x})^\top \bs{\theta}_m}^2 } .
\end{align*}
$\bs{\lambda}^*_m$ and $\rho^*_m$ are the optimal solution and the optimal value of the G-optimal design optimization with true feature extractor $\bs{B}$, respectively.

\begin{lemma}\label{lemma:connection_ub_lb}
	Suppose that event $\cE \cap \cF$ holds. For any task $m \in [M]$ and $\bs{y} \in \R^d$,
	\begin{align*}
		\| \hat{\bs{B}}_t^\top \bs{y} \|^2_{\sbr{ \sum_{i=1}^{n} \lambda^*_m(\bs{x}_i) \hat{\bs{B}}_t^\top \bs{x}_i \bs{x}_i^\top \hat{\bs{B}}_t }^{-1} } \leq \| \bs{B}^\top \bs{y} \|^2_{\sbr{ \sum_{i=1}^{n} \lambda^*_m(\bs{x}_i) \bs{B}^\top \bs{x}_i \bs{x}_i^\top \bs{B} }^{-1} } + \frac{ 11 L_{x}^4 }{ k \omega^2 \cdot 2^t } .
	\end{align*}
\end{lemma}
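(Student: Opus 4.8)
The plan is to recast both weighted norms through a variational identity and then exploit that the subspaces $\mathrm{col}(\hat{\bs{B}}_t)$ and $\mathrm{col}(\bs{B})$ are close. Write $\bs{A}_m:=\sum_{i=1}^n\lambda^*_m(\bs{x}_i)\bs{x}_i\bs{x}_i^\top$, so that $\|\bs{A}_m\|\le L_x^2$. For any matrix $\bs{C}$ with orthonormal columns for which $\bs{C}^\top\bs{A}_m\bs{C}$ is invertible, I would first record the standard identity
$$\nbr{\bs{C}^\top\bs{y}}^2_{(\bs{C}^\top\bs{A}_m\bs{C})^{-1}}=\max_{\bs{z}\in\mathrm{col}(\bs{C})\setminus\{\bs{0}\}}\frac{(\bs{y}^\top\bs{z})^2}{\bs{z}^\top\bs{A}_m\bs{z}},$$
obtained from the change of variable $\bs{z}=\bs{C}\bs{v}$ in $\max_{\bs{v}}(\bs{y}^\top\bs{C}\bs{v})^2/(\bs{v}^\top\bs{C}^\top\bs{A}_m\bs{C}\bs{v})$. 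Applying this with $\bs{C}=\hat{\bs{B}}_t$ and with $\bs{C}=\bs{B}$ reduces the claim to comparing two constrained maxima of the same Rayleigh-type quotient $(\bs{y}^\top\bs{z})^2/(\bs{z}^\top\bs{A}_m\bs{z})$ over the two $k$-dimensional subspaces.

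Next I would take a normalized maximizer $\bs{z}^*\in\mathrm{col}(\hat{\bs{B}}_t)$, $\nbr{\bs{z}^*}=1$, of the left-hand quotient and project it onto $\mathrm{col}(\bs{B})$, setting $\bs{z}':=\bs{B}\bs{B}^\top\bs{z}^*$. Since $\bs{z}'$ lies in $\mathrm{col}(\bs{B})$ it is feasible for the right-hand maximum, so it suffices to upper bound the quotient at $\bs{z}^*$ by the quotient at $\bs{z}'$ plus an error. The perturbation is small because $\nbr{\bs{z}^*-\bs{z}'}=\nbr{\bs{B}_\bot^\top\bs{z}^*}\le\nbr{\hat{\bs{B}}_t^\top\bs{B}_\bot}=:\epsilon$, where $\epsilon=\nbr{\hat{\bs{B}}_{t,\bot}^\top\bs{B}}$ (equal norms for subspaces of equal dimension) is controlled by Lemma~\ref{lemma:concentration_B_hat_t} on the event $\cE$.

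I would then control numerator and denominator separately. The numerator change is $|\bs{y}^\top(\bs{z}^*-\bs{z}')|\le\nbr{\bs{y}}\epsilon$; writing $\bs{z}^*=\bs{z}'+\bs{\delta}$ with $\nbr{\bs{\delta}}\le\epsilon$ and using $\|\bs{A}_m\|\le L_x^2$ gives $|(\bs{z}^*)^\top\bs{A}_m\bs{z}^*-(\bs{z}')^\top\bs{A}_m\bs{z}'|\le 3L_x^2\epsilon$. Crucially, both denominators are bounded away from zero: $(\bs{z}')^\top\bs{A}_m\bs{z}'\ge\omega(1-\epsilon^2)$ by Assumption~\ref{assumption:lambda^*_B_x_x_B_invertible}, and $(\bs{z}^*)^\top\bs{A}_m\bs{z}^*\ge\sigma_{\min}(\hat{\bs{B}}_t^\top\bs{A}_m\hat{\bs{B}}_t)\ge\frac{\omega}{2}-\frac{\omega^3}{36L_x^4}\ge\frac{\omega}{3}$, which is exactly the quantitative lower bound established inside the proof of Lemma~\ref{lemma:invertible_under_hat_B} (applicable because Lemma~\ref{lemma:concentration_B_hat_t} guarantees $\epsilon\le\frac{\omega}{6L_x^2}$). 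Splitting $\frac{a_1^2}{b_1}-\frac{a_2^2}{b_2}=\frac{a_1^2-a_2^2}{b_1}+a_2^2\big(\frac{1}{b_1}-\frac{1}{b_2}\big)$ with $a_1=\bs{y}^\top\bs{z}^*,\,a_2=\bs{y}^\top\bs{z}',\,b_1=(\bs{z}^*)^\top\bs{A}_m\bs{z}^*,\,b_2=(\bs{z}')^\top\bs{A}_m\bs{z}'$ and inserting these bounds produces a difference of order $\nbr{\bs{y}}^2 L_x^2\epsilon/\omega^2$.

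Finally I would substitute the decaying branch $\epsilon\le\frac{1}{8kL_xL_w2^t\sqrt{1+\zeta}}$ from Lemma~\ref{lemma:concentration_B_hat_t} and use $\nbr{\bs{y}}\le 2L_x$ for the arm-difference vectors to which the lemma is applied; collecting constants (and bounding $L_w,L_x\ge1$ crudely) yields the stated additive error $\frac{11L_x^4}{k\omega^2 2^t}$, while $a_2^2/b_2\le\nbr{\bs{B}^\top\bs{y}}^2_{(\bs{B}^\top\bs{A}_m\bs{B})^{-1}}$ supplies the main term. I expect the main obstacle to be the nonlinear (ratio) nature of the perturbation: unlike a linear functional, the quotient is sensitive to its denominator, so the entire argument hinges on keeping $\bs{z}^\top\bs{A}_m\bs{z}$ uniformly bounded below on \emph{both} subspaces. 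This is precisely why both Assumption~\ref{assumption:lambda^*_B_x_x_B_invertible} and the quantitative invertibility from Lemma~\ref{lemma:invertible_under_hat_B} are needed, together with operating in the regime $\epsilon\le\omega/(6L_x^2)$.
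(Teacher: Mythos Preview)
Your approach is correct and genuinely different from the paper's. The paper proceeds by a matrix perturbation argument: it writes $\hat{\bs{B}}_t^\top\bs{A}_m\hat{\bs{B}}_t=\bs{P}_t+\bs{Q}_t$ with $\bs{P}_t=\hat{\bs{B}}_t^\top\bs{B}(\bs{B}^\top\bs{A}_m\bs{B})\bs{B}^\top\hat{\bs{B}}_t$ and $\bs{Q}_t$ the $\bs{B}_\bot$-cross terms, applies the resolvent identity $(\bs{P}_t+\bs{Q}_t)^{-1}=\bs{P}_t^{-1}-(\bs{P}_t+\bs{Q}_t)^{-1}\bs{Q}_t\bs{P}_t^{-1}$, and then further expands $\hat{\bs{B}}_t^\top\bs{y}$ into its $\bs{B}$- and $\bs{B}_\bot$-components; the ``Term~1-1'' block collapses exactly to $\nbr{\bs{B}^\top\bs{y}}^2_{(\bs{B}^\top\bs{A}_m\bs{B})^{-1}}$ via cancellation of $(\hat{\bs{B}}_t^\top\bs{B})^{-1}$, while the remaining six terms are bounded one by one. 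Your variational identity bypasses all of this matrix algebra and reduces the comparison to perturbing a single Rayleigh quotient across nearby $k$-dimensional subspaces, which is more geometric and shorter. Both routes require the same two denominator lower bounds (Assumption~\ref{assumption:lambda^*_B_x_x_B_invertible} on $\mathrm{col}(\bs{B})$ and the quantitative bound from Lemma~\ref{lemma:invertible_under_hat_B} on $\mathrm{col}(\hat{\bs{B}}_t)$), and both land on the same order $L_x^4\epsilon/(\omega^2)$ error. Two minor points: (i) to reach the exact constant $11$ you, like the paper, implicitly use $L_xL_w\gtrsim 1$ when simplifying the $\epsilon$-bound from Lemma~\ref{lemma:concentration_B_hat_t}; (ii) although the lemma is stated for all $\bs{y}\in\R^d$, both your argument and the paper's insert $\nbr{\bs{y}}\le 2L_x$, which is how the lemma is actually used (for arm differences).
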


\begin{proof}[Proof of Lemma~\ref{lemma:connection_ub_lb}]
	
	We first handle the term $( \sum_{i=1}^{n} \lambda^*_m(\bs{x}_i) \hat{\bs{B}}_t^\top \bs{x}_i \bs{x}_i^\top \hat{\bs{B}}_t )^{-1}$. 
	
	For any task $m \in [M]$, we have
	\begin{align*}
		& \sum_{i=1}^{n} \lambda^*_m(\bs{x}_i) \hat{\bs{B}}_t^\top \bs{x}_i \bs{x}_i^\top \hat{\bs{B}}_t 
		\\
		= & \sum_{i=1}^{n} \lambda^*_m(\bs{x}_i) \sbr{\hat{\bs{B}}_t^\top \bs{B} \bs{B}^\top \bs{x}_i + \hat{\bs{B}}_t^\top \bs{B}_{\bot} \bs{B}_{\bot}^\top \bs{x}_i} \cdot \sbr{\hat{\bs{B}}_t^\top \bs{B} \bs{B}^\top \bs{x}_i + \hat{\bs{B}}_t^\top \bs{B}_{\bot} \bs{B}_{\bot}^\top \bs{x}_i}^\top
		\\
		= & \sum_{i=1}^{n} \lambda^*_m(\bs{x}_i) \Bigg( \sbr{\hat{\bs{B}}_t^\top \bs{B} \bs{B}^\top \bs{x}_i} \cdot \sbr{\hat{\bs{B}}_t^\top \bs{B} \bs{B}^\top \bs{x}_i}^\top + \sbr{\hat{\bs{B}}_t^\top \bs{B} \bs{B}^\top \bs{x}_i} \cdot \sbr{\hat{\bs{B}}_t^\top \bs{B}_{\bot} \bs{B}_{\bot}^\top \bs{x}_i}^\top 
		\\
		& + \sbr{\hat{\bs{B}}_t^\top \bs{B}_{\bot} \bs{B}_{\bot}^\top \bs{x}_i} \cdot \sbr{\hat{\bs{B}}_t^\top \bs{B} \bs{B}^\top \bs{x}_i}^\top + \sbr{\hat{\bs{B}}_t^\top \bs{B}_{\bot} \bs{B}_{\bot}^\top \bs{x}_i} \cdot \sbr{\hat{\bs{B}}_t^\top \bs{B}_{\bot} \bs{B}_{\bot}^\top \bs{x}_i}^\top \Bigg)
		\\
		= & \sum_{i=1}^{n} \lambda^*_m(\bs{x}_i) \sbr{\hat{\bs{B}}_t^\top \bs{B} \bs{B}^\top \bs{x}_i} \cdot \sbr{\hat{\bs{B}}_t^\top \bs{B} \bs{B}^\top \bs{x}_i}^\top
		+ \sum_{i=1}^{n} \lambda^*_m(\bs{x}_i) \Bigg( \sbr{\hat{\bs{B}}_t^\top \bs{B} \bs{B}^\top \bs{x}_i} \cdot \sbr{\hat{\bs{B}}_t^\top \bs{B}_{\bot} \bs{B}_{\bot}^\top \bs{x}_i}^\top 
		\\& + \sbr{\hat{\bs{B}}_t^\top \bs{B}_{\bot} \bs{B}_{\bot}^\top \bs{x}_i} \cdot \sbr{\hat{\bs{B}}_t^\top \bs{B} \bs{B}^\top \bs{x}_i}^\top + \sbr{\hat{\bs{B}}_t^\top \bs{B}_{\bot} \bs{B}_{\bot}^\top \bs{x}_i} \cdot \sbr{\hat{\bs{B}}_t^\top \bs{B}_{\bot} \bs{B}_{\bot}^\top \bs{x}_i}^\top \Bigg) .
	\end{align*}
	
	Let $\bs{P}_t:=\sum_{i=1}^{n} \lambda^*_m(\bs{x}_i) (\hat{\bs{B}}_t^\top \bs{B} \bs{B}^\top \bs{x}_i) \cdot (\hat{\bs{B}}_t^\top \bs{B} \bs{B}^\top \bs{x}_i)^\top$.
	Let $\bs{Q}_t:=\sum_{i=1}^{n} \lambda^*_m(\bs{x}_i) ( (\hat{\bs{B}}_t^\top \bs{B} \bs{B}^\top \bs{x}_i) \cdot (\hat{\bs{B}}_t^\top \bs{B}_{\bot} \bs{B}_{\bot}^\top \bs{x}_i)^\top 
	+ (\hat{\bs{B}}_t^\top \bs{B}_{\bot} \bs{B}_{\bot}^\top \bs{x}_i) \cdot (\hat{\bs{B}}_t^\top \bs{B} \bs{B}^\top \bs{x}_i)^\top + (\hat{\bs{B}}_t^\top \bs{B}_{\bot} \bs{B}_{\bot}^\top \bs{x}_i) \cdot (\hat{\bs{B}}_t^\top \bs{B}_{\bot} \bs{B}_{\bot}^\top \bs{x}_i)^\top )$.
	Then, we have $\sum_{i=1}^{n} \lambda^*_m(\bs{x}_i) \hat{\bs{B}}_t^\top \bs{x}_i \bs{x}_i^\top \hat{\bs{B}}_t = \bs{P}_t+\bs{Q}_t$.
	
	From Assumption~\ref{assumption:lambda^*_B_x_x_B_invertible}, we have that for any task $m \in [M]$, $\sum_{i=1}^{n} \lambda^*_m(\bs{x}_i) \bs{B}^\top \bs{x}_i \bs{x}_i^\top \bs{B}$ is invertible.
	Since $\hat{\bs{B}}_t^\top \bs{B}$ is also invertible, we have that $\bs{P}_t$ is invertible. 
	According to Lemmas~\ref{lemma:concentration_B_hat_t} and \ref{lemma:invertible_under_hat_B}, we have that $\sum_{i=1}^{n} \lambda^*_m(\bs{x}_i) \hat{\bs{B}}_t^\top \bs{x}_i \bs{x}_i^\top \hat{\bs{B}}_t$ is also invertible.
	Thus, we can write $(\sum_{i=1}^{n} \lambda^*_m(\bs{x}_i) \hat{\bs{B}}_t^\top \bs{x}_i \bs{x}_i^\top \hat{\bs{B}}_t)^{-1}$ as follows.
	\begin{align*}
		\sbr{ \sum_{i=1}^{n} \lambda^*_m(\bs{x}_i) \hat{\bs{B}}_t^\top \bs{x}_i \bs{x}_i^\top \hat{\bs{B}}_t }^{-1} = & \bs{P}_t^{-1}-\sbr{\bs{P}_t + \bs{Q}_t}^{-1} \bs{Q}_t \bs{P}_t^{-1} 
	\end{align*}

	Hence, for any task $m \in [M]$ and $\bs{y} \in \R^d$, we have
	\begin{align}
		\|\hat{\bs{B}}_t^\top \bs{y}\|^2_{\sbr{ \sum_{i=1}^{n} \lambda^*_m(\bs{x}_i) \hat{\bs{B}}_t^\top \bs{x}_i \bs{x}_i^\top \hat{\bs{B}}_t }^{-1}} = &
		\sbr{\hat{\bs{B}}_t^\top \bs{y}}^\top \sbr{ \sum_{i=1}^{n} \lambda^*_m(\bs{x}_i) \hat{\bs{B}}_t^\top \bs{x}_i \bs{x}_i^\top \hat{\bs{B}}_t }^{-1} \hat{\bs{B}}_t^\top \bs{y} 
		\nonumber\\
		= & \underbrace{\sbr{\hat{\bs{B}}_t^\top \bs{y}}^\top \bs{P}_t^{-1} \hat{\bs{B}}_t^\top \bs{y}}_{\textup{Term 1}} - \underbrace{\sbr{\hat{\bs{B}}_t^\top \bs{y}}^\top \sbr{\bs{P}_t + \bs{Q}_t}^{-1} \bs{Q}_t \bs{P}_t^{-1} \hat{\bs{B}}_t^\top \bs{y}}_{\textup{Term 2}} . \label{eq:norm_B_hat_y_decompose}
	\end{align}

	From Lemma~\ref{lemma:concentration_B_hat_t}, we have
	\begin{align*}
		\nbr{\hat{\bs{B}}_{t,\bot}^\top \bs{B}} \leq \min\lbr{\frac{1}{8 k \cdot 2^t \sqrt{1+\zeta}},\ \frac{\omega}{6 L_x^2}} \leq \min\lbr{\frac{1}{8 k \cdot 2^t},\ \frac{\omega}{6 L_x^2}} .
	\end{align*}

	Since $\bs{B}^\top \hat{\bs{B}}_{t} \hat{\bs{B}}_{t}^\top \bs{B} + \bs{B}^\top \hat{\bs{B}}_{t,\bot} \hat{\bs{B}}_{t,\bot}^\top \bs{B} = \bs{B}^\top (\hat{\bs{B}}_{t} \hat{\bs{B}}_{t}^\top + \hat{\bs{B}}_{t,\bot} \hat{\bs{B}}_{t,\bot}^\top) \bs{B} = \bs{B}^\top \bs{B} =\bs{I}_k$, we have $\sigma^2_{\min} ( \hat{\bs{B}}_{t}^\top \bs{B} )=1 - \| \hat{\bs{B}}_{t,\bot}^\top \bs{B} \|^2$.
	
	Thus, we have
	\begin{align*}
		\sigma_{\min}(\hat{\bs{B}}_{t}^\top \bs{B}) = \sqrt{1 - \nbr{\hat{\bs{B}}_{t,\bot}^\top \bs{B}}^2} \geq \sqrt{1 - \min\lbr{\frac{1}{64 k^2 \cdot 2^{2t}},\ \frac{\omega^2}{36 L_x^4}} } > 0 ,
	\end{align*}
	which implies that $\hat{\bs{B}}_{t}^\top \bs{B}$ is invertible.
	
	
	Now, we first analyze Term 1 in Eq.~\eqref{eq:norm_B_hat_y_decompose}.
	
	\begin{align*}
		\textup{Term 1} = & \sbr{\hat{\bs{B}}_t^\top \bs{y}}^\top \bs{P}_t^{-1} \hat{\bs{B}}_t^\top \bs{y}
		\\
		= & \sbr{\hat{\bs{B}}_t^\top \bs{B} \bs{B}^\top \bs{y} + \hat{\bs{B}}_t^\top \bs{B}_{\bot} \bs{B}_{\bot}^\top \bs{y} }^\top \bs{P}_t^{-1} \sbr{\hat{\bs{B}}_t^\top \bs{B} \bs{B}^\top \bs{y} + \hat{\bs{B}}_t^\top \bs{B}_{\bot} \bs{B}_{\bot}^\top \bs{y} }
		\\
		\\
		= & \underbrace{\sbr{\hat{\bs{B}}_t^\top \bs{B} \bs{B}^\top \bs{y}}^\top \bs{P}_t^{-1} \sbr{\hat{\bs{B}}_t^\top \bs{B} \bs{B}^\top \bs{y}}}_{\textup{Term 1-1}} + \underbrace{\sbr{\hat{\bs{B}}_t^\top \bs{B} \bs{B}^\top \bs{y}}^\top \bs{P}_t^{-1} \sbr{\hat{\bs{B}}_t^\top \bs{B}_{\bot} \bs{B}_{\bot}^\top \bs{y} } }_{\textup{Term 1-2}}
		\\
		& + \underbrace{\sbr{ \hat{\bs{B}}_t^\top \bs{B}_{\bot} \bs{B}_{\bot}^\top \bs{y} }^\top \bs{P}_t^{-1} \sbr{\hat{\bs{B}}_t^\top \bs{B} \bs{B}^\top \bs{y}}}_{\textup{Term 1-3}} + \underbrace{\sbr{ \hat{\bs{B}}_t^\top B_{\bot} \bs{B}_{\bot}^\top \bs{y} }^\top \bs{P}_t^{-1} \sbr{ \hat{\bs{B}}_t^\top \bs{B}_{\bot} \bs{B}_{\bot}^\top \bs{y} }}_{\textup{Term 1-4}} .
	\end{align*}

	In the following, we bound Terms 1-1, 1-2, 1-3 and 1-4, respectively.
	
	First, we have
	\begin{align*}
		\textup{Term 1-1} = & \sbr{\hat{\bs{B}}_t^\top \bs{B} \bs{B}^\top \bs{y}}^\top \sbr{\sum_{i=1}^{n} \lambda^*_m(\bs{x}_i) \sbr{\hat{\bs{B}}_t^\top \bs{B} \bs{B}^\top \bs{x}_i} \cdot \sbr{\hat{\bs{B}}_t^\top \bs{B} \bs{B}^\top \bs{x}_i}^\top}^{-1} \hat{\bs{B}}_t^\top \bs{B} \bs{B}^\top \bs{y}
		\\
		= & \sbr{\hat{\bs{B}}_t^\top \bs{B} \bs{B}^\top \bs{y}}^\top \sbr{ \hat{\bs{B}}_t^\top \bs{B} \sbr{\sum_{i=1}^{n} \lambda^*_m(\bs{x}_i)  \bs{B}^\top \bs{x}_i  \bs{x}_i^\top \bs{B}}  \sbr{\hat{\bs{B}}_t^\top \bs{B}}^\top }^{-1} \hat{\bs{B}}_t^\top \bs{B} \bs{B}^\top \bs{y}
		\\
		= & \sbr{\hat{\bs{B}}_t^\top \bs{B} \bs{B}^\top \bs{y}}^\top \sbr{\sbr{\hat{\bs{B}}_t^\top \bs{B}}^{-1} }^\top \sbr{ \sum_{i=1}^{n} \lambda^*_m(\bs{x}_i)  \bs{B}^\top \bs{x}_i  \bs{x}_i^\top \bs{B} }^{-1} \sbr{\hat{\bs{B}}_t^\top \bs{B}}^{-1} \hat{\bs{B}}_t^\top \bs{B} \bs{B}^\top \bs{y}
		\\
		= & \sbr{\bs{B}^\top \bs{y}}^\top \sbr{ \sum_{i=1}^{n} \lambda^*_m(\bs{x}_i)  \bs{B}^\top \bs{x}_i  \bs{x}_i^\top \bs{B} }^{-1} \bs{B}^\top \bs{y} 
		\\
		= & \nbr{\bs{B}^\top \bs{y}}^2_{\sbr{ \sum_{i=1}^{n} \lambda^*_m(\bs{x}_i)  \bs{B}^\top \bs{x}_i  \bs{x}_i^\top \bs{B} }^{-1}} .
	\end{align*}
	
	We note that since $\hat{\bs{B}}_t^\top \bs{B} \bs{B}^\top \hat{\bs{B}}_t + \hat{\bs{B}}_t^\top \bs{B}_{\bot} \bs{B}_{\bot}^\top \hat{\bs{B}}_t = \hat{\bs{B}}_t^\top (B \bs{B}^\top + \bs{B}_{\bot} \bs{B}_{\bot}^\top) \hat{\bs{B}}_t = \hat{\bs{B}}_t^\top \hat{\bs{B}}_t =\bs{I}_k$, $\sigma^2_{\min}\sbr{ \hat{\bs{B}}_t^\top \bs{B} }=1 - \nbr{\hat{\bs{B}}_t^\top \bs{B}_{\bot}}^2$.
	In addition, $\nbr{\sbr{\hat{\bs{B}}_t^\top \bs{B}}^{-1}}=\frac{1}{ \sigma_{\min}(\hat{\bs{B}}_t^\top \bs{B}) }=\frac{1}{ \sqrt{1 - \nbr{\hat{\bs{B}}_t^\top \bs{B}_{\bot}}^2} }$.
	
	Then, second, we have
	\begin{align*}
		\textup{Term 1-2} = & \sbr{\hat{\bs{B}}_t^\top \bs{B} \bs{B}^\top \bs{y}}^\top \sbr{\sum_{i=1}^{n} \lambda^*_m(\bs{x}_i) \sbr{\hat{\bs{B}}_t^\top \bs{B} \bs{B}^\top \bs{x}_i} \cdot \sbr{\hat{\bs{B}}_t^\top \bs{B} \bs{B}^\top \bs{x}_i}^\top}^{-1} \hat{\bs{B}}_t^\top \bs{B}_{\bot} \bs{B}_{\bot}^\top \bs{y}
		\\
		= & \sbr{\hat{\bs{B}}_t^\top \bs{B} \bs{B}^\top \bs{y}}^\top \sbr{ \hat{\bs{B}}_t^\top \bs{B} \sbr{\sum_{i=1}^{n} \lambda^*_m(\bs{x}_i)  \bs{B}^\top \bs{x}_i  \bs{x}_i^\top \bs{B}}  \sbr{\hat{\bs{B}}_t^\top \bs{B}}^\top }^{-1} \hat{\bs{B}}_t^\top \bs{B}_{\bot} \bs{B}_{\bot}^\top \bs{y}
		\\
		= & \sbr{\hat{\bs{B}}_t^\top \bs{B} \bs{B}^\top \bs{y}}^\top \sbr{\sbr{\hat{\bs{B}}_t^\top \bs{B}}^{-1} }^\top \sbr{ \sum_{i=1}^{n} \lambda^*_m(\bs{x}_i)  \bs{B}^\top \bs{x}_i  \bs{x}_i^\top \bs{B} }^{-1} \sbr{\hat{\bs{B}}_t^\top \bs{B}}^{-1} \hat{\bs{B}}_t^\top \bs{B}_{\bot} \bs{B}_{\bot}^\top \bs{y}
		\\
		= & \sbr{\bs{B}^\top \bs{y}}^\top \sbr{ \sum_{i=1}^{n} \lambda^*_m(\bs{x}_i)  \bs{B}^\top \bs{x}_i  \bs{x}_i^\top \bs{B} }^{-1} \sbr{\hat{\bs{B}}_t^\top \bs{B}}^{-1} \hat{\bs{B}}_t^\top \bs{B}_{\bot} \bs{B}_{\bot}^\top \bs{y}  
		\\
		\leq & 2 L_{x} \cdot \frac{1}{\omega} \cdot \frac{1}{ \sqrt{1-\nbr{\hat{\bs{B}}_t^\top \bs{B}_{\bot}}^2 } } \cdot \nbr{\hat{\bs{B}}_t^\top \bs{B}_{\bot}} \cdot 2 L_{x}
		\\
		\leq & 4 L_{x}^2 \cdot \frac{1}{\omega} \cdot \frac{1}{\sqrt{ 1-\sbr{\frac{1}{8k \cdot 2^t}}^2 }} \cdot \frac{1}{8k \cdot 2^t}
		\\
		\leq & 4 L_{x}^2 \cdot \frac{1}{\omega} \cdot \frac{1}{\sqrt{ 1-\frac{3}{4} }} \cdot \frac{1}{8k \cdot 2^t}
		\\
		= & \frac{ L_{x}^2 }{ k \omega \cdot 2^t} .
	\end{align*}
	
	Third, we have
	\begin{align*}
		\textup{Term 1-3} = & \sbr{\hat{\bs{B}}_t^\top \bs{B}_{\bot} \bs{B}_{\bot}^\top \bs{y}}^\top \sbr{\sum_{i=1}^{n} \lambda^*_m(\bs{x}_i) \sbr{\hat{\bs{B}}_t^\top \bs{B} \bs{B}^\top \bs{x}_i} \cdot \sbr{\hat{\bs{B}}_t^\top \bs{B} \bs{B}^\top \bs{x}_i}^\top}^{-1} \hat{\bs{B}}_t^\top \bs{B} \bs{B}^\top \bs{y}
		\\
		= & \sbr{\hat{\bs{B}}_t^\top \bs{B}_{\bot} \bs{B}_{\bot}^\top \bs{y}}^\top \sbr{ \hat{\bs{B}}_t^\top \bs{B} \sbr{\sum_{i=1}^{n} \lambda^*_m(\bs{x}_i)  \bs{B}^\top \bs{x}_i  \bs{x}_i^\top \bs{B}}  \sbr{\hat{\bs{B}}_t^\top \bs{B}}^\top }^{-1} \hat{\bs{B}}_t^\top \bs{B} \bs{B}^\top \bs{y}
		\\
		= & \sbr{\hat{\bs{B}}_t^\top \bs{B}_{\bot} \bs{B}_{\bot}^\top \bs{y}}^\top \sbr{\sbr{\hat{\bs{B}}_t^\top \bs{B}}^{-1} }^\top \sbr{ \sum_{i=1}^{n} \lambda^*_m(\bs{x}_i)  \bs{B}^\top \bs{x}_i  \bs{x}_i^\top \bs{B} }^{-1} \sbr{\hat{\bs{B}}_t^\top \bs{B}}^{-1} \hat{\bs{B}}_t^\top \bs{B} \bs{B}^\top \bs{y}
		\\
		= & \sbr{ \sbr{\hat{\bs{B}}_t^\top \bs{B}}^{-1} \hat{\bs{B}}_t^\top \bs{B}_{\bot} \bs{B}_{\bot}^\top \bs{y}}^\top \sbr{ \sum_{i=1}^{n} \lambda^*_m(\bs{x}_i)  \bs{B}^\top \bs{x}_i  \bs{x}_i^\top \bs{B} }^{-1} \bs{B}^\top \bs{y} ,
		\\
		\leq & \frac{1}{ \sqrt{1-\nbr{\hat{\bs{B}}_t^\top \bs{B}_{\bot}}^2 } } \cdot \nbr{\hat{\bs{B}}_t^\top \bs{B}_{\bot}} \cdot 2 L_{x} \cdot \frac{1}{\omega} \cdot 2 L_{x} 
		\\
		\leq & 4 L_{x}^2 \cdot \frac{1}{\omega} \cdot \frac{1}{\sqrt{ 1-\sbr{\frac{1}{8k \cdot 2^t}}^2 }} \cdot \frac{1}{8k \cdot 2^t}
		\\
		\leq & 4 L_{x}^2 \cdot \frac{1}{\omega} \cdot \frac{1}{\sqrt{ 1-\frac{3}{4} }} \cdot \frac{1}{8k \cdot 2^t}
		\\
		= & \frac{ L_{x}^2 }{k \omega \cdot 2^t} .
	\end{align*}
	
	Finally, we have
	\begin{align*}
		\textup{Term 1-4} = & \sbr{\hat{\bs{B}}_t^\top \bs{B}_{\bot} \bs{B}_{\bot}^\top \bs{y}}^\top \sbr{\sum_{i=1}^{n} \lambda^*_m(\bs{x}_i) \sbr{\hat{\bs{B}}_t^\top \bs{B} \bs{B}^\top \bs{x}_i} \cdot \sbr{\hat{\bs{B}}_t^\top \bs{B} \bs{B}^\top \bs{x}_i}^\top}^{-1} \hat{\bs{B}}_t^\top \bs{B}_{\bot} \bs{B}_{\bot}^\top \bs{y}
		\\
		= & \sbr{\hat{\bs{B}}_t^\top \bs{B}_{\bot} \bs{B}_{\bot}^\top \bs{y}}^\top \sbr{ \hat{\bs{B}}_t^\top \bs{B} \sbr{\sum_{i=1}^{n} \lambda^*_m(\bs{x}_i)  \bs{B}^\top \bs{x}_i  \bs{x}_i^\top \bs{B}}  \sbr{\hat{\bs{B}}_t^\top \bs{B}}^\top }^{-1} \hat{\bs{B}}_t^\top \bs{B}_{\bot} \bs{B}_{\bot}^\top \bs{y}
		\\
		= & \sbr{\hat{\bs{B}}_t^\top \bs{B}_{\bot} \bs{B}_{\bot}^\top \bs{y}}^\top \sbr{\sbr{\hat{\bs{B}}_t^\top \bs{B}}^{-1} }^\top \sbr{ \sum_{i=1}^{n} \lambda^*_m(\bs{x}_i)  \bs{B}^\top \bs{x}_i  \bs{x}_i^\top \bs{B} }^{-1} \sbr{\hat{\bs{B}}_t^\top \bs{B}}^{-1} \hat{\bs{B}}_t^\top \bs{B}_{\bot} \bs{B}_{\bot}^\top \bs{y}
		\\
		= & \sbr{ \sbr{\hat{\bs{B}}_t^\top \bs{B}}^{-1} \hat{\bs{B}}_t^\top \bs{B}_{\bot} \bs{B}_{\bot}^\top \bs{y}}^\top \sbr{ \sum_{i=1}^{n} \lambda^*_m(\bs{x}_i)  \bs{B}^\top \bs{x}_i  \bs{x}_i^\top \bs{B} }^{-1} \sbr{\hat{\bs{B}}_t^\top \bs{B}}^{-1} \hat{\bs{B}}_t^\top \bs{B}_{\bot} \bs{B}_{\bot}^\top \bs{y} ,
		\\
		\leq & \sbr{\frac{1}{ \sqrt{1-\nbr{\hat{\bs{B}}_t^\top \bs{B}_{\bot}}^2 } } \cdot \nbr{\hat{\bs{B}}_t^\top \bs{B}_{\bot}} \cdot 2 L_{x} }^2 \cdot \frac{1}{\omega}
		\\
		\leq & \sbr{ 2 L_{x} \cdot \frac{1}{\sqrt{ 1-\sbr{\frac{1}{8k \cdot 2^t}}^2 }} \cdot \frac{1}{8k \cdot 2^t} }^2 \cdot \frac{1}{\omega}
		\\
		\leq & \sbr{ 2 L_{x} \cdot \frac{1}{\sqrt{ 1-\frac{3}{4} }} \cdot \frac{1}{8k \cdot 2^t} }^2 \cdot \frac{1}{\omega}
		\\
		= & \frac{L_{x}^2}{4k^2 \omega \cdot 2^{2t}} .
	\end{align*}
	
	Thus, we have
	\begin{align}
		\textup{Term 1} = & \sbr{\hat{\bs{B}}_t^\top \bs{y}}^\top \bs{P}_t^{-1} \hat{\bs{B}}_t^\top \bs{y}
		\nonumber\\
		\leq & \nbr{\bs{B}^\top \bs{y}}^2_{\sbr{ \sum_{i=1}^{n} \lambda^*_m(\bs{x}_i)  \bs{B}^\top \bs{x}_i  \bs{x}_i^\top \bs{B} }^{-1}} + \frac{ 2L_{x}^2 }{k \omega \cdot 2^t} + \frac{L_{x}^2}{4k^2 \omega \cdot 2^{2t}} 
		\nonumber\\
		\leq & \nbr{\bs{B}^\top \bs{y}}^2_{\sbr{ \sum_{i=1}^{n} \lambda^*_m(\bs{x}_i)  \bs{B}^\top \bs{x}_i  \bs{x}_i^\top \bs{B} }^{-1}} + \frac{ 3L_{x}^2 }{k \omega \cdot 2^t} . \label{eq:term1_ub}
	\end{align}

	Next, we investigate Term 2. In order to bound Term 2, we first bound the minimum singular value of $\bs{P}_t$ and the maximum singular value of $\bs{Q}_t$.

	Since $\bs{P}_t=\hat{\bs{B}}_t^\top \bs{B} (\sum_{i=1}^{n} \lambda^*(\bs{x}_i) \bs{B}^\top \bs{x}_i \bs{x}_i^\top \bs{B}) (\hat{\bs{B}}_t^\top \bs{B})^\top$, we have
	\begin{align*}
		\sigma_{\min}(\bs{P}_t) \geq & \sigma^2_{\min}(\hat{\bs{B}}_t^\top \bs{B}) \cdot \omega
		\\
		= & \sbr{1-\|\hat{\bs{B}}_t^\top \bs{B}_{\bot}\|^2 } \omega 
		\\
		\geq & \sbr{ 1-\frac{1}{8^2 k^2 \cdot 2^{2t}} } \omega 
		\\
		\geq & \frac{3}{4} \omega .
	\end{align*}
	
	Since $\bs{Q}_t=\hat{\bs{B}}_t^\top \bs{B} (\sum_{i=1}^{n} \lambda^*(\bs{x}_i)  \bs{B}^\top \bs{x}_i \bs{x}_i^\top \bs{B}_{\bot} ) (\hat{\bs{B}}_t^\top \bs{B}_{\bot})^\top 
	+ \hat{\bs{B}}_t^\top \bs{B}_{\bot} (\sum_{i=1}^{n} \lambda^*(\bs{x}_i)  \bs{B}^\top_{\bot} \bs{x}_i \bs{x}_i^\top \bs{B} ) (\hat{\bs{B}}_t^\top \bs{B})^\top + \hat{\bs{B}}_t^\top \bs{B}_{\bot} (\sum_{i=1}^{n} \lambda^*(\bs{x}_i)  \bs{B}^\top_{\bot} \bs{x}_i \bs{x}_i^\top \bs{B}_{\bot} ) (\hat{\bs{B}}_t^\top \bs{B}_{\bot})^\top $, we have 
	\begin{align*}
		\sigma_{\max}(\bs{Q}_t) \leq & 3 L_{x}^2 \nbr{ \hat{\bs{B}}_t^\top \bs{B}_{\bot} } 
		\\
		\leq & \min \lbr{ \frac{3 L_{x}^2}{8k \cdot 2^t} , \ \frac{ \omega }{2} } .
	\end{align*}
	
	Then, we can bound Term 2 as 
	\begin{align}
		\textup{Term 2} = & \sbr{\hat{\bs{B}}_t^\top \bs{y}}^\top \sbr{\bs{P}_t + \bs{Q}_t}^{-1} \bs{Q}_t \bs{P}_t^{-1} \hat{\bs{B}}_t^\top \bs{y}
		\nonumber\\
		\leq & \nbr{\hat{\bs{B}}_t^\top \bs{y}}^2 \cdot \nbr{\sbr{\bs{P}_t + \bs{Q}_t}^{-1}} \cdot \nbr{\bs{Q}_t} \cdot \nbr{\bs{P}_t^{-1}}
		\nonumber\\
		\leq & \frac{ 4 L_{x}^2 \cdot \sigma_{\max}\sbr{\bs{Q}_t} }{ \sigma_{\min}\sbr{\bs{P}_t + \bs{Q}_t} \cdot \sigma_{\min}\sbr{\bs{P}_t} }
		\nonumber\\
		\leq & \frac{ 4 L_{x}^2 \cdot \sigma_{\max}\sbr{\bs{Q}_t} }{ \sbr{\sigma_{\min}\sbr{\bs{P}_t} - \sigma_{\max}\sbr{\bs{Q}_t}} \cdot \sigma_{\min}\sbr{\bs{P}_t} }
		\nonumber\\
		\leq & \frac{ 4 L_{x}^2 \cdot \frac{3 L_{x}^2}{8k \cdot 2^t} }{ \sbr{ \frac{3}{4} \omega - \frac{ 1 }{2} \omega } \cdot \frac{3}{4} \omega }
		\nonumber\\
		= & \frac{ 8 L_{x}^4 }{ k \omega^2 \cdot 2^t } . \label{eq:term2_ub}
	\end{align}

	Plugging Eqs.~\eqref{eq:term1_ub} and \eqref{eq:term2_ub} into Eq.~\eqref{eq:norm_B_hat_y_decompose}, we have
	\begin{align*}
		\|\hat{\bs{B}}_t^\top \bs{y}\|^2_{\sbr{ \sum_{i=1}^{n} \lambda^*_m(\bs{x}_i) \hat{\bs{B}}_t^\top \bs{x}_i \bs{x}_i^\top \hat{\bs{B}}_t }^{-1}} \leq & \nbr{\bs{B}^\top \bs{y}}^2_{\sbr{ \sum_{i=1}^{n} \lambda^*_m(\bs{x}_i)  \bs{B}^\top \bs{x}_i  \bs{x}_i^\top \bs{B} }^{-1}} + \frac{ 3L_{x}^2 }{k \omega \cdot 2^t} + \frac{ 8 L_{x}^4 }{ k \omega^2 \cdot 2^t } .
		\\
		\leq & \nbr{\bs{B}^\top \bs{y}}^2_{\sbr{ \sum_{i=1}^{n} \lambda^*_m(\bs{x}_i)  \bs{B}^\top \bs{x}_i  \bs{x}_i^\top \bs{B} }^{-1}} + \frac{ 11 L_{x}^4 }{ k \omega^2 \cdot 2^t } .
	\end{align*}
	
\end{proof}

Below we prove the sample complexity for algorithm $\algrepbailb$ (Theorem~\ref{thm:bai_ub}).

\begin{proof}[Proof of Theorem~\ref{thm:bai_ub}]
	According to Lemmas~\ref{lemma:Z_t_est_error} and \ref{lemma:concentration_variance_bai}, we have $\Pr [\cE \cap \cF] \geq 1-\delta$. Below, supposing that event $\cE \cap \cF$ holds, we prove the correctness and sample complexity.
	
	We first prove the correctness.
	
	For any task $m \in [M]$, let $t^{*}_m$ denote the first phase which satisfies $|\hat{\cX}_{t,m}|=1$. Let $t_*=\max_{m \in [M]} t^{*}_m$ denote the total number of phases used.
	For any task $m \in [M]$, let $\Delta_{m,\min}:=\min_{\bs{x} \in \cX \setminus \{\bs{x}^{*}_{m}\}} (\bs{x}^{*}_{m} - \bs{x})^\top \bs{\theta}_m$ denote the minimum reward gap for task $m$. Let $\Delta_{\min}:=\min_{m \in [M]} \Delta_{\min,m}$ denote the minimum reward gap among all tasks.
	
	From Lemma~\ref{lemma:cX_subseteq_cS}, we can obtain the following facts: (i) For any task $m \in [M]$, the optimal arm $\bs{x}^{*}_{m}$ will never be eliminated. (ii) $t^{*}_m \leq \lceil \log(\frac{4}{\Delta_{m,\min}}) \rceil +1$, and thus, $t_*\leq \lceil \log(\frac{4}{\Delta_{\min}}) \rceil +1$. Therefore, after at most $\lceil \log(\frac{4}{\Delta_{\min}}) \rceil +1$ phases, algorithm $\algrepbailb$ will return the optimal arms $\bs{x}^{*}_{m}$ for all tasks $m \in [M]$.
	
	Now we prove the sample complexity. 
	In the following, we first prove that the sample complexity of algorithm $\algrepbailb$ is bounded by $\tilde{O} ( \frac{M k}{\Delta_{\min}^2}  \log(\delta^{-1}) + (\rho^E)^2 d k^4  L_{x}^2 L_{w}^2  D \log^4(\delta^{-1}) )$.
	
	Recall that $p=\frac{180d}{\zeta^2}$ and $\zeta=\frac{1}{10}$. Then, summing the number of samples used in subroutines $\algfeatrecover$ and $\algelimlowrep$ in all phases (Line~\ref{line:bai_stage2_sample} in Algorithm~\ref{alg:feat_recover}, Line~\ref{line:bai_stage3_sample} in Algorithm~\ref{alg:elim_low_rep}), we have that the total number of samples is
	\begin{align}
		& \sum_{t=1}^{t^*} p M T_t + \sum_{m=1}^{M} \sum_{t=1}^{t^{*}_m} N_{t,m}
		\nonumber\\
		= & \sum_{t=1}^{t^*} p \cdot O \Bigg( \sbr{1+\zeta}^3 (\rho^E)^2 k^4 L_{x}^2 L_{\theta}^2  \max\lbr{2^{2t},\ \frac{L_{x}^4}{\omega^2}}  \log^2 \sbr{\frac{p}{\delta_t}} \cdot \nonumber\\& \hspace*{5em} \log^2 \sbr{ \sbr{1+\zeta} \rho^E k p L_{x} L_{\theta}  \max\lbr{2^t,\ \frac{L_{x}}{\omega}}  \frac{1}{\delta_t} \log\sbr{\frac{p}{\delta_t}} } \Bigg)
		\nonumber\\
		& + \sum_{m=1}^{M} \sum_{t=1}^{t^{*}_m} O\sbr{ 2^{2t} (1+\zeta) \rho^{G}_{t,m} \log \sbr{\frac{n^2 M}{\delta_t}} + \frac{k}{\zeta^2} }
		\nonumber\\
		= & \sum_{t=1}^{O(\log(\Delta_{\min}^{-1}))} O \Bigg( (\rho^E)^2 k^4 d L_{x}^2 L_{\theta}^2  \max\lbr{2^{2t},\ \frac{L_{x}^4}{\omega^2}}  \log^2 \sbr{\frac{d \log(\Delta_{\min}^{-1})}{\delta}} \cdot \nonumber\\& \hspace*{5em} \log^2 \sbr{ \rho^E k d L_{x} L_{\theta}  \max\lbr{\Delta_{\min}^{-1},\ \frac{L_{x}}{\omega}}  \frac{\log(\Delta_{\min}^{-1})}{\delta} \log\sbr{\frac{d \log(\Delta_{\min}^{-1})}{\delta}} } \Bigg)
		\nonumber\\
		& + \sum_{m=1}^{M} \sum_{t=1}^{O(\log(\Delta_{m,\min}^{-1}))} O\sbr{ 2^{2t} \rho^{G}_{t,m} \log \sbr{\frac{n^2 M \log(\Delta_{m,\min}^{-1})  }{\delta}}  + k}
		\label{eq:bai_sample_complexity} \\
		= & O \Bigg( (\rho^E)^2 k^4 d L_{x}^2 L_{\theta}^2  \max\lbr{ \Delta_{\min}^{-2} ,\ \frac{ L_{x}^4 \log(\Delta_{\min}^{-1}) }{\omega^2} }  \log^2 \sbr{\frac{d \log(\Delta_{\min}^{-1})}{\delta}} \cdot \nonumber\\& \hspace*{5em} \log^2 \sbr{ \rho^E k d L_{x} L_{\theta}  \max\lbr{\Delta_{\min}^{-1},\ \frac{L_{x}}{\omega}}  \frac{\log(\Delta_{\min}^{-1})}{\delta} \log\sbr{\frac{d \log(\Delta_{\min}^{-1})}{\delta}} } \Bigg)
		\nonumber\\
		& + O\sbr{ M k \Delta_{\min}^{-2} \log \sbr{\frac{n^2 M \log(\Delta_{\min}^{-1})}{\delta}} } . \nonumber
	\end{align}
\end{proof}

Next, we prove that the sample complexity of algorithm $\algrepbailb$ is bounded by 
$
\tilde{O} ( \sum_{m=1}^{M}  \min_{\bs{\lambda} \in \triangle_{\cX}} \max_{\bs{x} \in \cX \setminus \{\bs{x}^{*}_{m}\}} \frac{\| \bs{B}^\top (\bs{x}^{*}_{m} - \bs{x}) \|^2_{\bs{A}(\bs{\lambda})^{-1}} }{ ((\bs{x}^{*}_{m} - \bs{x})^\top \bs{\theta}_m)^2 } \log (\delta^{-1}) +  (\rho^E)^2 d k^4 L_{x}^2 L_{w}^2 D \log^4 (\delta^{-1}) ) .
$

From Eq.~\eqref{eq:bai_sample_complexity}, we have that with probability $1-\delta$, the number of samples used by algorithm $\algrepbailb$ is bounded by
\begin{align}
	\tilde{O} \Bigg( \sum_{t=1}^{\log(\Delta_{\min}^{-1})} (\rho^E)^2 k^4 d L_{x}^2 L_{\theta}^2  \max\lbr{2^{2t},\ \frac{L_{x}^4}{\omega^2}} + \sum_{m=1}^{M} \sum_{t=1}^{\log(\Delta_{m,\min}^{-1})} 2^{2t} \rho^{G}_{t,m} + Mk \Bigg) .	\label{eq:bai_sample_complexity_tilde}
\end{align}



For any $\cZ \subseteq \cX$, $\cY(\cZ):=\{ \bs{x}-\bs{x}':\ \forall \bs{x},\bs{x}' \in \cZ,\ \bs{x} \neq \bs{x}' \}$ and $\cY^*_m(\cZ):=\{ \bs{x}^{*}_{m}-\bs{x}:\ \forall \bs{x} \in \cZ,\ \bs{x} \neq \bs{x}^{*}_{m} \}$.
Then, we have that for any task $m \in [M]$ and phase $t\geq 2$,

\begin{align}
	\sbr{2^t}^2 \rho^{G}_{t,m} 
	= & \sbr{2^t}^2 \min_{\bs{\lambda} \in \triangle_{\cX}} \max_{\bs{y} \in \cY(\hat{\cX}_{t,m})} \| \hat{\bs{B}}_t^\top \bs{y} \|^2_{\sbr{\sum_{i=1}^n \lambda(\bs{x}_i) \hat{\bs{B}}_t^\top \bs{x}_i \bs{x}_i^\top \hat{\bs{B}}_t}^{-1}}
	\nonumber\\
	\leq & \sbr{2^t}^2 \max_{\bs{y} \in \cY(\hat{\cX}_{t,m})} \| \hat{\bs{B}}_t^\top \bs{y} \|^2_{\sbr{\sum_{i=1}^n \lambda^*_m(\bs{x}_i) \hat{\bs{B}}_t^\top \bs{x}_i \bs{x}_i^\top \hat{\bs{B}}_t}^{-1}}
	\nonumber\\
	\overset{\textup{(a)}}{\leq} & \sbr{2^t}^2 \max_{\bs{y} \in \cY(\cZ_{t,m})} \| \hat{\bs{B}}_t^\top \bs{y} \|^2_{\sbr{\sum_{i=1}^n \lambda^*_m(\bs{x}_i) \hat{\bs{B}}_t^\top \bs{x}_i \bs{x}_i^\top \hat{\bs{B}}_t}^{-1}}
	\nonumber\\
	\overset{\textup{(b)}}{\leq} & 4 \sbr{2^t}^2 \max_{\bs{y} \in \cY^*_m(\cZ_{t,m})} \| \hat{\bs{B}}_t^\top \bs{y} \|^2_{\sbr{\sum_{i=1}^n \lambda^*_m(\bs{x}_i) \hat{\bs{B}}_t^\top \bs{x}_i \bs{x}_i^\top \hat{\bs{B}}_t}^{-1}}
	\nonumber\\
	\overset{\textup{(c)}}{\leq} & 4 \sbr{2^t}^2 \sbr{\max_{\bs{y} \in \cY^*_m(\cZ_{t,m})} \| \bs{B}^\top \bs{y} \|^2_{\sbr{\sum_{i=1}^n \lambda^*_m(\bs{x}_i) \bs{B}^\top \bs{x}_i \bs{x}_i^\top \bs{B}}^{-1}} + \frac{ 11 L_{x}^4 }{ \omega^2 k \cdot 2^t } }
	\nonumber\\
	= & 4 \sbr{ \frac{16 \max_{\bs{y} \in \cY^*_m(\cZ_{t,m})} \| \bs{B}^\top \bs{y} \|^2_{\sbr{\sum_{i=1}^n \lambda^*_m(\bs{x}_i) \bs{B}^\top \bs{x}_i \bs{x}_i^\top \bs{B}}^{-1}}}{\sbr{4 \cdot 2^{-t}}^2} +  \frac{ 11 L_{x}^4 \cdot 2^t }{ \omega^2 k } }
	\nonumber\\
	\overset{\textup{(d)}}{\leq} & 4 \sbr{ 16 \max_{\bs{y} \in \cY^*_m(\cZ_{t,m})}  \frac{\| \bs{B}^\top \bs{y} \|^2_{\sbr{\sum_{i=1}^n \lambda^*_m(\bs{x}_i) \bs{B}^\top \bs{x}_i \bs{x}_i^\top \bs{B}}^{-1}}}{ \sbr{\bs{y}^\top \bs{\theta}_m}^2} + \frac{ 11 L_{x}^4 \cdot 2^t }{ \omega^2 k } }
	\nonumber\\
	\leq & 4 \sbr{ 16 \max_{\bs{y} \in \cY^*_m(\cX)}  \frac{\| \bs{B}^\top \bs{y} \|^2_{\sbr{\sum_{i=1}^n \lambda^*_m(\bs{x}_i) \bs{B}^\top \bs{x}_i \bs{x}_i^\top \bs{B}}^{-1}}}{ \sbr{\bs{y}^\top \bs{\theta}_m}^2} + \frac{ 11 L_{x}^4 \cdot 2^t }{ \omega^2 k } }
	\nonumber\\
	\overset{\textup{(e)}}{=} & 4 \sbr{ 16 \min_{\bs{\lambda} \in \triangle_{\cX}} \max_{\bs{y} \in \cY^*_m(\cX)}  \frac{\| \bs{B}^\top \bs{y} \|^2_{\sbr{\sum_{i=1}^n \lambda(\bs{x}_i) \bs{B}^\top \bs{x}_i \bs{x}_i^\top \bs{B}}^{-1}}}{ \sbr{\bs{y}^\top \bs{\theta}_m}^2} + \frac{ 11 L_{x}^4 \cdot 2^t }{ \omega^2 k } } . \label{eq:connection_ub_lb}
\end{align}
Here inequality (a) is due to $\hat{\cX}_{t,m} \subseteq \cZ_{t,m}$ (from Lemma~\ref{lemma:cX_subseteq_cS}). Inequality (b) uses the fact that for any $\bs{y}=\bs{x}_i-\bs{x}_j \in \cY(\cZ_{t,m})$, we can write $\bs{y}=(\bs{x}^{*}_{m}-\bs{x}_j)-(\bs{x}^{*}_{m}-\bs{x}_i)$, and the triangle inequality. Inequality (c) follows from Lemma~\ref{lemma:connection_ub_lb}, and inequality (d) is due to that for any  $\bs{y} \in \cY^*_m(\cZ_{t,m})$, $\bs{y}^\top \bs{\theta}_m \leq 4 \cdot 2^{-t}$ (from the definition of $\cZ_{t,m}$). Equality (e) comes from the definition of $\bs{\lambda}^*_m$.

Let $L:=\log^2 (\frac{d \log(\Delta_{\min}^{-1})}{\delta}) \cdot \log^2 (\rho^E k d L_{x} L_{\theta}  \max \{\Delta_{\min}^{-1},\ \frac{L_{x}}{\omega}\}  \frac{\log(\Delta_{\min}^{-1})}{\delta} \log (\frac{d \log(\Delta_{\min}^{-1})}{\delta}) ) $.
Plugging Eq.~\eqref{eq:connection_ub_lb} into Eq.~\eqref{eq:bai_sample_complexity_tilde}, we have that with probability $1-\delta$, the number of samples used by algorithm $\algrepbailb$ is bounded by
\begin{align*}
	& O \sbr{ \sum_{m=1}^{M} \!\!\! \sum_{t=1}^{\log(\Delta_{m,\min}^{-1})} \!\!\!\!\!\! 2^{2t} \rho^{G}_{t,m}  \log \sbr{\frac{n^2 M \log(\Delta_{m,\min}^{-1})  }{\delta}} + Mk \log(\Delta_{\min}^{-1}) + \sum_{t=1}^{\log(\Delta_{\min}^{-1})} (\rho^E)^2 k^4 d L_{x}^2 L_{\theta}^2  \max\lbr{2^{2t},\ \frac{L_{x}^4}{\omega^2}}  L } 
	\\
	= & O \Bigg( \sum_{m=1}^{M} \sum_{t=2}^{\log(\Delta_{m,\min}^{-1})} \Bigg( \min_{\bs{\lambda} \in \triangle_{\cX}} \max_{\bs{y} \in \cY^*_m(\cX)}  \frac{\| \bs{B}^\top \bs{y} \|^2_{\sbr{\sum_{i=1}^n \lambda(\bs{x}_i) \bs{B}^\top \bs{x}_i \bs{x}_i^\top \bs{B}}^{-1}}}{ \sbr{\bs{y}^\top \bs{\theta}_m}^2} + \frac{ L_{x}^4 \cdot 2^t }{ \omega^2 k }  \Bigg) \cdot \log \sbr{\frac{n^2 M \log(\Delta_{m,\min}^{-1})  }{\delta}}  \\& + \sum_{m=1}^{M} \rho^*_{1,m} \log \sbr{\frac{n^2 M \log(\Delta_{m,\min}^{-1})  }{\delta}} + Mk \log(\Delta_{\min}^{-1}) + \sum_{t=1}^{\log(\Delta_{\min}^{-1})} (\rho^E)^2 k^4 d L_{x}^2 L_{\theta}^2  \max\lbr{2^{2t},\ \frac{L_{x}^4}{\omega^2}} \cdot L \Bigg) 
	\\
	\overset{\textup{(a)}}{=} & O \Bigg( \sum_{m=1}^{M}  \min_{\bs{\lambda} \in \triangle_{\cX}} \max_{\bs{y} \in \cY^*_m(\cX)}  \frac{\| \bs{B}^\top \bs{y} \|^2_{\sbr{\sum_{i=1}^n \lambda(\bs{x}_i) \bs{B}^\top \bs{x}_i \bs{x}_i^\top \bs{B}}^{-1}}}{ \sbr{\bs{y}^\top \bs{\theta}_m}^2} \cdot \log \sbr{\frac{n^2 M \log(\Delta_{m,\min}^{-1})  }{\delta}} \cdot \log(\Delta_{m,\min}^{-1}) \\& + \frac{ M L_{x}^4 }{ \omega^2 k \cdot \Delta_{\min} } \cdot \log \sbr{\frac{n^2 M \log(\Delta_{\min}^{-1})  }{\delta}} + Mk \cdot \log \sbr{\frac{n^2 M \log(\Delta_{\min}^{-1})  }{\delta}} \cdot \log(\Delta_{\min}^{-1}) \\& \qquad +  (\rho^E)^2 k^4 d L_{x}^2 L_{\theta}^2  \max\lbr{ \Delta_{\min}^{-2} ,\ \frac{L_{x}^4 \cdot \log(\Delta_{\min}^{-1})}{\omega^2}} \cdot L  \Bigg) ,
\end{align*}
where equality (a) uses Lemma~\ref{lemma:rho_t_leq_4k}.

When $L_x=\omega=\Theta(1)$, we have that with probability $1-\delta$, the sample complexity of algorithm $\algrepbailb$ is bounded by
\begin{align*}
	\tilde{O} \Bigg( \sum_{m=1}^{M}  \min_{\bs{\lambda} \in \triangle_{\cX}} \max_{\bs{y} \in \cY^*_m(\cX)}  \frac{\| \bs{B}^\top \bs{y} \|^2_{\sbr{\sum_{i=1}^n \lambda(\bs{x}_i) \bs{B}^\top \bs{x}_i \bs{x}_i^\top \bs{B}}^{-1}}}{ \sbr{\bs{y}^\top \bs{\theta}_m}^2}  \log\sbr{\frac{1}{\delta}} +  (\rho^E)^2 k^4 d L_{x}^2 L_{\theta}^2  \max\lbr{ \Delta_{\min}^{-2} ,\ \frac{L_{x}^4}{\omega^2}}  \log^4\sbr{\frac{1}{\delta}}  \Bigg) .
\end{align*}

\section{Proofs for Algorithm~$\algrepbpiclb$}

In this section, we present the proofs for Algorithm~$\algrepbpiclb$.

\subsection{Context Distribution Estimation and Sample Batch Planning} \label{apx:bpi_sample_batch_planning}

Define $\lambda_{\cD}^{E}$ and $\rho_{\cD}^{E}$ as the optimal solution and the optimal value of the following E-optimal  design optimization:
\begin{align}
	\min_{\bs{\lambda} \in \triangle_{\cA}} \nbr{ \sbr{ \sum_{a \in \cA} \lambda(a) \ex_{s \sim \cD} \mbr{\bs{\phi}(s,a) \bs{\phi}(s,a)^\top} }^{-1} } . \label{eq:E_optimal_design}
\end{align}

\begin{lemma}\label{lemma:bound_rho_E_cD}
	It holds that
	\begin{align*}
		\rho_{\cD}^{E} \leq \frac{1}{\nu} .
	\end{align*}
\end{lemma}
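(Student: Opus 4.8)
The plan is to observe that this lemma is an immediate consequence of Assumption~\ref{assumption:bpi_rho_E_D_is_finite} together with the elementary identity $\nbr{\bs{A}^{-1}} = 1/\sigma_{\min}(\bs{A})$ for a positive definite symmetric matrix $\bs{A}$. The key point is simply that the E-optimal design optimization in Eq.~\eqref{eq:E_optimal_design} is a \emph{minimization}, so any single feasible $\bs{\lambda} \in \triangle_{\cA}$ provides an upper bound on the optimal value $\rho_{\cD}^{E}$, and Assumption~\ref{assumption:bpi_rho_E_D_is_finite} hands us exactly such a feasible point.

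Concretely, first I would rewrite the objective of Eq.~\eqref{eq:E_optimal_design} in terms of the minimum singular value. Writing $\bs{M}(\bs{\lambda}) := \sum_{a \in \cA} \lambda(a) \ex_{s \sim \cD} \mbr{\bs{\phi}(s,a) \bs{\phi}(s,a)^\top}$, which is symmetric positive semidefinite for every $\bs{\lambda} \in \triangle_{\cA}$, I would use that whenever $\bs{M}(\bs{\lambda})$ is invertible,
\begin{align*}
	\nbr{ \bs{M}(\bs{\lambda})^{-1} } = \frac{1}{\sigma_{\min}\sbr{\bs{M}(\bs{\lambda})}} ,
\end{align*}
and that $\nbr{\bs{M}(\bs{\lambda})^{-1}} = +\infty$ when $\bs{M}(\bs{\lambda})$ is singular, so singular allocations never attain the minimum. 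Hence
\begin{align*}
	\rho_{\cD}^{E} = \min_{\bs{\lambda} \in \triangle_{\cA}} \frac{1}{\sigma_{\min}\sbr{\bs{M}(\bs{\lambda})}} .
\end{align*}

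Next I would invoke Assumption~\ref{assumption:bpi_rho_E_D_is_finite}, which guarantees the existence of some $\bs{\lambda}^{\dagger} \in \triangle_{\cA}$ with $\sigma_{\min}\sbr{\bs{M}(\bs{\lambda}^{\dagger})} \geq \nu > 0$. Since $\bs{\lambda}^{\dagger}$ is feasible, plugging it into the minimization yields
\begin{align*}
	\rho_{\cD}^{E} \leq \frac{1}{\sigma_{\min}\sbr{\bs{M}(\bs{\lambda}^{\dagger})}} \leq \frac{1}{\nu} ,
\end{align*}
which is the claimed bound. There is essentially no hard step here; the only thing to be slightly careful about is justifying the singular-value characterization of the spectral norm of the inverse and noting that the feasible allocation supplied by the assumption is automatically invertible (its minimum singular value is at least $\nu>0$), so the bound is well-defined. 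This lemma then feeds into the later analysis where $\rho_{\cD}^{E}$ (and its empirical counterpart $\rho^{E}_{\hat{\cD}}$) must be controlled in terms of the problem constant $\nu$.
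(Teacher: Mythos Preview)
Your proposal is correct and follows essentially the same approach as the paper: both use the identity $\nbr{\bs{A}^{-1}} = 1/\sigma_{\min}(\bs{A})$ for positive definite $\bs{A}$ and then invoke Assumption~\ref{assumption:bpi_rho_E_D_is_finite} to produce a feasible $\bs{\lambda}$ with $\sigma_{\min}(\bs{M}(\bs{\lambda})) \geq \nu$. The only cosmetic difference is that the paper first rephrases the minimization of $\nbr{\bs{M}(\bs{\lambda})^{-1}}$ as a maximization of $\sigma_{\min}(\bs{M}(\bs{\lambda}))$ and argues the optimizer $\bs{\lambda}_{\cD}^{E}$ itself has $\sigma_{\min} \geq \nu$, whereas you bound the minimum directly by the value at the feasible point $\bs{\lambda}^{\dagger}$; these are logically equivalent.
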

\begin{proof}[Proof of Lemma~\ref{lemma:bound_rho_E_cD}]
	The optimization in Eq.~\eqref{eq:E_optimal_design} is equivalent to maximize the minimum singular value of the matrix $\sum_{a \in \cA} \lambda(a) \ex_{s \sim \cD} \mbr{\bs{\phi}(s,a) \bs{\phi}(s,a)^\top}$.
	
	Thus, $\lambda_{\cD}^{E}$ is the optimal solution of the following optimization:
	\begin{align*}
		\max_{\bs{\lambda} \in \triangle_{\cA}} \sigma_{\min} \sbr{ \sum_{a \in \cA} \lambda(a) \ex_{s \sim \cD} \mbr{\bs{\phi}(s,a) \bs{\phi}(s,a)^\top} } .
	\end{align*}
	
	Using Assumption~\ref{assumption:bpi_rho_E_D_is_finite}, we have 
	\begin{align*}
		\sigma_{\min} \sbr{ \sum_{a \in \cA} \lambda_{\cD}^{E} \ex_{s \sim \cD} \mbr{\bs{\phi}(s,a) \bs{\phi}(s,a)^\top} } \geq \nu .
	\end{align*}
	
	Then, we have
	\begin{align*}
		\rho_{\cD}^{E} = & \nbr{ \sbr{ \sum_{a \in \cA} \lambda_{\cD}^{E} \ex_{s \sim \cD} \mbr{\bs{\phi}(s,a) \bs{\phi}(s,a)^\top} }^{-1} }
		\\
		= & \frac{1}{\sigma_{\min} \sbr{ \sum_{a \in \cA} \lambda_{\cD}^{E} \ex_{s \sim \cD} \mbr{\bs{\phi}(s,a) \bs{\phi}(s,a)^\top} }}
		\\
		\leq & \frac{1}{\nu} .
	\end{align*}
\end{proof}

Define event
\begin{align*}
	\cK:= \lbr{ \nbr{ \ex_{s \sim \hat{\cD}} \mbr{ \bs{\phi}(s,a) \bs{\phi}(s,a)^\top } - \ex_{s \sim \cD} \mbr{ \bs{\phi}(s,a) \bs{\phi}(s,a)^\top } } \leq \frac{ 8 L_{\phi}^2 \log \sbr{\frac{20d |\cA|}{\delta}} }{ \sqrt{T_0} } , \ \forall a \in \cA } .
\end{align*}

\begin{lemma} \label{lemma:number_of_samples_T0}
	It holds that
	\begin{align*}
		\Pr \mbr{\cK} \geq 1-\frac{\delta}{5} .
	\end{align*}
	Furthermore, if event $\cK$ holds and 
	\begin{align*}
		%
		T_0 = \left \lceil \frac{32^2 (1+\zeta)^2 L_{\phi}^4}{\nu^2} \log^2 \sbr{\frac{20d |\cA|}{\delta}} \right \rceil , 
	\end{align*}
	we have that for any $a \in \cA$,
	\begin{align*}
		\nbr{ \ex_{s \sim \hat{\cD}} \mbr{ \bs{\phi}(s,a) \bs{\phi}(s,a)^\top } - \ex_{s \sim \cD} \mbr{ \bs{\phi}(s,a) \bs{\phi}(s,a)^\top } } \leq & \frac{\nu}{ 4(1+\zeta) } .
	\end{align*}
\end{lemma}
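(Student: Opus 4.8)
The plan is to prove the two claims separately. For the high-probability bound defining event $\cK$, I would first unpack the estimated distribution: since $\hat{\cD}$ is the empirical distribution over the $T_0$ i.i.d. contexts $s_1,\dots,s_{T_0}\sim\cD$ collected in Lines~\ref{line:bpi_estimate_context_dis_start}--\ref{line:bpi_estimate_context_dis_end}, we have $\ex_{s\sim\hat\cD}[\bs{\phi}(s,a)\bs{\phi}(s,a)^\top]=\frac{1}{T_0}\sum_{\tau=1}^{T_0}\bs{\phi}(s_\tau,a)\bs{\phi}(s_\tau,a)^\top$. Fix an action $a\in\cA$ and define the i.i.d. zero-mean matrices $\bs{X}_\tau:=\bs{\phi}(s_\tau,a)\bs{\phi}(s_\tau,a)^\top-\ex_{s\sim\cD}[\bs{\phi}(s,a)\bs{\phi}(s,a)^\top]$. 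Because $\|\bs{\phi}(s,a)\|\le L_\phi$, each summand $\bs{\phi}(s_\tau,a)\bs{\phi}(s_\tau,a)^\top$ is positive semi-definite with operator norm at most $L_\phi^2$, so $\|\bs{X}_\tau\|\le 2L_\phi^2$, and the variance proxy satisfies $\|\sum_\tau\ex[\bs{X}_\tau^2]\|\le T_0 L_\phi^4$ (using $\ex[\bs{X}_\tau^2]\preceq\ex[(\bs{\phi}\bs{\phi}^\top)^2]\preceq L_\phi^2\,\ex[\bs{\phi}\bs{\phi}^\top]\preceq L_\phi^4 \bs{I}$).

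Then I would apply a matrix Bernstein inequality (in the truncated form used earlier in the appendix) to $\sum_\tau\bs{X}_\tau$, which yields, for each fixed $a$, a deviation $\|\frac{1}{T_0}\sum_\tau\bs{X}_\tau\|$ of order $\frac{L_\phi^2\log(d/\delta')}{\sqrt{T_0}}$ with probability $1-\delta'$, the dimension factor $d$ arising from the ambient matrix size. Choosing $\delta'=\frac{\delta}{5|\cA|}$ and taking a union bound over all $a\in\cA$ gives $\Pr[\cK]\ge1-\frac{\delta}{5}$; the numerical constant $8$ and the argument $\frac{20d|\cA|}{\delta}$ inside the logarithm absorb the dimension factor, the union bound over actions, and the Bernstein constants.

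The second claim is then a direct computation. On the event $\cK$ the deviation is at most $\frac{8L_\phi^2\log(20d|\cA|/\delta)}{\sqrt{T_0}}$; substituting the prescribed $T_0\ge\frac{32^2(1+\zeta)^2L_\phi^4}{\nu^2}\log^2(\frac{20d|\cA|}{\delta})$ gives $\sqrt{T_0}\ge\frac{32(1+\zeta)L_\phi^2}{\nu}\log(\frac{20d|\cA|}{\delta})$, so the $L_\phi^2$ and logarithmic factors cancel and the bound collapses to $\frac{8\nu}{32(1+\zeta)}=\frac{\nu}{4(1+\zeta)}$, as claimed.

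The main obstacle is the concentration step: one must invoke the matrix concentration bound with the correct operator-norm and variance-proxy estimates and track constants carefully enough that the union bound over the $|\cA|$ actions produces exactly the logarithmic argument $\frac{20d|\cA|}{\delta}$ and the stated constant $8$. In particular, obtaining a clean $\log$ rather than $\sqrt{\log}$ dependence is where the truncated Bernstein argument (or the crude bound $\sqrt{\log}\le\log$) is needed. The reduction in the second part is purely algebraic and carries no real difficulty.
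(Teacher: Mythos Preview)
Your proposal is correct and matches the paper's proof essentially line for line: the paper also notes $\|\bs{\phi}(s,a)\bs{\phi}(s,a)^\top\|\le L_\phi^2$, applies the matrix Bernstein inequality (Lemma~\ref{lemma:matrix_bernstein_tau}) with a union bound over $a\in\cA$ to obtain $4L_\phi^2\sqrt{\log(20d|\cA|/\delta)/T_0}+4L_\phi^2\log(20d|\cA|/\delta)/T_0\le 8L_\phi^2\log(20d|\cA|/\delta)/\sqrt{T_0}$, and then substitutes the prescribed $T_0$ exactly as you describe. The only cosmetic difference is that the paper applies Lemma~\ref{lemma:matrix_bernstein_tau} directly to the uncentered matrices $\bs{Z}_\tau=\bs{\phi}(s_\tau,a)\bs{\phi}(s_\tau,a)^\top$ with $U=L_\phi^2$, rather than first centering to form your $\bs{X}_\tau$.
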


\begin{proof}[Proof of Lemma~\ref{lemma:number_of_samples_T0}]
	For any $(s,a) \in \cS \times \cA$, $\| \bs{\phi}(s,a) \bs{\phi}(s,a)^\top \| \leq L_{\phi}^2$. Then, using the matrix Bernstern inequality (Lemma~\ref{lemma:matrix_bernstein_tau}) and a union bound over $a \in \cA$, we have that with probability $1-\frac{\delta}{5}$, for any $a \in \cA$,
	\begin{align*}
		\nbr{ \ex_{s \sim \hat{\cD}} \mbr{ \bs{\phi}(s,a) \bs{\phi}(s,a)^\top } - \ex_{s \sim \cD} \mbr{ \bs{\phi}(s,a) \bs{\phi}(s,a)^\top } } \leq & 4 L_{\phi}^2 \sqrt{ \frac{ \log \sbr{\frac{10 \cdot 2d |\cA|}{\delta}} }{ T_0 } } + \frac{ 4 L_{\phi}^2 \log \sbr{\frac{10 \cdot 2d |\cA|}{\delta}} }{ T_0 } 
		\\
		\leq & \frac{ 8 L_{\phi}^2 \log \sbr{\frac{20d |\cA|}{\delta}} }{ \sqrt{T_0} } .
	\end{align*} 
	If $T_0 \geq 32^2 (1+\zeta)^2 \nu^{-2} L_{\phi}^4 \log^2 \sbr{\frac{20d |\cA|}{\delta}}$, we have
	\begin{align*}
		\nbr{ \ex_{s \sim \hat{\cD}} \mbr{ \bs{\phi}(s,a) \bs{\phi}(s,a)^\top } - \ex_{s \sim \cD} \mbr{ \bs{\phi}(s,a) \bs{\phi}(s,a)^\top } } \leq \frac{\nu}{ 4(1+\zeta) } ,
	\end{align*} 
	which completes the proof.
\end{proof}

Define event
\begin{align*}
	\cL:= \Bigg\{ & \nbr{ \sum_{i=1}^{p} \bs{\phi}(s_{m,j,i}^{(\ell)},\bar{a}_i) \bs{\phi}(s_{m,j,i}^{(\ell)},\bar{a}_i)^\top - \sum_{i=1}^{p} \ex_{s \sim \cD} \mbr{\bs{\phi}(s,\bar{a}_i) \bs{\phi}(s,\bar{a}_i)^\top} } \leq 8 L_{\phi}^2 \sqrt{ p } \log \sbr{\frac{40dMT}{\delta}} , \\& \forall m \in [M], \ \forall j \in [T], \ \forall \ell \in \{1,2\} \Bigg\} .
\end{align*}

\begin{lemma} \label{lemma:number_of_samples_p}
	It holds that
	\begin{align*}
		\Pr \mbr{\cL} \geq 1 - \frac{\delta}{5} .
	\end{align*}
	Furthermore, if event $\cL$ holds and 
	\begin{align}
		p = \left \lceil \frac{32^2 (1+\zeta)^2 L_{\phi}^4}{\nu^2} \log^2 \sbr{\frac{40dMT}{\delta}} \right \rceil , \label{eq:value_p}
	\end{align}
	we have that for any $m \in [M]$, $j \in [T]$ and $\ell \in \{1,2\}$,
	\begin{align*}
		\nbr{ \sum_{i=1}^{p} \bs{\phi}(s_{m,j,i}^{(\ell)},\bar{a}_i) \bs{\phi}(s_{m,j,i}^{(\ell)},\bar{a}_i)^\top - \sum_{i=1}^{p} \ex_{s \sim \cD} \mbr{\bs{\phi}(s,\bar{a}_i) \bs{\phi}(s,\bar{a}_i)^\top} } \leq & \frac{p \nu}{ 4(1+\zeta) } .
	\end{align*}
	Here, the value of $T$ is specified in Eq.~\eqref{eq:value_T}.
\end{lemma}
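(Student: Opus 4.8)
The plan is to mirror the proof of Lemma~\ref{lemma:number_of_samples_T0}, handling each triple $(m,j,\ell)$ separately and then taking a union bound. The key structural observation is that, for a fixed $m \in [M]$, $j \in [T]$ and $\ell \in \{1,2\}$, the contexts $s_{m,j,1}^{(\ell)},\dots,s_{m,j,p}^{(\ell)}$ are drawn i.i.d.\ from $\cD$ during the batch sampling in $\algconfeatrecover$ (Lines~\ref{line:bpi_stage2_sample1}--\ref{line:bpi_stage2_sample2} of Algorithm~\ref{alg:con_feat_recover}). Hence the summands $\bs{\phi}(s_{m,j,i}^{(\ell)},\bar{a}_i)\bs{\phi}(s_{m,j,i}^{(\ell)},\bar{a}_i)^\top$ are mutually independent across $i \in [p]$, each has mean $\ex_{s\sim\cD}[\bs{\phi}(s,\bar{a}_i)\bs{\phi}(s,\bar{a}_i)^\top]$, and each is bounded in spectral norm by $L_{\phi}^2$ since $\|\bs{\phi}(s,a)\| \le L_{\phi}$.

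First I would define the centered matrices $\bs{Y}_i := \bs{\phi}(s_{m,j,i}^{(\ell)},\bar{a}_i)\bs{\phi}(s_{m,j,i}^{(\ell)},\bar{a}_i)^\top - \ex_{s\sim\cD}[\bs{\phi}(s,\bar{a}_i)\bs{\phi}(s,\bar{a}_i)^\top]$ for $i \in [p]$. These are independent, zero-mean, and satisfy $\|\bs{Y}_i\| \le 2L_{\phi}^2$ together with $\|\ex[\bs{Y}_i^2]\| \le L_{\phi}^4$ (using $\ex[(\bs{\phi}\bs{\phi}^\top)^2] \preceq L_{\phi}^2\,\ex[\bs{\phi}\bs{\phi}^\top]$), so the variance proxy of the sum is at most $pL_{\phi}^4$. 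Applying the matrix Bernstein inequality (Lemma~\ref{lemma:matrix_bernstein_tau}, whose truncation is vacuous here because the summands are already bounded) to $\sum_{i=1}^p \bs{Y}_i$ gives, with probability at least $1-\delta'$,
\[
\Big\| \sum_{i=1}^p \bs{Y}_i \Big\| \lesssim L_{\phi}^2\sqrt{p}\,\sqrt{\log(2d/\delta')} + L_{\phi}^2\log(2d/\delta') \le 8 L_{\phi}^2 \sqrt{p}\,\log(2d/\delta'),
\]
where the last step uses $\sqrt{p}\ge 1$ and $\sqrt{\log}\le\log$ to absorb both terms into a single one.

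Next I would set $\delta' = \delta/(10MT)$ and take a union bound over the $2MT$ triples $(m,j,\ell)$, which makes the total failure probability $\delta/5$ and the logarithmic factor $\log(2d\cdot 10MT/\delta) = \log(20dMT/\delta) \le \log(40dMT/\delta)$; this establishes $\Pr[\cL] \ge 1 - \delta/5$. For the ``furthermore'' claim I would just verify the arithmetic: conditioned on $\cL$, the deviation is at most $8L_{\phi}^2\sqrt{p}\log(40dMT/\delta)$, while the stated choice of $p$ (via the ceiling) guarantees $p \ge 32^2(1+\zeta)^2 L_{\phi}^4\nu^{-2}\log^2(40dMT/\delta)$. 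Dividing this inequality by $\sqrt{p}$ and rearranging shows it is precisely equivalent to $8L_{\phi}^2\sqrt{p}\log(40dMT/\delta) \le \tfrac{p\nu}{4(1+\zeta)}$, which is the desired conclusion.

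Since the argument is a direct matrix-concentration computation closely paralleling the already-proven Lemma~\ref{lemma:number_of_samples_T0}, there is no genuinely hard step; the only points requiring care are (i) justifying the independence of the summands across $i$ despite the differing actions $\bar{a}_i$ (it holds because the randomness lives in the i.i.d.\ contexts, not the actions), and (ii) bookkeeping the union-bound count $2MT$ together with the resulting logarithmic and constant factors, so that they fit within the slack built into the constant $8$ and the argument $40dMT/\delta$.
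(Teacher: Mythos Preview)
Your proposal is correct and follows essentially the same approach as the paper: apply the matrix Bernstein inequality (Lemma~\ref{lemma:matrix_bernstein_tau}) with $U=L_{\phi}^2$ and $\sigma^2\le pL_{\phi}^4$ to the $p$ bounded, independent, zero-mean summands for each fixed $(m,j,\ell)$, then union-bound over the $2MT$ triples and check the arithmetic for the stated $p$. Your write-up is slightly more explicit than the paper's about the independence of the summands and the variance bound, but the argument is the same.
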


\begin{proof}[Proof of Lemma~\ref{lemma:number_of_samples_p}]
	For any $(s,a) \in \cS \times \cA$, $\| \bs{\phi}(s,a) \bs{\phi}(s,a)^\top \| \leq L_{\phi}^2$. Then, using the matrix Bernstern inequality (Lemma~\ref{lemma:matrix_bernstein_tau}) and a union bound over $m \in [M]$, $j \in [T]$ and $\ell \in \{1,2\}$, we have that with probability $1-\frac{\delta}{5}$, for any $m \in [M]$, $j \in [T]$ and $\ell \in \{1,2\}$,
	\begin{align*}
		\nbr{ \sum_{i=1}^{p} \! \bs{\phi}(s_{m,j,i}^{(\ell)},\bar{a}_i) \bs{\phi}(s_{m,j,i}^{(\ell)},\bar{a}_i)^\top \!\!-\!\! \sum_{i=1}^{p} \ex_{s \sim \cD} \mbr{\bs{\phi}(s,\bar{a}_i) \bs{\phi}(s,\bar{a}_i)^\top} } \leq & 4 L_{\phi}^2 \sqrt{ p \log \sbr{\frac{10 \cdot 4dMT}{\delta}} } \!+\! 4 L_{\phi}^2 \log \sbr{\frac{10 \cdot 4dMT}{\delta}}
		\\
		\leq & 8 L_{\phi}^2 \sqrt{ p } \log \sbr{\frac{40dMT}{\delta}} .
	\end{align*}
	In addition, if $p \geq 32^2 (1+\zeta)^2 \nu^{-2} L_{\phi}^4 \log^2 \sbr{\frac{40dMT}{\delta}}$, we have that 
	\begin{align*}
		8 L_{\phi}^2 \sqrt{ p } \log \sbr{\frac{40dMT}{\delta}} \leq \frac{p \nu}{ 4(1+\zeta) }
	\end{align*}
	and thus,
	\begin{align*}
		\nbr{ \sum_{i=1}^{p} \bs{\phi}(s_{m,j,i}^{(\ell)},\bar{a}_i) \bs{\phi}(s_{m,j,i}^{(\ell)},\bar{a}_i)^\top - \sum_{i=1}^{p} \ex_{s \sim \cD} \mbr{\bs{\phi}(s,\bar{a}_i) \bs{\phi}(s,\bar{a}_i)^\top} } \leq \frac{p \nu}{ 4(1+\zeta) } ,
	\end{align*} 
	which completes the proof.
\end{proof}

For any task $m \in [M]$, round $j \in [T]$ and $\ell \in \{1,2\}$, let
$$
\bs{\Phi}_{m,j}^{(\ell)} = 
\begin{bmatrix}
	\bs{\phi}(s_{m,j,1}^{(\ell)},\bar{a}_1)^\top
	\\
	\dots
	\\
	\bs{\phi}(s_{m,j,p}^{(\ell)},\bar{a}_p)^\top
\end{bmatrix} ,
$$ 
and
$$
(\bs{\Phi}_{m,j}^{(\ell)})^{+} = ((\bs{\Phi}_{m,j}^{(\ell)})^{\top} \bs{\Phi}_{m,j}^{(\ell)})^{-1} (\bs{\Phi}_{m,j}^{(\ell)})^{\top} .
$$

\begin{lemma} \label{lemma:norm_Phi_plus}
	Suppose that event $\cK \cap \cL$ holds. Then, for any $m \in [M]$, $j \in [T]$ and $\ell \in \{1,2\}$,
	\begin{align*}
		\nbr{ (\bs{\Phi}_{m,j}^{(\ell)})^{+} } 
		\leq 2 \sqrt{\frac{(1+\zeta)}{p \nu}} .
	\end{align*}
\end{lemma}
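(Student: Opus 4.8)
The plan is to follow the template of the proof of Lemma~\ref{lemma:X_batch_ub} and reduce the statement to a lower bound on the minimum singular value of the empirical covariance $\bs{M}_{\mathrm{inst}} := (\bs{\Phi}_{m,j}^{(\ell)})^{\top} \bs{\Phi}_{m,j}^{(\ell)} = \sum_{i=1}^{p} \bs{\phi}(s_{m,j,i}^{(\ell)},\bar{a}_i) \bs{\phi}(s_{m,j,i}^{(\ell)},\bar{a}_i)^\top$. Since $(\bs{\Phi}_{m,j}^{(\ell)})^{+} = \bs{M}_{\mathrm{inst}}^{-1} (\bs{\Phi}_{m,j}^{(\ell)})^{\top}$, the identity $\|(\bs{\Phi}_{m,j}^{(\ell)})^{+}\|^2 = \|\bs{M}_{\mathrm{inst}}^{-1}\| = 1/\sigma_{\min}(\bs{M}_{\mathrm{inst}})$ holds verbatim as in that proof. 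Hence it suffices to prove $\sigma_{\min}(\bs{M}_{\mathrm{inst}}) \geq \frac{p\nu}{4(1+\zeta)}$, which yields the claimed $2\sqrt{(1+\zeta)/(p\nu)}$ directly.

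To obtain this lower bound I would interpolate through the two expected second-moment matrices $\bs{M}_{\cD} := \sum_{i=1}^{p} \ex_{s\sim\cD}[\bs{\phi}(s,\bar{a}_i)\bs{\phi}(s,\bar{a}_i)^\top]$ and $\bs{M}_{\hat\cD} := \sum_{i=1}^{p} \ex_{s\sim\hat\cD}[\bs{\phi}(s,\bar{a}_i)\bs{\phi}(s,\bar{a}_i)^\top]$. Event $\cL$ bounds $\|\bs{M}_{\mathrm{inst}} - \bs{M}_{\cD}\| \leq \frac{p\nu}{4(1+\zeta)}$, while summing the per-action guarantee of event $\cK$ over $\bar{a}_1,\dots,\bar{a}_p$ gives $\|\bs{M}_{\cD} - \bs{M}_{\hat\cD}\| \leq \frac{p\nu}{4(1+\zeta)}$; the triangle inequality then yields $\|\bs{M}_{\mathrm{inst}} - \bs{M}_{\hat\cD}\| \leq \frac{p\nu}{2(1+\zeta)}$. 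By Weyl's inequality for singular values, $\sigma_{\min}(\bs{M}_{\mathrm{inst}}) \geq \sigma_{\min}(\bs{M}_{\hat\cD}) - \frac{p\nu}{2(1+\zeta)}$, so it remains to establish $\sigma_{\min}(\bs{M}_{\hat\cD}) \geq \frac{3p\nu}{4(1+\zeta)}$.

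This last bound is the crux, and it is where the analysis genuinely departs from Lemma~\ref{lemma:bound_rho_E_cD}: that lemma controls the E-optimal value $\rho^E_{\cD}$ under the \emph{true} distribution, whereas the batch $\bar{a}_1,\dots,\bar{a}_p$ is planned under the \emph{estimated} distribution, so I must control $\rho^E_{\hat\cD}$ instead. I would first invoke the E-optimality guarantee of $\round$ stated just before the lemma, which gives $\|\bs{M}_{\hat\cD}^{-1}\| \leq \frac{(1+\zeta)}{p}\rho^E_{\hat\cD}$, i.e. $\sigma_{\min}(\bs{M}_{\hat\cD}) \geq \frac{p}{(1+\zeta)\rho^E_{\hat\cD}}$. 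To bound $\rho^E_{\hat\cD}$ I plug the feasible allocation $\bs{\lambda}$ from Assumption~\ref{assumption:bpi_rho_E_D_is_finite} (which satisfies $\sigma_{\min}(\sum_{a} \lambda(a)\ex_{s\sim\cD}[\bs{\phi}(s,a)\bs{\phi}(s,a)^\top]) \geq \nu$) into the definition of $\rho^E_{\hat\cD}$ and perturb via event $\cK$: since $\sum_{a} \lambda(a) = 1$, the weighted deviation is at most $\frac{\nu}{4(1+\zeta)} \leq \frac{\nu}{4}$, so $\sigma_{\min}(\sum_{a} \lambda(a)\ex_{s\sim\hat\cD}[\bs{\phi}(s,a)\bs{\phi}(s,a)^\top]) \geq \frac{3\nu}{4}$, whence $\rho^E_{\hat\cD} \leq \frac{4}{3\nu}$. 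Combining gives $\sigma_{\min}(\bs{M}_{\hat\cD}) \geq \frac{3p\nu}{4(1+\zeta)}$ and closes the chain: $\sigma_{\min}(\bs{M}_{\mathrm{inst}}) \geq \frac{3p\nu}{4(1+\zeta)} - \frac{p\nu}{2(1+\zeta)} = \frac{p\nu}{4(1+\zeta)}$. The main obstacle is precisely this transfer of the E-optimal value from $\cD$ to $\hat\cD$, together with the bookkeeping that keeps the two perturbation budgets (from $\cK$ and $\cL$) each exactly $\frac{p\nu}{4(1+\zeta)}$, so that their sum still leaves the required slack below $\sigma_{\min}(\bs{M}_{\hat\cD})$.
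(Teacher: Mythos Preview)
Your proposal is correct and follows essentially the same route as the paper: reduce to $\sigma_{\min}(\bs{M}_{\mathrm{inst}})$, interpolate through $\bs{M}_{\cD}$ and $\bs{M}_{\hat\cD}$ using the $\frac{p\nu}{4(1+\zeta)}$ budgets from events $\cK$ and $\cL$, and lower-bound $\sigma_{\min}(\bs{M}_{\hat\cD})$ via the $\round$ guarantee combined with a feasibility argument transferring the $\nu$-bound from $\cD$ to $\hat\cD$. The only cosmetic difference is that the paper plugs in the E-optimal design $\bs{\lambda}^E_{\cD}$ (via Lemma~\ref{lemma:bound_rho_E_cD}) rather than the $\bs{\lambda}$ of Assumption~\ref{assumption:bpi_rho_E_D_is_finite} directly, which yields the same $\sigma_{\min}\geq\nu$ starting point.
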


\begin{proof}[Proof of Lemma~\ref{lemma:norm_Phi_plus}]
	We first assume that $(\bs{\Phi}_{m,j}^{(\ell)})^{\top} \bs{\Phi}_{m,j}^{(\ell)}$ is invertible. In our later analysis, we will prove that as long as $T_0$ and $p$ are large enough, $(\bs{\Phi}_{m,j}^{(\ell)})^{\top} \bs{\Phi}_{m,j}^{(\ell)}$ is invertible.
	
	For any $m \in [M]$, $j \in [T]$ and $\ell \in \{1,2\}$, we have
	\begin{align}
		\nbr{ (\bs{\Phi}_{m,j}^{(\ell)})^{+} } = & \nbr{ ((\bs{\Phi}_{m,j}^{(\ell)})^{\top} \bs{\Phi}_{m,j}^{(\ell)})^{-1} (\bs{\Phi}_{m,j}^{(\ell)})^{\top} } 
		\nonumber\\
		= & \sqrt{ \nbr{ ((\bs{\Phi}_{m,j}^{(\ell)})^{\top} \bs{\Phi}_{m,j}^{(\ell)})^{-1} (\bs{\Phi}_{m,j}^{(\ell)})^{\top} \bs{\Phi}_{m,j}^{(\ell)} ((\bs{\Phi}_{m,j}^{(\ell)})^{\top} \bs{\Phi}_{m,j}^{(\ell)})^{-1} } }
		\nonumber\\
		= & \sqrt{ \nbr{ ((\bs{\Phi}_{m,j}^{(\ell)})^{\top} \bs{\Phi}_{m,j}^{(\ell)})^{-1} } }
		\nonumber\\
		= & \frac{1}{ \sqrt{ \sigma_{\min} \sbr{ (\bs{\Phi}_{m,j}^{(\ell)})^{\top} \bs{\Phi}_{m,j}^{(\ell)} } } } . \label{eq:norm_Phi_plus}
	\end{align}
	
	In addition, we have
	\begin{align}
		& \sigma_{\min} \sbr{ (\bs{\Phi}_{m,j}^{(\ell)})^{\top} \bs{\Phi}_{m,j}^{(\ell)} } 
		\nonumber\\ 
		= & \sigma_{\min} \sbr{ \sum_{i=1}^{p} \bs{\phi}(s_{m,j,i}^{(\ell)},\bar{a}_i) \bs{\phi}(s_{m,j,i}^{(\ell)},\bar{a}_i)^\top }
		\nonumber\\
		= & \sigma_{\min} \sbr{ \sum_{i=1}^{p} \ex_{s \sim \cD} \mbr{\bs{\phi}(s,\bar{a}_i) \bs{\phi}(s,\bar{a}_i)^\top} + \sum_{i=1}^{p} \bs{\phi}(s_{m,j,i}^{(\ell)},\bar{a}_i) \bs{\phi}(s_{m,j,i}^{(\ell)},\bar{a}_i)^\top - \sum_{i=1}^{p} \ex_{s \sim \cD} \mbr{\bs{\phi}(s,\bar{a}_i) \bs{\phi}(s,\bar{a}_i)^\top} }
		\nonumber\\
		\geq & \sigma_{\min} \sbr{ \sum_{i=1}^{p} \ex_{s \sim \cD} \mbr{\bs{\phi}(s,\bar{a}_i) \bs{\phi}(s,\bar{a}_i)^\top} } - \nbr{ \sum_{i=1}^{p} \bs{\phi}(s_{m,j,i}^{(\ell)},\bar{a}_i) \bs{\phi}(s_{m,j,i}^{(\ell)},\bar{a}_i)^\top - \sum_{i=1}^{p} \ex_{s \sim \cD} \mbr{\bs{\phi}(s,\bar{a}_i) \bs{\phi}(s,\bar{a}_i)^\top} } 
		\nonumber\\
		= & \sigma_{\min} \sbr{ \sum_{i=1}^{p} \ex_{s \sim \hat{\cD}} \mbr{\bs{\phi}(s,\bar{a}_i) \bs{\phi}(s,\bar{a}_i)^\top} + \sum_{i=1}^{p} \ex_{s \sim \cD} \mbr{\bs{\phi}(s,\bar{a}_i) \bs{\phi}(s,\bar{a}_i)^\top} - \sum_{i=1}^{p} \ex_{s \sim \hat{\cD}} \mbr{\bs{\phi}(s,\bar{a}_i) \bs{\phi}(s,\bar{a}_i)^\top} } \nonumber\\& - \nbr{ \sum_{i=1}^{p} \bs{\phi}(s_{m,j,i}^{(\ell)},\bar{a}_i) \bs{\phi}(s_{m,j,i}^{(\ell)},\bar{a}_i)^\top - \sum_{i=1}^{p} \ex_{s \sim \cD} \mbr{\bs{\phi}(s,\bar{a}_i) \bs{\phi}(s,\bar{a}_i)^\top} } 
		\nonumber\\
		\geq & \sigma_{\min} \sbr{ \sum_{i=1}^{p} \ex_{s \sim \hat{\cD}} \mbr{\bs{\phi}(s,\bar{a}_i) \bs{\phi}(s,\bar{a}_i)^\top} } - \nbr{ \sum_{i=1}^{p} \ex_{s \sim \cD} \mbr{\bs{\phi}(s,\bar{a}_i) \bs{\phi}(s,\bar{a}_i)^\top} - \sum_{i=1}^{p} \ex_{s \sim \hat{\cD}} \mbr{\bs{\phi}(s,\bar{a}_i) \bs{\phi}(s,\bar{a}_i)^\top} } \nonumber\\& - \nbr{ \sum_{i=1}^{p} \bs{\phi}(s_{m,j,i}^{(\ell)},\bar{a}_i) \bs{\phi}(s_{m,j,i}^{(\ell)},\bar{a}_i)^\top - \sum_{i=1}^{p} \ex_{s \sim \cD} \mbr{\bs{\phi}(s,\bar{a}_i) \bs{\phi}(s,\bar{a}_i)^\top} }
		\nonumber\\
		\geq & \sigma_{\min} \sbr{ \sum_{i=1}^{p} \ex_{s \sim \hat{\cD}} \mbr{\bs{\phi}(s,\bar{a}_i) \bs{\phi}(s,\bar{a}_i)^\top} } -  \sum_{i=1}^{p} \nbr{ \ex_{s \sim \cD} \mbr{\bs{\phi}(s,\bar{a}_i) \bs{\phi}(s,\bar{a}_i)^\top} - \ex_{s \sim \hat{\cD}} \mbr{\bs{\phi}(s,\bar{a}_i) \bs{\phi}(s,\bar{a}_i)^\top} } \nonumber\\& - \nbr{ \sum_{i=1}^{p} \bs{\phi}(s_{m,j,i}^{(\ell)},\bar{a}_i) \bs{\phi}(s_{m,j,i}^{(\ell)},\bar{a}_i)^\top - \sum_{i=1}^{p} \ex_{s \sim \cD} \mbr{\bs{\phi}(s,\bar{a}_i) \bs{\phi}(s,\bar{a}_i)^\top} } 
		\nonumber\\
		\geq & \sigma_{\min} \sbr{ \sum_{i=1}^{p} \ex_{s \sim \hat{\cD}} \mbr{\bs{\phi}(s,\bar{a}_i) \bs{\phi}(s,\bar{a}_i)^\top} } - \frac{p \nu}{ 4(1+\zeta) } - \frac{p \nu}{ 4(1+\zeta) }
		, \label{eq:sigma_min_Phi_top_times_Phi}
	\end{align}
	where the last inequality uses Lemmas~\ref{lemma:number_of_samples_T0} and \ref{lemma:number_of_samples_p}.
	
	In the following, we analyze $\sigma_{\min}  (\sum_{i=1}^{p} \ex_{s \sim \hat{\cD}} \mbr{\bs{\phi}(s,\bar{a}_i) \bs{\phi}(s,\bar{a}_i)^\top})$.
	According to the guarantee of the rounding procedure $\round$, we have
	\begin{align*}
		\nbr{ \sbr{ \sum_{i=1}^{p} \ex_{s \sim \hat{\cD}} \mbr{\bs{\phi}(s,\bar{a}_i) \bs{\phi}(s,\bar{a}_i)^\top} }^{-1} } \leq & (1+\zeta) \nbr{ \sbr{ p \sum_{a \in \cA} \lambda^{E}_{\hat{\cD}}(a) \ex_{s \sim \hat{\cD}} \mbr{\bs{\phi}(s,a) \bs{\phi}(s,a)^\top} }^{-1} }
		\\
		\leq & (1+\zeta) \nbr{ \sbr{ p \sum_{a \in \cA} \lambda^{E}(a) \ex_{s \sim \hat{\cD}} \mbr{\bs{\phi}(s,a) \bs{\phi}(s,a)^\top} }^{-1} } ,
	\end{align*}
	which implies that
	\begin{align}
		& \sigma_{\min}\sbr{ \sum_{i=1}^{p} \ex_{s \sim \hat{\cD}} \mbr{\bs{\phi}(s,\bar{a}_i) \bs{\phi}(s,\bar{a}_i)^\top} } 
		\nonumber\\
		\geq & \frac{p}{1+\zeta} \sigma_{\min} \sbr{ \sum_{a \in \cA} \lambda^{E}(a) \ex_{s \sim \hat{\cD}} \mbr{\bs{\phi}(s,a) \bs{\phi}(s,a)^\top} } 
		\nonumber\\
		\geq & \frac{p}{1+\zeta} \sigma_{\min} \Bigg( \sum_{a \in \cA} \lambda^{E}(a) \ex_{s \sim \cD} \mbr{\bs{\phi}(s,a) \bs{\phi}(s,a)^\top} \nonumber\\& + \sum_{a \in \cA} \lambda^{E}(a) \ex_{s \sim \hat{\cD}} \mbr{\bs{\phi}(s,a) \bs{\phi}(s,a)^\top} - \sum_{a \in \cA} \lambda^{E}(a) \ex_{s \sim \cD} \mbr{\bs{\phi}(s,a) \bs{\phi}(s,a)^\top} \Bigg)
		\nonumber\\
		\geq & \frac{p}{1+\zeta} \Bigg( \sigma_{\min} \sbr{ \sum_{a \in \cA} \lambda^{E}(a) \ex_{s \sim \cD} \mbr{\bs{\phi}(s,a) \bs{\phi}(s,a)^\top} } \nonumber\\& - \nbr{ \sum_{a \in \cA} \lambda^{E}(a) \ex_{s \sim \hat{\cD}} \mbr{\bs{\phi}(s,a) \bs{\phi}(s,a)^\top} - \sum_{a \in \cA} \lambda^{E}(a) \ex_{s \sim \cD} \mbr{\bs{\phi}(s,a) \bs{\phi}(s,a)^\top} } \Bigg)
		\nonumber\\
		\geq & \frac{p}{1+\zeta} \Bigg( \frac{1}{\rho_{\cD}^{E}} - \sum_{a \in \cA} \lambda^{E}(a) \nbr{ \ex_{s \sim \hat{\cD}} \mbr{\bs{\phi}(s,a) \bs{\phi}(s,a)^\top} - \ex_{s \sim \cD} \mbr{\bs{\phi}(s,a) \bs{\phi}(s,a)^\top} } \Bigg)
		\nonumber\\
		\overset{\textup{(a)}}{\geq} & \frac{p}{1+\zeta} \sbr{ \nu - \frac{\nu}{ 4(1+\zeta) } }
		\nonumber\\
		\geq & \frac{3p \nu}{ 4(1+\zeta) } , \label{eq:sigma_min_batch_under_hat_cD}
	\end{align}
	where inequality (a) uses Lemmas~\ref{lemma:bound_rho_E_cD} and \ref{lemma:number_of_samples_T0}. 
	
	Plugging Eq.~\eqref{eq:sigma_min_batch_under_hat_cD} into Eq.~\eqref{eq:sigma_min_Phi_top_times_Phi}, we have
	\begin{align}
		\sigma_{\min} \sbr{ (\bs{\Phi}_{m,j}^{(\ell)})^{\top} \bs{\Phi}_{m,j}^{(\ell)} } \geq & \frac{3p \nu}{ 4(1+\zeta) } - \frac{p \nu}{ 4(1+\zeta) } - \frac{p \nu}{ 4(1+\zeta) }
		\nonumber\\
		= & \frac{p \nu}{ 4(1+\zeta) } . \label{eq:sigma_min_Phi_top_Phi_value}
	\end{align}
	Equations~\eqref{eq:sigma_min_Phi_top_times_Phi} and \eqref{eq:sigma_min_Phi_top_Phi_value} show that if $T_0$ and $p$ are large enough to satisfy that $\nbr{ \ex_{s \sim \hat{\cD}} \mbr{ \bs{\phi}(s,a) \bs{\phi}(s,a)^\top } - \ex_{s \sim \cD} \mbr{ \bs{\phi}(s,a) \bs{\phi}(s,a)^\top } } \leq  \frac{\nu}{ 4(1+\zeta) }$ for any $a \in \cA$ and $\nbr{ \sum_{i=1}^{p} \bs{\phi}(s_{m,j,i}^{(\ell)},\bar{a}_i) \bs{\phi}(s_{m,j,i}^{(\ell)},\bar{a}_i)^\top - \sum_{i=1}^{p} \ex_{s \sim \cD} \mbr{\bs{\phi}(s,\bar{a}_i) \bs{\phi}(s,\bar{a}_i)^\top} } \leq \frac{p \nu}{ 4(1+\zeta) }$ for any $m \in [M]$, $j \in [T]$ and $\ell \in \{1,2\}$, respectively, then we have that $ (\bs{\Phi}_{m,j}^{(\ell)})^{\top} \bs{\Phi}_{m,j}^{(\ell)} $ is invertible.
	
	Continuing with Eq.~\eqref{eq:norm_Phi_plus}, we have
	\begin{align*}
		\nbr{ (\bs{\Phi}_{m,j}^{(\ell)})^{+} } 
		\leq 2 \sqrt{\frac{(1+\zeta)}{p \nu}} .
	\end{align*}
	
\end{proof}

\subsection{Global Feature Extractor Recovery with Stochastic Contexts} \label{apx:bpi_feat_recover}

In subroutine $\algconfeatrecover$, for any $m \in [M]$, $j \in [T]$, $i \in [p]$ and $\ell \in \{1,2\}$, let $s^{(\ell)}_{m,j,i}$ and $\eta^{(\ell)}_{m,j,i}$ denote the random context and noise of the $\ell$-th sample on action $\bar{a}_i$ in the $j$-th round for task $m$, respectively. Here, the superscript $\ell \in \{1,2\}$ refers to the first sample (Line~\ref{line:bpi_stage2_sample1} in Algorithm~\ref{alg:con_feat_recover}) or the second sample (Line~\ref{line:bpi_stage2_sample2} in Algorithm~\ref{alg:con_feat_recover}) on an action $\bar{a}_i$.

In $\algconfeatrecover$, for any $m \in [M]$, $j \in [T]$, $i \in [p]$ and $\ell \in \{1,2\}$, let $\bs{\alpha}^{(\ell)}_{m,j} \leftarrow [\alpha^{(\ell)}_{m,j,1}, \dots, \alpha^{(\ell)}_{m,j,p}]^\top$, and then,  $\tilde{\bs{\theta}}^{(\ell)}_{m,j} = (\bs{\Phi}_{m,j}^{(\ell)})^{+} \bs{\alpha}^{(\ell)}_{m,j}$. Recall that $\bs{Z} = \frac{1}{M T} \sum_{m=1}^{M} \sum_{j=1}^{T} \tilde{\bs{\theta}}^{(1)}_{m,j} (\tilde{\bs{\theta}}^{(2)}_{m,j})^\top$.

\begin{lemma}[Expectation of $\bs{Z}$] \label{lemma:bpi_expectation_Z}
	It holds that
	\begin{align*}
		\ex \mbr{ \bs{Z} } = \frac{1}{M} \sum_{m=1}^{M} \bs{\theta}_m \bs{\theta}_m^\top .
	\end{align*}
\end{lemma}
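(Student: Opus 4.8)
The plan is to mirror the structure of the analogous non-contextual result (Lemma~\ref{lemma:bai_expectation_Z_t}), but to exploit the fact that $\algconfeatrecover$ draws two \emph{independent} samples of each action $\bar{a}_i$ (the superscripts $(1)$ and $(2)$), which is precisely what lets $\bs{Z}$ be unbiased for $\frac{1}{M}\sum_m \bs{\theta}_m\bs{\theta}_m^\top$ \emph{without} the additive correction term $\bs{X}_{\batch}^{+}(\bs{X}_{\batch}^{+})^\top$ needed in the single-sample case. First I would write the reward vector as $\bs{\alpha}^{(\ell)}_{m,j} = \bs{\Phi}^{(\ell)}_{m,j}\bs{\theta}_m + \bs{\eta}^{(\ell)}_{m,j}$, where $\bs{\eta}^{(\ell)}_{m,j} := [\eta^{(\ell)}_{m,j,1},\dots,\eta^{(\ell)}_{m,j,p}]^\top$, and substitute into $\tilde{\bs{\theta}}^{(\ell)}_{m,j} = (\bs{\Phi}^{(\ell)}_{m,j})^{+}\bs{\alpha}^{(\ell)}_{m,j}$. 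Using $(\bs{\Phi}^{(\ell)}_{m,j})^{+}\bs{\Phi}^{(\ell)}_{m,j} = \bs{I}_d$ (valid on $\cK\cap\cL$ by Lemma~\ref{lemma:norm_Phi_plus}, which guarantees invertibility of the Gram matrix $(\bs{\Phi}^{(\ell)}_{m,j})^\top\bs{\Phi}^{(\ell)}_{m,j}$), this collapses to the clean decomposition $\tilde{\bs{\theta}}^{(\ell)}_{m,j} = \bs{\theta}_m + (\bs{\Phi}^{(\ell)}_{m,j})^{+}\bs{\eta}^{(\ell)}_{m,j}$.

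The next step is to condition on all observed contexts $\{s^{(\ell)}_{m,j,i}\}$, so that $\bs{\Phi}^{(1)}_{m,j}$ and $\bs{\Phi}^{(2)}_{m,j}$ become fixed and only the noises remain random. Since $\tilde{\bs{\theta}}^{(1)}_{m,j}$ is a function of $\bs{\eta}^{(1)}_{m,j}$ alone and $\tilde{\bs{\theta}}^{(2)}_{m,j}$ of $\bs{\eta}^{(2)}_{m,j}$ alone, and the two noise vectors are independent and zero-mean given the contexts, the conditional expectation factorizes:
\[
\ex\!\left[\tilde{\bs{\theta}}^{(1)}_{m,j}(\tilde{\bs{\theta}}^{(2)}_{m,j})^\top \,\middle|\, \textup{contexts}\right] = \ex\!\left[\tilde{\bs{\theta}}^{(1)}_{m,j}\,\middle|\,\textup{contexts}\right]\ex\!\left[\tilde{\bs{\theta}}^{(2)}_{m,j}\,\middle|\,\textup{contexts}\right]^\top.
\]
Because $\ex[\bs{\eta}^{(\ell)}_{m,j}\mid\textup{contexts}] = \bs{0}$, each conditional factor equals $\bs{\theta}_m + (\bs{\Phi}^{(\ell)}_{m,j})^{+}\ex[\bs{\eta}^{(\ell)}_{m,j}\mid\textup{contexts}] = \bs{\theta}_m$, independently of the context values. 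Hence the conditional cross-moment is exactly $\bs{\theta}_m\bs{\theta}_m^\top$, and the tower property removes the conditioning to give $\ex[\tilde{\bs{\theta}}^{(1)}_{m,j}(\tilde{\bs{\theta}}^{(2)}_{m,j})^\top] = \bs{\theta}_m\bs{\theta}_m^\top$. Summing over $m\in[M]$ and $j\in[T]$ and dividing by $MT$ then yields $\ex[\bs{Z}] = \frac{1}{M}\sum_{m=1}^M\bs{\theta}_m\bs{\theta}_m^\top$.

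The main subtlety—and where I would be most careful—is the interplay between the pseudo-inverse identity $(\bs{\Phi}^{(\ell)}_{m,j})^{+}\bs{\Phi}^{(\ell)}_{m,j}=\bs{I}_d$ and the randomness of $\bs{\Phi}^{(\ell)}_{m,j}$: unlike the non-contextual case, where $\bs{X}_{\batch}$ is deterministic with a guaranteed full-rank Gram matrix, here invertibility only holds on $\cK\cap\cL$, so the conditioning-on-contexts argument is essential to keep the identity valid pointwise. The conceptual crux, by contrast, is the recognition that using two independent draws makes the estimator unbiased for the rank-one reward matrix directly, eliminating the noise second-moment bias that a single-sample product $\tilde{\bs{\theta}}^{(\ell)}_{m,j}(\tilde{\bs{\theta}}^{(\ell)}_{m,j})^\top$ would incur; this is exactly why no correction term appears in Line~\ref{line:bpi_Z}.
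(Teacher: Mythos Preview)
Your proof is correct and follows essentially the same route as the paper's: both use the decomposition $\tilde{\bs{\theta}}^{(\ell)}_{m,j} = \bs{\theta}_m + (\bs{\Phi}^{(\ell)}_{m,j})^{+}\bs{\eta}^{(\ell)}_{m,j}$ together with the independence of the two draws $(\ell=1,2)$ to kill all noise terms in the cross-moment, leaving $\bs{\theta}_m\bs{\theta}_m^\top$. The paper carries this out by expanding the outer product into three matrix blocks and taking expectation directly, whereas you condition on contexts first and factorize---a slightly cleaner presentation of the same independence argument; your caveat about Gram-matrix invertibility is appropriate (the paper silently assumes it), though note that conditioning on contexts does not itself \emph{guarantee} invertibility---that comes from the event $\cK\cap\cL$, not from the conditioning step.
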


\begin{proof}[Proof of Lemma~\ref{lemma:bpi_expectation_Z}]
	$\bs{Z}$ can be written as
	\begin{align}
		\bs{Z} = & \frac{1}{M T} \sum_{m=1}^{M} \sum_{j=1}^{T} \tilde{\bs{\theta}}^{(1)}_{m,j} (\tilde{\bs{\theta}}^{(2)}_{m,j})^\top  
		\nonumber\\
		= & \frac{1}{M T} \sum_{m=1}^{M} \sum_{j=1}^{T} (\bs{\Phi}_{m,j}^{(1)})^{+} \begin{bmatrix}
			\alpha^{(1)}_{m,j,1}
			\\
			\vdots
			\\
			\alpha^{(1)}_{m,j,p} 
		\end{bmatrix}
		\mbr{\alpha^{(2)}_{m,j,1}, \dots, \alpha^{(2)}_{m,j,p}} ((\bs{\Phi}_{m,j}^{(2)})^{+})^{\top}
		\nonumber\\
		= & \frac{1}{M T} \sum_{m=1}^{M} \sum_{j=1}^{T} (\bs{\Phi}_{m,j}^{(1)})^{+} \Bigg( \!\!
		\begin{bmatrix}
			&\!\!\!\!\!\! \sbr{\bs{\phi}(s^{(1)}_{m,j,1},\bar{a}_1)^\top \bs{\theta}_m} \sbr{\bs{\phi}(s^{(2)}_{m,j,1},\bar{a}_1)^\top \bs{\theta}_m} &\!\!\!\!\!\!\dots\!\!\!\!\!\! &\sbr{\bs{\phi}(s^{(1)}_{m,j,1},\bar{a}_1)^\top \bs{\theta}_m} \sbr{\bs{\phi}(s^{(2)}_{m,j,p},\bar{a}_p)^\top \bs{\theta}_m}
			\\
			&\!\!\!\!\!\! \dots &\!\!\!\!\!\!\dots\!\!\!\!\!\! &\dots
			\\
			&\!\!\!\!\!\! \sbr{\bs{\phi}(s^{(1)}_{m,j,p},\bar{a}_p)^\top \bs{\theta}_m} \sbr{\bs{\phi}(s^{(2)}_{m,j,1},\bar{a}_1)^\top \bs{\theta}_m} &\!\!\!\!\!\!\dots\!\!\!\!\!\! &\sbr{\bs{\phi}(s^{(1)}_{m,j,p},\bar{a}_p)^\top \bs{\theta}_m} \sbr{\bs{\phi}(s^{(2)}_{m,j,p},\bar{a}_p)^\top \bs{\theta}_m}
		\end{bmatrix} 
		\nonumber\\& \hspace*{-2em} +
		\begin{bmatrix}
			&\!\!\!\!\!\! \bs{\phi}(s^{(1)}_{m,j,1},\bar{a}_1)^\top  \bs{\theta}_m \cdot \eta^{(2)}_{m,j,1} + \eta^{(1)}_{m,j,1} \cdot \bs{\phi}(s^{(2)}_{m,j,1},\bar{a}_1)^\top \bs{\theta}_m &\!\!\!\!\!\dots\!\!\!\!\! & \bs{\phi}(s^{(1)}_{m,j,1},\bar{a}_1)^\top  \bs{\theta}_m \cdot \eta^{(2)}_{m,j,p} + \eta^{(1)}_{m,j,1} \cdot \bs{\phi}(s^{(2)}_{m,j,p},\bar{a}_p)^\top \bs{\theta}_m
			\\
			&\!\!\!\!\!\! \dots &\!\!\!\!\!\dots\!\!\!\!\! &\dots
			\\
			&\!\!\!\!\!\! \bs{\phi}(s^{(1)}_{m,j,p},\bar{a}_p)^\top  \bs{\theta}_m \cdot \eta^{(2)}_{m,j,1} + \eta^{(1)}_{m,j,p} \cdot \bs{\phi}(s^{(2)}_{m,j,1},\bar{a}_1)^\top \bs{\theta}_m  & \!\!\!\!\!\dots\!\!\!\!\! & \bs{\phi}(s^{(1)}_{m,j,p},\bar{a}_p)^\top  \bs{\theta}_m \cdot \eta^{(2)}_{m,j,p} + \eta^{(1)}_{m,j,p} \cdot \bs{\phi}(s^{(2)}_{m,j,p},\bar{a}_p)^\top \bs{\theta}_m
		\end{bmatrix} 
		\nonumber\\& \hspace*{-2em} +
		\begin{bmatrix}
			&\!\!\!\!\!\! \eta^{(1)}_{m,j,1} \cdot \eta^{(2)}_{m,j,1} &\dots & \eta^{(1)}_{m,j,1} \cdot \eta^{(2)}_{m,j,p}
			\\
			&\!\!\!\!\!\! \dots &\dots &\dots
			\\
			&\!\!\!\!\!\! \eta^{(1)}_{m,j,p} \cdot \eta^{(2)}_{m,j,1} &\dots &\eta^{(1)}_{m,j,p} \cdot \eta^{(2)}_{m,j,p}
		\end{bmatrix} 
		\Bigg)
		((\bs{\Phi}_{m,j}^{(2)})^{+})^{\top} \label{eq:decompose_Z} .
	\end{align}
	
	For any task $m \in [M]$, $j \in [T]$, $i \in [p]$, the sample on action $a_i$ in the first round (i.e., $s^{(1)}_{m,j,i}$ and $\eta^{(1)}_{m,j,i}$) is independent of that in the second round (i.e., $s^{(2)}_{m,j,i}$ and $\eta^{(2)}_{m,j,i}$).  
	Hence, taking the expectation on $\bs{Z}$, we obtain
	
	\begin{align*}
		\ex[\bs{Z}] = & \frac{1}{M T} \sum_{m=1}^{M} \sum_{j=1}^{T} \ex \Bigg[ (\bs{\Phi}_{m,j}^{(1)})^{+} \cdot
		\\
		&
		\begin{bmatrix}
			&\sbr{\bs{\phi}(s^{(1)}_{m,j,1},\bar{a}_1)^\top \bs{\theta}_m} \sbr{\bs{\phi}(s^{(2)}_{m,j,1},\bar{a}_1)^\top \bs{\theta}_m} &\!\!\!\!\!\dots\!\!\!\!\! &\sbr{\bs{\phi}(s^{(1)}_{m,j,1},\bar{a}_1)^\top \bs{\theta}_m} \sbr{\bs{\phi}(s^{(2)}_{m,j,p},\bar{a}_p)^\top \bs{\theta}_m}
			\\
			&\dots &\!\!\!\!\!\dots\!\!\!\!\! &\dots
			\\
			&\sbr{\bs{\phi}(s^{(1)}_{m,j,p},\bar{a}_p)^\top \bs{\theta}_m} \sbr{\bs{\phi}(s^{(2)}_{m,j,1},\bar{a}_1)^\top \bs{\theta}_m} &\!\!\!\!\!\dots\!\!\!\!\! &\sbr{\bs{\phi}(s^{(1)}_{m,j,p},\bar{a}_p)^\top \bs{\theta}_m} \sbr{\bs{\phi}(s^{(2)}_{m,j,p},\bar{a}_p)^\top \bs{\theta}_m}
		\end{bmatrix}
		((\bs{\Phi}_{m,j}^{(2)})^{+})^{\top} \Bigg]
		\\
		= & \frac{1}{M T} \sum_{m=1}^{M} \sum_{j=1}^{T} \ex \Bigg[ ((\bs{\Phi}_{m,j}^{(1)})^{\top} \bs{\Phi}_{m,j}^{(1)})^{-1} (\bs{\Phi}_{m,j}^{(1)})^{\top}  
		\begin{bmatrix}
			\bs{\phi}(s^{(1)}_{m,j,1},\bar{a}_1)^\top \bs{\theta}_m
			\\
			\vdots
			\\
			\bs{\phi}(s^{(1)}_{m,j,p},\bar{a}_p)^\top \bs{\theta}_m
		\end{bmatrix}
		\mbr{ \bs{\phi}(s^{(2)}_{m,j,1},\bar{a}_1)^\top \bs{\theta}_m, \dots, \bs{\phi}(s^{(2)}_{m,j,p},\bar{a}_p)^\top \bs{\theta}_m }
		\cdot
		\\
		& \bs{\Phi}_{m,j}^{(2)} ((\bs{\Phi}_{m,j}^{(2)})^{\top} \bs{\Phi}_{m,j}^{(2)})^{-1} \Bigg]
		\\
		= & \frac{1}{M T} \sum_{m=1}^{M} \sum_{j=1}^{T} \ex \mbr{ ((\bs{\Phi}_{m,j}^{(1)})^{\top} \bs{\Phi}_{m,j}^{(1)})^{-1} (\bs{\Phi}_{m,j}^{(1)})^{\top} \cdot \bs{\Phi}_{m,j}^{(1)} \bs{\theta}_m
			(\bs{\theta}_m)^{\top} (\bs{\Phi}_{m,j}^{(2)})^{\top} \cdot
			\bs{\Phi}_{m,j}^{(2)} ((\bs{\Phi}_{m,j}^{(2)})^{\top} \bs{\Phi}_{m,j}^{(2)})^{-1} }
		\\
		= & \frac{1}{M T} \sum_{m=1}^{M} \sum_{j=1}^{T}  \bs{\theta}_m (\bs{\theta}_m)^\top
		\\
		= & \frac{1}{M} \sum_{m=1}^{M} \bs{\theta}_m \bs{\theta}_m^\top .
	\end{align*}
	
\end{proof}

Define event 
\begin{align*}
	\cG:= \lbr{ \nbr{\bs{Z} - \ex [\bs{Z}]} \leq \frac{ 256 (1+\zeta) L_{\phi} L_{\theta} \log\sbr{\frac{50d}{\delta}} }{\nu \sqrt{MT}} \log \sbr{\frac{100pMT}{\delta}} } .
\end{align*}

\begin{lemma}[Concentration of $Z$] \label{lemma:Z_est_error}
	Suppose that $\cK \cap \cL$ holds. Then, it holds that
	\begin{align*}
		\Pr \mbr{\cG} \geq 1-\frac{\delta}{5} .
	\end{align*}
\end{lemma}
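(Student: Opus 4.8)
The plan is to mirror the concentration argument for the best-arm-identification estimator (Lemma~\ref{lemma:Z_t_est_error}), while exploiting the double-sampling design to remove the variance-correction term and handling the fact that, unlike the fixed batch $\bs{X}_\batch$ there, the pre- and post-multiplying pseudoinverses $(\bs{\Phi}_{m,j}^{(\ell)})^+$ here are random and vary across $(m,j)$.

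First I would use the expansion of $\bs{Z}$ in Eq.~\eqref{eq:decompose_Z}, splitting each summand into a \emph{signal} part (the outer product of the noiseless reward vectors), a \emph{cross} part (signal times noise), and a \emph{noise--noise} part (entries $\eta^{(1)}_{m,j,i}\eta^{(2)}_{m,j,i'}$). The key simplification is that $(\bs{\Phi}_{m,j}^{(\ell)})^+\bs{\Phi}_{m,j}^{(\ell)}=\bs{I}_d$ is an exact left inverse (since $\bs{\Phi}_{m,j}^{(\ell)}$ has full column rank on $\cK\cap\cL$ by Lemma~\ref{lemma:norm_Phi_plus}), so the signal part of each summand collapses \emph{exactly} to $\bs{\theta}_m\bs{\theta}_m^\top$, with no dependence on the random contexts. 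Consequently the signal part averages to $\frac1M\sum_m\bs{\theta}_m\bs{\theta}_m^\top=\ex[\bs{Z}]$ (Lemma~\ref{lemma:bpi_expectation_Z}), leaving
$$
\bs{Z}-\ex[\bs{Z}]=\frac{1}{MT}\sum_{m=1}^{M}\sum_{j=1}^{T}(\bs{\Phi}_{m,j}^{(1)})^+\mbr{\bs{D}_{m,j}+\bs{N}_{m,j}}((\bs{\Phi}_{m,j}^{(2)})^+)^\top,
$$
where $\bs{D}_{m,j}$ and $\bs{N}_{m,j}$ collect the cross and noise--noise entries. Because the two samples per action are drawn independently, $\ex[\eta^{(1)}_{m,j,i}\eta^{(2)}_{m,j,i'}]=0$ even on the diagonal, so each summand has zero mean \emph{conditionally on the contexts} --- this is exactly why Line~\ref{line:bpi_Z} needs no correction term, in contrast to Line~\ref{line:bai_Z_t}.

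Next I would condition on all sampled contexts. Both $\cK$ and $\cL$ are measurable with respect to the contexts, so on $\cK\cap\cL$ Lemma~\ref{lemma:norm_Phi_plus} gives the deterministic bound $\nbr{(\bs{\Phi}_{m,j}^{(\ell)})^+}\le 2\sqrt{(1+\zeta)/(p\nu)}$, while the noises remain independent, mean-zero and $1$-sub-Gaussian. I would then treat the whole sandwiched term as the Bernstein summand $\bs{W}_{m,j}:=\frac{1}{MT}(\bs{\Phi}_{m,j}^{(1)})^+(\bs{D}_{m,j}+\bs{N}_{m,j})((\bs{\Phi}_{m,j}^{(2)})^+)^\top$ (folding the $(m,j)$-dependent pseudoinverses into each term rather than factoring them out), and apply the truncated matrix Bernstein inequality (Lemma~\ref{lemma:matrix_bernstein_tau}) exactly as in Lemma~\ref{lemma:Z_t_est_error}: truncate all noises at $R=\sqrt{2\log(\cdots)}$, bound $\nbr{\tilde{\bs{W}}_{m,j}}$ using $\nbr{(\bs{\Phi}^{(\ell)}_{m,j})^+}^2\le 4(1+\zeta)/(p\nu)$, $|\bs{\phi}(s,a)^\top\bs{\theta}_m|\le L_\phi L_\theta$ and the $p$ nonzero entries (the factor $p$ cancels the $1/p$ from the pseudoinverse norm, leaving the $L_\phi L_\theta(1+\zeta)/\nu$ scaling of the target bound), bound the per-term variance proxy, and control the truncation bias by the tail integral, setting the Bernstein failure probability to $\delta/5$. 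Since this yields the stated bound on $\nbr{\bs{Z}-\ex[\bs{Z}]}$ with conditional probability at least $1-\delta/5$ for every context realization in $\cK\cap\cL$, integrating over contexts gives $\Pr[\cG]\ge1-\delta/5$.

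The main obstacle is the randomness of the pseudoinverse factors: in the BAI proof the common fixed batch lets one pull $\bs{X}_\batch^+$ outside the sum and reduce everything to the fixed-coefficient matrices $\bs{A}_t,\bs{C}_t$, whereas here the $(\bs{\Phi}_{m,j}^{(\ell)})^+$ differ per $(m,j)$ and are correlated with the contexts appearing inside $\bs{D}_{m,j}$. The conditioning-on-contexts device is what resolves this, decoupling the (context-measurable) pseudoinverse bound of Lemma~\ref{lemma:norm_Phi_plus} from the fresh noise over which matrix Bernstein is applied; the remaining care is purely in tracking the truncation bias and variance proxies so that the final bound carries the advertised $\log$ factors and the correct $L_\phi L_\theta(1+\zeta)/\nu$ dependence.
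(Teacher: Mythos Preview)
Your proposal is correct and in fact slightly sharper than the paper's own argument. The paper decomposes $\bs{Z}-\ex[\bs{Z}]$ into three pieces $\bs{D},\bs{E},\bs{F}$ (signal--signal, signal--noise, noise--noise, each sandwiched by the per-round pseudoinverses) and applies a separate (truncated) matrix Bernstein bound to each of $\|\bs{D}-\ex[\bs{D}]\|$, $\|\bs{E}-\ex[\bs{E}]\|$, $\|\bs{F}-\ex[\bs{F}]\|$, using the deterministic bound $B_\Phi=2\sqrt{(1+\zeta)/(p\nu)}$ from Lemma~\ref{lemma:norm_Phi_plus} throughout. Your observation that $(\bs{\Phi}_{m,j}^{(\ell)})^{+}\bs{\Phi}_{m,j}^{(\ell)}=\bs{I}_d$ makes the signal part collapse \emph{exactly} to $\frac{1}{MT}\bs{\theta}_m\bs{\theta}_m^\top$ is correct, so the paper's $\bs{D}$ term is in fact identically equal to $\ex[\bs{D}]$ and its Bernstein step there is vacuous; you simply skip it. Your explicit conditioning on contexts is a cleaner justification for invoking the $B_\Phi$ bound (the paper uses it in the same way but less explicitly), and after conditioning the $\bs{W}_{m,j}$ are indeed independent mean-zero matrices in the noises, so matrix Bernstein applies. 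Combining the cross and noise--noise parts into a single summand (rather than the paper's split into $\bs{E}$ and $\bs{F}$) is harmless: the truncated norm bound picks up $pB_\Phi^2(L_\phi L_\theta R+R^2)/(MT)$, the $p$ cancels the $1/p$ in $B_\Phi^2$ exactly as you note, and the resulting bound has the same $(1+\zeta)L_\phi L_\theta/\nu$ scaling and logarithmic factors as the paper's $\cG$.
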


\begin{proof}[Proof of Lemma~\ref{lemma:Z_est_error}]
	Define the following matrices:
	\begin{align*}
		\bs{D}_{m,j} &:= \frac{1}{M T} (\bs{\Phi}_{m,j}^{(1)})^{+} \cdot
		\\
		&
		\begin{bmatrix}
			&\sbr{\bs{\phi}(s^{(1)}_{m,j,1},\bar{a}_1)^\top \bs{\theta}_m} \sbr{\bs{\phi}(s^{(2)}_{m,j,1},\bar{a}_1)^\top \bs{\theta}_m} &\dots &\sbr{\bs{\phi}(s^{(1)}_{m,j,1},\bar{a}_1)^\top \bs{\theta}_m} \sbr{\bs{\phi}(s^{(2)}_{m,j,p},\bar{a}_p)^\top \bs{\theta}_m}
			\\
			&\dots &\dots &\dots
			\\
			&\sbr{\bs{\phi}(s^{(1)}_{m,j,p},\bar{a}_p)^\top \bs{\theta}_m} \sbr{\bs{\phi}(s^{(2)}_{m,j,1},\bar{a}_1)^\top \bs{\theta}_m} &\dots &\sbr{\bs{\phi}(s^{(1)}_{m,j,p},\bar{a}_p)^\top \bs{\theta}_m} \sbr{\bs{\phi}(s^{(2)}_{m,j,p},\bar{a}_p)^\top \bs{\theta}_m}
		\end{bmatrix} ((\bs{\Phi}_{m,j}^{(2)})^{+})^\top
		\\
		\bs{D} &:= \sum_{m=1}^{M} \sum_{j=1}^{T} \bs{D}_{m,j}
		\\
		\bs{E}_{m,j} &:= \frac{1}{M T} (\bs{\Phi}_{m,j}^{(1)})^{+} \cdot
		\\
		&
		\hspace*{-1.8em}
		\begin{bmatrix}
			&\!\!\!\!\! \bs{\phi}(s^{(1)}_{m,j,1},\bar{a}_1)^\top  \bs{\theta}_m \cdot \eta^{(2)}_{m,j,1} + \eta^{(1)}_{m,j,1} \cdot \bs{\phi}(s^{(2)}_{m,j,1},\bar{a}_1)^\top \bs{\theta}_m &\!\!\!\!\!\dots\!\!\!\!\! & \bs{\phi}(s^{(1)}_{m,j,1},\bar{a}_1)^\top  \bs{\theta}_m \cdot \eta^{(2)}_{m,j,p} + \eta^{(1)}_{m,j,1} \cdot \bs{\phi}(s^{(2)}_{m,j,p},\bar{a}_p)^\top \bs{\theta}_m
			\\
			&\!\!\!\!\!\dots &\!\!\!\!\!\dots\!\!\!\!\! &\dots
			\\
			&\!\!\!\!\! \bs{\phi}(s^{(1)}_{m,j,p},\bar{a}_p)^\top  \bs{\theta}_m \cdot \eta^{(2)}_{m,j,1} + \eta^{(1)}_{m,j,p} \cdot \bs{\phi}(s^{(2)}_{m,j,1},\bar{a}_1)^\top \bs{\theta}_m  & \!\!\!\!\!\dots\!\!\!\!\! & \bs{\phi}(s^{(1)}_{m,j,p},\bar{a}_p)^\top  \bs{\theta}_m \cdot \eta^{(2)}_{m,j,p} + \eta^{(1)}_{m,j,p} \cdot \bs{\phi}(s^{(2)}_{m,j,p},\bar{a}_p)^\top \bs{\theta}_m
		\end{bmatrix} \! \cdot 
		\\
		& ((\bs{\Phi}_{m,j}^{(2)})^{+})^\top
		\\
		\bs{E} &:= \sum_{m=1}^{M} \sum_{j=1}^{T} \bs{E}_{m,j}
		\\
		\bs{F}_{m,j} &:= \frac{1}{M T} (\bs{\Phi}_{m,j}^{(1)})^{+}
		\begin{bmatrix}
			&\eta^{(1)}_{m,j,1} \cdot \eta^{(2)}_{m,j,1} &\dots & \eta^{(1)}_{m,j,1} \cdot \eta^{(2)}_{m,j,p}
			\\
			&\dots &\dots &\dots
			\\
			&\eta^{(1)}_{m,j,p} \cdot \eta^{(2)}_{m,j,1} &\dots &\eta^{(1)}_{m,j,p} \cdot \eta^{(2)}_{m,j,p}
		\end{bmatrix} ((\bs{\Phi}_{m,j}^{(2)})^{+})^\top
		\\
		\bs{F} &:= \sum_{m=1}^{M} \sum_{j=1}^{T} \bs{F}_{m,j}
	\end{align*}

	From Eq.~\eqref{eq:decompose_Z}, we can bound $\|\bs{Z} - \ex [\bs{Z}]\|$ as
	\begin{align}
		\nbr{\bs{Z} - \ex [\bs{Z}]} \leq & \nbr{\bs{D} - \ex[\bs{D}]} + \nbr{\bs{E} - \ex[\bs{E}]} + \nbr{\bs{F} - \ex[\bs{F}]} . \label{eq:decompose_Z_est_err}
	\end{align}
	
	Similar to the proof of Lemma~\ref{lemma:Z_t_est_error}, in order to use the truncated matrix Bernstein inequality (Lemma~\ref{lemma:matrix_bernstein_tau}), we define the truncated noise and some truncated matrices as follows.
	
	Let $R>0$ be a truncation parameter of noises which will be chosen later.  For any $m \in [M]$, $j \in [T]$, $i \in [p]$ and $\ell \in \{1,2\}$, let $\tilde{\eta}^{(\ell)}_{m,j,i}=\eta^{(\ell)}_{m,j,i} \mathbbm{1}\{|\eta^{(\ell)}_{m,j,i}| \leq R\}$ denote the truncated noise. Furthermore, we define the following matrices with truncated noises:
	\begin{align*}
		\tilde{\bs{E}}_{m,j} &:= \frac{1}{M T} (\bs{\Phi}_{m,j}^{(1)})^{+} \cdot
		\\
		&
		\hspace*{-1.8em} 
		\begin{bmatrix}
			&\!\!\!\!\! \bs{\phi}(s^{(1)}_{m,j,1},\bar{a}_1)^\top  \bs{\theta}_m \cdot \tilde{\eta}^{(2)}_{m,j,1} + \tilde{\eta}^{(1)}_{m,j,1} \cdot \bs{\phi}(s^{(2)}_{m,j,1},\bar{a}_1)^\top \bs{\theta}_m &\!\!\!\!\!\dots\!\!\!\!\! & \bs{\phi}(s^{(1)}_{m,j,1},\bar{a}_1)^\top  \bs{\theta}_m \cdot \tilde{\eta}^{(2)}_{m,j,p} + \tilde{\eta}^{(1)}_{m,j,1} \cdot \bs{\phi}(s^{(2)}_{m,j,p},\bar{a}_p)^\top \bs{\theta}_m
			\\
			&\!\!\!\!\!\dots &\!\!\!\!\!\dots\!\!\!\!\! &\dots
			\\
			&\!\!\!\!\! \bs{\phi}(s^{(1)}_{m,j,p},\bar{a}_p)^\top  \bs{\theta}_m \cdot \tilde{\eta}^{(2)}_{m,j,1} + \tilde{\eta}^{(1)}_{m,j,p} \cdot \bs{\phi}(s^{(2)}_{m,j,1},\bar{a}_1)^\top \bs{\theta}_m  & \!\!\!\!\!\dots\!\!\!\!\! & \bs{\phi}(s^{(1)}_{m,j,p},\bar{a}_p)^\top  \bs{\theta}_m \cdot \tilde{\eta}^{(2)}_{m,j,p} + \tilde{\eta}^{(1)}_{m,j,p} \cdot \bs{\phi}(s^{(2)}_{m,j,p},\bar{a}_p)^\top \bs{\theta}_m
		\end{bmatrix} \! \cdot
		\\
		& ((\bs{\Phi}_{m,j}^{(2)})^{+})^\top
		\\
		\tilde{\bs{E}} &:= \sum_{m=1}^{M} \sum_{j=1}^{T} \tilde{\bs{E}}_{m,j}
		\\
		\tilde{\bs{F}}_{m,j} &:= \frac{1}{M T} (\bs{\Phi}_{m,j}^{(1)})^{+}
		\begin{bmatrix}
			&\tilde{\eta}^{(1)}_{m,j,1} \cdot \tilde{\eta}^{(2)}_{m,j,1} &\dots & \tilde{\eta}^{(1)}_{m,j,1} \cdot \tilde{\eta}^{(2)}_{m,j,p}
			\\
			&\dots &\dots &\dots
			\\
			&\tilde{\eta}^{(1)}_{m,j,p} \cdot \tilde{\eta}^{(2)}_{m,j,1} &\dots &\tilde{\eta}^{(1)}_{m,j,p} \cdot \tilde{\eta}^{(2)}_{m,j,p}
		\end{bmatrix} ((\bs{\Phi}_{m,j}^{(2)})^{+})^\top
		\\
		\tilde{\bs{F}} &:= \sum_{m=1}^{M} \sum_{j=1}^{T} \tilde{\bs{F}}_{m,j}
	\end{align*}

	Recall that from Lemma~\ref{lemma:norm_Phi_plus}, we have that for any $m \in [M]$, $j \in [T]$ and $\ell \in \{1,2\}$, $\|(\bs{\Phi}_{m,j}^{(\ell)})^{+}\| \leq 2 \sqrt{\frac{(1+\zeta)}{p \nu}}$. Let $B_{\Phi}:=2 \sqrt{\frac{(1+\zeta)}{p \nu}}$.
	
	We first analyze $\|\bs{D}-\ex[\bs{D}]\|$. Since  $|\bs{\phi}(s^{(\ell)}_{m,j,i},\bar{a}_i)^\top \bs{\theta}_m| \leq L_{\phi}L_{\theta}$ for any $m \in [M]$, $j \in [T]$, $i \in [p]$ and $\ell \in \{1,2\}$, we have that $\|\bs{D}_{m,j}\| \leq \frac{1}{MT} \cdot p L_{\phi} L_{\theta} B_{\Phi}^2$ and $\| \sum_{m=1}^{M} \sum_{j=1}^{T} \ex[\bs{D}^2_{m,j}]\| \leq MT \cdot \frac{1}{M^2 T^2} \cdot p^2 L_{\phi}^2 L_{\theta}^2 B_{\Phi}^4 = \frac{1}{M T} \cdot p^2 L_{\phi}^2 L_{\theta}^2 B_{\Phi}^4$ for any $m \in [M]$ and $j \in [T]$. 
	
	Let $\delta' \in (0,1)$ be a confidence parameter which will be chosen later.
	Using the matrix Bernstein inequality (Lemma~\ref{lemma:matrix_bernstein_tau}), we have that with probability at least $1-\delta'$,
	
	\begin{align}
		\nbr{ \bs{D} - \ex[\bs{D}] } \leq & 4 \sqrt{ \frac{p^2 L_{\phi}^2 L_{\theta}^2 B_{\Phi}^4 \log \sbr{\frac{2d}{\delta'}}}{M T} } +  \frac{4 p L_{\phi} L_{\theta} B_{\Phi}^2 \log \sbr{\frac{2d}{\delta'}}}{MT} 
		\nonumber\\
		\leq &  \frac{8 \cdot 4 p L_{\phi} L_{\theta} B_{\Phi}^2 \log \sbr{\frac{2d}{\delta'}}}{\sqrt{MT}}  \label{eq:D_est_err}.
	\end{align}
	
	Next, we bound $\|\bs{E}-\ex[\bs{E}]\|$. 
	Since $|\bs{\phi}(s^{(\ell)}_{m,j,i},\bar{a}_i)^\top  \bs{\theta}_m| \leq L_{\phi} L_{\theta}$ and $|\tilde{\eta}^{(\ell)}_{m,j,i}| \leq R$ for any $m \in [M]$, $j \in [T]$, $i \in [p]$ and $\ell \in \{1,2\}$, we have that  $\|\tilde{\bs{E}}_{m,j}\| \leq \frac{1}{M T} \cdot 2pR L_{\phi} L_{\theta} B_{\Phi}^2$ and $\nbr{\sum_{m=1}^{M} \sum_{j=1}^{T} \ex [\tilde{\bs{E}}_{m,j}^2]} \leq \frac{1}{M T} \cdot 4p^2R^2 L_{\phi}^2 L_{\theta}^2 B_{\Phi}^4$ for any $m \in [M]$ and $j \in [T]$.

	Since $\eta^{(\ell)}_{m,j,i}$ is 1-sub-Gaussian for any $m \in [M]$, $j \in [T]$, $i \in [p]$ and $\ell \in \{1,2\}$, using a union bound over $i \in [p]$ and $\ell \in \{1,2\}$, we have that for any $m \in [M]$ and $j \in [T]$, with probability at least $1-4p\exp(-\frac{R^2}{2})$, $|\eta^{(\ell)}_{m,j,i}| \leq R$ for all $i \in [p]$ and $\ell \in \{1,2\}$, and thus, $\|\bs{E}_{m,j}\| \leq \frac{1}{M T} \cdot 2pR L_{\phi} L_{\theta} B_{\Phi}^2$. Then, we have
	\begin{align*}
		& \nbr{ \ex[\bs{E}_{m,j}]-\ex[\tilde{\bs{E}}_{m,j}] } 
		\nonumber\\
		\leq & \nbr{ \ex \mbr{\bs{E}_{m,j} \cdot \indicator{ \nbr{\bs{E}_{m,j}} \geq \frac{2pR L_{\phi} L_{\theta} B_{\Phi}^2}{M T} }} }
		\\
		\leq & \ex \mbr{ \nbr{\bs{E}_{m,j}} \cdot \indicator{ \nbr{\bs{E}_{m,j}} \geq \frac{2pR L_{\phi} L_{\theta} B_{\Phi}^2}{M T} } } 
		\\
		= & \ex\mbr{ \frac{2pR L_{\phi} L_{\theta} B_{\Phi}^2}{M T} \!\cdot\! \indicator{ \nbr{\bs{E}_{m,j}} \!\geq\! \frac{2pR L_{\phi} L_{\theta} B_{\Phi}^2}{M T} }} \!+\! \ex \mbr{\sbr{ \nbr{\bs{E}_{m,j}} \!-\! \frac{2pR L_{\phi} L_{\theta} B_{\Phi}^2}{M T}} \!\cdot\! \indicator{ \nbr{\bs{E}_{m,j}} \!\geq\! \frac{2pR L_{\phi} L_{\theta} B_{\Phi}^2}{M T} } } 
		\\
		= & \frac{2pR L_{\phi} L_{\theta} B_{\Phi}^2}{M T} \cdot \Pr\mbr{ \nbr{\bs{E}_{m,j}} \geq \frac{2pR L_{\phi} L_{\theta} B_{\Phi}^2}{M T} } + \int_0^{\infty}  \Pr \mbr{ \nbr{\bs{E}_{m,j}} - \frac{2pR L_{\phi} L_{\theta} B_{\Phi}^2}{M T} > x}  dx  
		\\
		\leq & \frac{2pR L_{\phi} L_{\theta} B_{\Phi}^2}{M T} \cdot 4p \cdot \exp\sbr{-\frac{R^2}{2}} + \frac{2p L_{\phi} L_{\theta} B_{\Phi}^2}{M T}  \int_R^{\infty} \Pr \mbr{ \nbr{\bs{E}_{m,j}} > \frac{2p L_{\phi} L_{\theta} B_{\Phi}^2 y}{M T} }  dy  
		\\
		\leq & \frac{2pR L_{\phi} L_{\theta} B_{\Phi}^2}{M T} \cdot 4p \cdot \exp\sbr{-\frac{R^2}{2}} + \frac{2p L_{\phi} L_{\theta} B_{\Phi}^2}{M T}  \int_R^{\infty} 4p \exp\sbr{-\frac{y^2}{2}}  dy
		\\
		\leq & \frac{2pR L_{\phi} L_{\theta} B_{\Phi}^2}{M T} \cdot 4p \cdot \exp\sbr{-\frac{R^2}{2}} + \frac{2p L_{\phi} L_{\theta} B_{\Phi}^2}{M T} \cdot 4p \cdot \frac{1}{R} \cdot \exp\sbr{-\frac{R^2}{2}} 
		\\
		= & \frac{2p L_{\phi} L_{\theta} B_{\Phi}^2}{M T} \cdot 4p \cdot \sbr{R+\frac{1}{R}} \exp\sbr{-\frac{R^2}{2}} .
	\end{align*}

	Using the truncated matrix Bernstein inequality (Lemma~\ref{lemma:matrix_bernstein_tau}) with $n=MT$, $R=\sqrt{2\log \sbr{\frac{4pMT}{\delta'}}}$, $U=\frac{2p L_{\phi} L_{\theta} B_{\Phi}^2 \sqrt{2\log \sbr{\frac{4pMT}{\delta'}}}}{MT}$, $\sigma^2=\frac{(2p L_{\phi} L_{\theta} B_{\Phi}^2 \sqrt{2\log \sbr{\frac{4pMT}{\delta'}}})^2}{MT}$, $\tau=4\sqrt{\frac{ (2p L_{\phi} L_{\theta} B_{\Phi}^2 \sqrt{2\log \sbr{\frac{4pMT}{\delta'}}})^2 \cdot \log\sbr{\frac{2d}{\delta'}}}{MT}}+\frac{4 \cdot 2p L_{\phi} L_{\theta} B_{\Phi}^2 \sqrt{2\log \sbr{\frac{4pMT}{\delta'}}} \cdot \log\sbr{\frac{2d}{\delta'}}}{M T}$ and $\Delta=\frac{2p L_{\phi} L_{\theta} B_{\Phi}^2 \cdot 2 \sqrt{2\log \sbr{\frac{4pMT}{\delta'}}}}{M T} \cdot \frac{\delta'}{M T}$, we have that with probability at least $1-2\delta'$, 
	\begin{align}
		\nbr{\bs{E} - \ex[\bs{E}]} \leq  \frac{8 \cdot 2p L_{\phi} L_{\theta} B_{\Phi}^2 \sqrt{2\log \sbr{\frac{4pMT}{\delta'}}} \cdot \log\sbr{\frac{2d}{\delta'}}}{\sqrt{M T}} . \label{eq:E_est_err}
	\end{align}
	
	Now we investigate $\nbr{\bs{F} - \ex[\bs{F}]}$. Since $|\tilde{\eta}^{(\ell)}_{m,j,i}| \leq R$ for any $m \in [M]$, $j \in [T]$, $i \in [p]$ and $\ell \in \{1,2\}$, we have that $\|\tilde{\bs{F}}_{m,j}\| \leq \frac{1}{M T} \cdot pR^2 B_{\Phi}^2$ and $\nbr{\sum_{m=1}^{M} \sum_{j=1}^{T} \ex \mbr{\tilde{\bs{F}}_{m,j}^2} } \leq \frac{1}{M T} \cdot p^2 R^4 B_{\Phi}^4$.
	%
	
	Recall that for any $m \in [M]$ and $j \in [T]$, with probability at least $1-4p\exp(-\frac{R^2}{2})$, $|\eta^{(\ell)}_{m,j,i}| \leq R$ for all $i \in [p]$ and $\ell \in \{1,2\}$, and thus, $\|\bs{F}_{m,j}\| \leq \frac{1}{M T} \cdot p B_{\Phi}^2 R^2 $. Then, we have
	\begin{align*}
		\nbr{ \ex[\bs{F}_{m,j}]-\ex[\tilde{\bs{F}}_{m,j}] } \leq & \nbr{ \ex \mbr{\bs{F}_{m,j} \cdot \indicator{ \nbr{\bs{F}_{m,j}} \geq \frac{p B_{\Phi}^2 R^2}{M T} }} }
		\\
		\leq & \ex \mbr{ \nbr{\bs{F}_{m,j}} \cdot \indicator{ \nbr{\bs{F}_{m,j}} \geq \frac{p B_{\Phi}^2 R^2}{M T} } } 
		\\
		= & \ex\mbr{ \frac{p B_{\Phi}^2 R^2}{M T} \cdot \indicator{ \nbr{\bs{F}_{m,j}} \geq \frac{p B_{\Phi}^2 R^2}{M T} }} + \mbr{\sbr{ \nbr{\bs{F}_{m,j}} - \frac{p B_{\Phi}^2 R^2}{M T}} \cdot \indicator{ \nbr{\bs{F}_{m,j}} \geq \frac{p B_{\Phi}^2 R^2}{M T} } } 
		\\
		= & \frac{p B_{\Phi}^2 R^2}{M T} \cdot \Pr\mbr{ \nbr{\bs{F}_{m,j}} \geq \frac{p B_{\Phi}^2 R^2}{M T} } + \int_0^{\infty}  \Pr \mbr{ \nbr{\bs{F}_{m,j}} - \frac{p B_{\Phi}^2 R^2}{M T} > x}  dx  
		\\
		\leq & \frac{p B_{\Phi}^2 R^2}{M T} \cdot 4p \cdot \exp\sbr{-\frac{R^2}{2}} + \frac{2p B_{\Phi}^2}{M T}  \int_R^{\infty} \bs{y} \cdot \Pr \mbr{ \nbr{\bs{F}_{m,j}} > \frac{p B_{\Phi}^2 y^2}{M T} } dy  
		\\
		\leq & \frac{p B_{\Phi}^2 R^2}{M T} \cdot 4p \cdot \exp\sbr{-\frac{R^2}{2}} + \frac{2p B_{\Phi}^2}{M T}  \int_R^{\infty}  \bs{y} \cdot 4p \exp\sbr{-\frac{y^2}{2}} dy
		\\
		\leq & \frac{p B_{\Phi}^2 R^2}{M T} \cdot 4p \cdot \exp\sbr{-\frac{R^2}{2}} + \frac{2p B_{\Phi}^2}{M T} \cdot 4p \cdot \exp\sbr{-\frac{R^2}{2}} 
		\\
		= & \frac{p B_{\Phi}^2}{M T} \cdot 4p \cdot \sbr{R^2+2} \exp\sbr{-\frac{R^2}{2}} .
	\end{align*}

	Using the truncated matrix Bernstein inequality (Lemma~\ref{lemma:matrix_bernstein_tau}) with $n=MT$, $R=\sqrt{2\log \sbr{\frac{4pMT}{\delta'}}}$, $U=\frac{p B_{\Phi}^2 \cdot 2\log \sbr{\frac{4pMT}{\delta'}}}{MT}$, $\sigma^2=\frac{(p B_{\Phi}^2 \cdot 2\log \sbr{\frac{4pMT}{\delta'}})^2}{MT}$, $\tau=4\sqrt{\frac{ (p B_{\Phi}^2 \cdot 2\log \sbr{\frac{4pMT}{\delta'}})^2 \cdot \log\sbr{\frac{2d}{\delta'}}}{MT}}+\frac{4\cdot p B_{\Phi}^2 \cdot 2\log \sbr{\frac{4pMT}{\delta'}} \cdot \log\sbr{\frac{2d}{\delta'}}}{M T}$ and $\Delta=\frac{p B_{\Phi}^2 \cdot 2 \cdot 2\log \sbr{\frac{4pMT}{\delta'}} }{M T} \cdot \frac{\delta'}{M T}$, we have that with probability at least $1-2\delta'$, 
	\begin{align}
		\nbr{\bs{F} - \ex\mbr{\bs{F}}} \leq \frac{8 \cdot p B_{\Phi}^2 \cdot 2\log \sbr{\frac{4pMT}{\delta'}} \cdot \log\sbr{\frac{2d}{\delta'}}}{\sqrt{MT}} \label{eq:F_est_err} .
	\end{align}
	
	Plugging Eqs.~\eqref{eq:D_est_err}-\eqref{eq:F_est_err} into Eq.~\eqref{eq:decompose_Z_est_err}, we have that with probability at least $1-5\delta'$,
	\begin{align*}
		\nbr{\bs{Z} - \ex [\bs{Z}]} \leq & \nbr{\bs{D} - \ex \mbr{\bs{D}}} + \nbr{\bs{E} - \ex \mbr{\bs{E}}} + \nbr{\bs{F} - \ex \mbr{\bs{F}}}
		\\
		\leq & \frac{64 p L_{\phi} L_{\theta} B_{\Phi}^2 \log \sbr{\frac{4pMT}{\delta'}} \log\sbr{\frac{2d}{\delta'}}}{\sqrt{MT}} .
	\end{align*}
	
	Let $\delta'=\frac{\delta}{25}$. Recall that $B_{\Phi}:=2 \sqrt{\frac{(1+\zeta)}{p \nu}}$. Then, we obtain that with probability at least $1-\frac{\delta}{5}$,
	\begin{align*}
		\nbr{\bs{Z} - \ex [\bs{Z}]} \leq \frac{ 256 (1+\zeta) L_{\phi} L_{\theta} \log\sbr{\frac{50d}{\delta}} }{\nu \sqrt{MT}} \log \sbr{\frac{100pMT}{\delta}} ,
	\end{align*}
	which implies that
	$\Pr[\cG]\geq 1-\frac{\delta}{5}$.
\end{proof}

According to Assumption~\ref{assumption:diverse_task}, there exists an absolute constant $c_0$ which satisfies that $\sigma_{\min}(\frac{1}{M} \sum_{m=1}^{M} \bs{w}_m \bs{w}_m^\top) = \sigma_{\min}(\frac{1}{M} \sum_{m=1}^{M} \bs{\theta}_m \bs{\theta}_m^\top) \geq \frac{c_0}{k}$.

\begin{lemma}[Concentration of $\hat{\bs{B}}$] \label{lemma:concentration_hat_B_clb}
	Suppose that event $\cG$ holds. Then, 
	\begin{align*}
		\nbr{\hat{\bs{B}}_{\bot}^\top \bs{B}} \leq \frac{ 2048 (1+\zeta) k L_{\phi} L_{\theta} \log\sbr{\frac{50d}{\delta}} }{c_0 \nu \sqrt{MT}} \log \sbr{\frac{ 135 (1+\zeta) d L_{\phi} M T}{\nu \delta} } .
	\end{align*}
	Furthermore, if
	\begin{align}
		%
		T = \left \lceil \frac{68 \cdot 2048^2 \cdot 96^2 (1+\zeta)^2 k^4 L_{\phi}^4 L_{\theta}^2 L_{w}^2 }{ c_0^2 \nu^2 \varepsilon^2 M } \log^6 \sbr{ \frac{ 2048 \cdot 135 \cdot 96 \cdot 50 \cdot 5 (1+\zeta)^2 k^2 d^2 L_{\phi}^3 L_{\theta} L_{w} N }{c_0 \nu^2 \delta^3 \varepsilon} } \right \rceil , \label{eq:value_T}
	\end{align}
	we have 
	\begin{align*}
		\nbr{\hat{\bs{B}}_{t,\bot}^\top \bs{B}} \leq \frac{\varepsilon}{ 96 k \log \sbr{\frac{5N}{\delta}} L_{\phi} L_{w} } .
	\end{align*}
\end{lemma}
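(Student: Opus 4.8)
The plan is to mirror the argument of Lemma~\ref{lemma:concentration_B_hat_t} from the $\probbai$ analysis, now using the contextual estimator $\bs{Z}$ together with its concentration guarantee on event $\cG$. First I would pin down the spectral gap of the population matrix $\ex[\bs{Z}]$. By Lemma~\ref{lemma:bpi_expectation_Z}, $\ex[\bs{Z}] = \frac{1}{M}\sum_{m=1}^M \bs{\theta}_m\bs{\theta}_m^\top$, and since every $\bs{\theta}_m = \bs{B}\bs{w}_m$ lies in the $k$-dimensional column space of $\bs{B}$, the matrix $\ex[\bs{Z}]$ has rank at most $k$; hence $\sigma_{k+1}(\ex[\bs{Z}]) = 0$, while Assumption~\ref{assumption:diverse_task} gives $\sigma_k(\ex[\bs{Z}]) = \sigma_{\min}(\frac{1}{M}\sum_{m=1}^M \bs{\theta}_m\bs{\theta}_m^\top) \geq \frac{c_0}{k}$. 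Thus the relevant gap $\sigma_k(\ex[\bs{Z}]) - \sigma_{k+1}(\ex[\bs{Z}])$ is at least $\frac{c_0}{k}$.

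Next, on event $\cG$ and assuming $T$ large enough that $\nbr{\bs{Z} - \ex[\bs{Z}]} \leq \frac{c_0}{2k}$, I would invoke the Davis--Kahan $\sin\theta$ theorem exactly as in Lemma~\ref{lemma:concentration_B_hat_t} to obtain $\nbr{\hat{\bs{B}}_{\bot}^\top \bs{B}} \leq \frac{\nbr{\bs{Z}-\ex[\bs{Z}]}}{c_0/k - \nbr{\bs{Z}-\ex[\bs{Z}]}} \leq \frac{2k}{c_0}\nbr{\bs{Z}-\ex[\bs{Z}]}$. Substituting the concentration bound from the definition of $\cG$ (Lemma~\ref{lemma:Z_est_error}) yields a bound of order $\frac{k L_{\phi} L_{\theta}(1+\zeta)\log(50d/\delta)}{c_0\nu\sqrt{MT}}\log(100pMT/\delta)$. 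To reach the stated form I would substitute the explicit value of $p$ from Lemma~\ref{lemma:number_of_samples_p} into the argument $\log(100pMT/\delta)$ and coarsen it to $\log(\frac{135(1+\zeta)dL_{\phi} MT}{\nu\delta})$, absorbing the remaining numerical and logarithmic slack into the stated constant.

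For the second claim, I would solve the inequality $\frac{2048(1+\zeta)k L_{\phi} L_{\theta} \log(50d/\delta)}{c_0\nu\sqrt{MT}}\log(\frac{135(1+\zeta)dL_{\phi} MT}{\nu\delta}) \leq \frac{\varepsilon}{96k\log(5N/\delta)L_{\phi} L_{w}}$ for $T$. Because the left-hand side contains a $\log(\cdots T \cdots)$ factor that itself depends on $T$, this is not a plain $T \gtrsim 1/\sqrt{MT}$ inequality; I would apply the same technical tool used in the proof of Lemma~\ref{lemma:concentration_B_hat_t} with $\kappa := \frac{\varepsilon}{96k\log(5N/\delta)L_{\phi} L_{w}}$, which converts the self-referential constraint into the explicit threshold of Eq.~\eqref{eq:value_T}. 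There, squaring the leading constant produces the $2048^2\cdot 96^2$ factor, squaring the parameter-dependent factors produces $k^4 L_{\phi}^4 L_{\theta}^2 L_{w}^2$, the $\frac{68}{\cdots}$ prefactor is inherited from the technical lemma, and the $\log^6$ power collects $\log^2(50d/\delta)$, $\log^2(5N/\delta)$, and the doubled logarithm of the $T$-dependent argument.

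The main obstacle will be the bookkeeping in the second step: correctly handling the $T$-dependent logarithm so that the derived threshold is self-consistent, and carefully substituting the value of $p$ while tracking how $\nu$, $L_{\phi}$, $L_{\theta}$, and $L_{w}$ propagate (using $L_{\theta}\leq L_{w}$ where convenient) so that the final constants and the $\log^6$ power precisely match Eq.~\eqref{eq:value_T}. The probabilistic content is light --- it is entirely inherited from event $\cG$ --- so the remaining work is deterministic matrix-perturbation analysis and algebra.
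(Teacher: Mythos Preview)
Your proposal is correct and follows essentially the same route as the paper: Davis--Kahan with the spectral gap $\sigma_k(\ex[\bs{Z}])-\sigma_{k+1}(\ex[\bs{Z}])\geq c_0/k$, substitution of the bound from event $\cG$, insertion of the explicit value of $p$ into the $\log(100pMT/\delta)$ term, and then an application of Lemma~\ref{lemma:technical_tool_bai_stage2} with $\kappa=\frac{\varepsilon}{96k\log(5N/\delta)L_{\phi}L_{w}}$ to extract the threshold on $T$. Your accounting for the $\log^6$ power as the product of three $\log^2$ factors matches the paper's ``further enlarging $MT$'' step.
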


\begin{proof}[Proof of Lemma~\ref{lemma:concentration_hat_B_clb}]
	First, we have that $\sigma_{k}(\ex[\bs{Z}]) - \sigma_{k+1}(\ex[\bs{Z}])=\sigma_{\min}( \frac{1}{M} \sum_{m=1}^{M} \bs{\theta}_m \bs{\theta}_m^\top )\geq \frac{c_0}{k}$. Let $p := \lceil 32^2 (1+\zeta)^2 \nu^{-2} L_{\phi}^4 \log^2 \sbr{\frac{40dMT}{\delta}} \rceil$.
	Then, using the Davis-Kahan sin $\theta$ Theorem~\cite{bhatia2013matrix} and letting $T_t$ be large enough to satisfy that $\nbr{\bs{Z} - \ex[\bs{Z}]} \leq \frac{c_0}{2k}$, we have
	\begin{align*}
		\nbr{\hat{\bs{B}}_{t,\bot}^\top \bs{B}} \leq & \frac{ \nbr{\bs{Z} - \ex[\bs{Z}]} }{ \sigma_{k}(\ex[\bs{Z}]) - \sigma_{k+1}(\ex[\bs{Z}]) - \nbr{\bs{Z} - \ex[\bs{Z}]} }
		\\
		\leq & \frac{2k}{c_0} \nbr{\bs{Z} - \ex[\bs{Z}]} 
		\\
		\leq & \frac{ 512 (1+\zeta) k L_{\phi} L_{\theta} \log\sbr{\frac{50d}{\delta}} }{c_0 \nu \sqrt{MT}} \log \sbr{\frac{100pMT}{\delta}} 
		\\
		\leq & \frac{ 512 (1+\zeta) k L_{\phi} L_{\theta} \log\sbr{\frac{50d}{\delta}} }{c_0 \nu \sqrt{MT}} \log \sbr{\frac{100MT}{\delta} \cdot \frac{2 \cdot 32^2 (1+\zeta)^2 L_{\phi}^4}{\nu^2} \log^2 \sbr{\frac{40dMT}{\delta}}} 
		\\
		\leq & \frac{ 512 (1+\zeta) k L_{\phi} L_{\theta} \log\sbr{\frac{50d}{\delta}} }{c_0 \nu \sqrt{MT}} \log \sbr{\frac{ 2 \cdot 100 \cdot 32^2 \cdot 40^2 (1+\zeta)^2 d^2 L_{\phi}^4 M^3 T^3}{\nu^2 \delta^3} } 
		\\
		\leq & \frac{ 2048 (1+\zeta) k L_{\phi} L_{\theta} \log\sbr{\frac{50d}{\delta}} }{c_0 \nu \sqrt{MT}} \log \sbr{\frac{ 135 (1+\zeta) d L_{\phi} M T}{\nu \delta} } .
	\end{align*}
	
	Using Lemma~\ref{lemma:technical_tool_bai_stage2} with $A=2048 (1+\zeta) k c_0^{-1} \nu^{-1} L_{\phi} L_{\theta} \log\sbr{\frac{50d}{\delta}}$, $B=\frac{ 135 (1+\zeta) d L_{\phi} }{\nu \delta}$ and $\kappa=\frac{\varepsilon}{ 96 k \log \sbr{\frac{5N}{\delta}} L_{\phi} L_{w}}$, we have that if 
	\begin{align*}
		M T \geq & \frac{68 \cdot 2048^2 \cdot 96^2 (1+\zeta)^2 k^4 L_{\phi}^4 L_{\theta}^2 L_{w}^2 }{ c_0^2 \nu^2 \varepsilon^2 } \cdot \\& \log^2 \sbr{\frac{50d}{\delta}} \log^2 \sbr{\frac{5N}{\delta}}  \log^2 \sbr{ \frac{ 2048 \cdot 135 \cdot 96 (1+\zeta)^2 k^2 d L_{\phi}^3 L_{\theta} L_{w} }{c_0 \nu^2 \delta \varepsilon} \log\sbr{\frac{50d}{\delta}} \log \sbr{\frac{5N}{\delta}} } ,
	\end{align*}
	then $\nbr{\hat{\bs{B}}_{t,\bot}^\top \bs{B}} \leq \frac{\varepsilon}{ 96 k \log \sbr{\frac{5N}{\delta}} L_{\phi} L_{w} }$.
	
	Further enlarging $MT$, if 
	\begin{align*}
		M T \geq \frac{68 \cdot 2048^2 \cdot 96^2 (1+\zeta)^2 k^4 L_{\phi}^4 L_{\theta}^2 L_{w}^2 }{ c_0^2 \nu^2 \varepsilon^2 } \log^6 \sbr{ \frac{ 2048 \cdot 135 \cdot 96 \cdot 50 \cdot 5 (1+\zeta)^2 k^2 d^2 L_{\phi}^3 L_{\theta} L_{w} N }{c_0 \nu^2 \delta^3 \varepsilon} } ,
	\end{align*}
	then
	\begin{align*}
		\nbr{\hat{\bs{B}}_{t,\bot}^\top \bs{B}} \leq \frac{\varepsilon}{ 96 k \log \sbr{\frac{5N}{\delta}} L_{\phi} L_{w} } .
	\end{align*}
\end{proof}

\subsection{Estimation with Low-dimensional Representations}

\begin{lemma} \label{lemma:computation_logdet}
	In subroutine $\algestlowrep$ (Algorithm~\ref{alg:est_low_rep}), for any $m \in [M]$ and $t > 0$, we have 
	\begin{align*}
		\log \sbr{ \frac{\det\sbr{ \gamma I + \sum_{\tau=1}^{t} \hat{\bs{B}}^\top \bs{\phi}(s_{m,\tau},a_{m,\tau}) \bs{\phi}(s_{m,\tau},a_{m,\tau})^\top \hat{\bs{B}} } }{ \det \sbr{ \gamma I} } } \leq k \log \sbr{ 1 + \frac{ t  }{\gamma k} } .
	\end{align*}
\end{lemma}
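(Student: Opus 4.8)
The plan is to treat the determinant ratio as an elliptical-potential quantity and bound it by the standard arithmetic--geometric-mean (AM-GM) / trace--determinant inequality, exactly as in the classical analysis of linear bandits. First I would fix a task $m$ and abbreviate $\bs{\psi}_\tau := \hat{\bs{B}}^\top \bs{\phi}(s_{m,\tau},a_{m,\tau}) \in \R^k$ and $\bs{V}_{t} := \gamma I + \sum_{\tau=1}^{t} \bs{\psi}_\tau \bs{\psi}_\tau^\top$, so that $\bs{V}_t$ is a $k \times k$ symmetric positive definite matrix and the quantity to bound is precisely $\log\big(\det(\bs{V}_t)/\det(\gamma I)\big)$. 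Note that the dimension reduction is what makes the bound scale with $k$ rather than $d$: the outer products $\bs{\psi}_\tau \bs{\psi}_\tau^\top$ live in $\R^{k \times k}$.

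The key tool is the inequality $\det(\bs{M}) \le (\trace(\bs{M})/k)^k$, valid for every $k \times k$ positive semidefinite $\bs{M}$; it follows from applying AM-GM to the nonnegative eigenvalues $\sigma_1,\dots,\sigma_k$ of $\bs{M}$, since $\det(\bs{M}) = \prod_{i} \sigma_i \le \big(\frac{1}{k}\sum_i \sigma_i\big)^k = (\trace(\bs{M})/k)^k$. Applying this to $\bs{V}_t$ gives $\det(\bs{V}_t) \le (\trace(\bs{V}_t)/k)^k$. I would then control the trace by linearity: $\trace(\bs{V}_t) = \gamma k + \sum_{\tau=1}^t \trace(\bs{\psi}_\tau \bs{\psi}_\tau^\top) = \gamma k + \sum_{\tau=1}^t \nbr{\bs{\psi}_\tau}^2$. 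Since $\hat{\bs{B}}$ has orthonormal columns we have $\nbr{\hat{\bs{B}}} = 1$, hence $\nbr{\bs{\psi}_\tau}^2 = \nbr{\hat{\bs{B}}^\top \bs{\phi}(s_{m,\tau},a_{m,\tau})}^2 \le \nbr{\bs{\phi}(s_{m,\tau},a_{m,\tau})}^2 \le 1$ under the normalization of the features, so that $\trace(\bs{V}_t) \le \gamma k + t$.

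Combining the two estimates yields $\det(\bs{V}_t) \le (\gamma + t/k)^k$, while $\det(\gamma I) = \gamma^k$, so taking logarithms gives $\log\big(\det(\bs{V}_t)/\det(\gamma I)\big) \le k \log\!\big((\gamma + t/k)/\gamma\big) = k\log\!\big(1 + t/(\gamma k)\big)$, which is exactly the claim. There is no genuine obstacle in this lemma: the statement is a deterministic, purely linear-algebraic fact and every step is routine. The only points requiring minor care are (i) verifying the AM-GM/trace--determinant inequality on the eigenvalues, and (ii) the per-step feature bound $\nbr{\hat{\bs{B}}^\top \bs{\phi}} \le 1$, which relies on the orthonormality of $\hat{\bs{B}}$ together with the feature normalization; if one instead only assumes $\nbr{\bs{\phi}} \le L_{\phi}$, the identical argument produces $k\log\!\big(1 + t L_{\phi}^2/(\gamma k)\big)$, and the stated form corresponds to the normalized regime $L_{\phi} \le 1$.
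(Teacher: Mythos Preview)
Your proposal is correct and follows essentially the same approach as the paper's proof: both apply the AM--GM/trace--determinant inequality $\det(\bs{M}) \le (\trace(\bs{M})/k)^k$ to $\bs{V}_t$, expand the trace by linearity, and bound each $\nbr{\hat{\bs{B}}^\top \bs{\phi}(s_{m,\tau},a_{m,\tau})}^2$ by $1$. Your remark that the general feature bound $\nbr{\bs{\phi}}\le L_\phi$ would yield $k\log(1+tL_\phi^2/(\gamma k))$ is also accurate; the paper's final inequality implicitly uses $\nbr{\hat{\bs{B}}^\top \bs{\phi}}\le 1$ just as you note.
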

\begin{proof}[Proof of Lemma~\ref{lemma:computation_logdet}]
	This proof uses a similar idea as Lemma 11 in \cite{abbasi2011improved}.
	
	It holds that
	\begin{align*}
		& \log \sbr{ \frac{\det\sbr{ \gamma I + \sum_{\tau=1}^{t} \hat{\bs{B}}^\top \bs{\phi}(s_{m,\tau},a_{m,\tau}) \bs{\phi}(s_{m,\tau},a_{m,\tau})^\top \hat{\bs{B}} } }{ \det \sbr{ \gamma I} } } 
		\\
		\leq & \log \sbr{ \frac{\sbr{ \frac{ \trace \sbr{\gamma I + \sum_{\tau=1}^{t} \hat{\bs{B}}^\top \bs{\phi}(s_{m,\tau},a_{m,\tau}) \bs{\phi}(s_{m,\tau},a_{m,\tau})^\top \hat{\bs{B}}} }{k}}^{k} }{  \gamma^{k} } } 
		\\
		= & k \log \sbr{ \frac{ \trace \sbr{\gamma I} + \sum_{\tau=1}^{t} \trace \sbr{ \hat{\bs{B}}^\top \bs{\phi}(s_{m,\tau},a_{m,\tau}) \bs{\phi}(s_{m,\tau},a_{m,\tau})^\top \hat{\bs{B}}} }{\gamma k} } 
		\\
		= & k \log \sbr{ \frac{ \gamma k  + \sum_{\tau=1}^{t} \nbr{\hat{\bs{B}}^\top \bs{\phi}(s_{m,\tau},a_{m,\tau})}^2  }{\gamma k} } 
		\\
		\leq & k \log \sbr{ 1 + \frac{ t  }{\gamma k} } .
	\end{align*}
\end{proof}

\begin{lemma} \label{lemma:decreasing_uncertainty}
	In subroutine $\algestlowrep$ (Algorithm~\ref{alg:est_low_rep}), for any $m \in [M]$ and $t \geq 0$, we have 
	\begin{align*}
		\ex_{s \sim \cD} \mbr{\max_{a \in \cA} \nbr{\hat{\bs{B}}^\top \bs{\phi}(s,a)}_{\bs{\Sigma}_{m,t}^{-1}} } \geq \ex_{s \sim \cD} \mbr{\max_{a \in \cA} \nbr{\hat{\bs{B}}^\top \bs{\phi}(s,a)}_{\bs{\Sigma}_{m,t+1}^{-1}} } .
	\end{align*}
\end{lemma}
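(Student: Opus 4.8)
The plan is to establish the inequality pointwise in $(s,a)$ and then lift it through the two monotone operations $\max_{a \in \cA}$ and $\ex_{s \sim \cD}$. The whole statement reduces to the standard fact that matrix inversion reverses the Loewner order.

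First I would unfold the covariance update in Line~\ref{line:bpi_stage3_sample} of Algorithm~\ref{alg:est_low_rep}, which gives
\begin{align*}
	\bs{\Sigma}_{m,t+1} = \bs{\Sigma}_{m,t} + \hat{\bs{B}}^\top \bs{\phi}(s_{m,t+1},a_{m,t+1}) \bs{\phi}(s_{m,t+1},a_{m,t+1})^\top \hat{\bs{B}} .
\end{align*}
The added term is a rank-one positive semi-definite matrix, so $\bs{\Sigma}_{m,t+1} \succeq \bs{\Sigma}_{m,t}$. Since $\bs{\Sigma}_{m,0}=\gamma I \succ 0$ with $\gamma \geq 1$, every $\bs{\Sigma}_{m,t}$ is positive definite and hence invertible. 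I would then invoke the operator-antitone property of inversion on the positive-definite cone: $\bs{\Sigma}_{m,t+1} \succeq \bs{\Sigma}_{m,t} \succ 0$ implies $\bs{\Sigma}_{m,t+1}^{-1} \preceq \bs{\Sigma}_{m,t}^{-1}$.

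Next, for any fixed context-action pair $(s,a) \in \cS \times \cA$, I would set $\bs{v}:=\hat{\bs{B}}^\top \bs{\phi}(s,a)$ and read off the Loewner inequality in quadratic form, namely $\bs{v}^\top \bs{\Sigma}_{m,t+1}^{-1} \bs{v} \leq \bs{v}^\top \bs{\Sigma}_{m,t}^{-1} \bs{v}$, which is exactly
\begin{align*}
	\nbr{\hat{\bs{B}}^\top \bs{\phi}(s,a)}_{\bs{\Sigma}_{m,t+1}^{-1}} \leq \nbr{\hat{\bs{B}}^\top \bs{\phi}(s,a)}_{\bs{\Sigma}_{m,t}^{-1}} .
\end{align*}
Because this holds for every $a \in \cA$, taking the maximum over $a$ on both sides preserves it, and the resulting inequality between the two suprema holds for every $s$; finally taking $\ex_{s \sim \cD}$ (monotone in the integrand) yields the claimed inequality.

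I do not expect a genuine obstacle here: the argument is a direct application of Loewner monotonicity together with the elementary fact that $\max$ and expectation preserve pointwise inequalities. The only points requiring a word of care are confirming positive definiteness so that all inverses are well defined (guaranteed by $\gamma \geq 1$ and the PSD increments), and noting that the maximizing action may differ across $s$ and across $t$, which is harmless since we only use the pointwise-in-$a$ bound before taking suprema.
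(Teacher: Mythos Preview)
Your proposal is correct and follows essentially the same approach as the paper's own proof: both argue $\bs{\Sigma}_{m,t+1} \succeq \bs{\Sigma}_{m,t}$ from the rank-one PSD update, invert the Loewner order, and then pass the resulting pointwise quadratic-form inequality through $\max_{a \in \cA}$ and $\ex_{s\sim\cD}$. Your version is slightly more explicit about positive definiteness and about the maximizer possibly varying with $s$ and $t$, but the argument is the same.
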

\begin{proof}[Proof of Lemma~\ref{lemma:decreasing_uncertainty}]
	This proof is similar to that of Lemma 6 in \cite{zanette2021design}.
	
	For any $m \in [M]$ and $t \geq 0$, since $\bs{\Sigma}_{m,t+1} \succeq \bs{\Sigma}_{m,t}$, we have $\bs{\Sigma}_{m,t}^{-1} \succeq \bs{\Sigma}_{m,t+1}^{-1}$. Hence, for any $m \in [M]$, $t \geq 0$, $s \in \cS$ and $a \in \cA$, we have
	\begin{align*}
		\bs{\phi}(s,a)^\top \hat{\bs{B}} \bs{\Sigma}_{m,t}^{-1} \hat{\bs{B}}^\top \bs{\phi}(s,a) \geq \bs{\phi}(s,a)^\top \hat{\bs{B}} \bs{\Sigma}_{m,t+1}^{-1} \hat{\bs{B}}^\top \bs{\phi}(s,a) ,
	\end{align*}
	which implies that
	\begin{align*}
		\nbr{\hat{\bs{B}}^\top \bs{\phi}(s,a)}_{\bs{\Sigma}_{m,t}^{-1}} \geq \nbr{\hat{\bs{B}}^\top \bs{\phi}(s,a)}_{\bs{\Sigma}_{m,t+1}^{-1}} .
	\end{align*}
	
	Therefore, for any $m \in [M]$ and $t \geq 0$, we have
	\begin{align*}
		\ex_{s \sim \cD} \mbr{\max_{a \in \cA} \nbr{\hat{\bs{B}}^\top \bs{\phi}(s,a)}_{\bs{\Sigma}_{m,t}^{-1}} } \geq \ex_{s \sim \cD} \mbr{\max_{a \in \cA} \nbr{\hat{\bs{B}}^\top \bs{\phi}(s,a)}_{\bs{\Sigma}_{m,t+1}^{-1}} } .
	\end{align*}
\end{proof}

In subroutine $\algestlowrep$, for any $m \in [M]$ and $t>0$, let $\xi_{m,t}$ denote the noise of the sample at timestep $t$ for task $m$ (Line~\ref{line:bpi_stage3_sample} in Algorithm~\ref{alg:est_low_rep}).

Define event 
\begin{align*}
	\cH := \Bigg\{ &
	\nbr{\sum_{\tau=1}^{t} \hat{\bs{B}}^\top \bs{\phi}(s_{m,\tau},a_{m,\tau}) \xi_{m,\tau}}_{\sbr{\gamma I + \sum_{\tau=1}^{t} \hat{\bs{B}}^\top \bs{\phi}(s_{m,\tau},a_{m,\tau}) \bs{\phi}(s_{m,\tau},a_{m,\tau})^\top \hat{\bs{B}} }^{-1}} \leq \\&  k \log \sbr{ 1 + \frac{ t  }{\gamma k} } + 2 \log\sbr{\frac{5}{\delta}}, \ \forall m \in [M],\ \forall t>0 \Bigg\} . 
\end{align*}

\begin{lemma}[Martingale Concentration of the Variance Term] \label{lemma:martingale_concentration_variance}
	It holds that
	\begin{align*}
		\Pr \mbr{\cH} \geq 1-\frac{\delta}{5} .
	\end{align*}
\end{lemma}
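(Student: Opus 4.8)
The plan is to recognize the event $\cH$ as a simultaneous \emph{self-normalized} martingale tail bound on the reduced-dimensional feature sequence, and to invoke the vector-valued concentration inequality of \citet{abbasi2011improved} together with the determinant bound already established in Lemma~\ref{lemma:computation_logdet}.

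First I would fix a task $m \in [M]$ and set up the filtration. Let $\bs{\psi}_{m,\tau} := \hat{\bs{B}}^\top \bs{\phi}(s_{m,\tau}, a_{m,\tau}) \in \R^k$ denote the reduced feature sampled at timestep $\tau$, and let $\cF_{m,\tau}$ be the $\sigma$-algebra generated by all contexts, actions and rewards up to timestep $\tau$ together with the next context $s_{m,\tau+1}$. Because $\hat{\bs{B}}$ is fixed before $\algestlowrep$ begins (it is the output of $\algconfeatrecover$) and $a_{m,\tau}$ is chosen greedily as $\argmax_{a \in \cA} \nbr{\hat{\bs{B}}^\top \bs{\phi}(s_{m,\tau},a)}_{\bs{\Sigma}_{m,\tau-1}^{-1}}$ from the observed context $s_{m,\tau}$ and the past data, the vector $\bs{\psi}_{m,\tau}$ is $\cF_{m,\tau-1}$-measurable, while the reward noise $\xi_{m,\tau}$ is zero-mean and $1$-sub-Gaussian conditioned on $\cF_{m,\tau-1}$. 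Thus $\{\bs{\psi}_{m,\tau} \xi_{m,\tau}\}_{\tau}$ is exactly a martingale-difference sequence of the type required by the self-normalized bound.

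Next I would apply Theorem~1 of \citet{abbasi2011improved} with regularizer $\gamma I \in \R^{k \times k}$ and $\bar{\bs{V}}_{m,t} := \gamma I + \sum_{\tau=1}^{t} \bs{\psi}_{m,\tau} \bs{\psi}_{m,\tau}^\top = \bs{\Sigma}_{m,t}$. This gives, for any fixed $\delta' \in (0,1)$, that with probability at least $1-\delta'$, simultaneously for all $t \geq 0$,
$$
\nbr{\sum_{\tau=1}^{t} \bs{\psi}_{m,\tau} \xi_{m,\tau}}^2_{\bs{\Sigma}_{m,t}^{-1}} \leq 2 \log\sbr{ \frac{\det(\bs{\Sigma}_{m,t})^{1/2} \det(\gamma I)^{-1/2}}{\delta'} } = \log\sbr{\frac{\det(\bs{\Sigma}_{m,t})}{\det(\gamma I)}} + 2\log\sbr{\frac{1}{\delta'}} .
$$
I would then bound the log-determinant ratio by Lemma~\ref{lemma:computation_logdet}, namely $\log(\det(\bs{\Sigma}_{m,t})/\det(\gamma I)) \leq k\log(1 + t/(\gamma k))$, so that the squared norm is at most $k\log(1 + t/(\gamma k)) + 2\log(1/\delta')$. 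Since this quantity exceeds $1$, taking square roots and using $\sqrt{x} \le x$ for $x \ge 1$ converts the squared-norm bound into the unsquared norm bound appearing in $\cH$. Finally, choosing $\delta' = \delta/(5M)$ and taking a union bound over the $M$ tasks yields $\Pr[\cH] \geq 1 - \delta/5$; the extra $\log M$ produced by the union bound is absorbed into the logarithmic term (and hence into the $\polylog$ factors used elsewhere).

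The main obstacle is the measurability/predictability argument: one must confirm that the greedy, data-adaptive action rule keeps $\bs{\psi}_{m,\tau}$ predictable and $\xi_{m,\tau}$ conditionally sub-Gaussian with respect to the chosen filtration, so that the hypotheses of the self-normalized inequality are met verbatim. Once this structural step is in place, everything else is a direct substitution of Lemma~\ref{lemma:computation_logdet} and a routine union bound over tasks.
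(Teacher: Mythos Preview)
Your approach is essentially identical to the paper's: fix $\hat{\bs{B}}$ (it is determined before any sample in $\algestlowrep$ is drawn), apply the self-normalized martingale bound of \citet{abbasi2011improved} to the sequence $\{\hat{\bs{B}}^\top\bs{\phi}(s_{m,\tau},a_{m,\tau}),\,\xi_{m,\tau}\}_\tau$, and replace the log-determinant ratio by $k\log(1+t/(\gamma k))$ via Lemma~\ref{lemma:computation_logdet}. Your explicit discussion of the filtration and of the squared-to-unsquared conversion $\sqrt{x}\le x$ for $x\ge1$ only makes precise two steps the paper performs silently.

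The one place you differ from the paper is the confidence budget: the paper sets $\delta'=\delta/5$ and asserts the bound simultaneously for all $m\in[M]$ without an explicit union bound, whereas you (correctly) take $\delta'=\delta/(5M)$ and union-bound over tasks. Note, however, that your choice yields $2\log(5M/\delta)$ rather than the $2\log(5/\delta)$ that appears in the definition of $\cH$, so strictly speaking you prove a slightly weaker event than the stated $\cH$. Your remark that the extra $\log M$ is absorbed into $\polylog$ factors downstream is true at the level of the final sample-complexity theorem, but it does not establish $\cH$ verbatim; the paper's own proof sidesteps this by simply not writing the union bound. Either way this is a bookkeeping discrepancy, not a gap in the argument.
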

\begin{proof}[Proof of Lemma~\ref{lemma:martingale_concentration_variance}]
	Let $\delta'$ be a confidence parameter which will be chosen later.
	Since $\hat{\bs{B}}$ is fixed before sampling $(s_{m,\tau},a_{m,\tau})$ for all $m \in [M]$ and $\tau>0$, using Lemma~\ref{lemma:self-normalized_vector_concentration}, we have that with probability at least $1-\delta'$, for any task $m \in [M]$ and $t>0$,
	\begin{align*}
		& \nbr{\sum_{\tau=1}^{t} \hat{\bs{B}}^\top \bs{\phi}(s_{m,\tau},a_{m,\tau}) \xi_{m,j}}_{\sbr{\gamma I + \sum_{\tau=1}^{t} \hat{\bs{B}}^\top \bs{\phi}(s_{m,\tau},a_{m,\tau}) \bs{\phi}(s_{m,\tau},a_{m,\tau})^\top \hat{\bs{B}} }^{-1}} 
		\\
		\leq & 2 \log \sbr{ \frac{\det\sbr{ \gamma I + \sum_{\tau=1}^{t} \hat{\bs{B}}^\top \bs{\phi}(s_{m,\tau},a_{m,\tau}) \bs{\phi}(s_{m,\tau},a_{m,\tau})^\top \hat{\bs{B}} }^{\frac{1}{2}} }{ \det \sbr{ \gamma I}^{\frac{1}{2}} \cdot \delta'} } 
		\\
		\leq & \log \sbr{ \frac{\det\sbr{ \gamma I + \sum_{\tau=1}^{t} \hat{\bs{B}}^\top \bs{\phi}(s_{m,\tau},a_{m,\tau}) \bs{\phi}(s_{m,\tau},a_{m,\tau})^\top \hat{\bs{B}} } }{ \det \sbr{ \gamma I} } } + 2 \log\sbr{\frac{1}{\delta'}}
		\\
		\overset{\textup{(a)}}{\leq} & k \log \sbr{ 1 + \frac{ t  }{\gamma k} } + 2 \log\sbr{\frac{1}{\delta'}} ,
	\end{align*}
	where inequality (a) uses Lemma~\ref{lemma:computation_logdet}.
	
	Letting $\delta'=\frac{\delta}{5}$, we obtain this lemma.
\end{proof}

Define event 
\begin{align*}
	\cJ := \Bigg\{ & \sum_{t=1}^{N} \ex_{s \sim \cD} \mbr{\max_{a \in \cA} \nbr{\hat{\bs{B}}^\top \bs{\phi}(s,a)}_{\bs{\Sigma}_{t-1}^{-1}} } \leq \\& \frac{1}{4} \sbr{ 2\sqrt{\log \sbr{\frac{5}{\delta}}} + \sqrt{ 4 \log \sbr{\frac{5}{\delta}} + 4 \sbr{ \sum_{t=1}^{N} \max_{a \in \cA} \nbr{\hat{\bs{B}}^\top \bs{\phi}(s_t,a) }_{\bs{\Sigma}_{t-1}^{-1}} + 2 \log \sbr{\frac{5}{\delta}} } } }^2 \Bigg\} .
\end{align*}

\begin{lemma} \label{lemma:reverse_bernstein_uncertainty}
	It holds that
	\begin{align*}
		\Pr \mbr{\cJ} \geq 1-\frac{\delta}{5} .
	\end{align*}
\end{lemma}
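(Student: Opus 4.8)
The plan is to recognize the left-hand side in the definition of $\cJ$ as the cumulative \emph{conditional mean} of a martingale difference sequence, to recognize $\sum_{t=1}^{N}\max_{a\in\cA}\nbr{\hat{\bs{B}}^\top\bs{\phi}(s_t,a)}_{\bs{\Sigma}_{t-1}^{-1}}$ as the cumulative \emph{realized} value of that same sequence, and then to apply a self-bounding (Freedman-type) martingale concentration and invert the resulting quadratic. First I would let $\cF_{t-1}$ be the $\sigma$-algebra generated by all contexts, actions and rewards observed strictly before timestep $t$ in subroutine $\algestlowrep$, so that $\bs{\Sigma}_{t-1}$ is $\cF_{t-1}$-measurable while $s_t\sim\cD$ is drawn fresh and is independent of $\cF_{t-1}$. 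Writing $X_t:=\ex_{s\sim\cD}[\max_{a\in\cA}\nbr{\hat{\bs{B}}^\top\bs{\phi}(s,a)}_{\bs{\Sigma}_{t-1}^{-1}}]$ and $Y_t:=\max_{a\in\cA}\nbr{\hat{\bs{B}}^\top\bs{\phi}(s_t,a)}_{\bs{\Sigma}_{t-1}^{-1}}$, the independence of $s_t$ gives $\ex[Y_t\mid\cF_{t-1}]=X_t$, so $(Y_t-X_t)_t$ is a martingale difference sequence with respect to $(\cF_t)_t$, and the event $\cJ$ is precisely a one-sided control of $\sum_t X_t$ by $\sum_t Y_t$.

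Second I would control the range and conditional variance of this sequence. Since $\gamma\geq1$ we have $\bs{\Sigma}_{t-1}\succeq\gamma I\succeq I$ and hence $\bs{\Sigma}_{t-1}^{-1}\preceq I$, which bounds each $Y_t$ (and therefore $X_t$) by a fixed constant; more importantly, nonnegativity of $Y_t$ together with this bound yields the self-bounding estimate $\ex[(Y_t-X_t)^2\mid\cF_{t-1}]\leq\ex[Y_t^2\mid\cF_{t-1}]\leq c\,\ex[Y_t\mid\cF_{t-1}]=c\,X_t$, so the predictable quadratic variation of $\sum_t(X_t-Y_t)$ is dominated by $c\sum_t X_t$. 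Applying a Freedman-type Bernstein inequality for this martingale with the variance proxy $c\sum_t X_t$ then gives, with probability at least $1-\tfrac{\delta}{5}$, a bound of the form $\sum_{t=1}^{N}X_t-\sum_{t=1}^{N}Y_t\leq 2\sqrt{\log(5/\delta)\sum_{t=1}^{N}X_t}+2\log(5/\delta)$.

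Finally I would invert this self-referential bound. Setting $A:=\sum_{t=1}^{N}X_t$, $B:=\sum_{t=1}^{N}Y_t$ and $L:=\log(5/\delta)$, the previous display reads $A\leq B+2\sqrt{LA}+2L$, i.e.\ $x^2-2\sqrt{L}\,x-(B+2L)\leq0$ for $x:=\sqrt{A}$. Solving the quadratic gives $x\leq\sqrt{L}+\sqrt{3L+B}$, hence $A\leq(\sqrt{L}+\sqrt{3L+B})^2$. A direct algebraic simplification shows this is exactly the right-hand side in the definition of $\cJ$: since $4\log(5/\delta)+4(\sum_t Y_t+2\log(5/\delta))=4(3L+B)$, we get $\tfrac14\big(2\sqrt{L}+\sqrt{4(3L+B)}\big)^2=(\sqrt{L}+\sqrt{3L+B})^2$. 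Thus the event in $\cJ$ holds on the same high-probability event, yielding $\Pr[\cJ]\geq1-\tfrac{\delta}{5}$.

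The main obstacle is the second step: obtaining the self-bounding variance estimate with the \emph{correct absolute constants}. One must verify that the realized uncertainties are bounded by the normalization coming from $\gamma\geq1$ so that $\ex[Y_t^2\mid\cF_{t-1}]\leq c\,X_t$ holds with the specific constant $c$ that reproduces the factors $2$ and $3$ appearing in $\cJ$, and then select the precise martingale Bernstein inequality whose variance and bias terms reproduce exactly the stated expression rather than merely a bound of the same order; everything else (the martingale setup and the quadratic inversion) is routine.
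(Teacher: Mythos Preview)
Your proposal is correct and is essentially a derivation from first principles of the very lemma the paper cites as a black box. The paper's proof is one line: apply Lemma~\ref{lemma:reverse_bernstein} (the reverse Bernstein inequality for martingales, Theorem~3 of \cite{zanette2021design}), which states exactly the bound appearing in $\cJ$ for any $[0,1]$-valued adapted process. Your filtration setup, the identification $X_t=\ex[Y_t\mid\cF_{t-1}]$, the self-bounding variance estimate, and the quadratic inversion are precisely the ingredients of the proof of that cited result; so you are not taking a genuinely different route, but rather unpacking the black box. Your worry about reproducing the exact constants disappears once you simply quote the packaged form, as the paper does.

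One small point worth flagging: the cited reverse Bernstein lemma requires $0\le Y_t\le 1$. Your observation $\bs{\Sigma}_{t-1}^{-1}\preceq I$ yields only $Y_t\le\|\hat{\bs{B}}^\top\bs{\phi}(s_t,a)\|\le L_{\phi}$, not $1$; the paper's one-line proof also does not explicitly verify this hypothesis. It is harmless up to absolute constants (rescale, or take $\gamma\ge L_{\phi}^2$), but strictly speaking neither argument checks it.
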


\begin{proof}[Proof of Lemma~\ref{lemma:reverse_bernstein_uncertainty}]
	Using Lemma~\ref{lemma:reverse_bernstein}, we can obtain this lemma.
\end{proof}

\begin{lemma} \label{lemma:number_of_samples_N}
	Suppose that event $\cK \cap \cL \cap \cG \cap \cH \cap \cJ$ holds. For any task $m \in [M]$, we have
	\begin{align*}
		\ex_{s \sim \cD} & \mbr{\max_{a \in \cA} \abr{\bs{\phi}(s,a)^\top \sbr{ \hat{\bs{\theta}}_{m,N} - \bs{\theta}_m}}} \leq  \sbr{2\sqrt{ \frac{2 k \log \sbr{ 1 + \frac{ N }{\gamma k} }}{N} } + \frac{8 \log \sbr{\frac{5}{\delta}} }{N}} \cdot \\& \sbr{ \nbr{\hat{\bs{B}}_{\perp}^\top \bs{B}} \sqrt{Nk} +  \sqrt{k \log \sbr{ 1 + \frac{ N  }{\gamma k} } + 2 \log\sbr{\frac{5}{\delta}}} + \sqrt{\gamma} } + \nbr{\hat{\bs{B}}_{\bot}^\top \bs{B}} .
	\end{align*}
	Furthermore, if 
	\begin{align*}
		%
		N = \left \lceil \frac{4^2 \cdot 26^4 \cdot 24^2 \cdot 2 \sbr{k^2 + k \gamma L_{\theta}^2 } \log^4 \big({\frac{240 ( k + \sqrt{k \gamma} L_{\theta}) }{\varepsilon \delta}} \big) }{\varepsilon^2} \right \rceil , 
	\end{align*}
	then
	\begin{align*}
		\ex_{s \sim \cD} \mbr{\max_{a \in \cA} \abr{\bs{\phi}(s,a)^\top \sbr{ \hat{\bs{\theta}}_{m,N} - \bs{\theta}_m}}} \leq \frac{\varepsilon}{2}
	\end{align*}
\end{lemma}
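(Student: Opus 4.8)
The plan is to bound $\ex_{s \sim \cD}[\max_{a} \abr{\bs{\phi}(s,a)^\top(\hat{\bs{\theta}}_{m,N} - \bs{\theta}_m)}]$ by decomposing the per-context, per-action prediction error into four pieces (ridge bias, cross-subspace bias, noise variance, and out-of-subspace leakage), and then controlling the resulting ``average final uncertainty'' $\ex_{s\sim\cD}[\max_a \nbr{\hat{\bs{B}}^\top\bs{\phi}(s,a)}_{\bs{\Sigma}_{m,N}^{-1}}]$ through the reward-free exploration machinery encoded in events $\cH$ and $\cJ$. First I would substitute $r_{m,\tau}=\bs{\phi}(s_{m,\tau},a_{m,\tau})^\top\bs{\theta}_m+\xi_{m,\tau}$ into the ridge estimator $\hat{\bs{w}}_{m,N}=\bs{\Sigma}_{m,N}^{-1}\sum_\tau \hat{\bs{B}}^\top\bs{\phi}_\tau r_{m,\tau}$ (writing $\bs{\phi}_\tau:=\bs{\phi}(s_{m,\tau},a_{m,\tau})$), split $\bs{\theta}_m=(\hat{\bs{B}}\hat{\bs{B}}^\top+\hat{\bs{B}}_{\bot}\hat{\bs{B}}_{\bot}^\top)\bs{\theta}_m$, and use the identity $\sum_\tau \hat{\bs{B}}^\top\bs{\phi}_\tau\bs{\phi}_\tau^\top\hat{\bs{B}}=\bs{\Sigma}_{m,N}-\gamma I$ to obtain $\hat{\bs{w}}_{m,N}-\hat{\bs{B}}^\top\bs{\theta}_m=-\gamma\bs{\Sigma}_{m,N}^{-1}\hat{\bs{B}}^\top\bs{\theta}_m+\bs{\Sigma}_{m,N}^{-1}\sum_\tau \hat{\bs{B}}^\top\bs{\phi}_\tau\,\bs{\phi}_\tau^\top\hat{\bs{B}}_{\bot}\hat{\bs{B}}_{\bot}^\top\bs{\theta}_m+\bs{\Sigma}_{m,N}^{-1}\sum_\tau\hat{\bs{B}}^\top\bs{\phi}_\tau\xi_{m,\tau}$. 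Since $\hat{\bs{\theta}}_{m,N}-\bs{\theta}_m=\hat{\bs{B}}(\hat{\bs{w}}_{m,N}-\hat{\bs{B}}^\top\bs{\theta}_m)-\hat{\bs{B}}_{\bot}\hat{\bs{B}}_{\bot}^\top\bs{\theta}_m$, each in-subspace piece of $\bs{\phi}(s,a)^\top(\hat{\bs{\theta}}_{m,N}-\bs{\theta}_m)$ factors, by Cauchy--Schwarz in the $\bs{\Sigma}_{m,N}^{-1}$ norm, through $\nbr{\hat{\bs{B}}^\top\bs{\phi}(s,a)}_{\bs{\Sigma}_{m,N}^{-1}}$.

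The three bracketed contributions are then controlled separately: the ridge term gives $\sqrt{\gamma}L_\theta$ because $\bs{\Sigma}_{m,N}\succeq\gamma I$; the variance term gives $\sqrt{k\log(1+N/(\gamma k))+2\log(5/\delta)}$ directly from event $\cH$ (whose preconditioning matrix is exactly $\bs{\Sigma}_{m,N}$); and the cross-subspace term gives $\nbr{\hat{\bs{B}}_{\bot}^\top\bs{B}}L_\phi L_w\sqrt{Nk}$ after the triangle inequality, the scalar bound $\abr{\bs{\phi}_\tau^\top\hat{\bs{B}}_{\bot}\hat{\bs{B}}_{\bot}^\top\bs{\theta}_m}\le L_\phi L_w\nbr{\hat{\bs{B}}_{\bot}^\top\bs{B}}$, Cauchy--Schwarz, and the trace identity $\sum_\tau\nbr{\hat{\bs{B}}^\top\bs{\phi}_\tau}^2_{\bs{\Sigma}_{m,N}^{-1}}=\trace(I-\gamma\bs{\Sigma}_{m,N}^{-1})\le k$. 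The leftover out-of-subspace term $\bs{\phi}(s,a)^\top\hat{\bs{B}}_{\bot}\hat{\bs{B}}_{\bot}^\top\bs{\theta}_m$ is bounded directly by $\nbr{\hat{\bs{B}}_{\bot}^\top\bs{B}}$ (absorbing $L_\phi L_w$). Taking $\max_a$ and $\ex_{s\sim\cD}$ yields the product form $\ex_{s\sim\cD}[\max_a\nbr{\hat{\bs{B}}^\top\bs{\phi}(s,a)}_{\bs{\Sigma}_{m,N}^{-1}}]\cdot(\text{bracket})+\nbr{\hat{\bs{B}}_{\bot}^\top\bs{B}}$.

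The crux is showing the average final uncertainty is $O(\sqrt{k\log(\cdot)/N})$. Because the algorithm plays $a_{m,t}=\argmax_a\nbr{\hat{\bs{B}}^\top\bs{\phi}(s_{m,t},a)}_{\bs{\Sigma}_{m,t-1}^{-1}}$, the elliptical-potential argument together with Lemma~\ref{lemma:computation_logdet} and Cauchy--Schwarz gives $\sum_{t=1}^N\max_a\nbr{\hat{\bs{B}}^\top\bs{\phi}(s_{m,t},a)}_{\bs{\Sigma}_{m,t-1}^{-1}}\le\sqrt{2Nk\log(1+N/(\gamma k))}$. I would then use monotonicity (Lemma~\ref{lemma:decreasing_uncertainty}) to write $N\,\ex_{s\sim\cD}[\max_a\nbr{\cdot}_{\bs{\Sigma}_{m,N}^{-1}}]\le\sum_{t=1}^N\ex_{s\sim\cD}[\max_a\nbr{\cdot}_{\bs{\Sigma}_{m,t-1}^{-1}}]$, and bound the latter sum of \emph{population} uncertainties by a self-bounding function of the \emph{empirical} sum via event $\cJ$ (Lemma~\ref{lemma:reverse_bernstein_uncertainty}). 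Simplifying that quadratic with $(x+y)^2\le 2x^2+2y^2$ gives $\ex_{s\sim\cD}[\max_a\nbr{\hat{\bs{B}}^\top\bs{\phi}(s,a)}_{\bs{\Sigma}_{m,N}^{-1}}]\le 2\sqrt{2k\log(1+N/(\gamma k))/N}+8\log(5/\delta)/N$, the exact first factor of the claim. For the second statement I would insert $\nbr{\hat{\bs{B}}_{\bot}^\top\bs{B}}\le\varepsilon/(96k\log(5N/\delta)L_\phi L_w)$ from Lemma~\ref{lemma:concentration_hat_B_clb} and balance the surviving terms: the cross-subspace term cancels the $\sqrt{k/N}$ prefactor to leave $O(\varepsilon)$, the variance term forces $N\gtrsim k^2\polylog/\varepsilon^2$, and the ridge term forces $N\gtrsim k\gamma L_\theta^2\polylog/\varepsilon^2$, so the stated choice of $N$ makes the total at most $\varepsilon/2$. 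The main obstacle is precisely this reward-free step: transferring a guarantee on the empirical elliptical potential at the \emph{visited} contexts to a guarantee on the expected final-round uncertainty at a \emph{fresh} context runs concentration ``the wrong way,'' so it must route through the uncertainty monotonicity and the reverse-Bernstein self-bounding inequality of event $\cJ$, and keeping the dimension and logarithmic factors tight through this chain (and through the subsequent balancing of $N$) is the delicate part.
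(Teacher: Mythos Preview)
Your proposal is correct and follows essentially the same approach as the paper's proof: the same four-term decomposition of $\hat{\bs{w}}_{m,N}-\hat{\bs{B}}^\top\bs{\theta}_m$, the same Cauchy--Schwarz factoring through $\nbr{\hat{\bs{B}}^\top\bs{\phi}(s,a)}_{\bs{\Sigma}_{m,N}^{-1}}$, the same use of event $\cH$, Lemma~\ref{lemma:sqrt_n_k_gamma} (your trace identity is exactly its proof), and $\bs{\Sigma}_{m,N}\succeq\gamma I$ for the three bracketed pieces, and then the same reward-free chain (monotonicity $\to$ event $\cJ$ $\to$ elliptical potential with Lemma~\ref{lemma:computation_logdet} $\to$ $(x+y)^2\le 2x^2+2y^2$) to control the expected final-round uncertainty. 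The second statement is likewise handled identically, via Lemma~\ref{lemma:concentration_hat_B_clb} and the log-balancing Lemma~\ref{lemma:technical_tool_log_N_square}.
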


\begin{proof}[Proof of Lemma~\ref{lemma:number_of_samples_N}]
	For any task $m \in [M]$ and $t \in [N]$,
	\begin{align*}
		\hat{\bs{w}}_{m,t} = & \bs{\Sigma}_{m,t}^{-1} \sum_{\tau=1}^{t} \hat{\bs{B}}^\top \bs{\phi}(s_{m,\tau},a_{m,\tau}) r_{m,\tau}
		\\
		= & \sbr{\gamma I + \sum_{\tau=1}^{t} \hat{\bs{B}}^\top \bs{\phi}(s_{m,\tau},a_{m,\tau}) \bs{\phi}(s_{m,\tau},a_{m,\tau})^\top \hat{\bs{B}} }^{-1} \sum_{\tau=1}^{t} \hat{\bs{B}}^\top \bs{\phi}(s_{m,\tau},a_{m,\tau}) \sbr{ \bs{\phi}(s_{m,\tau},a_{m,\tau})^\top \bs{\theta}_m + \xi_{m,j} }
		\\
		= & \sbr{\gamma I + \sum_{\tau=1}^{t} \hat{\bs{B}}^\top \bs{\phi}(s_{m,\tau},a_{m,\tau}) \bs{\phi}(s_{m,\tau},a_{m,\tau})^\top \hat{\bs{B}} }^{-1} \sum_{\tau=1}^{t} \hat{\bs{B}}^\top \bs{\phi}(s_{m,\tau},a_{m,\tau}) \cdot \\& \sbr{ \bs{\phi}(s_{m,\tau},a_{m,\tau})^\top \hat{\bs{B}} \hat{\bs{B}}^\top \bs{\theta}_m + \bs{\phi}(s_{m,\tau},a_{m,\tau})^\top \hat{\bs{B}}_{\perp} \hat{\bs{B}}_{\perp}^\top \bs{\theta}_m + \xi_{m,j} }
		\\& + \gamma \sbr{\gamma I + \sum_{\tau=1}^{t} \hat{\bs{B}}^\top \bs{\phi}(s_{m,\tau},a_{m,\tau}) \bs{\phi}(s_{m,\tau},a_{m,\tau})^\top \hat{\bs{B}} }^{-1} \hat{\bs{B}}^\top \bs{\theta}_m 
		\\& - \gamma \sbr{\gamma I + \sum_{\tau=1}^{t} \hat{\bs{B}}^\top \bs{\phi}(s_{m,\tau},a_{m,\tau}) \bs{\phi}(s_{m,\tau},a_{m,\tau})^\top \hat{\bs{B}} }^{-1} \hat{\bs{B}}^\top \bs{\theta}_m
		\\
		= & \hat{\bs{B}}^\top \bs{\theta}_m + \sbr{\gamma I + \sum_{\tau=1}^{t} \hat{\bs{B}}^\top \bs{\phi}(s_{m,\tau},a_{m,\tau}) \bs{\phi}(s_{m,\tau},a_{m,\tau})^\top \hat{\bs{B}} }^{-1} \cdot
		\\&
		\sum_{\tau=1}^{t} \hat{\bs{B}}^\top \bs{\phi}(s_{m,\tau},a_{m,\tau})  \bs{\phi}(s_{m,\tau},a_{m,\tau})^\top \hat{\bs{B}}_{\perp} \hat{\bs{B}}_{\perp}^\top \bs{B} \bs{w}_m
		\\&
		+ \sbr{\gamma I + \sum_{\tau=1}^{t} \hat{\bs{B}}^\top \bs{\phi}(s_{m,\tau},a_{m,\tau}) \bs{\phi}(s_{m,\tau},a_{m,\tau})^\top \hat{\bs{B}} }^{-1} \sum_{\tau=1}^{t} \hat{\bs{B}}^\top \bs{\phi}(s_{m,\tau},a_{m,\tau}) \xi_{m,j}
		\\& 
		- \gamma \sbr{\gamma I + \sum_{\tau=1}^{t} \hat{\bs{B}}^\top \bs{\phi}(s_{m,\tau},a_{m,\tau}) \bs{\phi}(s_{m,\tau},a_{m,\tau})^\top \hat{\bs{B}} }^{-1} \hat{\bs{B}}^\top \bs{\theta}_m .
	\end{align*}

	Hence, for any task $m \in [M]$, $t \in [N]$ and $(s,a) \in \cS \times \cA$, 
	\begin{align*}
		\bs{\phi}(s,a)^\top \sbr{ \hat{\bs{\theta}}_{m,t} - \bs{\theta}_m} = & \bs{\phi}(s,a)^\top \hat{\bs{B}} \hat{\bs{w}}_{m,t} - \bs{\phi}(s,a)^\top \sbr{\hat{\bs{B}}\hat{\bs{B}}^\top+\hat{\bs{B}}_{\bot}\hat{\bs{B}}_{\bot}^\top} \bs{\theta}_m
		\\
		= & \bs{\phi}(s,a)^\top \hat{\bs{B}} \sbr{ \hat{\bs{w}}_{m,t} - \hat{\bs{B}}^\top \bs{\theta}_m } - \bs{\phi}(s,a)^\top \hat{\bs{B}}_{\bot}\hat{\bs{B}}_{\bot}^\top \bs{\theta}_m
		\\
		= & \bs{\phi}(s,a)^\top \hat{\bs{B}} 
		\sbr{\gamma I + \sum_{\tau=1}^{t} \hat{\bs{B}}^\top \bs{\phi}(s_{m,\tau},a_{m,\tau}) \bs{\phi}(s_{m,\tau},a_{m,\tau})^\top \hat{\bs{B}} }^{-1} \cdot \\& \sum_{\tau=1}^{t} \hat{\bs{B}}^\top \bs{\phi}(s_{m,\tau},a_{m,\tau})  \bs{\phi}(s_{m,\tau},a_{m,\tau})^\top \hat{\bs{B}}_{\perp} \hat{\bs{B}}_{\perp}^\top \bs{B} \bs{w}_m
		\\&
		+ \bs{\phi}(s,a)^\top \hat{\bs{B}}  \sbr{\gamma I + \sum_{\tau=1}^{t} \hat{\bs{B}}^\top \bs{\phi}(s_{m,\tau},a_{m,\tau}) \bs{\phi}(s_{m,\tau},a_{m,\tau})^\top \hat{\bs{B}} }^{-1} \sum_{\tau=1}^{t} \hat{\bs{B}}^\top \bs{\phi}(s_{m,\tau},a_{m,\tau}) \xi_{m,j}
		\\& 
		-\! \gamma \bs{\phi}(s,a)^{\!\top} \! \hat{\bs{B}}  \sbr{ \! \gamma I \!+\! \sum_{\tau=1}^{t} \hat{\bs{B}}^{\!\top} \bs{\phi}(s_{m,\tau},a_{m,\tau}) \bs{\phi}(s_{m,\tau},a_{m,\tau})^{\!\top} \hat{\bs{B}} \! }^{\!\!-1} \!\!\!\!\!\! \hat{\bs{B}}^\top \! \bs{\theta}_m  \!-\! \bs{\phi}(s,a)^{\!\top} \! \hat{\bs{B}}_{\bot}\hat{\bs{B}}_{\bot}^{\!\top} \bs{B} \bs{w}_m .
	\end{align*}
	
	For any $m \in [M]$, let $\bs{\Sigma}_{m,0}:=\gamma I$. For any $m \in [M]$ and $t \geq 1$, let $\bs{\Sigma}_{m,t}:=\gamma I + \sum_{\tau=1}^{t} \hat{\bs{B}}^\top \bs{\phi}(s_{m,\tau},a_{m,\tau}) \bs{\phi}(s_{m,\tau},a_{m,\tau})^\top \hat{\bs{B}}$.
	
	Taking the absolute value on both sides and using the Cauchy–Schwarz inequality, we obtain that for any $m \in [M]$, $t \in [N]$ and $(s,a) \in \cS \times \cA$, 
	\begin{align*}
		& \abr{\bs{\phi}(s,a)^\top \sbr{ \hat{\bs{\theta}}_{m,t} - \bs{\theta}_m}} 
		\\
		\leq & \nbr{\hat{\bs{B}}^\top \bs{\phi}(s,a)}_{\bs{\Sigma}_{m,t}^{-1}} \nbr{\sum_{\tau=1}^{t} \hat{\bs{B}}^\top \bs{\phi}(s_{m,\tau},a_{m,\tau})  \bs{\phi}(s_{m,\tau},a_{m,\tau})^\top \hat{\bs{B}}_{\perp} \hat{\bs{B}}_{\perp}^\top \bs{B} \bs{w}_m}_{\bs{\Sigma}_{m,t}^{-1}}
		\\&
		+ \nbr{\hat{\bs{B}}^\top \bs{\phi}(s,a)}_{\bs{\Sigma}_{m,t}^{-1}}  \nbr{\sum_{\tau=1}^{t} \hat{\bs{B}}^\top \bs{\phi}(s_{m,\tau},a_{m,\tau}) \xi_{m,j}}_{\bs{\Sigma}_{m,t}^{-1}}
		\\& 
		+ \gamma \nbr{\hat{\bs{B}}^\top \bs{\phi}(s,a)}_{\bs{\Sigma}_{m,t}^{-1}}  \nbr{\hat{\bs{B}}^\top \bs{\theta}_m}_{\bs{\Sigma}_{m,t}^{-1}} 
		\\& 
		+ \abr{\bs{\phi}(s,a)^\top \hat{\bs{B}}_{\bot}\hat{\bs{B}}_{\bot}^\top \bs{B} \bs{w}_m}
		\\
		\overset{\textup{(a)}}{\leq} & \nbr{\hat{\bs{B}}^\top \bs{\phi}(s,a)}_{\bs{\Sigma}_{m,t}^{-1}} \sum_{\tau=1}^{t} \abr{ \bs{\phi}(s_{m,\tau},a_{m,\tau})^\top \hat{\bs{B}}_{\perp} \hat{\bs{B}}_{\perp}^\top \bs{B} \bs{w}_m } \cdot \nbr{\hat{\bs{B}}^\top \bs{\phi}(s_{m,\tau},a_{m,\tau})}_{\bs{\Sigma}_{m,t}^{-1}}
		\\&
		+ \nbr{\hat{\bs{B}}^\top \bs{\phi}(s,a)}_{\bs{\Sigma}_{m,t}^{-1}}  \sqrt{k \log \sbr{ 1 + \frac{ t  }{\gamma k} } + 2 \log\sbr{\frac{5}{\delta}}}
		\\& 
		+ \gamma \nbr{\hat{\bs{B}}^\top \bs{\phi}(s,a)}_{\bs{\Sigma}_{m,t}^{-1}} \cdot \frac{1}{\sqrt{\gamma}} \cdot \nbr{\hat{\bs{B}}^\top \bs{\theta}_m} + \nbr{\hat{\bs{B}}_{\bot}^\top \bs{B}} L_{\phi} L_{w}
		\\
		\leq & \nbr{\hat{\bs{B}}^\top \bs{\phi}(s,a)}_{\bs{\Sigma}_{m,t}^{-1}} \cdot \nbr{\hat{\bs{B}}_{\perp}^\top \bs{B}} L_{\phi} L_{w} \cdot \sum_{\tau=1}^{t}  \nbr{\hat{\bs{B}}^\top \bs{\phi}(s_{m,\tau},a_{m,\tau})}_{\bs{\Sigma}_{m,t}^{-1}}
		\\&
		+ \nbr{\hat{\bs{B}}^\top \bs{\phi}(s,a)}_{\bs{\Sigma}_{m,t}^{-1}}  \sqrt{k \log \sbr{ 1 + \frac{ t  }{\gamma k} } + 2 \log\sbr{\frac{5}{\delta}}}
		\\& 
		+ \sqrt{\gamma} L_{\theta} \nbr{\hat{\bs{B}}^\top \bs{\phi}(s,a)}_{\bs{\Sigma}_{m,t}^{-1}} + \nbr{\hat{\bs{B}}_{\bot}^\top \bs{B}} L_{\phi} L_{w}
		\\
		\overset{\textup{(b)}}{\leq} & \nbr{\hat{\bs{B}}^\top \bs{\phi}(s,a)}_{\bs{\Sigma}_{m,t}^{-1}} \cdot \nbr{\hat{\bs{B}}_{\perp}^\top \bs{B}} L_{\phi} L_{w} \cdot \sqrt{tk}
		\\&
		+ \nbr{\hat{\bs{B}}^\top \bs{\phi}(s,a)}_{\bs{\Sigma}_{m,t}^{-1}}  \sqrt{k \log \sbr{ 1 + \frac{ t  }{\gamma k} } + 2 \log\sbr{\frac{5}{\delta}}}
		\\& 
		+ \sqrt{\gamma} L_{\theta} \nbr{\hat{\bs{B}}^\top \bs{\phi}(s,a)}_{\bs{\Sigma}_{m,t}^{-1}} + \nbr{\hat{\bs{B}}_{\bot}^\top \bs{B}} L_{\phi} L_{w}
		\\
		= & \nbr{\hat{\bs{B}}^\top \bs{\phi}(s,a)}_{\bs{\Sigma}_{m,t}^{-1}} \sbr{ \nbr{\hat{\bs{B}}_{\perp}^\top \bs{B}} L_{\phi} L_{w} \cdot \sqrt{tk} + \sqrt{k \log \sbr{ 1 + \frac{ t  }{\gamma k} } + 2 \log\sbr{\frac{5}{\delta}}} + \sqrt{\gamma} L_{\theta} } \\& + \nbr{\hat{\bs{B}}_{\bot}^\top \bs{B}} L_{\phi} L_{w} ,
	\end{align*}
	where inequality (a) uses the triangle inequality and the definition of event $\cH$, and inequality (b) is due to Lemma~\ref{lemma:sqrt_n_k_gamma}.

	Taking the maximum over $a \in \cA$ and taking the expectation on $s \sim \cD$, we have that for any task $m \in [M]$,
	\begin{align}
		\ex_{s \sim \cD} \mbr{\max_{a \in \cA} \abr{\bs{\phi}(s,a)^\top \sbr{ \hat{\bs{\theta}}_{m,N} - \bs{\theta}_m}}} \leq & \ex_{s \sim \cD} \mbr{\max_{a \in \cA} \nbr{\hat{\bs{B}}^\top \bs{\phi}(s,a)}_{\bs{\Sigma}_N^{-1}} } \cdot
		\nonumber\\&
		\sbr{ \nbr{\hat{\bs{B}}_{\perp}^\top \bs{B}} L_{\phi} L_{w} \cdot \sqrt{Nk} + \sqrt{k \log \sbr{ 1 + \frac{ N  }{\gamma k} } + 2 \log\sbr{\frac{5}{\delta}}} + \sqrt{\gamma} L_{\theta} } 
		\nonumber\\& 
		+ \nbr{\hat{\bs{B}}_{\bot}^\top \bs{B}} L_{\phi} L_{w} . \label{eq:ex_max_phi_theta}
	\end{align}

	According to Lemma~\ref{lemma:decreasing_uncertainty}, $\ex_{s \sim \cD} \mbr{\max_{a \in \cA} \nbr{\hat{\bs{B}}^\top \bs{\phi}(s,a)}_{\bs{\Sigma}_t^{-1}} }$ is non-increasing with respect to $t$. Hence, we have
	\begin{align}
		\ex_{s \sim \cD} \mbr{\max_{a \in \cA} \nbr{\hat{\bs{B}}^\top \bs{\phi}(s,a)}_{\bs{\Sigma}_{N}^{-1}} } \leq & \frac{1}{N} \sum_{t=1}^{N} \ex_{s \sim \cD} \mbr{\max_{a \in \cA} \nbr{\hat{\bs{B}}^\top \bs{\phi}(s,a)}_{\bs{\Sigma}_{t}^{-1}} }
		\nonumber\\
		\leq & \frac{1}{N} \sum_{t=1}^{N} \ex_{s \sim \cD} \mbr{\max_{a \in \cA} \nbr{\hat{\bs{B}}^\top \bs{\phi}(s,a)}_{\bs{\Sigma}_{t-1}^{-1}} } 
		\nonumber\\
		\overset{\textup{(a)}}{\leq} & \frac{1}{4N} \Bigg( 2\sqrt{\log \sbr{\frac{5}{\delta}}} \nonumber\\& + \sqrt{ 4 \log \sbr{\frac{5}{\delta}} + 4 \sbr{ \sum_{t=1}^{N} \max_{a \in \cA} \nbr{\hat{\bs{B}}^\top \bs{\phi}(s_t,a) }_{\bs{\Sigma}_{t-1}^{-1}} + 2 \log \sbr{\frac{5}{\delta}} } } \Bigg)^2 
		\nonumber\\
		= & \frac{1}{4N} \sbr{ 2\sqrt{\log \sbr{\frac{5}{\delta}}} + \sqrt{ 4 \log \sbr{\frac{5}{\delta}} + 4 \sbr{ \sum_{t=1}^{N} \nbr{\hat{\bs{B}}^\top \bs{\phi}(s_t,a_t)}_{\bs{\Sigma}_{t-1}^{-1}} + 2 \log \sbr{\frac{5}{\delta}} } } }^{\!\!\!2} \!\! , \label{eq:ex_max_B_phi}
	\end{align}
	where inequality (a) is due to the definition of event $\cJ$.
	
	In addition, we have 
	\begin{align}
		\sum_{t=1}^{N} \nbr{\hat{\bs{B}}^\top \bs{\phi}(s_t,a_t)}_{\bs{\Sigma}_{t-1}^{-1}} \leq & \sqrt{N \cdot \sum_{t=1}^{N} \nbr{\hat{\bs{B}}^\top \bs{\phi}(s_t,a_t)}^2_{\bs{\Sigma}_{t-1}^{-1}}}
		\nonumber\\
		\overset{\textup{(a)}}{\leq} & \sqrt{2N \log \sbr{\frac{\det \sbr{\gamma I + \sum_{\tau=1}^{N} \hat{\bs{B}}^\top \bs{\phi}(s_{m,\tau},a_{m,\tau}) \bs{\phi}(s_{m,\tau},a_{m,\tau})^\top \hat{\bs{B}}}}{\det \sbr{\gamma I}}}}
		\nonumber\\
		\overset{\textup{(b)}}{\leq} & \sqrt{2 N k \log \sbr{ 1 + \frac{ N }{\gamma k} }} , \label{eq:B_phi_leq_k_log}
	\end{align}
	where inequality (a) uses Lemma~\ref{lemma:elliptical_potential}, and inequality (b) is due to Lemma~\ref{lemma:computation_logdet}.

	Combining Eqs.~\eqref{eq:ex_max_B_phi} and \eqref{eq:B_phi_leq_k_log}, we have
	\begin{align}
		\ex_{s \sim \cD} \mbr{\max_{a \in \cA} \nbr{\hat{\bs{B}}^\top \bs{\phi}(s,a)}_{\bs{\Sigma}_{N}^{-1}} } \leq &
		\frac{1}{4N} \sbr{ 2\sqrt{\log \sbr{\frac{5}{\delta}}} + \sqrt{ 4 \log \sbr{\frac{5}{\delta}} + 4 \sbr{ \sqrt{2 N k \log \sbr{ 1 + \frac{ N }{\gamma k} }} + 2 \log \sbr{\frac{5}{\delta}} } } }^2
		\nonumber\\
		\overset{\textup{(a)}}{\leq} & \frac{1}{2N} \sbr{ 4 \log \sbr{\frac{5}{\delta}} + 4 \log \sbr{\frac{5}{\delta}} + 4 \sbr{ \sqrt{2 N k \log \sbr{ 1 + \frac{ N }{\gamma k} }} + 2 \log \sbr{\frac{5}{\delta}} } }
		\nonumber\\
		= & \frac{1}{N} \sbr{ 2\sqrt{2 N k \log \sbr{ 1 + \frac{ N }{\gamma k} }} + 8 \log \sbr{\frac{5}{\delta}} }
		\nonumber\\
		= &  2\sqrt{ \frac{2 k \log \sbr{ 1 + \frac{ N }{\gamma k} }}{N} } + \frac{8 \log \sbr{\frac{5}{\delta}} }{N} , \label{eq:sum_ex_max_B_phi_final_bound}
	\end{align}
	where inequality (a) uses the Cauchy–Schwarz inequality.
	
	Furthermore, plugging Eq.~\eqref{eq:sum_ex_max_B_phi_final_bound} into Eq.~\eqref{eq:ex_max_phi_theta} and using $\gamma \geq 1$, we have that for $N \geq 1$ and $\sqrt{k}\log(2N) \geq 1$,

	\begin{align}
		& \ex_{s \sim \cD} \mbr{\max_{a \in \cA} \abr{\bs{\phi}(s,a)^\top \sbr{ \hat{\bs{\theta}}_{m,N} - \bs{\theta}_m}}} 
		\\
		\leq & \sbr{2\sqrt{ \frac{2 k \log \sbr{ 1 + \frac{ N }{\gamma k} }}{N} } + \frac{8 \log \sbr{\frac{5}{\delta}} }{N}} \cdot \nonumber\\& \sbr{ \nbr{\hat{\bs{B}}_{\perp}^\top \bs{B}} L_{\phi} L_{w} \sqrt{N k} + \sqrt{k \log \sbr{ 1 + \frac{ N }{\gamma k} } + 2 \log\sbr{\frac{5}{\delta}}} + \sqrt{\gamma} L_{\theta} } + \nbr{\hat{\bs{B}}_{\bot}^\top \bs{B}} L_{\phi} L_{w}
		\nonumber\\
		\leq & \frac{ 12 \sqrt{k} \log \sbr{\frac{5N}{\delta}} }{\sqrt{N}} \sbr{ \nbr{\hat{\bs{B}}_{\perp}^\top \bs{B}} L_{\phi} L_{w} \sqrt{Nk} +  2\sqrt{k} \log\sbr{\frac{5N}{\delta}} + \sqrt{\gamma} L_{\theta} } + \nbr{\hat{\bs{B}}_{\bot}^\top \bs{B}} L_{\phi} L_{w}
		\nonumber\\
		\leq &  \frac{ \sbr{24 k + 12 \sqrt{k \gamma} L_{\theta} } \log^2 \sbr{\frac{5N}{\delta}} }{\sqrt{N}} +  24 k \log \sbr{\frac{5N}{\delta}} \nbr{\hat{\bs{B}}_{\perp}^\top \bs{B}} L_{\phi} L_{w} . \label{eq:ex_max_phi_theta_B_hat_times_B}
	\end{align}

	Using Lemma~\ref{lemma:technical_tool_log_N_square} with $A=24 k + 12 \sqrt{k \gamma} L_{\theta}$, $B=\frac{5}{\delta}$ and $\kappa=\frac{\varepsilon}{4}$, we have that if
	\begin{align*}
		N \geq \frac{26^4 \sbr{24 k + 12 \sqrt{k \gamma} L_{\theta}}^2 \log^4 \big({\frac{2 \cdot 5 (24 k + 12 \sqrt{k \gamma} L_{\theta}) }{\varepsilon \delta}} \big) }{\sbr{\frac{\varepsilon}{4}}^2} ,
	\end{align*} 
	then $\frac{ \sbr{24 k + 12 \sqrt{k \gamma} L_{\theta} } \log^2 \sbr{\frac{5N}{\delta}} }{\sqrt{N}} \leq \frac{\varepsilon}{4}$.
	
	Further enlarging $N$, if 
	\begin{align}
		N \geq \frac{4^2 \cdot 26^4 \cdot 24^2 \cdot 2 \sbr{k^2 + k \gamma L_{\theta}^2 } \log^4 \big({\frac{240 ( k + \sqrt{k \gamma} L_{\theta}) }{\varepsilon \delta}} \big) }{\varepsilon^2} , \label{eq:number_of_samples_N}
	\end{align}
	then
	\begin{align*}
		\frac{ \sbr{24 k + 12 \sqrt{k \gamma} L_{\theta} } \log^2 \sbr{\frac{5N}{\delta}} }{\sqrt{N}} \leq \frac{\varepsilon}{4} .
	\end{align*}
	
	According to Lemma~\ref{lemma:concentration_hat_B_clb}, we have $\nbr{\hat{\bs{B}}_{t,\bot}^\top \bs{B}} \leq \frac{\varepsilon}{ 96 k \log \sbr{\frac{5N}{\delta}} L_{\phi} L_{w} }$.
	
	Thus, setting $N$ as the value in Eq.~\eqref{eq:number_of_samples_N}, and continuing with Eq.~\eqref{eq:ex_max_phi_theta_B_hat_times_B}, we have
	\begin{align*}
		\ex_{s \sim \cD} \mbr{\max_{a \in \cA} \abr{\bs{\phi}(s,a)^\top \sbr{ \hat{\bs{\theta}}_{m,N} - \bs{\theta}_m}}} \leq  \frac{\varepsilon}{4} + \frac{\varepsilon}{4}
		=  \frac{\varepsilon}{2} .
	\end{align*}
	
\end{proof}


\subsection{Proof of Theorem~\ref{thm:bpi_ub}}

\begin{proof}[Proof of Theorem~\ref{thm:bpi_ub}]
	Combining Lemmas~\ref{lemma:number_of_samples_T0}, \ref{lemma:number_of_samples_p}, \ref{lemma:Z_est_error}, \ref{lemma:martingale_concentration_variance} and \ref{lemma:reverse_bernstein_uncertainty}, we have that $Pr[\cK \cap \cL \cap \cG \cap \cH \cap \cJ] \geq 1-\delta$.
	Suppose that event $\cK \cap \cL \cap \cG \cap \cH \cap \cJ$ holds.
	
	First, we uses a similar analytical procedure as that
	in \cite{zanette2021design} to prove the correctness.
	
	Using Lemma~\ref{lemma:number_of_samples_N}, we have that for any task $m \in [M]$,
	\begin{align*}
		\ex_{s \sim \cD} \mbr{\max_{a \in \cA} \abr{\bs{\phi}(s,a)^\top \sbr{ \hat{\bs{\theta}}_{m,N} - \bs{\theta}_m}}} \leq \frac{\varepsilon}{2} .
	\end{align*}
	
	For any $m \in [M]$ and $s \in \cS$, let $\beta_m(s):=\max_{a\in\cA}|\bs{\phi}(s,a)^\top (\hat{\bs{\theta}}_{m,N}-\bs{\theta}_m)|$ and $\pi^*_m(s):=\argmax_{a \in \cA} \bs{\phi}(s,a)^\top \bs{\theta}_m$.
	
	For any $m \in [M]$ and $s \in \cS$, we have
	\begin{align*}
		\bs{\phi}(s,\hat{\pi}_m(s))^\top \bs{\theta}_m \geq & \bs{\phi}(s,\hat{\pi}_m(s))^\top \hat{\bs{\theta}}_{m,N} - \beta_m(s)
		\\
		\overset{\textup{(a)}}{\geq} & \bs{\phi}(s,\pi^*_m(s))^\top \hat{\bs{\theta}}_{m,N} - \beta_m(s)
		\\
		\geq & \bs{\phi}(s,\pi^*_m(s))^\top \bs{\theta}_m - 2\beta_m(s) ,
	\end{align*}
	where inequality (a) is due to that $\hat{\pi}_m(s)$ is greedy with respect to $\hat{\bs{\theta}}_{m,N}$.
	
	Rearranging the above equation and taking the expectation of $s$ on both sides, we have
	\begin{align*}
		\ex_{s\sim\cD}\mbr{\max_{a \in \cA} \sbr{\bs{\phi}(s,a) - \bs{\phi}(s,\hat{\pi}_m(s))}^\top \bs{\theta}_m }
		\leq  2 \ex_{s\sim\cD}[\beta_m(s)] \leq \varepsilon .
	\end{align*}

	Now we prove the sample complexity.
	Summing the number of samples used in the main algorithm of $\algrepbpiclb$ and subroutines $\algconfeatrecover$ and $\algestlowrep$ (Line~\ref{line:bpi_estimate_context_dis} in Algorithm~\ref{alg:repbpiclb}, Lines~\ref{line:bpi_stage2_sample1}-\ref{line:bpi_stage2_sample2} in Algorithm~\ref{alg:con_feat_recover} and Line~\ref{line:bpi_stage3_sample} in Algorithm~\ref{alg:est_low_rep}), we have that the total number of samples is bounded by
	\begin{align*}
		& T_0+2MTp+MN 
		\\
		= & O \Bigg( \frac{L_{\phi}^4}{\nu^2} \log^2 \sbr{\frac{d |\cA|}{\delta}} + \frac{ k^4 L_{\phi}^4 L_{\theta}^2 L_{w}^2 }{ \nu^2 \varepsilon^2 } \log^6 \sbr{ \frac{ k d L_{\phi} L_{\theta} L_{w} N }{\nu \delta \varepsilon} } \cdot \frac{L_{\phi}^4}{\nu^2} \log^2 \sbr{\frac{dMT}{\delta}} \\&+ M \cdot \frac{\sbr{k^2 + k \gamma L_{\theta}^2 } \log^4 \big({\frac{ k + \sqrt{k \gamma} L_{\theta} }{\varepsilon \delta}} \big) }{\varepsilon^2} \Bigg)
		\\
		= & O \Bigg( \frac{ k^4 L_{\phi}^4 L_{\theta}^2 L_{w}^2 }{ \nu^2 \varepsilon^2 } \log^6 \sbr{ \frac{ |\cA| k d L_{\phi} L_{\theta} L_{w} N }{\nu \delta \varepsilon} } \cdot \frac{L_{\phi}^4}{\nu^2} \log^2 \sbr{\frac{dMT}{\delta}} \\&+ M \cdot \frac{\sbr{k^2 + k \gamma L_{\theta}^2 } \log^4 \big({\frac{ k + \sqrt{k \gamma} L_{\theta} }{\varepsilon \delta}} \big) }{\varepsilon^2} \Bigg)
		\\
		= & \tilde{O} \Bigg( \frac{ k^4 L_{\phi}^8 L_{\theta}^2 L_{w}^2 }{ \nu^4 \varepsilon^2 } + \frac{ M \sbr{k^2 + k \gamma L_{\theta}^2 } }{\varepsilon^2} \Bigg) .
	\end{align*}
\end{proof}

\section{Technical Tools}

In this section, we provide some useful technical tools.

\begin{lemma}[Matrix Bernstern Inequality - Average, Lemma 31 in \cite{tripuraneni2021provable}] \label{lemma:matrix_bernstein_tripuraneni2021}
	Consider a truncation level $U>0$. If $\{\bs{Z}_1,\dots,\bs{Z}_n\}$ is a sequence of $d_1 \times d_2$ independent random matrices and 
	$\bs{Z}'_i = \bs{Z}_i \cdot \indicator{\|\bs{Z}_i\| \leq U}$ for any $i \in [n]$, then
	\begin{align*}
		\Pr \mbr{ \nbr{ \frac{1}{n} \sum_{i=1}^{n} \sbr{ \bs{Z}_i - \ex[\bs{Z}_i] } } \geq t } \leq 
		\Pr \mbr{ \nbr{ \frac{1}{n} \sum_{i=1}^{n} \sbr{ \bs{Z}'_i - \ex[\bs{Z}'_i] } } \geq t - \Delta } + n \Pr \mbr{ \|\bs{Z}_i\| \geq U } ,
	\end{align*}
	where $\Delta \geq \|\ex[\bs{Z}_i]-\ex[\bs{Z}'_i]\|$ for any $i \in [n]$. 
	
	In addition, for $t \geq \Delta$, we have
	\begin{align*}
		\Pr \mbr{ \nbr{ \frac{1}{n} \sum_{i=1}^{n} \sbr{ \bs{Z}'_i - \ex[\bs{Z}'_i] } } \geq t - \Delta } \leq (d_1 + d_2) \exp \sbr{- \frac{n^2 (t-\Delta)^2}{2\sigma^2 + \frac{2Un(t-\Delta)}{3}}} ,
	\end{align*}
	where
	\begin{align*}
		\sigma^2 = & \max \lbr{ \nbr{\sum_{i=1}^{n} \ex[(\bs{Z}'_i-\ex[\bs{Z}'_i])^\top (\bs{Z}'_i-\ex[\bs{Z}'_i])]},\ \nbr{\sum_{i=1}^{n} \ex[(\bs{Z}'_i-\ex[\bs{Z}'_i]) (\bs{Z}'_i-\ex[\bs{Z}'_i])^\top]} } 
		\\
		\leq & \max \lbr{ \nbr{\sum_{i=1}^{n} \ex[{\bs{Z}'_i}^\top \bs{Z}'_i]},\ \nbr{\sum_{i=1}^{n} \ex[\bs{Z}'_i {\bs{Z}'_i}^\top]} } .
	\end{align*} 
\end{lemma}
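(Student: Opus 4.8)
The plan is to decompose the statement into two essentially independent pieces: a truncation (coupling) reduction that replaces the possibly unbounded matrices $\bs{Z}_i$ by their truncated counterparts $\bs{Z}_i'$ at the cost of a union-bound term and a deterministic bias $\Delta$, followed by an application of the standard rectangular matrix Bernstein inequality (Tropp) to the now-bounded, centered summands $\bs{Z}_i' - \ex[\bs{Z}_i']$. The first piece gives the first displayed inequality verbatim, and the second piece gives the exponential tail bound.

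First I would establish the truncation reduction. Define the ``bad'' event $\cB := \lbr{ \exists i \in [n]:\ \nbr{\bs{Z}_i} > U }$. On the complement $\cB^c$ every $\bs{Z}_i$ coincides with $\bs{Z}_i'$ (the truncation indicator is $1$), so the centered average splits as
\begin{align*}
\frac{1}{n}\sum_{i=1}^n \sbr{\bs{Z}_i - \ex[\bs{Z}_i]} &= \frac{1}{n}\sum_{i=1}^n \sbr{\bs{Z}_i' - \ex[\bs{Z}_i']} + \frac{1}{n}\sum_{i=1}^n \sbr{\ex[\bs{Z}_i'] - \ex[\bs{Z}_i]} ,
\end{align*}
where I used $\bs{Z}_i = \bs{Z}_i'$ on $\cB^c$. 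The last (deterministic) term has spectral norm at most $\frac{1}{n}\sum_i \nbr{\ex[\bs{Z}_i] - \ex[\bs{Z}_i']} \leq \Delta$ by hypothesis. Hence on $\cB^c$ the event $\lbr{\nbr{\frac{1}{n}\sum(\bs{Z}_i-\ex[\bs{Z}_i])} \geq t}$ forces $\lbr{\nbr{\frac{1}{n}\sum(\bs{Z}_i'-\ex[\bs{Z}_i'])} \geq t-\Delta}$ by the triangle inequality. Conditioning on $\cB$ versus $\cB^c$ and bounding $\Pr[\cB] \leq \sum_i \Pr[\nbr{\bs{Z}_i} \geq U] = n\Pr[\nbr{\bs{Z}_i}\geq U]$ by a union bound (using that the $\bs{Z}_i$ share a common truncation probability) then yields the first inequality of the lemma.

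Second I would apply matrix Bernstein to $\bs{Y}_i := \bs{Z}_i' - \ex[\bs{Z}_i']$. These are independent, mean-zero $d_1 \times d_2$ matrices, and bounded: since $\nbr{\bs{Z}_i'} \leq U$ and $\nbr{\ex[\bs{Z}_i']} \leq \ex \nbr{\bs{Z}_i'} \leq U$, one obtains a uniform norm bound of order $U$. For the variance proxy $\sigma^2 = \max\lbr{ \nbr{\sum_i \ex[\bs{Y}_i^\top \bs{Y}_i]},\ \nbr{\sum_i \ex[\bs{Y}_i \bs{Y}_i^\top]} }$ I would use $\ex[\bs{Y}_i^\top \bs{Y}_i] = \ex[{\bs{Z}_i'}^\top \bs{Z}_i'] - (\ex[\bs{Z}_i'])^\top(\ex[\bs{Z}_i']) \preceq \ex[{\bs{Z}_i'}^\top \bs{Z}_i']$ in the PSD order (and symmetrically for $\bs{Y}_i\bs{Y}_i^\top$), which gives the stated upper bound on $\sigma^2$. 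Invoking Tropp's rectangular Bernstein bound with deviation $\tau = n(t-\Delta)$, so that $\nbr{\frac{1}{n}\sum\bs{Y}_i}\geq t-\Delta \iff \nbr{\sum\bs{Y}_i}\geq n(t-\Delta)$, produces $(d_1+d_2)\exp(-\frac{(n(t-\Delta))^2/2}{\sigma^2 + Un(t-\Delta)/3})$, which is exactly the claimed bound after clearing the factor of two in numerator and denominator.

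The main obstacle is bookkeeping rather than conceptual depth. One must subtract the deterministic bias $\Delta$ \emph{before} invoking concentration (hence the restriction $t \geq \Delta$), and one must feed Tropp's inequality the norm bound and variance proxy that reproduce the constant $U$ (not $2U$) and the precise $\tfrac{2}{3}$ denominator factor. The one genuinely delicate point is the rectangular, non-Hermitian setting: I would handle it through the Hermitian dilation that embeds each $\bs{Y}_i$ into a $(d_1+d_2)\times(d_1+d_2)$ symmetric block matrix, whose matrix variance is precisely $\max\lbr{ \nbr{\sum_i \ex[\bs{Y}_i^\top\bs{Y}_i]},\ \nbr{\sum_i \ex[\bs{Y}_i\bs{Y}_i^\top]} }$ and which is what accounts for the $(d_1+d_2)$ dimensional prefactor.
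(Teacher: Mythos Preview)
Your proposal is correct and matches the paper's approach: the paper does not give a detailed proof but simply states that the lemma ``can be obtained by combining the truncation argument in the proof of Lemma 31 in \cite{tripuraneni2021provable} and Theorem 6.1.1 in \cite{tropp2015introduction} (classic matrix Bernstein inequality for general random matrices),'' which is precisely your two-step plan of (i) the coupling/union-bound truncation reduction and (ii) Tropp's rectangular Bernstein via the Hermitian dilation. Your write-up is in fact more detailed than the paper's, including the explicit flag about reproducing the $U$ (versus $2U$) constant in the denominator, which the paper does not discuss.
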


Lemma 31 in \cite{tripuraneni2021provable} gives a truncated matrix Bernstern inequality for symmetric random matrices. Here we extend it to general random matrices.

Lemma~\ref{lemma:matrix_bernstein_tripuraneni2021} can be obtained by combining the truncation argument in the proof of Lemma 31 in \cite{tripuraneni2021provable} and Theorem 6.1.1 in \cite{tropp2015introduction} (classic matrix Bernstern inequality for general random matrices).

\begin{lemma}[Matrix Bernstern Inequality - Summation] \label{lemma:matrix_bernstein_tau}
	Consider a truncation level $U>0$. If $\{\bs{Z}_1,\dots,\bs{Z}_n\}$ is a sequence of $d_1 \times d_2$ independent random matrices, and $\bs{Z}'_i = \bs{Z}_i \cdot \indicator{\|\bs{Z}_i\| \leq U}$ and $\Delta \geq \|\ex[\bs{Z}_i]-\ex[\bs{Z}'_i]\|$ for any $i \in [n]$, then for $\tau \geq 2 n\Delta$,
	\begin{align*}
		\Pr \mbr{ \nbr{ \sum_{i=1}^{n} \sbr{ \bs{Z}_i - \ex[\bs{Z}_i] } } \geq \tau } \leq 
		(d_1 + d_2) \exp \sbr{- \frac{1}{4} \cdot \frac{ \tau^2}{2\sigma^2 + \frac{U \tau}{3}}} + n \Pr \mbr{ \|\bs{Z}_i\| \geq U } ,
	\end{align*}
	where 
	\begin{align*}
		\sigma^2 = & \max \lbr{ \nbr{\sum_{i=1}^{n} \ex[(\bs{Z}'_i-\ex[\bs{Z}'_i])^\top (\bs{Z}'_i-\ex[\bs{Z}'_i])]},\ \nbr{\sum_{i=1}^{n} \ex[(\bs{Z}'_i-\ex[\bs{Z}'_i]) (\bs{Z}'_i-\ex[\bs{Z}'_i])^\top]} } 
		\\
		\leq & \max \lbr{ \nbr{\sum_{i=1}^{n} \ex[{\bs{Z}'_i}^\top \bs{Z}'_i]},\ \nbr{\sum_{i=1}^{n} \ex[\bs{Z}'_i {\bs{Z}'_i}^\top]} } .
	\end{align*}
	
	Furthermore, we have
	\begin{align*}
		\Pr \mbr{ \nbr{ \sum_{i=1}^{n} \sbr{ \bs{Z}_i - \ex[\bs{Z}_i] } } \geq 4 \sqrt{ \sigma^2 \log \sbr{\frac{d_1 + d_2}{\delta}} } + 4 U \log \sbr{\frac{d_1 + d_2}{\delta}} } \leq 
		\delta + n \Pr \mbr{ \|\bs{Z}_i\| \geq U } .
	\end{align*}
\end{lemma}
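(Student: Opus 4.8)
The plan is to derive this summation-form inequality directly from the average-form statement in Lemma~\ref{lemma:matrix_bernstein_tripuraneni2021} by rescaling the deviation threshold. No new probabilistic content is needed; the entire task is bookkeeping, namely converting the normalized average $\frac{1}{n}\sum_i(\bs{Z}_i-\ex[\bs{Z}_i])$ into the raw sum $\sum_i(\bs{Z}_i-\ex[\bs{Z}_i])$ and tracking how the exponent transforms under this change of scale.

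First I would apply Lemma~\ref{lemma:matrix_bernstein_tripuraneni2021} with the threshold $t:=\tau/n$. Since the event $\{\nbr{\sum_i(\bs{Z}_i-\ex[\bs{Z}_i])}\geq\tau\}$ is exactly $\{\nbr{\frac{1}{n}\sum_i(\bs{Z}_i-\ex[\bs{Z}_i])}\geq t\}$, the truncation step of Lemma~\ref{lemma:matrix_bernstein_tripuraneni2021} immediately produces the additive term $n\Pr[\nbr{\bs{Z}_i}\geq U]$ and reduces the problem to the truncated matrices $\bs{Z}'_i$. The hypothesis $\tau\geq 2n\Delta$ yields $t=\tau/n\geq 2\Delta\geq\Delta$, so the tail bound of Lemma~\ref{lemma:matrix_bernstein_tripuraneni2021} applies with exponent $\frac{n^2(t-\Delta)^2}{2\sigma^2+\frac{2Un(t-\Delta)}{3}}$. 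To simplify it, write $s:=n(t-\Delta)=\tau-n\Delta$; then $\tau\geq 2n\Delta$ forces $n\Delta\leq\tau/2$, so $s\geq\tau/2$. The exponent equals $f(s):=\frac{s^2}{2\sigma^2+\frac{2U}{3}s}$, and since $f$ is increasing on $s>0$ (its derivative has numerator $2(2\sigma^2)s+\frac{2U}{3}s^2>0$), I can lower bound $f(s)\geq f(\tau/2)=\frac14\cdot\frac{\tau^2}{2\sigma^2+\frac{U\tau}{3}}$. Substituting this into the tail bound gives the first displayed inequality.

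For the ``Furthermore'' part I would invert the threshold. Setting $L:=\log(\frac{d_1+d_2}{\delta})$ and $\tau:=4\sqrt{\sigma^2 L}+4UL$ (which lies in the regime $\tau\geq 2n\Delta$, automatic here since $\Delta$ is exponentially small in the truncation level), it suffices to verify $\frac14\cdot\frac{\tau^2}{2\sigma^2+\frac{U\tau}{3}}\geq L$, i.e.\ $\tau^2\geq 8\sigma^2 L+\frac{4UL\tau}{3}$. I would split $\tau=\tau_1+\tau_2$ with $\tau_1=4\sqrt{\sigma^2 L}$ and $\tau_2=4UL$, and use $\tau^2=\tau\tau_1+\tau\tau_2$ together with $\tau\tau_1\geq\tau_1^2=16\sigma^2 L\geq 8\sigma^2 L$ and $\tau\tau_2=4UL\tau\geq\frac{4UL\tau}{3}$; summing these two bounds gives precisely the required inequality, which drives the exponential term below $\delta$ and completes the proof.

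The main obstacle---indeed the only delicate point---is the monotonicity and threshold bookkeeping: one must confirm $s\geq\tau/2$ from $\tau\geq 2n\Delta$ and that $f$ is \emph{increasing}, so that evaluating at the lower endpoint $s=\tau/2$ produces a valid lower bound on the exponent (hence an upper bound on the probability) rather than the reverse. Once the direction of these inequalities is pinned down, everything else is a mechanical substitution into Lemma~\ref{lemma:matrix_bernstein_tripuraneni2021}.
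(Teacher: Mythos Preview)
Your proposal is correct and follows essentially the same approach as the paper: rescale Lemma~\ref{lemma:matrix_bernstein_tripuraneni2021} via $t=\tau/n$, use $\tau\geq 2n\Delta$ to get $\tau-n\Delta\geq\tau/2$, exploit monotonicity of $s\mapsto s^2/(2\sigma^2+\frac{2U}{3}s)$ to simplify the exponent, and then plug in the explicit $\tau$ for the ``Furthermore'' part. The only cosmetic difference is that the paper verifies $\tau^2\geq 4L(2\sigma^2+\frac{U\tau}{3})$ by fully expanding $\tau^2$, whereas you use the cleaner splitting $\tau^2=\tau\tau_1+\tau\tau_2$; both work.
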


\begin{proof}[Proof of Lemma~\ref{lemma:matrix_bernstein_tau}]
	Using Lemma~\ref{lemma:matrix_bernstein_tripuraneni2021} and defining $\tau:=nt$, we have that for $\tau>n\Delta$,
	\begin{align*}
		\Pr \mbr{ \nbr{ \sum_{i=1}^{n} \sbr{ \bs{Z}_i - \ex[\bs{Z}_i] } } \geq \tau } \leq & 
		(d_1 + d_2) \exp \sbr{- \frac{ (\tau-n \Delta)^2}{2\sigma^2 + \frac{2U(\tau-n \Delta)}{3}}} + n \Pr \mbr{ \|\bs{Z}_i\| \geq U } .
	\end{align*}
	If $\tau>2n\Delta$, then $\tau-n\Delta>\frac{1}{2} \tau$ and we have
	\begin{align}
		\Pr \mbr{ \nbr{ \sum_{i=1}^{n} \sbr{ \bs{Z}_i - \ex[\bs{Z}_i] } } \geq \tau } 
		\leq & (d_1 + d_2) \exp \sbr{- \frac{ \sbr{\frac{1}{2} \tau}^2}{2\sigma^2 + \frac{2U \sbr{\frac{1}{2} \tau}}{3} } } + n \Pr \mbr{ \|\bs{Z}_i\| \geq U } 
		\nonumber\\
		\leq & (d_1 + d_2) \exp \sbr{- \frac{1}{4} \cdot \frac{ \tau^2 }{2\sigma^2 + \frac{U \tau}{3} }} + n \Pr \mbr{ \|\bs{Z}_i\| \geq U } . \label{eq:matrix_bernstein_error}
	\end{align} 
	Plugging $\tau=4 \sqrt{ \sigma^2 \log \sbr{\frac{d_1 + d_2}{\delta}} } + 4 U \log \sbr{\frac{d_1 + d_2}{\delta}}$ into Eq.~\eqref{eq:matrix_bernstein_error}, we have
	\begin{align*}
		& \Pr \mbr{ \nbr{ \sum_{i=1}^{n} \sbr{ \bs{Z}_i - \ex[\bs{Z}_i] } } \geq 4 \sqrt{ \sigma^2 \log \sbr{\frac{d_1 + d_2}{\delta}} } + 4 U \log \sbr{\frac{d_1 + d_2}{\delta}} } 
		\\
		\leq & (d_1 + d_2) \exp \sbr{- \frac{1}{4} \cdot \frac{ 16 \sigma^2 \log \sbr{\frac{d_1 + d_2}{\delta}} + 16 U^2 \log^2 \sbr{\frac{d_1 + d_2}{\delta}} + 32 U \log \sbr{\frac{d_1 + d_2}{\delta}} \sqrt{ \sigma^2 \log \sbr{\frac{d_1 + d_2}{\delta}} } }{2\sigma^2 + \frac{1}{3} \sbr{4U \sqrt{ \sigma^2 \log \sbr{\frac{d_1 + d_2}{\delta}} } + 4 U^2 \log \sbr{\frac{d_1 + d_2}{\delta}} }} }  \\& + n \Pr \mbr{ \|\bs{Z}_i\| \geq U}
		\\
		\leq & (d_1 + d_2) \exp \sbr{- \frac{1}{4} \cdot 4 \log \sbr{\frac{d_1 + d_2}{\delta}} } + n \Pr \mbr{ \|\bs{Z}_i\| \geq U }
		\\
		= & \delta + n \Pr \mbr{ \|\bs{Z}_i\| \geq U } .
	\end{align*}
\end{proof}

\begin{lemma}\label{lemma:technical_tool_bai_stage2}
	For any $A,B>1$, $\kappa \in (0,1)$ and $T>0$ such that $\log\sbr{\frac{AB}{\kappa}}>1$ and $\log(BT)>2$, if 
	$$
	T \geq \frac{68 A^2 \log^2 \sbr{\frac{AB}{\kappa}} }{\kappa^2} ,
	$$
	then
	$$
	\frac{A}{\sqrt{T}} \log(B T) \leq \kappa .
	$$
\end{lemma}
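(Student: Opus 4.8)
The plan is to treat $T$ as a continuous variable and exploit the monotonicity of the auxiliary function $h(T) := \kappa\sqrt{T} - A\log(BT)$, reducing the claim to a single boundary check at $T_0 := 68A^2\log^2(AB/\kappa)/\kappa^2$. First I would compute $h'(T) = \frac{\kappa}{2\sqrt{T}} - \frac{A}{T} = \frac{\kappa\sqrt{T} - 2A}{2T}$, so that $h$ is nondecreasing precisely when $T \ge 4A^2/\kappa^2$. Writing $C := \log(AB/\kappa)$, the hypothesis $\log(AB/\kappa) > 1$ gives $C > 1$, whence $T_0 = 68A^2C^2/\kappa^2 \ge 68A^2/\kappa^2 > 4A^2/\kappa^2$; therefore $h$ is nondecreasing on $[T_0,\infty)$ and it suffices to prove $h(T_0) \ge 0$, which is exactly $A\log(BT_0) \le \kappa\sqrt{T_0}$. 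The claimed inequality $\frac{A}{\sqrt{T}}\log(BT)\le\kappa$ then follows for every $T \ge T_0$ by $h(T)\ge h(T_0)\ge 0$.

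Since $\sqrt{T_0} = \sqrt{68}\,AC/\kappa$, we have $\kappa\sqrt{T_0} = \sqrt{68}\,AC$, so the boundary inequality reduces to $\log(BT_0) \le \sqrt{68}\,C$. I would then expand $BT_0 = 68A^2BC^2/\kappa^2$ and bound its logarithm using the identity $C = \log A + \log B - \log\kappa$ together with $A,B>1$ and $\kappa<1$ (so each of these three terms is nonnegative). Concretely, $\log(BT_0) = \log 68 + (2\log A + \log B - 2\log\kappa) + 2\log C$, and dropping the nonnegative $\log B$ gives $2\log A + \log B - 2\log\kappa = 2C - \log B \le 2C$, so that $\log(BT_0) \le \log 68 + 2C + 2\log C$. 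To finish I would invoke the elementary bounds $\log C \le C - 1$ and $\log 68 \le (\log 68)\,C$ (valid since $C \ge 1$), yielding $\log(BT_0) \le (\log 68 + 4)C - 2$, and then the numerical comparison $\log 68 + 4 \approx 8.22 < 8.25 \approx \sqrt{68}$ makes $(\log 68 + 4)C - 2 \le \sqrt{68}\,C$ hold for every $C>0$.

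The main obstacle is precisely this final numerical step: the constant $68$ is calibrated so tightly that the slack between $\log 68 + 4$ and $\sqrt{68}$ is only about $0.03$, so the intermediate estimates must not be wasteful. In particular one should keep the $-\log B$ saving and use the sharp bound $\log C \le C - 1$ rather than the cruder $\log C \le C$, which would already overshoot. I would also note that $\log(BT_0) > \log 68 > 2$ holds automatically under $A,B>1$, $\kappa<1$, $C>1$, so the stated hypothesis $\log(BT) > 2$ is consistent with the monotonicity argument and is not needed beyond it (it would instead be the natural condition if one preferred to argue via the decreasing function $\log(BT)/\sqrt{T}$, whose derivative is negative exactly when $\log(BT) > 2$).
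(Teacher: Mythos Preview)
Your proof is correct and takes essentially the same approach as the paper: verify the inequality at the boundary $T_0$ via the same log expansion and the numerical comparison $\log 68 + 4 < \sqrt{68}$, then extend to all $T\ge T_0$ by monotonicity. The only cosmetic difference is that the paper phrases the monotonicity step via the decreasing function $f(T)=\frac{A}{\sqrt{T}}\log(BT)$ (which is where the hypothesis $\log(BT)>2$ enters) rather than your increasing $h(T)=\kappa\sqrt{T}-A\log(BT)$.
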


\begin{proof}[Proof of Lemma~\ref{lemma:technical_tool_bai_stage2}]
	If $T = \frac{68 A^2 \log^2 \sbr{\frac{AB}{\kappa}} }{\kappa^2}$, we have
	\begin{align*}
		\frac{A}{\sqrt{T}} \log(B T) = & \frac{A \kappa}{ \sqrt{68} A \log \sbr{\frac{AB}{\kappa}} } \log \sbr{ \frac{68 A^2 B \log^2 \sbr{\frac{AB}{\kappa}} }{\kappa^2}}
		\\
		= & \frac{\kappa}{ \sqrt{68} \log \sbr{\frac{AB}{\kappa}} } \sbr{ \log \sbr{68} + \log \sbr{ \frac{A^2 B}{\kappa^2}} + \log \sbr{ \log^2 \sbr{\frac{AB}{\kappa}} } }
		\\
		\leq & \frac{\kappa}{ \sqrt{68} \log \sbr{\frac{AB}{\kappa}} } \sbr{ \log \sbr{68} + 2 \log \sbr{ \frac{A B}{\kappa}} + 2 \log \sbr{ \frac{AB}{\kappa} } }
		\\
		\leq & \frac{\kappa}{ \sqrt{68} \log \sbr{\frac{AB}{\kappa}} } \sbr{ \log \sbr{68} \log \sbr{ \frac{A B}{\kappa}} + 4 \log \sbr{ \frac{A B}{\kappa}} }
		\\
		\leq & \kappa .
	\end{align*}
	
	Let $f(T)=\frac{A}{\sqrt{T}} \log(B T)$. Then, the derivative of $f(T)$ is
	\begin{align*}
		f'(T)= \frac{2A-A\log(BT)}{2T\sqrt{T}} .
	\end{align*}
	If $\log(BT)>2$, then $f'(T)<0$, and thus $f(T)$ is decreasing with respect to $T$.
	
	Therefore, if $T \geq \frac{68 A^2 \log^2 \sbr{\frac{AB}{\kappa}} }{\kappa^2}$, we have
	\begin{align*}
		\frac{A}{\sqrt{T}} \log(B T) \leq \kappa.
	\end{align*}
\end{proof}

\begin{lemma}\label{lemma:technical_tool_log_N_square}
	For any $A,B>1$ and $\kappa \in (0,1)$ such that $\log (\frac{AB}{\kappa})>1$ and $\log(BN)>4$, if 
	$$
	N \geq \frac{26^4 A^2 \log^4 (\frac{AB}{\kappa}) }{\kappa^2} ,
	$$
	then
	$$
	\frac{A \log^2 \sbr{BN}}{\sqrt{N}} \leq \kappa .
	$$
\end{lemma}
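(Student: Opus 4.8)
The plan is to follow exactly the template of the proof of Lemma~\ref{lemma:technical_tool_bai_stage2}: establish the bound at the threshold value and then invoke monotonicity. Define $f(N):=\frac{A\log^2(BN)}{\sqrt{N}}$ and let $N_0:=\frac{26^4 A^2\log^4(\frac{AB}{\kappa})}{\kappa^2}$ be the right-hand side of the hypothesis. I would first verify $f(N_0)\leq\kappa$ by direct substitution, and then show $f$ is decreasing on $[N_0,\infty)$, so that $N\geq N_0$ forces $f(N)\leq f(N_0)\leq\kappa$.

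For the threshold computation, note $\sqrt{N_0}=\frac{26^2 A\log^2(\frac{AB}{\kappa})}{\kappa}$, so $\frac{A}{\sqrt{N_0}}=\frac{\kappa}{26^2\log^2(\frac{AB}{\kappa})}$. The work is in bounding $\log(BN_0)$. Writing $BN_0=\frac{26^4 A^2 B\log^4(\frac{AB}{\kappa})}{\kappa^2}$ and splitting the logarithm, I would use $\log\frac{A^2B}{\kappa^2}=2\log\frac{AB}{\kappa}-\log B\leq 2\log\frac{AB}{\kappa}$ (since $B>1$), the crude bound $\log(\log^4\frac{AB}{\kappa})=4\log\log\frac{AB}{\kappa}\leq 4\log\frac{AB}{\kappa}$ (from $\log x\leq x$), and $\log(26^4)\leq 4(\log 26)\log\frac{AB}{\kappa}$ (from $\log\frac{AB}{\kappa}>1$), to obtain $\log(BN_0)\leq(4\log 26+6)\log\frac{AB}{\kappa}$. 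Squaring and multiplying by $\frac{A}{\sqrt{N_0}}$ gives $f(N_0)\leq\frac{(4\log 26+6)^2}{26^2}\,\kappa$, and it remains to check the numerical inequality $(4\log 26+6)^2\leq 26^2$, i.e. $4\log 26+6\leq 26$, which holds since $\log 26<5$.

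For monotonicity, I would compute $f'(N)=\frac{A}{N^{3/2}}\log(BN)\bigl(2-\tfrac12\log(BN)\bigr)$, which is negative precisely when $\log(BN)>4$. I would also observe that $\log(BN_0)\geq\log(26^4)=4\log 26>4$ automatically (so the hypothesis $\log(BN)>4$ is in fact implied for every $N\geq N_0$), hence $f$ is strictly decreasing on the whole interval $[N_0,\infty)$. Combining the threshold bound with monotonicity yields $f(N)\leq\kappa$ for all $N\geq N_0$.

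The only genuinely delicate point is the constant bookkeeping in the second step: the factor $26^4$ must be large enough that, after the logarithm is expanded and each piece is bounded by a multiple of $\log\frac{AB}{\kappa}$, the resulting constant $(4\log 26+6)^2$ stays below $26^2$. This is the analogue of the $68$ in Lemma~\ref{lemma:technical_tool_bai_stage2}, but here the extra square on $\log(BN)$ forces the fourth power $26^4$ and a correspondingly tighter check; everything else is routine calculus.
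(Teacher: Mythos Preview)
Your proof is correct and follows essentially the same template as the paper's: verify the inequality at the threshold $N_0$ by expanding $\log(BN_0)$ into pieces each bounded by a multiple of $\log\frac{AB}{\kappa}$, then use the derivative sign to conclude monotonicity for $\log(BN)>4$. Your constant bookkeeping differs slightly (the paper bounds $\log^4\frac{AB}{\kappa}\le\frac{A^4B^4}{\kappa^4}$ to get the factor $36(\log 26+1)^2\le 26^2$, whereas you bound $\log\log\frac{AB}{\kappa}\le\log\frac{AB}{\kappa}$ to get $(4\log 26+6)^2\le 26^2$), but both numerical checks pass; your additional remark that $\log(BN_0)\ge 4\log 26>4$ automatically is a small improvement over the paper, which simply invokes the standing hypothesis $\log(BN)>4$.
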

\begin{proof}[Proof of Lemma~\ref{lemma:technical_tool_log_N_square}]
	If $N = \frac{26^4 A^2 \log^4 (\frac{AB}{\kappa}) }{\kappa^2}$, we have
	$
	\kappa \sqrt{N} = 26^2 A \log^2 (\frac{AB}{\kappa}) 
	$,
	and
	\begin{align*}
		A \log^2 \sbr{BN} = & A \log^2 \sbr{ \frac{26^4 A^2 B \log^4 (\frac{AB}{\kappa}) }{\kappa^2} }
		\\
		\leq & A \log^2 \sbr{ \frac{26^4 A^2 B }{\kappa^2} \cdot \frac{A^4 B^4}{\kappa^4}  }
		\\
		\leq & 36 A \log^2 \sbr{ \frac{26 A B }{\kappa} }
		\\
		= & 36 A \sbr{\log \sbr{26} + \log \sbr{ \frac{A B }{\kappa} }}^2
		\\
		\leq & 36 A \sbr{\log \sbr{26} \log \sbr{ \frac{A B }{\kappa} } + \log \sbr{ \frac{A B }{\kappa} }}^2
		\\
		= & 36 \sbr{\log \sbr{26}+1}^2 A \log^2 \sbr{ \frac{A B }{\kappa} }
		\\
		\leq & 26^2 A \log^2 \sbr{ \frac{A B }{\kappa} }
		\\
		= & \kappa \sqrt{N} ,
	\end{align*}
	and thus $\frac{A \log^2 \sbr{BN}}{\sqrt{N}} \leq \kappa$.
	
	Let $f(N)=\frac{A \log^2 \sbr{BN}}{\sqrt{N}}$. Then, the derivative function of $f(N)$ is
	\begin{align*}
		f'(N)= \frac{4A\log(BN)-A\log^2(BN)}{2N\sqrt{N}} = \frac{A\log(BN) \cdot (4-\log(BN))}{2N\sqrt{N}} .
	\end{align*}
	If $\log(BN)>4$, then $f'(N)<0$, and thus $f(N)$ is decreasing with respect to $N$.
	
	Therefore, if $N \geq \frac{26^4 A^2 \log^4 (\frac{AB}{\kappa}) }{\kappa^2}$, we have
	$\frac{A \log^2 \sbr{BN}}{\sqrt{N}} \leq \kappa$.
\end{proof}

\begin{lemma}\label{lemma:sqrt_n_k}
	For any $\bs{x}_1,\dots,\bs{x}_n \in \R^k$, we have
	\begin{align*}
		\sum_{j=1}^{n} \|\bs{x}_j\|_{\sbr{\sum_{i=1}^{n} \bs{x}_i \bs{x}_i^\top}^{-1}} \leq \sqrt{nk} .
	\end{align*}
\end{lemma}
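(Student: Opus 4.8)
The plan is to reduce the sum of Mahalanobis norms to a trace identity and then invoke Cauchy--Schwarz. Write $\bs{M}:=\sum_{i=1}^{n} \bs{x}_i \bs{x}_i^\top$ and assume $\bs{M}$ is invertible (otherwise the left-hand side is not well defined; in all our applications the relevant vectors span $\R^k$, so $\bs{M} \succ 0$). The starting observation is that each summand is $\|\bs{x}_j\|_{\bs{M}^{-1}}=\sqrt{\bs{x}_j^\top \bs{M}^{-1} \bs{x}_j}$, so the natural quantity to control first is the sum of the \emph{squared} norms $\sum_{j=1}^{n}\|\bs{x}_j\|^2_{\bs{M}^{-1}}$.

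First I would evaluate this sum of squares exactly. Using the cyclic property of the trace, $\bs{x}_j^\top \bs{M}^{-1} \bs{x}_j = \trace(\bs{M}^{-1} \bs{x}_j \bs{x}_j^\top)$, and therefore
\begin{align*}
	\sum_{j=1}^{n}\|\bs{x}_j\|^2_{\bs{M}^{-1}}
	= \sum_{j=1}^{n}\trace\sbr{\bs{M}^{-1} \bs{x}_j \bs{x}_j^\top}
	= \trace\sbr{\bs{M}^{-1} \sum_{j=1}^{n} \bs{x}_j \bs{x}_j^\top}
	= \trace\sbr{\bs{M}^{-1}\bs{M}}
	= \trace(\bs{I}_k) = k ,
\end{align*}
where I pulled the (finite, linear) sum inside the trace and used $\sum_j \bs{x}_j\bs{x}_j^\top=\bs{M}$. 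This is the one genuinely load-bearing step, and it is what makes the bound dimension-dependent through $k=\trace(\bs{I}_k)$.

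Second I would convert the control on squared norms into control on the norms themselves by Cauchy--Schwarz applied to the $n$ scalars $\|\bs{x}_j\|_{\bs{M}^{-1}}$ against the all-ones vector:
\begin{align*}
	\sum_{j=1}^{n}\|\bs{x}_j\|_{\bs{M}^{-1}}
	\leq \sqrt{n}\cdot\sqrt{\sum_{j=1}^{n}\|\bs{x}_j\|^2_{\bs{M}^{-1}}}
	= \sqrt{n}\cdot\sqrt{k}
	= \sqrt{nk} ,
\end{align*}
which is exactly the claim.

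I do not expect any real obstacle here: the argument is the standard trace-plus-Cauchy--Schwarz computation, and the only point requiring a word of care is the invertibility of $\bs{M}$, which I would either assume outright (as the statement implicitly does, since $\bs{M}^{-1}$ appears) or handle by restricting to the span of $\{\bs{x}_i\}$ and replacing $k$ with $\mathrm{rank}(\bs{M})\le k$, leaving the bound unchanged.
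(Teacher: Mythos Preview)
Your proof is correct and matches the paper's approach exactly: both use the trace identity $\sum_j \bs{x}_j^\top \bs{M}^{-1}\bs{x}_j = \trace(\bs{M}^{-1}\bs{M}) = k$ together with Cauchy--Schwarz on the $n$ norms. The only cosmetic difference is ordering (you compute the sum of squares first and then apply Cauchy--Schwarz, whereas the paper applies Cauchy--Schwarz first and then reduces via the trace), but the argument is identical.
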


\begin{proof}[Proof of Lemma~\ref{lemma:sqrt_n_k}]
	It holds that
	\begin{align*}
		\sum_{j=1}^{n} \|\bs{x}_j\|_{\sbr{\sum_{i=1}^{n} \bs{x}_i \bs{x}_i^\top}^{-1}} = & \sum_{j=1}^{n} \sqrt{\bs{x}_j^\top \sbr{\sum_{i=1}^{n} \bs{x}_i \bs{x}_i^\top}^{-1} \bs{x}_j}
		\\
		\leq & \sqrt{ n \cdot  \sum_{j=1}^{n} \bs{x}_j^\top \sbr{\sum_{i=1}^{n} \bs{x}_i \bs{x}_i^\top}^{-1} \bs{x}_j}
		\\
		\leq & \sqrt{ n \cdot  \sum_{j=1}^{n} \trace \sbr{ \bs{x}_j^\top \sbr{\sum_{i=1}^{n} \bs{x}_i \bs{x}_i^\top}^{-1} \bs{x}_j}}
		\\
		= & \sqrt{ n \cdot  \sum_{j=1}^{n} \trace \sbr{ \bs{x}_j \bs{x}_j^\top \sbr{\sum_{i=1}^{n} \bs{x}_i \bs{x}_i^\top}^{-1} }}
		\\
		= & \sqrt{ n \cdot   \trace \sbr{ \sum_{j=1}^{n} \bs{x}_j \bs{x}_j^\top \sbr{\sum_{i=1}^{n} \bs{x}_i \bs{x}_i^\top}^{-1} }}
		\\
		= & \sqrt{ n \cdot   \trace \sbr{ \bs{I}_k } }
		\\
		= & \sqrt{ n k }
	\end{align*}
\end{proof}

\begin{lemma}\label{lemma:sqrt_n_k_gamma}
	For any $\bs{x}_1,\dots,\bs{x}_n \in \R^k$ and $\gamma>0$, we have
	\begin{align*}
		\sum_{j=1}^{n} \|\bs{x}_j\|_{\sbr{\gamma I + \sum_{i=1}^{n} \bs{x}_i \bs{x}_i^\top}^{-1}} \leq \sqrt{nk} .
	\end{align*}
\end{lemma}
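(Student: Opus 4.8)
The plan is to mirror the proof of Lemma~\ref{lemma:sqrt_n_k} almost verbatim, with a single modification to handle the regularizer $\gamma I$. First I would apply the Cauchy--Schwarz inequality to pull the sum inside a square root, converting the sum of norms into the square root of $n$ times a sum of squared norms:
\begin{align*}
	\sum_{j=1}^{n} \|\bs{x}_j\|_{\sbr{\gamma I + \sum_{i=1}^{n} \bs{x}_i \bs{x}_i^\top}^{-1}} \leq \sqrt{ n \cdot \sum_{j=1}^{n} \bs{x}_j^\top \sbr{\gamma I + \sum_{i=1}^{n} \bs{x}_i \bs{x}_i^\top}^{-1} \bs{x}_j } .
\end{align*}

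Next I would rewrite each scalar quadratic form as a trace and use linearity and cyclicity of the trace to collapse the sum, exactly as in Lemma~\ref{lemma:sqrt_n_k}. Writing $\bs{M} := \sum_{i=1}^{n} \bs{x}_i \bs{x}_i^\top$, this gives
\begin{align*}
	\sum_{j=1}^{n} \bs{x}_j^\top \sbr{\gamma I + \bs{M}}^{-1} \bs{x}_j = \trace\sbr{ \bs{M} \sbr{\gamma I + \bs{M}}^{-1} } .
\end{align*}
The one genuinely new step is the final bound on this trace. Whereas in the unregularized case the matrix product equals the identity and the trace is exactly $k$, here I would diagonalize the positive semidefinite matrix $\bs{M}$ with eigenvalues $\mu_1,\dots,\mu_k \geq 0$ and observe that $\bs{M}(\gamma I + \bs{M})^{-1}$ shares the same eigenvectors with eigenvalues $\mu_i/(\gamma + \mu_i)$. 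Since each such eigenvalue lies in $[0,1)$ for $\gamma>0$, the trace is $\sum_{i=1}^{k} \mu_i/(\gamma+\mu_i) \leq k$. Substituting back yields $\sqrt{n \cdot k}$, which completes the proof.

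There is no real obstacle here: the argument is elementary and the regularizer only helps, replacing the exact identity $\trace(\bs{I}_k)=k$ by the inequality $\trace(\bs{M}(\gamma I + \bs{M})^{-1}) \leq k$. The only point requiring a line of care is justifying that $\bs{M}(\gamma I + \bs{M})^{-1}$ has the claimed eigenvalues, which follows immediately from the fact that $\bs{M}$ and $(\gamma I + \bs{M})^{-1}$ are simultaneously diagonalizable (they are polynomials in the same symmetric matrix). I would present this eigenvalue observation explicitly so the bound $\leq k$ is self-contained rather than appealing to the operator inequality $\bs{M}(\gamma I + \bs{M})^{-1} \preceq \bs{I}_k$, though either phrasing is acceptable.
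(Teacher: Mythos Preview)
Your proposal is correct and follows essentially the same route as the paper: Cauchy--Schwarz, then convert to a trace and bound $\trace\!\big(\bs{M}(\gamma I+\bs{M})^{-1}\big)\leq k$. The paper obtains this last bound by adding the nonnegative term $\trace\!\big(\gamma(\gamma I+\bs{M})^{-1}\big)$ to complete to $\trace(\bs{I}_k)=k$, whereas you diagonalize and bound each eigenvalue $\mu_i/(\gamma+\mu_i)\leq 1$; these are two phrasings of the same elementary fact.
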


\begin{proof}[Proof of Lemma~\ref{lemma:sqrt_n_k_gamma}]
	It holds that
	\begin{align*}
		\sum_{j=1}^{n} \|\bs{x}_j\|_{\sbr{\gamma I + \sum_{i=1}^{n} \bs{x}_i \bs{x}_i^\top}^{-1}} = & \sum_{j=1}^{n} \sqrt{\bs{x}_j^\top \sbr{\gamma I + \sum_{i=1}^{n} \bs{x}_i \bs{x}_i^\top}^{-1} \bs{x}_j}
		\\
		\leq & \sqrt{ n \cdot  \sum_{j=1}^{n} \bs{x}_j^\top \sbr{\gamma I + \sum_{i=1}^{n} \bs{x}_i \bs{x}_i^\top}^{-1} \bs{x}_j}
		\\
		= & \sqrt{ n \cdot  \sum_{j=1}^{n} \trace \sbr{ \bs{x}_j^\top \sbr{\gamma I + \sum_{i=1}^{n} \bs{x}_i \bs{x}_i^\top}^{-1} \bs{x}_j}}
		\\
		= & \sqrt{ n \cdot  \sum_{j=1}^{n} \trace \sbr{ \bs{x}_j \bs{x}_j^\top \sbr{\gamma I + \sum_{i=1}^{n} \bs{x}_i \bs{x}_i^\top}^{-1} }}
		\\
		= & \sqrt{ n \cdot   \trace \sbr{ \sum_{j=1}^{n} \bs{x}_j \bs{x}_j^\top \sbr{\gamma I + \sum_{i=1}^{n} \bs{x}_i \bs{x}_i^\top}^{-1} }}
		\\
		\overset{\textup{(a)}}{\leq} & \sqrt{ n \!\cdot\! \sbr{\trace \sbr{ \sum_{j=1}^{n} \bs{x}_j \bs{x}_j^\top \sbr{\gamma I + \sum_{i=1}^{n} \bs{x}_i \bs{x}_i^\top}^{\!\!\!-1} } \!+\! \trace \sbr{ \gamma \sbr{\gamma I + \sum_{i=1}^{n} \bs{x}_i \bs{x}_i^\top}^{\!\!\!-1} } } }
		\\
		= & \sqrt{ n \cdot   \trace \sbr{ \sum_{j=1}^{n} \bs{x}_j \bs{x}_j^\top \sbr{\gamma I + \sum_{i=1}^{n} \bs{x}_i \bs{x}_i^\top}^{-1} + \gamma \sbr{\gamma I + \sum_{i=1}^{n} \bs{x}_i \bs{x}_i^\top}^{-1} }}
		\\
		= & \sqrt{ n \cdot   \trace \sbr{ \sbr{\gamma I + \sum_{j=1}^{n} \bs{x}_j \bs{x}_j^\top} \sbr{\gamma I + \sum_{i=1}^{n} \bs{x}_i \bs{x}_i^\top}^{-1} } }
		\\
		= & \sqrt{ n \cdot   \trace \sbr{ \bs{I}_k } }
		\\
		= & \sqrt{ n k } ,
	\end{align*}
	where inequality (a) is due to that $\sbr{\gamma I + \sum_{i=1}^{n} \bs{x}_i \bs{x}_i^\top}$ is a positive definite matrix.
\end{proof}

\begin{lemma}[Self-normalized Concentration for Martingales, Theorem 1 in \cite{abbasi2011improved}]\label{lemma:self-normalized_vector_concentration}
	Let $\{\cF_t\}_{t=0}^{\infty}$ be a filtration such that for any $t \geq 1$, the selected action $\bs{X}_t \in \R^k$ is $\cF_{t-1}$-measurable, the noise $\eta_t \in \R$ is $\cF_{t}$-measurable, and conditioning on $\cF_{t-1}$, $\eta_t$ is zero-mean and $R$-sub-Gaussian. Let $\bs{V}_0 \in \R^{k \times k}$ be a positive definite matrix and let $\bs{V}_t=\sum_{i=1}^{t} \bs{X}_i \bs{X}_i^\top$ for any $t \geq 1$. Then, for any $\delta>0$, with probability at least $1-\delta$, for all $t \geq 1$,
	\begin{align*}
		\nbr{\sum_{i=1}^{t} \bs{X}_i \cdot  \eta_i}^2_{\sbr{\bs{V}_0+\bs{V}_t}^{-1}} \leq 2R^2 \log \sbr{ \frac{\det(\bs{V}_t)^{\frac{1}{2}} }{ \det(\bs{V}_0)^{\frac{1}{2}} \cdot \delta} } .
	\end{align*}
\end{lemma}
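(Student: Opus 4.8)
The plan is to prove this via the \emph{method of mixtures} (pseudo-maximization), following the argument of \cite{abbasi2011improved}. Write $\bs{S}_t := \sum_{i=1}^{t} \bs{X}_i \eta_i$, so the quantity to control is $\nbr{\bs{S}_t}^2_{(\bs{V}_0+\bs{V}_t)^{-1}}$. The first step is to construct, for each fixed $\bs{\lambda} \in \R^k$, the exponential process
\begin{align*}
	M_t^{\bs{\lambda}} := \exp\sbr{ \bs{\lambda}^\top \bs{S}_t - \frac{R^2}{2} \bs{\lambda}^\top \bs{V}_t \bs{\lambda} } = \prod_{i=1}^{t} \exp\sbr{ \bs{\lambda}^\top \bs{X}_i \eta_i - \frac{R^2}{2} (\bs{\lambda}^\top \bs{X}_i)^2 } .
\end{align*}
Since $\bs{X}_i$ is $\cF_{i-1}$-measurable while $\eta_i$ is conditionally zero-mean and $R$-sub-Gaussian, the defining sub-Gaussian inequality $\E[\exp(\bs{\lambda}^\top \bs{X}_i \eta_i) \mid \cF_{i-1}] \le \exp(\frac{R^2}{2}(\bs{\lambda}^\top \bs{X}_i)^2)$ gives $\E[M_t^{\bs{\lambda}} \mid \cF_{t-1}] \le M_{t-1}^{\bs{\lambda}}$. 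Hence $(M_t^{\bs{\lambda}})_t$ is a nonnegative supermartingale with $\E[M_0^{\bs{\lambda}}] = 1$.

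Next I would mix over $\bs{\lambda}$ against the density $h(\bs{\lambda})$ of the Gaussian $N(\bs{0}, R^{-2}\bs{V}_0^{-1})$ and set $\bar{M}_t := \int_{\R^k} M_t^{\bs{\lambda}} h(\bs{\lambda}) \, d\bs{\lambda}$. By Tonelli's theorem together with nonnegativity, $\bar{M}_t$ is again a nonnegative supermartingale with $\E[\bar{M}_t] \le 1$. The key computation is the resulting Gaussian integral: completing the square in the exponent $\bs{\lambda}^\top \bs{S}_t - \frac{R^2}{2}\bs{\lambda}^\top (\bs{V}_0+\bs{V}_t)\bs{\lambda}$ yields the closed form
\begin{align*}
	\bar{M}_t = \sbr{ \frac{\det \bs{V}_0}{\det(\bs{V}_0 + \bs{V}_t)} }^{\frac{1}{2}} \exp\sbr{ \frac{1}{2R^2} \nbr{\bs{S}_t}^2_{(\bs{V}_0+\bs{V}_t)^{-1}} } .
\end{align*}

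Finally I would invoke Ville's maximal inequality for nonnegative supermartingales, $\Pr[\,\exists\, t \ge 1 : \bar{M}_t \ge 1/\delta\,] \le \delta \, \E[\bar{M}_0] \le \delta$. On the complementary event, $\bar{M}_t < 1/\delta$ holds simultaneously for all $t \ge 1$; taking logarithms in the closed form above and rearranging gives $\nbr{\bs{S}_t}^2_{(\bs{V}_0+\bs{V}_t)^{-1}} \le 2R^2 \log\big( \det(\bs{V}_0+\bs{V}_t)^{1/2} / (\det(\bs{V}_0)^{1/2}\delta) \big)$ for all $t$, which is the claimed inequality. The main obstacle is securing the uniform-in-$t$ (``for all $t \ge 1$'') guarantee rather than a fixed-$t$ bound: this is precisely what the supermartingale/maximal-inequality route buys over a naive Markov bound, but it requires care in justifying the interchange of integration and conditional expectation (via Tonelli and measurability of $\bs{V}_t$) and in applying Ville's inequality to the mixed process $\bar{M}_t$, which may be handled cleanly through a stopped version of $\bar{M}_t$ and a limiting argument to pass to the supremum over all $t$.
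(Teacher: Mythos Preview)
Your proposal is correct and follows exactly the standard method-of-mixtures argument from \cite{abbasi2011improved}. The paper does not supply its own proof of this lemma; it is stated in the Technical Tools section as a direct citation of Theorem~1 in \cite{abbasi2011improved}, so there is nothing to compare against beyond the original reference, whose argument you have faithfully reproduced. One minor remark: your derivation yields $\det(\bs{V}_0+\bs{V}_t)^{1/2}$ in the numerator, which is the correct expression from the original theorem; the paper's statement writes $\det(\bs{V}_t)^{1/2}$, which appears to be a typo.
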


\begin{lemma}[Reverse Bernstein Inequality for Martingales, Theorem 3 in \cite{zanette2021design}] \label{lemma:reverse_bernstein}
	Let $(\bs{\Sigma},\cF,\Pr[\cdot])$ be a probability space and consider the stochastic process $\{\bs{X}_t\}$ adapted to the filtration $\{\cF_t\}$. Let $\ex_t [\bs{X}_t]:=\ex[\bs{X}_t|\cF_{t-1}]$ be the conditional expectation of $\bs{X}_t$ given $\cF_{t-1}$. If $0 \leq \bs{X}_t \leq 1$ then it holds that
	\begin{align*}
		\Pr \mbr{ \sum_{t=1}^{T} \ex_t [\bs{X}_t] \geq \frac{1}{4} \sbr{ 2\sqrt{\log \sbr{\frac{1}{\delta}}} + \sqrt{ 4 \log \sbr{\frac{1}{\delta}} + 4 \sbr{ \sum_{t=1}^{T} \bs{X}_t + 2 \log \sbr{\frac{1}{\delta}} } } }^2 } \leq \delta .
	\end{align*}
\end{lemma}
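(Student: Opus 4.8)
The plan is to realize $\sum_{t=1}^{T}\ex_t[X_t]$ as the compensator of a bounded martingale, control it by an exponential supermartingale, and then extract the stated bound by solving a quadratic inequality. Write $S:=\sum_{t=1}^{T}\ex_t[X_t]$, $A:=\sum_{t=1}^{T}X_t$, and $b:=\log(1/\delta)$, and set $D_t:=\ex_t[X_t]-X_t$, so that $M_T:=\sum_{t=1}^{T}D_t=S-A$ is a martingale with increments in $[-1,1]$. First I would record the variance control that is special to $[0,1]$-valued variables: since $0\le X_t\le 1$ forces $X_t^2\le X_t$, the conditional variance satisfies $\ex_t[D_t^2]=\mathrm{Var}_t(X_t)\le \ex_t[X_t]$, so the predictable quadratic variation is dominated by the very quantity we wish to bound, namely $\sum_{t=1}^{T}\ex_t[D_t^2]\le S$. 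This self-referential structure is what makes the statement a \emph{reverse} Bernstein bound rather than a standard one.

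Next I would build the exponential supermartingale. Writing $p_t:=\ex_t[X_t]$ and using the chord bound $e^{-\lambda y}\le 1-y(1-e^{-\lambda})$ valid for $y\in[0,1]$, one obtains the one-sided moment estimate $\ex_t[e^{-\lambda X_t}]\le \exp\!\big(p_t(e^{-\lambda}-1)\big)$ for every $\lambda\ge 0$. Consequently, for fixed $\lambda\ge 0$,
\[
Z_t(\lambda):=\exp\!\Big(-\lambda\sum_{s\le t}X_s+(1-e^{-\lambda})\sum_{s\le t}\ex_s[X_s]\Big)
\]
is a nonnegative supermartingale with $Z_0=1$, hence $\ex[Z_t(\lambda)]\le 1$. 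Ville's maximal inequality then gives $\Pr[Z_T(\lambda)\ge 1/\delta]\le\delta$, so that with probability at least $1-\delta$,
\[
(1-e^{-\lambda})\,S\;\le\;\lambda A+b .
\]
(Equivalently one may run Freedman's supermartingale $\exp\!\big(\lambda M_t-(e^{\lambda}-1-\lambda)V_t\big)$ with $V_t=\sum_{s\le t}\mathrm{Var}_s(X_s)$ and substitute $V_T\le S$; this route keeps the variance proxy explicit and reaches the same inequality.)

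The final step converts this into the advertised self-bounding form. Applying $1-e^{-\lambda}\ge\lambda-\lambda^2/2$ and dividing by $\lambda$ yields $S-A\le\tfrac{\lambda}{2}S+\tfrac{b}{\lambda}$; taking $\lambda\asymp\sqrt{b/S}$ makes the right-hand side of order $\sqrt{bS}$, so that $S\le A+2\sqrt{bS}+2b$. Viewing this as a quadratic in $u:=\sqrt{S}$, namely $u^2-2\sqrt{b}\,u-(A+2b)\le 0$, the admissible root gives $u\le\sqrt{b}+\sqrt{A+3b}$, whence $S\le(\sqrt{b}+\sqrt{A+3b})^2$. After simplification this equals $\tfrac14\big(2\sqrt{b}+\sqrt{4b+4(A+2b)}\big)^2$, which is exactly the stated bound.

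I expect the main obstacle to be the optimization over $\lambda$: the minimizing $\lambda^\star\asymp\sqrt{b/S}$ depends on the random quantity $S$, so the per-$\lambda$ high-probability inequality cannot simply be optimized pointwise. The remedy is to make the supermartingale bound uniform in $\lambda$, either by a union bound over a geometric grid $\{\lambda_i\}$ (a peeling argument over dyadic ranges of $A$) or by the method of mixtures (integrating $Z_T(\lambda)$ against a prior on $\lambda$). Both cost only constant factors, and the additive slack in the closed form — the $+2b$ in the self-bounding inequality and the $+3b$ inside the square root — is precisely what absorbs this overhead, so the clean expression in the statement survives.
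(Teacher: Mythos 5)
You should first note that the paper contains no proof of this lemma at all: it is imported verbatim as Theorem~3 of \cite{zanette2021design}, so your attempt can only be judged against the cited source, not against an in-paper argument. Your skeleton is sound as far as it goes. The chord bound $e^{-\lambda y}\le 1-y(1-e^{-\lambda})$ on $[0,1]$ does make $Z_t(\lambda)$ a supermartingale, the fixed-$\lambda$ conclusion $(1-e^{-\lambda})S\le \lambda A+b$ (with $S=\sum_t \ex_t[X_t]$, $A=\sum_t X_t$, $b=\log(1/\delta)$) is correct, and your quadratic inversion is exactly right: expanding the stated threshold gives $\frac14\bigl(2\sqrt{b}+\sqrt{4b+4(A+2b)}\bigr)^2=\bigl(\sqrt{b}+\sqrt{A+3b}\bigr)^2$, so the lemma is precisely equivalent to $\Pr\bigl[S-A\ge 2\sqrt{bS}+2b\bigr]\le\delta$, which is what your self-bounding inequality delivers once established.

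The genuine gap is that you never establish that inequality, and your closing claim — that uniformizing over $\lambda$ by a geometric grid or a mixture ``costs only constant factors'' absorbed by the $+2b$ and $+3b$ slack — is wrong. No single fixed $\lambda$ suffices: for the fixed-$\lambda$ bound to imply $S-A\le 2\sqrt{bS}+2b$ at large $A$ one needs $\lambda\lesssim\sqrt{b/A}$, so the grid must cover $\lambda^\star\asymp\sqrt{b/S}$ with $S$ ranging up to $T$; the union bound over the $\Theta(\log T)$ grid points replaces $b$ by $b+\Theta(\log\log T)$ (equivalently, peeling over dyadic ranges of $S$ multiplies the failure probability by the number of shells, which does not form a convergent series here). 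That overhead is range-dependent, whereas the slack terms are fixed multiples of $\log(1/\delta)$: already for $\delta=0.1$ and moderately large $T$ one has $\log\log T>b$, and the stated constants break. A fixed mixture (e.g., substituting $u=1-e^{-\lambda}$, the mixture is $\int e^{uS}(1-u)^A\,\pi(du)$) likewise yields $\log(1+S)$-type corrections rather than the clean bound. To actually close the gap you need a device that exploits the predictability of $S_t$ — for instance a predictable plug-in $\lambda_t$ (say $\lambda_t=\min\{1,\sqrt{b/(1+S_t)}\}$, so the exponent telescopes against $\sqrt{1+S_t}$), or a stopping-time truncation at the first time $S_t$ crosses the target level — or simply to reproduce the argument from the appendix of \cite{zanette2021design}; as written, the last step of your proposal asserts rather than proves the lemma.
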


\begin{lemma}[Elliptical Potential Lemma, Lemma 11 in \cite{abbasi2011improved}] \label{lemma:elliptical_potential}
	Let $\{\bs{X}_t\}_{t=1}^{\infty}$ be a sequence in $\R^k$. Let $\bs{V}_0$ be a $k \times k$ positive definite matrix and let $\bs{V}_t = \bs{V}_0 + \sum_{i=1}^{t} \bs{X}_i \bs{X}_i^\top$ such that for any $t \geq 1$, $\|\bs{X}_t\|^2_{\bs{V}_{t-1}^{-1}} \leq 1$. Then, we have that
	\begin{align*}
		\sum_{t=1}^{n} \nbr{ \bs{X}_t }^2_{\bs{V}_{t-1}^{-1}} \leq 2 \log \frac{\det(\bs{V}_n)}{\det(\bs{V}_0)} .
	\end{align*}
\end{lemma}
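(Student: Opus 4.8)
The plan is to convert the sum of quadratic forms into a telescoping product of determinant ratios, and then bound each summand by a pointwise elementary inequality that is valid precisely because of the boundedness hypothesis $\|\bs{X}_t\|^2_{\bs{V}_{t-1}^{-1}} \le 1$. The entire argument is deterministic, so no filtration or martingale machinery is needed despite the sequential definition of $\bs{V}_t$.

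First I would establish a one-step determinant recursion. Since $\bs{V}_t = \bs{V}_{t-1} + \bs{X}_t \bs{X}_t^\top$ with $\bs{V}_{t-1}$ positive definite, I would factor out $\bs{V}_{t-1}^{1/2}$ and apply the matrix determinant lemma to the rank-one perturbation $\bs{I} + \bs{V}_{t-1}^{-1/2} \bs{X}_t \bs{X}_t^\top \bs{V}_{t-1}^{-1/2}$, whose determinant equals $1 + \bs{X}_t^\top \bs{V}_{t-1}^{-1} \bs{X}_t$. This yields
\[
\det(\bs{V}_t) = \det(\bs{V}_{t-1}) \sbr{1 + \|\bs{X}_t\|^2_{\bs{V}_{t-1}^{-1}}} ,
\]
and taking the product over $t = 1, \dots, n$ telescopes to
\[
\frac{\det(\bs{V}_n)}{\det(\bs{V}_0)} = \prod_{t=1}^{n} \sbr{1 + \|\bs{X}_t\|^2_{\bs{V}_{t-1}^{-1}}} .
\]

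Next I would apply the elementary inequality $x \le 2\log(1+x)$, which holds for $x \in [0,1]$. This is exactly where the hypothesis $\|\bs{X}_t\|^2_{\bs{V}_{t-1}^{-1}} \le 1$ enters: putting $x = \|\bs{X}_t\|^2_{\bs{V}_{t-1}^{-1}}$ gives the termwise bound $\|\bs{X}_t\|^2_{\bs{V}_{t-1}^{-1}} \le 2\log\sbr{1 + \|\bs{X}_t\|^2_{\bs{V}_{t-1}^{-1}}}$. Summing over $t$ and identifying the resulting sum of logarithms with the logarithm of the telescoped product from the previous step gives
\[
\sum_{t=1}^{n} \|\bs{X}_t\|^2_{\bs{V}_{t-1}^{-1}} \le 2 \sum_{t=1}^{n} \log\sbr{1 + \|\bs{X}_t\|^2_{\bs{V}_{t-1}^{-1}}} = 2\log \frac{\det(\bs{V}_n)}{\det(\bs{V}_0)} ,
\]
which is the claim.

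The proof has no serious obstacle; the only point requiring care is the pointwise inequality $x \le 2\log(1+x)$, which fails for large $x$, so the boundedness assumption is genuinely load-bearing rather than cosmetic. I would verify it by considering $g(x) := 2\log(1+x) - x$, noting $g(0) = 0$ and $g'(x) = \frac{2}{1+x} - 1 \ge 0$ exactly when $x \le 1$; hence $g$ is nondecreasing on $[0,1]$ and therefore nonnegative there. This confirms that the hypothesis $\|\bs{X}_t\|^2_{\bs{V}_{t-1}^{-1}} \le 1$ is precisely what licenses the termwise step, completing the argument.
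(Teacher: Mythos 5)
Your proof is correct and is essentially the classical argument from the cited reference (the paper itself states this lemma without proof, deferring to Lemma 11 of Abbasi-Yadkori et al.): the rank-one determinant identity $\det(\bs{V}_t)=\det(\bs{V}_{t-1})\bigl(1+\|\bs{X}_t\|^2_{\bs{V}_{t-1}^{-1}}\bigr)$, telescoping, and the elementary bound $x\leq 2\log(1+x)$ on $[0,1]$, which the hypothesis $\|\bs{X}_t\|^2_{\bs{V}_{t-1}^{-1}}\leq 1$ makes applicable. Your verification of where the boundedness assumption is load-bearing matches the original, which phrases the same step via $\min\{x,1\}\leq 2\log(1+x)$.
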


\begin{lemma}[Moments of Sub-Gaussian Random Variables, Proposition 3.2 in \cite{subgaussian_note}] \label{lemma:subgaussian_moment}
	For a $\sigma^2$-sub-Gaussian random variable $\bs{X}$ which satisfies
	\begin{align*}
		\ex \mbr{ \exp \sbr{\mu \bs{X} } } \leq \exp \sbr{ \frac{\sigma^2 \mu^2}{2} }, \ \forall \mu \in \R ,
	\end{align*}
	we have that for any integer $n \geq 1$,
	\begin{align*}
		\ex [|\bs{X}|^{n}] \leq \sbr{2\sigma^2}^{\frac{n}{2}} n \cdot \Gamma\sbr{\frac{n}{2}} ,
	\end{align*}
	where $\Gamma(n):=(n-1)!$ for any integer $n\geq1$.
\end{lemma}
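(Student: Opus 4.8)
The plan is to bound $\ex[|\bs{X}|^n]$ by combining the standard two-sided sub-Gaussian tail bound with the layer-cake (integration-by-parts) representation of moments, and then to reduce the resulting one-dimensional integral to a Gamma function. This is a classical computation, so I would keep the argument short and self-contained.

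First I would derive the tail bound via the Chernoff method. For any $t>0$ and $\mu>0$, Markov's inequality applied to $\exp(\mu \bs{X})$ gives $\Pr[\bs{X} \ge t] \le e^{-\mu t}\,\ex[e^{\mu \bs{X}}] \le \exp(-\mu t + \sigma^2\mu^2/2)$, and optimizing over $\mu$ with the choice $\mu = t/\sigma^2$ yields $\Pr[\bs{X} \ge t] \le \exp(-t^2/(2\sigma^2))$. Since the hypothesis $\ex[\exp(\mu \bs{X})] \le \exp(\sigma^2\mu^2/2)$ holds for all $\mu \in \R$, the variable $-\bs{X}$ is also $\sigma^2$-sub-Gaussian, so the same argument gives the matching lower-tail bound; a union bound then yields $\Pr[|\bs{X}| \ge t] \le 2\exp(-t^2/(2\sigma^2))$.

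Next I would write the moment in layer-cake form, $\ex[|\bs{X}|^n] = \int_0^\infty n\, t^{n-1}\,\Pr[|\bs{X}| \ge t]\,dt$, which is the standard identity for the nonnegative random variable $|\bs{X}|^n$. Substituting the tail bound gives $\ex[|\bs{X}|^n] \le 2n \int_0^\infty t^{n-1} e^{-t^2/(2\sigma^2)}\,dt$. To finish, I would evaluate this integral using the change of variables $u = t^2/(2\sigma^2)$, equivalently the identity $\int_0^\infty t^{n-1} e^{-a t^2}\,dt = \tfrac{1}{2} a^{-n/2}\Gamma(n/2)$ with $a = 1/(2\sigma^2)$, which shows the integral equals $\tfrac{1}{2}(2\sigma^2)^{n/2}\Gamma(n/2)$. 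Hence $\ex[|\bs{X}|^n] \le n\,(2\sigma^2)^{n/2}\Gamma(n/2)$, exactly the claimed bound.

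Every step here is elementary, so there is no genuine obstacle; the only two points requiring care are (i) invoking the lower-tail bound correctly, which relies on the hypothesis being stated for all $\mu \in \R$ rather than only $\mu \ge 0$, and (ii) tracking the constants in the Gamma-integral evaluation so that the factor $(2\sigma^2)^{n/2}\, n\,\Gamma(n/2)$ emerges precisely. Since the lemma is quoted from an external reference, I would present this as a brief verification rather than reproving general properties of sub-Gaussian variables.
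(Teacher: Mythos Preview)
Your proof is correct and is the standard argument for this classical bound. The paper itself does not supply a proof of this lemma; it simply quotes it from an external reference (Proposition~3.2 in the cited sub-Gaussian notes), so there is nothing to compare against beyond noting that your Chernoff--tail $+$ layer-cake $+$ Gamma-integral computation is exactly the canonical derivation one would expect in that reference.
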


	}
	
\end{document}